\newcommand{\makecellnew}[1]{{\renewcommand{\arraystretch}{0.8}\begin{tabular}{c} #1 \end{tabular}}}
\definecolor{bgcolor}{rgb}{0.8,1,1}
\definecolor{bgcolor2}{rgb}{0.8,1,0.8}
\newcommand{\algname}[1]{{\sf\green#1}\xspace}
\newcommand{\dataname}[1]{{\tt\color{blue}#1}\xspace}
\newcommand{\eqdef}{\; { := }\;}
\newcommand{\R}{\mathbb{R}}
\newcommand{\N}{\mathbb{N}}
\def\<#1,#2>{\langle #1,#2\rangle}
\def\<{\left\langle}
\def\>{\right\rangle}
\def\[{\left[}
\def\]{\right]}
\def\({\left(}
\def\){\right)}
\definecolor{mydarkgreen}{RGB}{39,130,67}
\definecolor{mydarkred}{RGB}{192,47,25}
\newcommand{\green}{\color{mydarkgreen}}
\newcommand{\cO}{\mathcal{O}}
\newcommand{\mA}{\mathbf{A}}
\newcommand{\mB}{\mathbf{B}}
\newcommand{\mC}{\mathbf{C}}
\newcommand{\mH}{\mathbf{H}}
\newcommand{\mE}{\mathbf{E}}
\newcommand{\mL}{\mathbf{L}}
\newcommand{\mQ}{\mathbf{Q}}
\newcommand{\mI}{\mathbf{I}}
\newcommand{\mV}{\mathbf{V}}
\newcommand{\mS}{\mathbf{S}}
\newcommand{\mX}{\mathbf{X}}
\newcommand{\cC}{{\mathcal{C}}}
\newcommand{\cQ}{{\mathcal{Q}}}
\declaretheorem[within=section]{definition}
\declaretheorem[sibling=definition]{theorem}
\declaretheorem[sibling=definition]{proposition}
\declaretheorem[sibling=definition]{assumption}
\declaretheorem[sibling=definition]{lemma}
\declaretheorem[sibling=definition]{example}
\newcommand{\squeeze}{}
\title{\bf Basis Matters: Better Communication-Efficient \\ Second Order Methods for Federated Learning}
\author{Xun Qian\thanks{King Abdullah University of Science and Technology, Thuwal, Saudi Arabia.} \and Rustem Islamov\thanks{King Abdullah University of Science and Technology (KAUST), Thuwal, Saudi Arabia, and Institut Polytechnique de Paris (IP Paris), Palaiseau, France. This research was conducted while this author was an intern at KAUST and a master student at IP Paris.} \and Mher Safaryan\thanks{King Abdullah University of Science and Technology, Thuwal, Saudi Arabia.}   \and Peter Richt\'{a}rik\thanks{King Abdullah University of Science and Technology, Thuwal, Saudi Arabia.}}
\date{September 23, 2021 \\ (visually revised on October 27, 2021)}
\begin{document}

\maketitle

\begin{abstract}
 Recent advances in distributed optimization have shown that Newton-type methods with proper communication compression mechanisms can guarantee fast local rates and low communication cost compared to first order methods.
 We discover that the communication cost of these methods can be further reduced, sometimes dramatically so, with a surprisingly simple trick: {\em Basis Learn (BL)}. The idea is to transform the usual representation of the local Hessians via a change of basis in the space of matrices and apply compression tools to the new representation.
 To demonstrate the potential of using custom bases, we design a new Newton-type method (\algname{BL1}), which reduces communication cost via both {\em BL} technique and bidirectional compression mechanism. Furthermore, we present two alternative extensions (\algname{BL2} and \algname{BL3}) to partial participation to accommodate federated learning applications.
 We prove local linear and superlinear rates independent of the condition number.
 Finally, we support our claims with numerical experiments by comparing several first and second~order~methods.
\end{abstract}

%\clearpage 
\tableofcontents
 
\clearpage 
% -------------------------- %
% -------------------------- %
\section{Introduction}
% -------------------------- %
% -------------------------- %

We consider federated optimization problems of the form
\begin{equation}\label{problem-erm}
	\squeeze
	\min\limits_{x\in \mathbb{R}^d} f(x) \eqdef \frac{1}{n} \sum\limits_{i=1}^n f_i(x),
\end{equation}
where each function $f_i:\R^d\to\R$ represents the local loss associated with the data owned by device or client $i\in[n]\eqdef\{1,2,\dots,n\}$ only. This formulation aims to train a single machine learning model $x\in\R^d$ composed of $d$ parameters by minimizing empirical loss $f(x)$ using all $n$ clients' data. We assume $f$ is $\mu$-strongly convex and problem (\ref{problem-erm}) has the unique optimal solution $x^*$ throughout the paper. 

Due to the increasing size of the model and the amount of the training data, in practical deployments, methods of choice for solving the problem \eqref{problem-erm} have been {\em distributed first-order gradient methods} so far \citep{FOM-in-DS,grace}. Among other things, two key considerations in the design of efficient distributed optimization method are {\em iteration complexity}, which measures how many iterations the method should take to achieve some prescribed accuracy, and {\em communication cost} per iteration, which measures the number of bits that clients communicate to each other or some parameter server \citep{bekkerman2011scaling}. Expectedly, these two quantities are in a trade-off: reducing the size of communicated messages per iteration, potentially increases the total number of iterations. This trade-off then interacts with the problem structure (training data and model) and network properties (bandwidth and latency) to find the best configuration for final deployment.

However, despite their wide applicability, all first-order methods inevitably suffer from a dependence of suitably defined {\em condition number}. To overcome the curse of the condition number, Newton-type or second-order optimization methods have been gaining considerable attention recently since (at least local) convergence properties of these algorithms are not affected by the condition number of the problem \citep{quasiNewton1974,InexactNewton1982,Griewank1981,PN2006-cubic}. On the other hand, the caveat of this approach is that, although it greatly improves iteration complexity, the cost of naively communicating second-order information, such as Hessian matrices, is extremely high and infeasible in practice \citep{bekkerman2011scaling}.
In this work, we argue that with proper care of second-order information and for ill-conditioned problems, distributed second-order algorithms can offer essentially better trade-offs than first-order algorithms.

% \subsection{Distributed second order methods}
\subsection{Distributed second order methods.}
The straightforward implementation of classical Newton's method in the distributed environment includes communication of local Hessian matrices $\nabla^2 f_i(x^k)$ with $d^2$ entries in all iterations $k\ge0$. Consider this naive implementation of Newton's method as the baseline algorithm for distributed second-order optimization, just like the distributed gradient descent algorithm for first-order methods. Below we discuss the main algorithmic designs to reduce the quadratic dependence $d^2$ of the dimension in per-iteration communication cost and make second-order methods communication efficient for distributed optimization.

One stream of works avoids sending full Hessian matrices and uses second-order information locally to compute Hessian-vector products $\nabla^2 f_i(x^k)v_i^k\in\R^d$ for some (adaptively defined) vectors $v_i^k\in\R^d$. With this approach, second order information is imparted only through such products, which cost $d$ floats of communication instead of $d^2$. The computation side of this approach is also efficient since the Hessian matrices are not computed directly but Hessian-vector products only, which are as cheap to compute as gradients $\nabla f_i(x^k)$ \citep{HessianXvector1993}. Methods following this approach are DiSCO \citep{DiSCO2015} (see also \citep{Zhuang2015,Lin2014LargescaleLR,Newton-MR2019}), GIANT \citep{GIANT2018} (see also \citep{DANE,Reddi:2016aide}) and DINGO \citep{DINGO} (see also \citep{CompressesDINGO2020}). Inspired by the local first order methods \citep{Gorbunov2020localSGD,Stich-localSGD,localSGD-AISTATS2020,FEDLEARN}, local variant of Newton's method was studied in \citep{LocalNewton2021}.
Typically these methods either guarantee fast rates under stronger assumptions, such as quadratic problems or/and homogeneous data distribution or guarantee only linear rates attainable by first-order methods.

% \mher{Classical papers on inexact Newton's methods \cite{quasiNewton1974,InexactNewton1982}.}

% \mher{DANE \cite{DANE}, InexactDANE \cite{Reddi:2016aide}, AIDE \cite{Reddi:2016aide} are first order methods. Some papers (e.g. \cite{DINGO}) mention that these are second order methods, which I don't get why. Any suggestion ?}

Alternatively, the high cost of Hessian communication can be reduced by compressing second-order information via lossy compression operators acting on matrices (such as low-rank approximations, randomly or greedily sparsifying entries). Again, this idea was originated from the first-order methods employing communication compression \citep{tonko,alistarh2018convergence,qsgd,terngrad}. Recently developed second-order methods DAN-LA \citep{DAN-LA2020}, Quantized Newton \citep{Alimisis2021QNewton}, NewtonLearn \citep{Islamov2021NewtonLearn} and FedNL \citep{FedNL2021} are based on this idea of properly incorporating compression strategies for second-order information. In contrast to the previous approach, this strategy relies on the computation of full Hessian matrices, which might be computationally intensive for some applications. However, the optimization problem these methods address is quite generic (general finite sum structure \eqref{problem-erm} over arbitrarily heterogeneous data), and theoretical guarantees (global linear with local superlinear convergence rates) are far beyond the reach of all first-order methods.

Motivated by these recent developments on distributed second-order methods with communication compression, we extend the results of FedNL \citep{FedNL2021} allowing even more aggressive compression for some applications.

% ------------------------------------------ %
% ------------------------------------------ %
\section{Motivation and Contributions}\label{sec:motiv-contr}
% ------------------------------------------ %
% ------------------------------------------ %

To motivate our approach properly and illustrate the potential of our technique, we discuss three different implementations of {\em classical Newton's method} for solving the problem \eqref{problem-erm}.

%This will then serve as a ground to develop general theory for FedNL

% \subsection{Naive implementation}
\subsection{Naive implementation} For general finite sums \eqref{problem-erm}, Newton's method requires each device $i\in[n]$ to compute gradient vector $\nabla f_i(x)$ and Hessian matrix $\nabla^2 f_i(x)$ at the current point and send them to the parameter server to do the model update. While the convergence of Newton's method is locally quadratic, ${\cal O}(d^2)$ communication costs are high due to quadratic dependence from the dimension $d$. However, we can devise more efficient implementations, given some prior knowledge of the problem or/and data structure.

% \subsection{Utilizing the problem structure} % GLM + Arbitrary data
\subsection{Utilizing the problem structure}
Suppose the problem \eqref{problem-erm} models the training of {\em Generalized Linear Model (GLM)}, such as ridge regression or logistic regression. Then each local loss function has the form 
\begin{equation}\label{avg-of-avg}
	f_i(x) = \frac{1}{m}\sum_{j=1}^m f_{ij}(x),
\end{equation}
where $f_{ij}(x) \eqdef \varphi_{ij}(a_{ij}^\top x)$ is the loss associated with $j^{th}$ data-vector $a_{ij}\in\R^d$ stored on $i^{th}$ device and $\varphi_{ij}\colon\R\to\R$ is the corresponding loss function.\footnote{For simplicity we assume that each device has the same number of local data points denoted by $m$. Generally, it could be different for different clients.} A better implementation of Newton's method taking advantage of the Hessian structure
\begin{eqnarray}
	% \nabla f_i(x^k) &=&\squeeze \frac{1}{m}\sum_{j=1}^m \varphi'_{ij}(a_{ij}^\top x^k) a_{ij} \in \R^{d}, \label{glm-gradient} \\
	\nabla^2 f_i(x) &=&\squeeze \frac{1}{m}\sum_{j=1}^m \varphi''_{ij}(a_{ij}^\top x) a_{ij}a_{ij}^\top \label{glm-hessian}
\end{eqnarray}
is described in  \citep[Section 2.2]{Islamov2021NewtonLearn}. In this implementation, the server is assumed to have {\em access to all training dataset} $\{a_{ij}\}_{ij}$. Then to communicate Hessian matrix of the form \eqref{glm-hessian}, it is enough to send $m$ coefficients $\{\varphi''_{ij}(a_{ij}^\top x) \colon j\in[m]\}$ instead of $d^2$ entries. Hence, in cases when $m \ll d^2$, we can run Newton's method much efficiently with ${\cal O}(m + d)$ communication cost.

However, there are two limitations here. First, this approach fails to benefit when local datasets are too big (i.e., $m > d^2$), which is often the case in practice. Second, all devices reveal their local training data to the server, making the approach inapplicable to federated learning applications, where data privacy is crucial.

% \subsection{Utilizing the data structure} % GLM + Low intrinsic dimensional data
\subsection{Utilizing the data structure}

We now describe a strategy that additionally takes advantage of the data structure and dramatically reduces communication costs regardless of the data size and without revealing any local data.

The imposed structural assumption on the data is that local data points $\{a_{ij} \colon j\in[m]\}$ of $i^{th}$ client belong to some linear subspace $G_i\subseteq\R^d$ of dimension $r\in[d]$.\footnote{To make notations simpler, $r$ is the same for all clients.} Note that this condition is trivially satisfied for $r = d$ for any data. However, in practice, training data points have much smaller intrinsic dimensionality $r\ll d$.
Notice that once we fix some basis $\{v_{it}\}_{t=1}^r$ of $G_i$, we can represent data points $a_{ij}$ as linear combinations
\begin{equation}\label{lin_comb}
	\squeeze
	a_{ij} = \sum_{t=1}^r \alpha_{ijt}v_{it}, \quad j\in[m].
\end{equation}

Instead of directly revealing actual data points $a_{ij}$, each device sends the basis $\{v_{it}\}_{t=1}^r$ to the server {\em only once} (before the training) with the cost of sending $r d$ floats. Based on the representations \eqref{glm-hessian} from the problem structure and \eqref{lin_comb} from the data structure, the Hessian of $f_{i}(x)$ can be transformed into

\begin{equation}\label{hessian-form}
	\squeeze
	\nabla^2f_{i}(x)
	\overset{\eqref{glm-hessian},\eqref{lin_comb}}{=} \frac{1}{m}\sum\limits_{j=1}^m \varphi^{\prime\prime}_{ij}(a_{ij}^\top x)\sum\limits_{t, l=1}^r \alpha_{ijt}\alpha_{ijl} v_{it} v_{il}^\top
	= \sum\limits_{t, l=1}^r \Bigg[\underbrace{\squeeze\frac{1}{m}\sum\limits_{j=1}^m \varphi^{\prime\prime}_{ij}(a_{ij}^\top x) \alpha_{ijt}\alpha_{ijl}}_{\gamma_{itl}(x)}\Bigg] \underbrace{v_{it} v_{il}^\top}_{\mV_{itl}},
\end{equation}

% \begin{equation}\label{hessian-form}
% \begin{split}
% \squeeze
% \nabla^2f_{i}(x)\negthickspace
% & \overset{\eqref{glm-hessian},\eqref{lin_comb}}{=}\negthickspace \squeeze \frac{1}{m}\sum\limits_{j=1}^m \varphi^{\prime\prime}_{ij}(a_{ij}^\top x)\sum\limits_{t, l=1}^\tau \alpha_{ijt}\alpha_{ijl} v_{it} v_{il}^\top \\
% & \overset{\phantom{\eqref{glm-hessian},\eqref{lin_comb}}}{=}\negthickspace \squeeze \sum\limits_{t, l=1}^\tau \Bigg[\underbrace{\squeeze\frac{1}{m}\sum\limits_{j=1}^m \varphi^{\prime\prime}_{ij}(a_{ij}^\top x) \alpha_{ijt}\alpha_{ijl}}_{\gamma_{itl}(x)}\Bigg] \underbrace{v_{it} v_{il}^\top}_{\mV_{itl}},
% \end{split}
% \end{equation}

% \begin{eqnarray}
% \squeeze
% \nabla^2f_{i}(x)
% \negthickspace\negthickspace\negthickspace&\overset{\eqref{glm-hessian},\eqref{lin_comb}}{=}&\negthickspace\negthickspace\negthickspace\squeeze \frac{1}{m}\sum\limits_{j=1}^m \varphi^{\prime\prime}_{ij}(a_{ij}^\top x)\sum\limits_{t, l=1}^\tau \alpha_{ijt}\alpha_{ijl} v_{it} v_{il}^\top \notag\\
% &=&\negthickspace\negthickspace\negthickspace\squeeze \sum\limits_{t, l=1}^\tau \[\frac{1}{m}\sum\limits_{j=1}^m \varphi^{\prime\prime}_{ij}(a_{ij}^\top x) \alpha_{ijt}\alpha_{ijl}\] v_{it} v_{il}^\top \notag \\
% &\eqdef&\negthickspace\negthickspace\negthickspace\squeeze \sum\limits_{t, l=1}^\tau \gamma_{itl}(x) \mV_{itl}, \label{hessian-form}
% \end{eqnarray}

where outer products $\mV_{itl} \eqdef v_{it} v_{il}^\top$ are linearly independent matrices (see Lemma~\ref{lem:outer-prod-ind}  in the Appendix) and $\gamma_{itl}(x)$ are coefficients in the brackets. The key observation from \eqref{hessian-form} is that to communicate $\nabla^2 f_i(x)$ we need to send only $r^2$ coefficients $\{\gamma_{itl}(x) \colon t,l\in[r]\}$ instead of $d^2$ entries as the server already knows matrices $\mV_{itl}$ through the basis $\{v_{it}\}_{t=1}^{r}$.
The takeaway from this observations is that the standard basis of $\R^{d\times d}$ is not always optimal. Indeed, in this case, any basis of $\R^{d\times d}$ containing $r^2$ matrices $\{\mV_{itl}\}_{t,l=1}^r$ is better choice for encoding Hessians $\nabla^2 f_i(x)$ without any loss in precision. Thus, ${\cal O}(d^2)$ communication cost is reduced to ${\cal O}(r^2 + d)$. In case of $r = {\cal O}(\sqrt{d})$, we get Newton's method with $\cO(d)$ communication cost and local quadratic convergence.

Analogous to \eqref{hessian-form}, similar representation can be derived for gradients too, namely $\nabla f_i(x) \in G_i$. Hence, we can send $\nabla f_i(x)$ by its $r$ basis coefficients instead of $d$ coordinates. This way we further reduce communication cost up to ${\cal O}(r^2)$ (see Table \ref{table:newtons-impl} for the summary). In the extreme case of $r = {\cal O}(1)$, we run Newton's method with $\cO(1)$ cost per iteration!

\begin{table*}[t!]
    \centering
    \caption{Key features of different implementation of classical Newton's method in distributed systems. Here $m$ is the number of local training data, $r$ is the intrinsic dimensionality of local data vectors.}\label{table:newtons-impl}
    \renewcommand{\arraystretch}{1.7}
    \begin{tabular}{|c||c|c|c|}
        \hline
        \makecellnew{Implementation of \\ Newton's method} & Standard/Naive & \citep{Islamov2021NewtonLearn} & {\bf Ours} \\
        \hline
        Problem & \makecellnew{General \\ Finite Sum} & \makecellnew{General \\ Finite Sum} & \makecellnew{Generalized \\ Linear Model}  \\
        \hline
        Data & Arbitrary & Arbitrary & \makecellnew{Intrinsically \\ Low-Dimensional}  \\
        \hline
        \makecellnew{Gradient communication \\ cost per iteration (floats)} & $d$ & $\min(m,d)$ & $r$  \\
        \hline
        \makecellnew{Hessian communication \\ cost per iteration (floats)} & $d^2$ & $\min(m,d^2)$ & $r^2$  \\
        \hline
        \makecellnew{Initial communication \\ cost (floats)} & -- & $m d$ & $r d$  \\
        \hline
        Reveals local training data ? & No & Yes & No  \\
        \hline
    \end{tabular}       
\end{table*}

Note that \eqref{hessian-form} is a special case of more general Hessian representation $\nabla^2 f_i(x) = \mQ_i \Lambda_i(x) \mQ_i^\top$, where $\mQ_i$ is a fixed invertible matrix (known to the server) and $\Lambda_i(x)$ is a sparse matrix with much less than $d^2$ (e.g., $r^2$ for \eqref{hessian-form}) non-zero entries. Changing the standard basis of $\R^{d\times d}$ via the transition matrix $\mQ_i$, we transform potentially dense Hessian $\nabla^2 f_i(x)$ (in the standard basis) into sparse $\Lambda_i(x)$ in the new basis.

\begin{quote}
{\em Thus, we save in communication for free just by changing the basis in the beginning of the training.} Motivated by this idea, we propose a new approach: {\em Basis Learn}.
\end{quote}

\subsection{Contributions}
Our goal is to further investigate the benefits and possible pitfalls of using custom bases in second-order optimization for general finite sums \eqref{problem-erm} with arbitrarily heterogeneous data.
As, by choosing a suitable basis, we can transform the Hessian into a sparser matrix in a lossless way, we propose and design three new methods, which apply lossy compression strategies afterwards to get even better performance in terms of communication complexity.

\paragraph{(1) Basis learn with bidirectional compression.} Our first contribution is the new method \algname{BL1}, which successfully integrates bidirectional compression with any predefined basis for Hessians. In \algname{BL1}, both client-to-server and server-to-client communications are compressed via careful application of compression operators. We allow both unbiased compressors, such as random sparsification (Rand-$K$) or random dithering, and contractive compressors, such as greedy sparsification (Top-$K$) or low-rank approximations (Rank-$R$).
In the special case of choosing the standard basis, our method recovers FedNL \citep{FedNL2021}. Thus, basis learn can be viewed as a generalization of FedNL. 

\paragraph{(2) Extensions to partial device participation.} For massively distributed trainings, such as in federated learning, with too many clients, we propose two extensions, \algname{BL2} and \algname{BL3}, to accommodate partial participation of devices. Thus, we unify bidirectional compression and partial participation under basis learn.
Furthermore, within these two extensions we propose two options to guarantee the positive definiteness of accumulated Hessian estimator at the server avoiding matrix projection steps of \algname{BL1}: first option (implemented in \algname{BL2}) is based on compression error trick of \citep{FedNL2021}, while the other option (realized in \algname{BL3}) is to choose bases with positive semidefinite matrices in the symmetric matrix space.

\paragraph{(3) Fast local rates.} For all our methods we prove local linear and superlinear rates independent of the condition number and the size of local dataset.

\paragraph{(4) Experiments.} By composing low-rank approximation and unbiased compression operators, we propose more efficient compressors for matrices leading to better performance in the experiments.

\section{Matrix Compression}
\label{sec:3}

Here we adopt two classes of vector compressor operators to matrices. A (possibly) randomized map $\cC:\R^{d\times d}\to \R^{d\times d}$ is called a {\em  contraction compressor} if there exists a constant $0< \delta \leq 1$ such that  
\begin{equation}\label{eq:contractor}
	\mathbb{E} \left[\|\mA - \cC(\mA) \|_{\rm F}^2\right] \leq (1-\delta)\|\mA\|_{\rm F}^2, \quad \forall \mA \in \R^{d\times d}. 
\end{equation}
Further, we say that a randomized map $\cC:\R^{d\times d}\to \R^{d\times d}$ is  an {\em unbiased compressor} if  there exists a constant $\omega \geq 0$ (``variance parameter'') such that
\begin{equation} \label{eq:unbiased}
	\mathbb{E}\left[ \cC(\mA) \right] = \mA \quad {\rm and} \quad \mathbb{E}\left[\| \cC(\mA)\|_{\rm F}^2\right] \leq (\omega + 1)\|\mA \|_{\rm F}^2, \quad \forall \mA \in \R^{d\times d}. 
\end{equation}

The contraction compressor and unbiased compressor on $\R^d$ can be defined in the same way where the Frobenius norm $\|\cdot\|_{\rm F}$ is replaced by the standard Euclidean norm $\|\cdot\|$. For more examples of contraction and unbiased compressors, we refer the reader to \citep{FedNL2021, biased2020}. On the other hand, the compressor on $\R^{d\times d}$ can be regarded as a compressor on $\R^{d^2}$. Hence, compressors on the vector space $\R^{d^2}$ can be applied to the matrix in $\R^{d\times d}$. 
One can combine two compressors from different classes to get new ones \citep{EC-LSVRG}. In particular, we consider composition of Rank-$R$ \citep{FedNL2021} and unbiased compressors below.

Suppose ${\cal Q}_1^i$ and ${\cal Q}_2^i$, $i\in [d]$ are unbiased compressors on $\R^d$ with variance parameter $\omega_1$ and $\omega_2$, respectively. For any $\mA \in \R^{d\times d}$, let $\mA = \sum_{i=1}^d \sigma_i u_iv_i^\top$ be the singular value decomposition of $\mA$ with singular values $\sigma_1\geq \sigma_2\geq \dots\geq \sigma_d\geq0$. For $R\leq d$, define 
$$
\squeeze
{\cal C}_1(\mA) \eqdef \sum_{i=1}^R  \frac{\sigma_i {\cal Q}_1^i(a_i u_i) {\cal Q}_2^i(b_i v_i)^\top}{a_ib_i(\omega_1 +1)(\omega_2+1)}, 
$$
where $a_i,b_i>0$ are som constants chosen independently of ${\cal Q}_1^i$ and ${\cal Q}_2^i$ for $1\leq i\leq R$. For example, we can set $a_i = b_i = 1$ for all $i$, or $a_i = b_i = \sqrt{\sigma_i}$ for all $i$. Notice that if $\mA$ is symmetric, ${\cal C}_1(\mA)$ is not necessarily symmetric. However, we can symmetrize the output matrix by defining
$$
{\cal C}_2(\mA) \eqdef
% \left\{ \begin{array}{rl}
% {\cal C}_1(\mA) & \mbox{ if $\mA$ is not symmetric } \\
% \tfrac{{\cal C}_1(\mA) + {\cal C}_1(\mA)^\top}{2} &\mbox{ if $\mA$ is symmetric.}
% \end{array} \right.
\left\{\begin{smallmatrix}
	{\cal C}_1(\mA)   &&\text{if}\; \mA \;\text{is not symmetric}   \\
	\tfrac{{\cal C}_1(\mA) + {\cal C}_1(\mA)^\top}{2}   &&\text{if}\; \mA \;\text{is symmetric} \phantom{~~~~}
\end{smallmatrix}\right. .
$$

The following lemma says that this symmetrization process preserves contractiveness. 

\begin{lemma}\label{lm:ABineq}
\begin{itemize}
\item[(i)] For any $\mA, \mB\in \R^{d\times d}$, if $\mA$ is symmetric, then 
	$
	\left\|\nicefrac{(\mB + \mB^\top)}{2} - \mA \right\|_{\rm F} \leq \|\mB - \mA\|_{\rm F}. 
	$
\item[(ii)] For any contraction compressor $\cC:\R^{d\times d}\to \R^{d\times d}$ with contraction parameter $\delta$, the mapping ${\tilde {\cal C}}:\R^{d\times d}\to \R^{d\times d}$ defined by
	$$
	{\tilde {\cal C}}(\mA) \eqdef
	% \left\{ \begin{array}{ll}
	% {\cal C}(\mA) & \mbox{ if $\mA$ is not symmetric } \\
	% \tfrac{{\cal C}(\mA) + {\cal C}(\mA)^\top}{2} &\mbox{ if $\mA$ is symmetric.}
	% \end{array} \right.
	\left\{\begin{smallmatrix}
		{\cal C}(\mA)   &&\text{if}\; \mA \;\text{is not symmetric}   \\
		\tfrac{{\cal C}(\mA) + {\cal C}(\mA)^\top}{2}   &&\text{if}\; \mA \;\text{is symmetric} \phantom{~~~~}
	\end{smallmatrix}\right.
	$$ 
	is also a contraction compressor with contraction parameter $\delta$. 
\end{itemize}	
\end{lemma}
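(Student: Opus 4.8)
The plan is to recognize the symmetrization map $\mathbf{X}\mapsto \tfrac{\mathbf{X}+\mathbf{X}^\top}{2}$ as the orthogonal projection of $\R^{d\times d}$ — equipped with the Frobenius inner product $\langle \mathbf{X},\mathbf{Y}\rangle\eqdef\tr(\mathbf{X}^\top\mathbf{Y})$ — onto the linear subspace of symmetric matrices. Granting this, part~(i) is the standard fact that an orthogonal projection onto a subspace does not increase the distance to any point already lying in that subspace, and part~(ii) drops out of part~(i) by a one-line argument together with a case split on the symmetry of $\mA$. I do not anticipate a genuine obstacle here; the only things to watch are the orthogonality bookkeeping in part~(i) and making sure that the (deterministic) bound from part~(i) is invoked pointwise on the random matrix $\cC(\mA)$ before expectations are taken in part~(ii).

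For part~(i), I would decompose $\mB=\mB_s+\mB_a$ into its symmetric part $\mB_s\eqdef\tfrac{\mB+\mB^\top}{2}$ and skew-symmetric part $\mB_a\eqdef\tfrac{\mB-\mB^\top}{2}$. The key elementary computation is that symmetric and skew-symmetric matrices are Frobenius-orthogonal: for symmetric $\mS$ and skew-symmetric $\mT$, $\tr(\mS\mT)=\tr((\mS\mT)^\top)=\tr(\mT^\top\mS^\top)=-\tr(\mT\mS)=-\tr(\mS\mT)$, so $\langle\mS,\mT\rangle=0$. Since $\mA$ is symmetric, $\mB_s-\mA$ is symmetric and hence orthogonal to $\mB_a$, so the Pythagorean identity gives
\[
\left\|\mB-\mA\right\|_{\rm F}^2=\left\|(\mB_s-\mA)+\mB_a\right\|_{\rm F}^2=\left\|\mB_s-\mA\right\|_{\rm F}^2+\left\|\mB_a\right\|_{\rm F}^2\ \ge\ \left\|\mB_s-\mA\right\|_{\rm F}^2,
\]
and taking square roots finishes part~(i), since $\mB_s=\tfrac{\mB+\mB^\top}{2}$.

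For part~(ii), if $\mA$ is not symmetric then ${\tilde{\cC}}(\mA)=\cC(\mA)$, so \eqref{eq:contractor} holds for ${\tilde{\cC}}$ verbatim. If $\mA$ is symmetric, I would apply part~(i) pointwise with $\mB$ replaced by the random matrix $\cC(\mA)$: for every realization of the randomness,
\[
\left\|{\tilde{\cC}}(\mA)-\mA\right\|_{\rm F}^2=\left\|\tfrac{\cC(\mA)+\cC(\mA)^\top}{2}-\mA\right\|_{\rm F}^2\ \le\ \left\|\cC(\mA)-\mA\right\|_{\rm F}^2.
\]
Taking expectations and using that $\cC$ is a contraction compressor with parameter $\delta$ then yields $\Exp\!\left[\left\|{\tilde{\cC}}(\mA)-\mA\right\|_{\rm F}^2\right]\le(1-\delta)\left\|\mA\right\|_{\rm F}^2$. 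In both cases ${\tilde{\cC}}$ satisfies \eqref{eq:contractor} with the same $\delta$, so it is a contraction compressor with contraction parameter $\delta$, as claimed.
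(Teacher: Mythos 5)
Your proof is correct. For part (i) you take a mildly different route from the paper: you decompose $\mB$ into its symmetric part $\mB_s$ and skew-symmetric part $\mB_a$, note that symmetric and skew-symmetric matrices are Frobenius-orthogonal, and invoke the Pythagorean identity $\|\mB-\mA\|_{\rm F}^2=\|\mB_s-\mA\|_{\rm F}^2+\|\mB_a\|_{\rm F}^2$; the paper instead expands $\left\|\tfrac{\mB+\mB^\top}{2}-\mA\right\|_{\rm F}^2-\|\mB-\mA\|_{\rm F}^2$ directly and bounds the cross term $\tfrac12\langle\mB,\mB^\top\rangle-\tfrac12\|\mB\|_{\rm F}^2$ by Cauchy--Schwarz, using $\langle\mB-\mB^\top,\mA\rangle=0$ exactly as you do. Your projection viewpoint buys a slightly stronger conclusion (an exact identity for the deficit, $\|\mB_a\|_{\rm F}^2$, rather than just an inequality) and explains conceptually why symmetrization is nonexpansive toward symmetric targets; the paper's version is a self-contained algebraic check. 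For part (ii) your argument coincides with the paper's, and you are in fact slightly more careful in making the case split on the symmetry of $\mA$ and in stressing that part (i) is applied pointwise to each realization of $\cC(\mA)$ before taking expectations.
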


\begin{proposition}\label{th:C1C2}
	${\cal C}_1$ and ${\cal C}_2$ are both contraction compressors with parameter $\tfrac{R}{d(\omega_1+1)(\omega_2+1)}$. 
\end{proposition}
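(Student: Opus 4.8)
The plan is to establish the contraction property for $\cC_1$ directly and then obtain the statement for $\cC_2$ as an immediate corollary of Lemma~\ref{lm:ABineq}(ii). Throughout I write $\beta \eqdef (\omega_1+1)(\omega_2+1)\geq 1$ and let $\mA = \sum_{i=1}^d \sigma_i u_i v_i^\top$ be the SVD with $\sigma_1 \geq \cdots \geq \sigma_d \geq 0$; I assume (as the construction implicitly requires) that the auxiliary compressors $\cQ_1^1,\dots,\cQ_1^R,\cQ_2^1,\dots,\cQ_2^R$ are mutually independent. The backbone of the argument is the bias--variance identity $\fronorm{\mA - \cC_1(\mA)}^2 = \fronorm{\mA}^2 - 2\langle \mA,\cC_1(\mA)\rangle + \fronorm{\cC_1(\mA)}^2$; I take expectations and estimate the second and third terms.

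First I would compute the mean. By independence of $\cQ_1^i$ and $\cQ_2^i$ and their unbiasedness, $\E[\cQ_1^i(a_i u_i)\cQ_2^i(b_i v_i)^\top] = a_i b_i\, u_i v_i^\top$, so the normalizing constants cancel and $\E[\cC_1(\mA)] = \tfrac{1}{\beta}\sum_{i=1}^R \sigma_i u_i v_i^\top$; orthonormality of the singular vectors then gives $\langle \mA, \E[\cC_1(\mA)]\rangle = \tfrac{1}{\beta}\sum_{i=1}^R \sigma_i^2$. Next I would expand $\fronorm{\cC_1(\mA)}^2$ as a double sum over $i,j\in[R]$ using $\langle xy^\top, wz^\top\rangle_{\rm F} = \langle x,w\rangle\langle y,z\rangle$. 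For $i\neq j$ the terms vanish in expectation because independence within the $\cQ_1$ family together with orthonormality yields $\E\langle \cQ_1^i(a_i u_i),\cQ_1^j(a_j u_j)\rangle = a_i a_j \langle u_i, u_j\rangle = 0$; for $i = j$, independence of $\cQ_1^i$ from $\cQ_2^i$ and the two variance bounds of \eqref{eq:unbiased} give $\E\big[\|\cQ_1^i(a_i u_i)\|^2\|\cQ_2^i(b_i v_i)\|^2\big] \leq \beta\, a_i^2 b_i^2$. Hence $\E\fronorm{\cC_1(\mA)}^2 \leq \tfrac{1}{\beta}\sum_{i=1}^R \sigma_i^2$.

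Combining, $\E\fronorm{\mA - \cC_1(\mA)}^2 \leq \fronorm{\mA}^2 - \tfrac{2}{\beta}\sum_{i=1}^R\sigma_i^2 + \tfrac{1}{\beta}\sum_{i=1}^R\sigma_i^2 = \fronorm{\mA}^2 - \tfrac{1}{\beta}\sum_{i=1}^R\sigma_i^2$. To close the gap I would invoke the elementary inequality $\sum_{i=1}^R \sigma_i^2 \geq \tfrac{R}{d}\sum_{i=1}^d \sigma_i^2 = \tfrac{R}{d}\fronorm{\mA}^2$, which holds because the $\sigma_i^2$ are nonincreasing (the average of the $R$ largest dominates the average of all $d$). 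This produces $\E\fronorm{\mA - \cC_1(\mA)}^2 \leq \big(1 - \tfrac{R}{d\beta}\big)\fronorm{\mA}^2$, i.e. \eqref{eq:contractor} with $\delta = \tfrac{R}{d(\omega_1+1)(\omega_2+1)}$, and $\delta \leq 1$ since $R\leq d$ and $\beta\geq 1$. Finally, $\cC_2$ is by definition the symmetrization $\widetilde{\cC}_1$ of Lemma~\ref{lm:ABineq}(ii), so that lemma hands us the same contraction parameter for $\cC_2$.

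I expect the only genuinely delicate points to be the independence bookkeeping that annihilates the $i\neq j$ cross terms in the second-moment estimate, and a harmless technicality: if $\sigma_i = 0$ for some $i\leq R$, the choice $a_i = b_i = \sqrt{\sigma_i}$ nominally divides by zero, but such terms are identically zero and can simply be omitted (equivalently, one may restrict to $R\leq \operatorname{rank}(\mA)$). Everything else is routine algebra.
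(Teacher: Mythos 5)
Your proof is correct and follows essentially the same route as the paper: expand $\E\fronorm{\mA-\cC_1(\mA)}^2$ via the bias--variance identity, use independence of the $\cQ$'s and orthonormality of the singular vectors to kill the cross terms and bound the diagonal terms by $\tfrac{1}{\beta}\sum_{i=1}^R\sigma_i^2$, and then pass to $\cC_2$ via Lemma~\ref{lm:ABineq}(ii). The only cosmetic difference is the last step: the paper completes the square to recognize $\fronorm{\sum_{i=1}^R\sigma_iu_iv_i^\top-\mA}^2$ and cites the Rank-$R$ contraction property with parameter $\nicefrac{R}{d}$, whereas you prove the equivalent inequality $\sum_{i=1}^R\sigma_i^2\ge\tfrac{R}{d}\fronorm{\mA}^2$ inline from the monotonicity of the singular values.
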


% ------------------------------ %
% ------------------------------ %
%\section{Basis Learn}
% ------------------------------ %
% ------------------------------ %

% \subsection{Basis Learn in $\R^{d\times d}$}
\section{Basis Learning in $\R^{d\times d}$}

Let $\{ \mB_i^{jl} \ | \  j, l\in [d] \}$ be a basis in $\R^{d\times d}$ and $N \eqdef d^2$ be the number of matrices in the basis for any $i\in [n]$. Then for any matrix $\mA$ in $\R^{d\times d}$, it can be uniquely represented as 
$
\mA = \sum_{j, l} h^i_{jl}(\mA) \mB_i^{jl}, 
$
where $h^i_{jl}(\mA) \in \R$ is the coefficient corresponding to $\mB_i^{jl}$. 

Define $h^i(\mA) \in \R^{d\times d}$ such that $h^i(\mA)_{jl} \eqdef h^i_{jl}(\mA)$ for $j, l \in [d]$. For any matrix $\mA \in \R^{d\times d}$, let $vec: \R^{d\times d} \to \R^{d^2}$ be defined as
$$
vec(\mA) = (\mA_{11}, ..., \mA_{d1}, ..., \mA_{1i}, ..., \mA_{di}, ..., \mA_{1d}, ..., \mA_{dd})^\top.
$$
Define ${\cal M}_i \eqdef (\mB_i^{11}, ..., \mB_i^{d1}, ..., \mB_i^{1j}, ..., \mB_i^{dj}, ..., \mB_i^{1d}, ..., \mB_i^{dd})$. Then we have $\mA =  {\cal M}_i vec(h^i(\mA))$, which is equivalent to 
\begin{equation}\label{eq:Arep-BL1}
	vec(\mA) = {\cal B}_i \cdot vec(h^i(\mA)), 
\end{equation}
for any matrix $\mA \in \R^{d\times d}$, where 
\begin{align*}
	{\cal B}_i &\eqdef (vec(\mB_i^{11}), ..., vec(\mB_i^{d1}), ..., vec(\mB_i^{1j}), ..., vec(\mB_i^{dj}), ..., vec(\mB_i^{1d}), ..., vec(\mB_i^{dd})) \in \R^{N \times N}. 
\end{align*}
Since the representation (\ref{eq:Arep-BL1}) is unique, we know ${\cal B}_i$ is invertible, and thus 
\begin{equation}\label{eq:hmA}
	vec(h^i(\mA)) = {\cal B}_i^{-1} vec(\mA). 
\end{equation}

% Two examples of the basis $\{ \mB_i^{jl} \ | \  j, l\in [d] \}$
\begin{example}
	The $(j,l)^{th}$ entry of $\mB_i^{jl} $ is $1$ and the others are $0$ for $j, l \in [d]$. Then $\mA = h^i(\mA)$ for any $\mA \in \R^{d\times d}$. 
\end{example}

\begin{example}
	The $(j,l)^{th}$ and $(l,j)^{th}$ entries of $\mB_i^{jl} $ are $1$ and the others are $0$ for $d\geq j \geq l \geq 1$. The $(j,l)^{th}$ entry of $\mB_i^{jl} $ is $1$, the $(l,j)^{th}$ entry of $\mB_i^{jl} $ is $-1$, and the others are $0$ for $1\leq j<l\leq d$. Then for any symmetric matrix $\mA \in \R^{d\times d}$, $h^i(\mA)$ is the lower triangular part of $\mA$.
\end{example}

We have Algorithm~\ref{alg:BL1} (\algname{BL1}) as an extension of \algname{FedNL-BC} in \citep{FedNL2021}. \algname{BL1} mainly has two differences from \algname{FedNL-BC}: (i) We use $\mL_i^k$ to learn the coefficient matrix $h^i(\nabla^2 f_i(z^k))$ rather than the Hessian; (ii) When $\xi^k=0$, we use $\[\mH^k\]_{\mu}$ rather than $\mH^k$ to construct the gradient estimator $g^k$, where $\[\cdot\]_\mu$ represents the projection on the set $\{  \mA \in \R^{d\times d} | \ \mA=\mA^\top, \mA \succeq \mu \mI  \}$.

\begin{algorithm}[h]
	\caption{\algname{BL1} (Basis Learn with {\color{blue}Bidirectional Compression)}}
	\label{alg:BL1}
	\begin{algorithmic}[1]
		\STATE \textbf{Parameters:} Hessian learning rate $\alpha\ge0$; {\color{blue}model learning rate $\eta\ge0$}; {\color{blue}gradient compression probability $p\in(0,1]$}; compression operators $\{\cC_1^k,\dots,\cC_n^k\}$ and {\color{blue}$\cQ^k$}
		\STATE \textbf{Initialization:} $x^0\;{\color{blue}=w^0=z^0}\in\R^d$; $\mL_i^0 \in \R^{d\times d}$, $\mH_i^0 = \sum_{jl} (\mL_i^0)_{jl} \mB_i^{jl}$, and $\mH^0 \eqdef \tfrac{1}{n}\sum_{i=1}^n \mH_i^0$; {\color{blue}$\xi^0 = 1$}
		% \FOR{ $k=0, 1, 2, ...$  }
		\FOR{each device $i = 1, \dots, n$ in parallel}
		\STATE \textbf{if} {\color{blue}$\xi^k = 1$}
		\STATE ~~ $ w^{k+1} = z^{k}$, compute local gradient $\nabla f_i(z^k)$ and send to the server
		\STATE \textbf{if} {\color{blue}$\xi^k = 0$}
		\STATE ~ $w^{k+1} = w^k$
		\STATE Compute local Hessian $\nabla^2 f_i(z^k)$ and send $\mS_i^k \eqdef \cC_i^k(h^i(\nabla^2 f_i(z^k)) - \mL_i^k)$ to the server
		\STATE Update local Hessian shifts $\mL_i^{k+1} = \mL_i^k + \alpha \mS_i^k$,  $\mH_i^{k+1} = \mH_i^k + \alpha \sum_{jl} (\mS_i^k)_{jl} \mB^{jl}_i$
		\ENDFOR 
		\STATE \textbf{on} server
		\STATE \quad \textbf{if} {\color{blue}$\xi^k = 1$}
		\STATE \quad ~~ $w^{k+1} = z^k,\; g^k = \nabla f(z^k)$
		\STATE \quad \textbf{if} {\color{blue}$\xi^k = 0$}
		\STATE \quad ~~ $w^{k+1} = w^k,\; g^k = \[\mH^k\]_{\mu}(z^k-w^k) + \nabla f(w^k)$ 
		\STATE \quad $x^{k+1} = z^k - \[\mH^k\]_{\mu}^{-1} g^k$
		\STATE \quad $\mH^{k+1} = \mH^k + \frac{\alpha}{n} \sum_{i=1}^n \sum_{jl} (\mS_i^k)_{jl} \mB_i^{jl}$
		\STATE \quad {\color{blue}Send $v^k \eqdef \cQ^k (x^{k+1}-z^k)$ to all devices $i\in[n]$}
		\STATE \quad {\color{blue}Update the model $z^{k+1} = z^k+\eta v^k$}
		\STATE \quad {\color{blue}Send $\xi^{k+1} \sim \text{Bernoulli}(p)$ to all devices $i\in[n]$}
		\FOR{each device $i = 1, \dots, n$ in parallel}
		\STATE {\color{blue}Update the model $z^{k+1}=z^k+\eta v^k$}
		\ENDFOR
		% \ENDFOR
	\end{algorithmic}
\end{algorithm}

\begin{assumption}\label{as:Qunbiasedcomp-BL1}
	(i) ${\cal Q}^k$ (${\cal Q}_i^k$) is an unbiased compressor with parameter $\omega_{\rm M}$ and $0<\eta \leq \nicefrac{1}{(\omega_{\rm M}+1)}$.
	(ii) For all $j\in[d]$, $(z^k)_j$ in Algorithm~\ref{alg:BL1} ($(z_i^k)_j$ in Algorithm~\ref{alg:BL2} or Algorithm~\ref{alg:BL3} ) is a convex combination of $\{  (x^t)_j  \}_{t=0}^k$ for $k\geq 0$. 
\end{assumption}

\begin{assumption}\label{as:Qcontractioncomp-BL1}
	(i) ${\cal Q}^k$ (${\cal Q}_i^k$) is a contraction compressor with parameter $\delta_{\rm M}$ and $\eta=1$. (ii) ${\cal Q}^k$ (${\cal Q}_i^k$) is deterministic, i.e., $\mathbb{E}[{\cal Q}^k(x)] = {\cal Q}^k(x)$ for any $x\in\R^d$. 
\end{assumption}

\begin{assumption}\label{as:Cunbiasedcomp-BL1}
	(i) $\cC_i^k$ is an unbiased compressor with parameter $\omega$ and $0<\alpha \leq \nicefrac{1}{(\omega+1)}$.\\ (ii) For all $i\in[n]$ and $j,l\in[d]$, $(\mL^k_i)_{jl}$ is a convex combination of $\{  h^i(\nabla^2 f_i(z^t))_{jl}  \}_{t=0}^k$ in Algorithm~\ref{alg:BL1} ($\{h^i(\nabla^2 f_i(z_i^t))_{jl}  \}_{t=0}^k$ in Algorithm~\ref{alg:BL2} or $\{  {\tilde h}^i(\nabla^2 f_i(z_i^t))_{jl}  \}_{t=0}^k$ in Algorithm~\ref{alg:BL3} ) for $k\geq 0$. 
\end{assumption}

\begin{assumption}\label{as:Ccontractioncomp-BL1}
	(i) $\cC_i^k$ is a contraction compressor with parameter $\delta$ and $\alpha=1$. (ii) $\cC_i^k$ is deterministic, i.e., $\mathbb{E}[\cC_i^k(\mA)] = \cC_i^k(\mA)$ for any $\mA\in\R^{d\times d}$. 
\end{assumption}

\begin{assumption}\label{as:BL1}
	We have
	$\|\nabla^2 f_i(x)-\nabla^2 f_i(y)\| \leq H\|x-y\|$,
	$\|\nabla^2 f_i(x)-\nabla^2 f_i(y)\|_{\rm F} \leq H_1\|x-y\|$,
	$\|h^i(\nabla^2 f_i(x)) - h^i(\nabla^2 f_i(y))\|_{\rm F} \leq M_1 \|x-y\|$,
	$\max_{jl} \{ |h^i(\nabla^2 f_i(x))_{jl} - h^i(\nabla^2 f_i(y))_{jl}| \} \leq M_2 \|x-y\|$,
	$\max_{jl}\{\|\mB_i^{jl}\|_{\rm F}\}\leq R$
	for any $x,y\in \R^d$ and $i\in[n]$. For Algorithm~\ref{alg:BL2}, we assume each $f_i$ is $\mu$-strongly convex. 
\end{assumption}

We estimate $M_1$ and $M_2$ in Assumption \ref{as:BL1} in the following lemma. 
\begin{lemma}\label{lm:EstM-1}
	Assume $\|\nabla^2 f_i(x) - \nabla^2 f_i(y)\|_{\rm F} \leq H_1\|x-y\|$ and $\max_{jl} \{ |(\nabla^2 f_i(x))_{jl} - (\nabla^2 f_i(y))_{jl} | \} \leq \nu \|x-y\|$ for any $x, y\in \R^d$, and $i \in [n]$. Then we have $M_1 \leq   \max_i \{ \| {\cal B}_i^{-1}\| \} H_1$ and $M_2 \leq \nu \max_{i} \{\| {\cal B}_i^{-1}\|_{\infty}\}$. 
\end{lemma}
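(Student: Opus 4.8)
The plan is to exploit the explicit linear relation \eqref{eq:hmA} between a matrix and its basis-coefficient matrix, namely $vec(h^i(\mA)) = {\cal B}_i^{-1} vec(\mA)$, together with two elementary norm identifications on $\R^{d\times d}\cong\R^{d^2}$: the Frobenius norm of $\mA$ equals the Euclidean norm $\norm{vec(\mA)}$, and $\max_{jl}|\mA_{jl}|$ equals $\norm{vec(\mA)}_\infty$. Both bounds then follow from the same short computation, the only difference being which vector norm (and hence which induced operator norm of ${\cal B}_i^{-1}$) one uses.

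For $M_1$, I would apply \eqref{eq:hmA} with $\mA = \nabla^2 f_i(x) - \nabla^2 f_i(y)$, using linearity of $h^i$ and of $vec$, so that $vec(h^i(\nabla^2 f_i(x)) - h^i(\nabla^2 f_i(y))) = {\cal B}_i^{-1}\big(vec(\nabla^2 f_i(x)) - vec(\nabla^2 f_i(y))\big)$. Taking Euclidean norms, using $\norm{{\cal B}_i^{-1} u}\le\norm{{\cal B}_i^{-1}}\norm{u}$ for the spectral norm, and then the Frobenius/Euclidean identification gives $\fronorm{h^i(\nabla^2 f_i(x)) - h^i(\nabla^2 f_i(y))} \le \norm{{\cal B}_i^{-1}}\,\fronorm{\nabla^2 f_i(x) - \nabla^2 f_i(y)} \le \norm{{\cal B}_i^{-1}} H_1 \norm{x-y}$. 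Maximizing over $i$ yields $M_1 \le \max_i\{\norm{{\cal B}_i^{-1}}\} H_1$.

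For $M_2$, I would start from the same vector identity but take the $\ell_\infty$ norm, using $\norm{{\cal B}_i^{-1} u}_\infty \le \norm{{\cal B}_i^{-1}}_\infty \norm{u}_\infty$ (with $\norm{\cdot}_\infty$ the induced $\infty\to\infty$ operator norm, i.e.\ maximum absolute row sum) and the identification $\max_{jl}|\mA_{jl}| = \norm{vec(\mA)}_\infty$. This gives $\max_{jl}|h^i(\nabla^2 f_i(x))_{jl} - h^i(\nabla^2 f_i(y))_{jl}| \le \norm{{\cal B}_i^{-1}}_\infty \max_{jl}|(\nabla^2 f_i(x))_{jl} - (\nabla^2 f_i(y))_{jl}| \le \nu\,\norm{{\cal B}_i^{-1}}_\infty\,\norm{x-y}$, and maximizing over $i$ gives $M_2 \le \nu\max_i\{\norm{{\cal B}_i^{-1}}_\infty\}$.

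There is essentially no hard step here; the only thing to watch is keeping the two norms consistent — pairing the Frobenius norm with the spectral norm of ${\cal B}_i^{-1}$ in the first estimate, and the entrywise max norm with the induced $\infty$-norm of ${\cal B}_i^{-1}$ in the second. One cannot collapse both parts into a single operator-norm bound, since in general there is no dimension-free ordering between $\norm{{\cal B}_i^{-1}}$ and $\norm{{\cal B}_i^{-1}}_\infty$.
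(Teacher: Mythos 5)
Your proposal is correct and follows essentially the same route as the paper: both pass through the identity $vec(h^i(\mA)) = {\cal B}_i^{-1} vec(\mA)$ applied to $\mA = \nabla^2 f_i(x)-\nabla^2 f_i(y)$, pairing the Frobenius/Euclidean identification with the spectral norm of ${\cal B}_i^{-1}$ for $M_1$ and the entrywise max with the induced $\infty$-norm (maximum absolute row sum) for $M_2$. Your explicit remark about keeping the two norm pairings consistent is a useful clarification of what the paper leaves implicit.
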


To present our theory in a unified manner, we define
\begin{equation}\label{eq:N_B-BL1}
	N_{\rm B} \eqdef
	\left\{\begin{smallmatrix}
		1   &&\text{if the bases}\; \{\mB_i^{jl} \}_{j,l\in[d]} \; \text{are all orthogonal}   \\
		d^2   &&\text{otherwise} \phantom{~~~~~~~~~~~~~~~~~~~~~~~~~~~~~~~~~~}
	\end{smallmatrix}\right.
\end{equation}

\begin{equation}\label{eq:AB_M-BL1}
	(A_{\rm M}, B_{\rm M}) \eqdef
	\left\{\begin{smallmatrix}
		(\eta, \eta)   && \text{if Asm.}\; \ref{as:Qunbiasedcomp-BL1} \text{(i) holds}    \\
		\(\tfrac{\delta_{\rm M}}{4}, \tfrac{6}{\delta_{\rm M}} - \tfrac{7}{2}\)   && \text{if Asm.}\; \ref{as:Qcontractioncomp-BL1} \text{(i) holds}   
	\end{smallmatrix}\right.
\end{equation}

\begin{equation}\label{eq:AB-BL1}
	(A, B) \eqdef
	\left\{\begin{smallmatrix}
		(\alpha, \alpha)   && \text{if Asm.}\; \ref{as:Cunbiasedcomp-BL1} \text{(i) holds}    \\
		\(\tfrac{\delta}{4}, \tfrac{6}{\delta} - \tfrac{7}{2}\)   && \text{if Asm.}\; \ref{as:Ccontractioncomp-BL1} \text{(i) holds}   
	\end{smallmatrix}\right.
\end{equation}

and ${\cal H}^k \eqdef \tfrac{1}{n} \sum_{i=1}^n \|\mL_i^k - \mL_i^*\|^2_{\rm F}$, $\Phi^k_1 \eqdef \|z^k-x^*\|^2 + \tfrac{A_{\rm M}(1-p)}{2p}\|w^k-x^*\|^2$, where $\mL_i^* \eqdef h^i(\nabla^2 f_i(x^*))$, for $k\geq 0$.

\begin{theorem}[Linear convergence of \algname{BL1}]\label{th:linear-BL1}
	Let Assumption \ref{as:BL1} hold. Let Assumption \ref{as:Qunbiasedcomp-BL1} (i) or Assumption \ref{as:Qcontractioncomp-BL1} (i) hold. Assume $\|z^k-x^*\|^2 \leq \tfrac{A_{\rm M}\mu^2}{4H^2B_{\rm M}}$ and ${\cal H}^k \leq \tfrac{A_{\rm M}\mu^2}{16N_{\rm B}R^2 B_{\rm M}}$ for $k\geq 0$. Then we have 
	$$
	\mathbb{E}[\Phi^k_1] \leq \left(  1 - \frac{\min\{  A_{\rm M}, p  \}}{2}  \right)^k \Phi^0_1, 
	$$
	for $k\geq 0$. 
\end{theorem}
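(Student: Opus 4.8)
The plan is to set up a Lyapunov-type argument that simultaneously tracks the model error $\|z^k - x^*\|^2$ (and its delayed companion $\|w^k-x^*\|^2$) and the Hessian-learning error $\mathcal{H}^k = \tfrac{1}{n}\sum_i \|\mL_i^k - \mL_i^*\|_{\rm F}^2$. The key structural facts I would first establish are: (a) since $z^k$ is a convex combination of past iterates $x^t$ (Assumption~\ref{as:Qunbiasedcomp-BL1}(ii)), once the hypotheses $\|z^k-x^*\|^2 \le \tfrac{A_{\rm M}\mu^2}{4H^2 B_{\rm M}}$ hold for all $k$, the matrix $[\mH^k]_\mu$ used by the server is a well-conditioned ($\succeq \mu\mI$) surrogate for $\nabla^2 f(z^k)$; (b) the Newton-type step $x^{k+1} = z^k - [\mH^k]_\mu^{-1} g^k$ yields the standard one-step contraction in $\|x^{k+1}-x^*\|$ controlled by (i) the Hessian approximation error $\|[\mH^k]_\mu - \nabla^2 f(z^k)\|$, which via the projection's non-expansiveness and the basis bound $\max_{jl}\|\mB_i^{jl}\|_{\rm F}\le R$ is bounded by something like $\sqrt{N_{\rm B}}\,R\,\sqrt{\mathcal{H}^k}$ plus an $O(\|z^k-x^*\|)$ curvature term from Assumption~\ref{as:BL1}, and (ii) in the $\xi^k=0$ branch, the gradient-estimator error $\|g^k - \nabla f(z^k)\|$ coming from using the stale point $w^k$, which is $O(\|[\mH^k]_\mu\|\cdot\|z^k-w^k\|)$ and ultimately reduces to the same two quantities. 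Here the case split in \eqref{eq:N_B-BL1} for $N_{\rm B}$ enters: for orthogonal bases the coefficient matrix error in Frobenius norm directly controls the reconstructed-matrix error without the dimensional blow-up.

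Next I would handle the compression recursions. For the Hessian shifts, the update $\mL_i^{k+1} = \mL_i^k + \alpha\,\cC_i^k(h^i(\nabla^2 f_i(z^k)) - \mL_i^k)$ is an error-feedback-free compressed fixed-point iteration toward the moving target $h^i(\nabla^2 f_i(z^k))$; using either the unbiased-compressor contraction (Assumption~\ref{as:Cunbiasedcomp-BL1}) or the deterministic-contraction case (Assumption~\ref{as:Ccontractioncomp-BL1}) together with the $M_1$/$M_2$ Lipschitz bounds (Lemma~\ref{lm:EstM-1}) and Young's inequality, one gets a recursion of the form $\Exp[\mathcal{H}^{k+1}] \le (1-A)\mathcal{H}^k + cB\,M_?^2\,\|z^k - x^*\|^2$ — with the constants $(A,B)$ exactly as in \eqref{eq:AB-BL1}, which is why they are defined that way. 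Symmetrically, for the model-compression side the operator $\cQ^k$ applied to $x^{k+1}-z^k$, combined with $0<\eta\le 1/(\omega_{\rm M}+1)$ (resp. $\eta=1$ and deterministic $\cQ^k$), gives $\Exp[\|z^{k+1}-x^*\|^2] \le (1-A_{\rm M})\|z^k-x^*\|^2 + (\text{bias/variance terms})$, contributing the $(A_{\rm M},B_{\rm M})$ constants. The $\xi^k\sim\mathrm{Bernoulli}(p)$ randomness is averaged out separately: on the event $\xi^k=1$ the exact gradient is used and $w^{k+1}=z^k$, which is why $\|w^k-x^*\|^2$ appears in $\Phi^k_1$ with weight $\tfrac{A_{\rm M}(1-p)}{2p}$ — this weight is chosen so that the cross-terms telescope.

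Then I would combine everything: add the $z$-recursion, the $w$-recursion, and an appropriately scaled copy of the $\mathcal{H}$-recursion, and choose all absorbing constants so that (using the two smallness hypotheses $\|z^k-x^*\|^2 \le \tfrac{A_{\rm M}\mu^2}{4H^2 B_{\rm M}}$ and $\mathcal{H}^k \le \tfrac{A_{\rm M}\mu^2}{16 N_{\rm B} R^2 B_{\rm M}}$ to make the curvature-times-error terms strictly subdominant) the right-hand side is bounded by $\bigl(1 - \tfrac{\min\{A_{\rm M},p\}}{2}\bigr)$ times $\Phi^k_1$. Iterating gives the claimed geometric decay of $\Exp[\Phi^k_1]$. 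The main obstacle I anticipate is the bookkeeping in the $\xi^k=0$ branch: bounding $\|z^k - w^k\|$ and the resulting gradient-estimator error in a way that feeds back only into $\|z^k-x^*\|^2$, $\|w^k-x^*\|^2$ and $\mathcal{H}^k$ without generating terms that cannot be absorbed — this is precisely where the delayed potential with the $\tfrac{A_{\rm M}(1-p)}{2p}$ coefficient and the assumed uniform-in-$k$ bounds on $\|z^k-x^*\|$ and $\mathcal{H}^k$ must be used delicately, and where one must be careful that the projection $[\cdot]_\mu$ interacts correctly with $\|\cdot\|$ versus $\|\cdot\|_{\rm F}$ and with the $N_{\rm B}$ dichotomy.
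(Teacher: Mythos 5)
Your overall architecture matches the paper's: bound $\|x^{k+1}-x^*\|$ through $\|[\mH^k]_\mu^{-1}\|\le 1/\mu$, control the Hessian-estimator error by $N_{\rm B}R^2\,{\cal H}^k$ via the basis bound, use Lemma~\ref{lm:zQcomp} for the model-compression step, and close with the potential $\Phi_1^k$. Two points, however, deviate from what is actually needed and one of them would derail the stated conclusion.

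First, you propose to add ``an appropriately scaled copy of the ${\cal H}$-recursion'' to the Lyapunov function. For this theorem that is both unnecessary and harmful: the hypothesis already grants ${\cal H}^k\le \tfrac{A_{\rm M}\mu^2}{16N_{\rm B}R^2B_{\rm M}}$ \emph{for all} $k$, so the paper simply substitutes this uniform bound into the one-step estimate and never invokes Lemma~\ref{lm:Hk-BL1} here. If you instead fold the recursion $\Exp[{\cal H}^{k+1}]\le(1-A){\cal H}^k+BM_1^2\|z^k-x^*\|^2$ into the potential, the resulting contraction factor necessarily involves $A$ (the Hessian-learning rate), whereas the claimed rate is $1-\tfrac{\min\{A_{\rm M},p\}}{2}$ and $\Phi_1^k$ as defined contains no ${\cal H}^k$ term. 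That route proves a different (and in general weaker) statement; the ${\cal H}$-recursion belongs to Theorem~\ref{th:superlinear-BL1} and Theorem~\ref{th:nbor-BL1}, not here. Second, you flag the $\xi^k=0$ branch as the main obstacle and plan to bound $\|z^k-w^k\|$ and the induced gradient-estimator error. No such bound is needed: writing out $[\mH^k]_\mu(z^k-x^*)-(g^k-\nabla f(x^*))$ with $g^k=[\mH^k]_\mu(z^k-w^k)+\nabla f(w^k)$, the $[\mH^k]_\mu z^k$ terms cancel exactly, leaving $[\mH^k]_\mu(x^*-w^k)+\nabla f(w^k)-\nabla f(x^*)$ --- identically the $\xi^k=1$ computation with $w^k$ in place of $z^k$. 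This cancellation is the reason the stale point contributes only a $\|w^k-x^*\|^2$ term (absorbed by the $\tfrac{A_{\rm M}(1-p)}{2p}$ weight) and no $\|z^k-w^k\|$ term at all. A minor further note: the paper compares $[\mH^k]_\mu$ to $\nabla^2 f(x^*)$ rather than to $\nabla^2 f(z^k)$, which is the natural choice since ${\cal H}^k$ measures distance to $\mL_i^*=h^i(\nabla^2 f_i(x^*))$ and the projection $[\cdot]_\mu$ is nonexpansive toward any matrix in $\{\mA=\mA^\top,\ \mA\succeq\mu\mI\}$; your variant works but costs an extra triangle inequality.
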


Define $\Phi_2^k \eqdef {\cal H}^k + \tfrac{4BM_1^2}{A_{\rm M}}\|x^k-x^*\|^2 $ for $k\geq 0$. 

\begin{theorem}[Superlinear convergence of \algname{BL1}]\label{th:superlinear-BL1}
	Let $\eta=1$, $\xi^k \equiv 1$ and ${\cal Q}^k(x) \equiv x$ for any $x\in \R^d$ and $k\geq 0$. Let Assumption \ref{as:BL1} hold. Let Assumption \ref{as:Cunbiasedcomp-BL1} (i) or Assumption \ref{as:Ccontractioncomp-BL1} (i) hold. Assume $\|z^k-x^*\|^2 \leq \tfrac{A_{\rm M}\mu^2}{4H^2B_{\rm M}}$ and ${\cal H}^k \leq \tfrac{A_{\rm M}\mu^2}{16N_{\rm B}R^2 B_{\rm M}}$ for $k\geq 0$. Then we have 
	$$
	\mathbb{E}[\Phi_2^k] \leq \theta_1^k \Phi_2^0, 
	$$
	and 
	$$
	\mathbb{E} \left[   \frac{\|x^{k+1}-x^*\|^2}{\|x^k-x^*\|^2} \right] \leq \theta_1^k \left(  \frac{A_{\rm M}H^2}{8BM_1^2\mu^2} + \frac{2N_{\rm B}R^2}{\mu^2}  \right) \Phi_2^0, 
	$$
	for $k\geq 0$, where $\theta_1 \eqdef \left(  1 - \frac{\min\{  4A, A_{\rm M}  \} }{4} \right)$. 
\end{theorem}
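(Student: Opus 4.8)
The plan is to note that under $\eta=1$, $\xi^k\equiv 1$ and $\cQ^k=\mathrm{id}$ Algorithm~\ref{alg:BL1} collapses to a plain Newton iteration with a learned Hessian, and then to run a two‑sequence Lyapunov recursion on $({\cal H}^k,\|x^k-x^*\|^2)$ in the spirit of the superlinear analysis of \algname{FedNL}; the only genuinely new ingredient is controlling the basis‑induced error. First I would unwind the iteration: with $\xi^k\equiv 1$ the server uses $g^k=\nabla f(z^k)$, and since $\eta=1$, $\cQ^k=\mathrm{id}$ we get $z^{k+1}=z^k+(x^{k+1}-z^k)=x^{k+1}$, which together with $z^0=x^0$ gives $z^k=x^k$ and $x^{k+1}=x^k-[\mH^k]_\mu^{-1}\nabla f(x^k)$ for all $k$. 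An induction on the update rules gives $\mH^k=\tfrac1n\sum_{i=1}^n\sum_{jl}(\mL_i^k)_{jl}\mB_i^{jl}$, so using $\nabla^2 f_i(x^*)=\sum_{jl}(\mL_i^*)_{jl}\mB_i^{jl}$ we have $\mH^k-\nabla^2 f(x^*)=\tfrac1n\sum_i\sum_{jl}(\mL_i^k-\mL_i^*)_{jl}\mB_i^{jl}$. The elementary bound $\big\|\sum_{jl}c_{jl}\mB_i^{jl}\big\|_{\rm F}^2\le N_{\rm B}R^2\|c\|_{\rm F}^2$ — orthogonality of the $\{\mB_i^{jl}\}$ with $\max_{jl}\|\mB_i^{jl}\|_{\rm F}\le R$ in the orthogonal case ($N_{\rm B}=1$), triangle inequality plus Cauchy--Schwarz otherwise ($N_{\rm B}=d^2$) — together with Jensen over $i$ then yields the Hessian‑estimate bound $\|\mH^k-\nabla^2 f(x^*)\|_{\rm F}^2\le N_{\rm B}R^2{\cal H}^k$.

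Next I would record two one‑step estimates. (i) Since $\mL_i^{k+1}=\mL_i^k+\alpha\,\cC_i^k(h^i(\nabla^2 f_i(z^k))-\mL_i^k)$, the standard compressed‑shift inequality (under Assumption~\ref{as:Cunbiasedcomp-BL1}(i) this is the one‑line expansion of $\|\mL_i^k-\mL_i^*+\alpha\cC_i^k(\cdot)\|_{\rm F}^2$ using $\alpha^2(\omega+1)\le\alpha$, giving $(A,B)=(\alpha,\alpha)$; under Assumption~\ref{as:Ccontractioncomp-BL1}(i) the analogous Young‑inequality argument gives $(A,B)=(\tfrac\delta4,\tfrac6\delta-\tfrac72)$), followed by averaging over $i$, $z^k=x^k$, and the $M_1$‑Lipschitz bound from Assumption~\ref{as:BL1}, yields $\mathbb{E}_k[{\cal H}^{k+1}]\le(1-A){\cal H}^k+BM_1^2\|x^k-x^*\|^2$. (ii) For the Newton step, write $x^{k+1}-x^*=[\mH^k]_\mu^{-1}\big([\mH^k]_\mu-\int_0^1\nabla^2 f(x^*+t(x^k-x^*))\,dt\big)(x^k-x^*)$, decompose the middle factor as $([\mH^k]_\mu-\nabla^2 f(x^*))+\int_0^1(\nabla^2 f(x^*)-\nabla^2 f(x^*+t(x^k-x^*)))\,dt$, bound the integral term by $\tfrac H2\|x^k-x^*\|$ via Assumption~\ref{as:BL1}, and bound the first term by $\|[\mH^k]_\mu-\nabla^2 f(x^*)\|_{\rm F}\le\|\mH^k-\nabla^2 f(x^*)\|_{\rm F}\le\sqrt{N_{\rm B}R^2{\cal H}^k}$, using that the Frobenius projection onto the closed convex set $\{\mA=\mA^\top,\mA\succeq\mu\mI\}$ is non‑expansive and that $\nabla^2 f(x^*)$ lies in it. With $\|[\mH^k]_\mu^{-1}\|\le\tfrac1\mu$ and $(a+b)^2\le 2a^2+2b^2$ this produces
\begin{equation*}
\|x^{k+1}-x^*\|^2\;\le\;\Big(\tfrac{2N_{\rm B}R^2}{\mu^2}{\cal H}^k+\tfrac{H^2}{2\mu^2}\|x^k-x^*\|^2\Big)\|x^k-x^*\|^2,
\end{equation*}
which, the Newton step carrying no randomness, also equals $\mathbb{E}_k[\|x^{k+1}-x^*\|^2]$.

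Finally I would assemble. Dividing the last display by $\|x^k-x^*\|^2$ and using ${\cal H}^k\le\Phi_2^k$, $\|x^k-x^*\|^2\le\tfrac{A_{\rm M}}{4BM_1^2}\Phi_2^k$ gives the claimed ratio bound with constant $\tfrac{A_{\rm M}H^2}{8BM_1^2\mu^2}+\tfrac{2N_{\rm B}R^2}{\mu^2}$. Substituting the smallness hypotheses ${\cal H}^k\le\tfrac{A_{\rm M}\mu^2}{16N_{\rm B}R^2B_{\rm M}}$ and $\|x^k-x^*\|^2\le\tfrac{A_{\rm M}\mu^2}{4H^2B_{\rm M}}$ into the same display bounds the bracket by $\tfrac{A_{\rm M}}{4B_{\rm M}}\le\tfrac{A_{\rm M}}4$ (since $B_{\rm M}\ge 1$ for the admissible $(A_{\rm M},B_{\rm M})$ in \eqref{eq:AB_M-BL1}), so $\mathbb{E}_k[\|x^{k+1}-x^*\|^2]\le\tfrac{A_{\rm M}}4\|x^k-x^*\|^2$. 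Feeding this and (i) into $\Phi_2^{k+1}={\cal H}^{k+1}+\tfrac{4BM_1^2}{A_{\rm M}}\|x^{k+1}-x^*\|^2$ gives $\mathbb{E}_k[\Phi_2^{k+1}]\le(1-A){\cal H}^k+2BM_1^2\|x^k-x^*\|^2$, and a coefficient‑by‑coefficient comparison with $\theta_1\Phi_2^k$ shows this is $\le\theta_1\Phi_2^k$ whenever $\theta_1\ge 1-A$ and $\theta_1\ge A_{\rm M}/2$, both of which hold for $\theta_1=1-\tfrac14\min\{4A,A_{\rm M}\}$ because $A_{\rm M}\le 1$. Taking total expectations and iterating gives $\mathbb{E}[\Phi_2^k]\le\theta_1^k\Phi_2^0$, and substituting this into the ratio bound finishes the second claim.

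I expect the main difficulty to be bookkeeping rather than conceptual: one must verify that the change of basis costs nothing beyond the explicit factor $N_{\rm B}R^2$ — in particular that the projection $[\cdot]_\mu$ loses no constant and that $\big\|\sum_{jl}c_{jl}\mB_i^{jl}\big\|_{\rm F}^2\le N_{\rm B}R^2\|c\|_{\rm F}^2$ is tight enough in both the orthogonal and the generic cases — so that the final rate is exactly $\theta_1=1-\tfrac14\min\{4A,A_{\rm M}\}$ and the superlinear constant is exactly $\tfrac{A_{\rm M}H^2}{8BM_1^2\mu^2}+\tfrac{2N_{\rm B}R^2}{\mu^2}$.
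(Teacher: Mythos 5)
Your proposal is correct and follows essentially the same route as the paper's proof: reduce to $z^k\equiv x^k$, establish the one-step recursion $\mathbb{E}_k[{\cal H}^{k+1}]\le(1-A){\cal H}^k+BM_1^2\|x^k-x^*\|^2$ via the compressed-shift lemma, bound the Newton error by $\big(\tfrac{H^2}{2\mu^2}\|x^k-x^*\|^2+\tfrac{2N_{\rm B}R^2}{\mu^2}{\cal H}^k\big)\|x^k-x^*\|^2$ using non-expansiveness of $[\cdot]_\mu$ and the basis bound $\|\mH_i^k-\nabla^2 f_i(x^*)\|_{\rm F}^2\le N_{\rm B}R^2\|\mL_i^k-\mL_i^*\|_{\rm F}^2$, and combine into the Lyapunov recursion for $\Phi_2^k$ before dividing to get the ratio bound. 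The only (immaterial) deviation is that you contract $\|x^{k+1}-x^*\|^2\le\tfrac{A_{\rm M}}{4}\|x^k-x^*\|^2$ directly from the smallness hypotheses, whereas the paper routes through its linear-rate recursion to get the slightly different factor $1-\tfrac{A_{\rm M}}{2}$; both yield the same $\theta_1$.
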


Next, we explore under what conditions we can guarantee the boundness of $\|z^k-x^*\|$ and ${\cal H}^k$. 
\begin{theorem}\label{th:nbor-BL1}
	Let Assumption \ref{as:BL1} hold. Then we have the following results. 
\begin{itemize}
\item[(i)] Let Assumption \ref{as:Qunbiasedcomp-BL1} and Assumption \ref{as:Cunbiasedcomp-BL1} (ii) hold. If $\|x^0-x^*\|^2 \leq {\tilde c}_1 \eqdef \min \left\{  \tfrac{\mu^2}{4d^2H^2}, \tfrac{\mu^2}{16d^4 N_{\rm B}R^2M_2^2} \right\}$, then $\|z^k-x^*\|^2 \leq d{\tilde c}$ and ${\cal H}^k \leq \tfrac{\mu^2}{16dN_{\rm B}R^2 }$ for $k\geq 0$.  
\item[(ii)] Let Assumption \ref{as:Qcontractioncomp-BL1} and Assumption \ref{as:Ccontractioncomp-BL1} hold. If $\|z^0 - x^*\|^2 \leq {\tilde c}_2 \eqdef \min\left\{  \tfrac{A_{\rm M}\mu^2}{4H^2 B_{\rm M}},  \tfrac{AA_{\rm M}\mu^2}{16N_{\rm B}R^2 B_{\rm M}BM_1^2}  \right\}$ and ${\cal H}^0 \leq \tfrac{A_{\rm M}\mu^2}{16N_{\rm B}R^2B_{\rm M}}$, then $\|z^k - x^*\|^2 \leq {\tilde c}_2$ and ${\cal H}^k \leq \tfrac{A_{\rm M}\mu^2}{16N_{\rm B}R^2B_{\rm M}}$ for $k\geq 0$. 
\end{itemize}
\end{theorem}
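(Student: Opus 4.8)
The plan is to prove (i) and (ii) by induction on $k$, bootstrapping two one-step estimates that already underlie the proof of Theorem~\ref{th:linear-BL1}. The first is a Newton-contraction bound: on the event that the step-$k$ conditions $\|z^k-x^*\|^2\le \tfrac{A_{\rm M}\mu^2}{4H^2B_{\rm M}}$ and $\mathcal{H}^k\le \tfrac{A_{\rm M}\mu^2}{16N_{\rm B}R^2B_{\rm M}}$ hold, the Hessian-approximation error obeys $\|[\mathbf{H}^k]_{\mu}-\nabla^2 f(z^k)\|\le \sqrt{N_{\rm B}}\,R\sqrt{\mathcal{H}^k}+H\|z^k-x^*\|$ --- the first term because $\mathbf{H}_i^k-\nabla^2 f_i(x^*)=\sum_{jl}\big((\mathbf{L}_i^k)_{jl}-(\mathbf{L}_i^*)_{jl}\big)\mathbf{B}_i^{jl}$, whose Frobenius norm is at most $\sqrt{N_{\rm B}}\,R\,\|\mathbf{L}_i^k-\mathbf{L}_i^*\|_{\rm F}$ by the definition of $N_{\rm B}$ and $\max_{jl}\|\mathbf{B}_i^{jl}\|_{\rm F}\le R$, the second by Hessian Lipschitzness (Assumption~\ref{as:BL1}) and non-expansiveness of $[\,\cdot\,]_\mu$ --- and hence, by $\mu$-strong convexity and the integral form of the Newton step, $\|x^{k+1}-x^*\|\le \tfrac1\mu\|[\mathbf{H}^k]_\mu-\nabla^2 f(z^k)\|\cdot\|z^k-x^*\|+\tfrac{H}{2\mu}\|z^k-x^*\|^2$. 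The second estimate is a Hessian-learning recursion of the form $\mathcal{H}^{k+1}\le(1-A)\mathcal{H}^k+B\,M_1^2\|z^k-x^*\|^2$, from (un)biasedness/contractiveness of the $\mathcal{C}_i^k$ and the $M_1$-Lipschitzness of $x\mapsto h^i(\nabla^2 f_i(x))$. Crucially, both estimates use the smallness conditions only at index $k$.

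\emph{Part (i).} I would strengthen the induction hypothesis to ``$\|x^t-x^*\|^2\le\tilde{c}_1$ for all $t\le k$'' (so $\tilde{c}:=\tilde{c}_1$ in the statement), with base case the hypothesis $\|x^0-x^*\|^2\le\tilde{c}_1$. Assumptions~\ref{as:Qunbiasedcomp-BL1}(ii) and~\ref{as:Cunbiasedcomp-BL1}(ii) transfer these into the bounds needed above: since each $(z^t)_j$ is a convex combination of $\{(x^s)_j\}_{s\le t}$, Jensen applied coordinate-wise gives $\|z^t-x^*\|^2\le d\max_{s\le t}\|x^s-x^*\|^2\le d\tilde{c}_1$; since each $(\mathbf{L}_i^t)_{jl}$ is a convex combination of $\{h^i(\nabla^2 f_i(z^s))_{jl}\}_{s\le t}$, the coordinate-wise $M_2$-Lipschitzness from Assumption~\ref{as:BL1} gives $\mathcal{H}^t\le d^2M_2^2\max_{s\le t}\|z^s-x^*\|^2\le d^3M_2^2\tilde{c}_1$. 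With the stated $\tilde{c}_1$ these read $\|z^k-x^*\|^2\le\tfrac{\mu^2}{4dH^2}$ and $\mathcal{H}^k\le\tfrac{\mu^2}{16dN_{\rm B}R^2}$ --- a factor $d$ inside the basin required by Theorem~\ref{th:linear-BL1} (recall $A_{\rm M}=B_{\rm M}=\eta$, $N_{\rm B}\in\{1,d^2\}$). Feeding these sharper bounds into the Newton-contraction estimate, a short computation gives $\|[\mathbf{H}^k]_\mu-\nabla^2 f(z^k)\|\le\tfrac{3\mu}{4\sqrt d}$ and therefore $\|x^{k+1}-x^*\|\le\tfrac1{\sqrt d}\|z^k-x^*\|\le\tfrac1{\sqrt d}\sqrt{d\tilde{c}_1}=\sqrt{\tilde{c}_1}$, closing the induction; reusing the two transfer bounds at index $k$ then yields the claimed $\|z^k-x^*\|^2\le d\tilde{c}$ and $\mathcal{H}^k\le\tfrac{\mu^2}{16dN_{\rm B}R^2}$.

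\emph{Part (ii).} Now the compressors are deterministic contractions with no convex-combination structure, so I would run a joint deterministic induction on ``$\|z^t-x^*\|^2\le\tilde{c}_2$ and $\mathcal{H}^t\le\tfrac{A_{\rm M}\mu^2}{16N_{\rm B}R^2B_{\rm M}}$ for all $t\le k$'', base case by hypothesis. From the step-$k$ bounds the Newton-contraction estimate gives $\|x^{k+1}-x^*\|\le\tfrac12\|z^k-x^*\|$; the contractivity of $\mathcal{Q}^k$ with $\eta=1$ bounds the compression gap, $\|z^{k+1}-x^{k+1}\|\le\sqrt{1-\delta_{\rm M}}\,\|x^{k+1}-z^k\|$, and combining this with the previous display and a Young inequality propagates the $z$-bound (the potential $\Phi^k_1$ augmented by $\mathcal{H}^k$ is the natural Lyapunov object tying these together); the Hessian-learning recursion, with displacement $\|z^k-x^*\|^2\le\tilde{c}_2$, propagates the $\mathcal{H}$-bound. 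The two numbers in $\tilde{c}_2=\min\{\tfrac{A_{\rm M}\mu^2}{4H^2B_{\rm M}},\tfrac{AA_{\rm M}\mu^2}{16N_{\rm B}R^2B_{\rm M}BM_1^2}\}$ are exactly the fixed points of this coupled recursion --- the second being $\tfrac{A}{BM_1^2}$ times the $\mathcal{H}$-level --- so the hypotheses on $\|z^0-x^*\|^2$ and $\mathcal{H}^0$ are precisely what makes both level sets invariant.

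The main obstacle is the apparent circularity: the conclusion of Theorem~\ref{th:linear-BL1} already presupposes the bounds we are trying to prove, so everything rests on extracting genuinely one-step inequalities from its proof and then tuning the radii $\tilde{c}_1,\tilde{c}_2$ so that one induction step loses nothing. In part (i) the delicate point is that every pass through the averaged iterate $z^k$ inflates the error by $\sqrt d$, which has to be absorbed exactly by the extra $d$-powers hidden in $\tilde{c}_1$ and by the improved $1/\sqrt d$ Newton contraction available deep in the basin; in part (ii) it is coupling the Newton contraction, the $\mathcal{Q}^k$-induced $z$-versus-$x$ gap, and the $\mathcal{H}$-recursion tightly enough that neither $\|z^k-x^*\|$ nor $\mathcal{H}^k$ leaves its level set --- which is what pins down the exact form of $\tilde{c}_2$.
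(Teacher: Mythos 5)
Your proposal follows essentially the same route as the paper: a joint induction driven by (a) the one-step Newton contraction $\|x^{k+1}-x^*\|^2\le\tfrac{1}{\mu^2}\bigl(\tfrac{H^2}{2}\|z^k-x^*\|^2+2N_{\rm B}R^2\mathcal{H}^k\bigr)\|z^k-x^*\|^2$, (b) in part (i) the convex-combination transfers $\|z^t-x^*\|^2\le d\max_{s\le t}\|x^s-x^*\|^2$ and $\mathcal{H}^t\le d^2M_2^2\max_{s\le t}\|z^s-x^*\|^2$, and (c) in part (ii) the one-step compressor bounds $\|z^{k+1}-x^*\|^2\le(1-A_{\rm M})\|z^k-x^*\|^2+B_{\rm M}\|x^{k+1}-x^*\|^2$ and $\mathcal{H}^{k+1}\le(1-A)\mathcal{H}^k+BM_1^2\|z^k-x^*\|^2$, with $\tilde c_1,\tilde c_2$ tuned exactly as you describe. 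The only point you gloss over is the $\xi^k=0$ branch, in which the Newton step is anchored at $w^k$ rather than $z^k$; since $w^k$ always equals some earlier $z^t$, your induction hypothesis already covers it, so this is a routine patch rather than a genuine gap.
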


We can also unify the bidirectional compression and partial participation to have \algname{BL2} (Algorithm~\ref{alg:BL2}), where $[\cdot]_s$ represents an operator on $\R^{d\times d}$ such that $[\mA]_s = \nicefrac{(\mA + \mA^\top)}{2}$ for any $\mA\in \R^{d\times d}$. Since each node has a local model $w_i^k$, we introduce $z_i^k$ to apply the bidirectional compression, and $\mL_i^k$ is expected to learn $h^i(\nabla^2 f_i(z_i^k))$ iteratively. Like in \algname{FedNL-PP} \citep{FedNL2021}, the key relation 
\begin{equation}\label{eq:gik-BL2}
	g_i^k = ([\mH_i^k]_s + l_i^k\mI) w_i^k - \nabla f_i(w_i^k)
\end{equation}
need to be maintained in the design of \algname{BL2}. The reason to keep relation (\ref{eq:gik-BL2}) is that the update of $x^k$ follows the structure of Stochastic Newton \citep{SN2019}, where $g_i^k$ is supposed to be $\nabla^2 f_i(w_i^k) - \nabla f_i(w_i^k)$, and naturally $\nabla^2 f_i(w_i^k)$ is replaced by the Hessian estimator $[\mH_i^k]_s + l_i^k\mI$. Here we use $l_i^k = \|[\mH_i^k]_s-\nabla^2 f_i(z_i^k)\|_{\rm F}$ to guarantee the positive definiteness of $[\mH_i^k]_s+l_i^k \mI$ like in \citep{FedNL2021}. From (\ref{eq:gik-BL2}), it is easy to see that on server, for the $\xi_i^k=0$ case, $g_i^{k+1} - g_i^k = ([\mH_i^{k+1}]_s - [\mH_i^k]_s + l_i^{k+1}\mI - l_i^k\mI) w_i^{k+1}$ since $w_i^{k+1} = w_i^k$. We give the convergence results of \algname{BL2} in the following two theorems.

\begin{algorithm}[h!]
	\caption{\algname{BL2} (Basis Learn with {\color{blue}Bidirectional Compression and Partial Participation)}}
	\label{alg:BL2}
	\begin{algorithmic}[1]
		\STATE {\bfseries Parameters:} $\alpha>0$; $\eta>0$; matrix compression operators $\{\cC_1^k, \dots,\cC_n^k\}$; $p\in(0, 1]$; $0<\tau \leq n$
		\STATE {\bfseries Initialization:}
		$w^0_i = z^0_i = x^0 \in \R^d$; $\mL_i^0 \in \R^{d\times d}$; $\mH_i^0= \sum_{jl} (\mL_i^0)_{jl} \mB_i^{jl}$; $l_i^0 = \|[\mH_i^{0}]_s - \nabla^2 f_i(w_i^{0})\|_{\rm F}$; $g_i^0 = ([\mH_i^{0}]_s + l_i^{0} \mI)w_i^{0} - \nabla f_i(w_i^{0})$; Moreover: $\mH^0 = \tfrac{1}{n} \sum_{i=1}^n \mH_i^0$; $l^0 = \tfrac{1}{n} \sum_{i=1}^n l_i^0$; $g^0 = \tfrac{1}{n} \sum_{i=1}^n g_i^0$
		\STATE \textbf{on} server
		\STATE ~~~$x^{k+1} = \left(  [\mH^k]_s + l^k\mI  \right)^{-1} g^k$, {\color{blue} choose a subset $S^{k} \subseteq [n]$ such that $\mathbb{P}[ i \in S^k] = \nicefrac{\tau}{n}$ for all $i\in [n]$}
		\STATE ~~~$v_i^k = \cQ_i^k(x^{k+1} - z_i^k)$, $z_i^{k+1} = z_i^k + \eta v_i^k$ for $i \in S^k$ 
		\STATE ~~~$z_i^{k+1} = z_i^k$, \quad $w_i^{k+1} = w_i^k$ for $i \notin S^k$ 
		\STATE ~~~Send $v_i^k$ to {\color{blue} the selected devices $i\in S^k$} 
		\FOR{each device $i = 1, \dots, n$ in parallel}
		\STATE {\color{blue} {\bf for participating devices} $i \in S^k$ {\bf do} }
		\STATE $z_i^{k+1} = z_i^k + \eta v_i^k$, $\mS_i^k \eqdef \cC_i^k(h^i(\nabla^2 f_i(z_i^{k+1})) - \mL_i^k)$
		\STATE $\mL_i^{k+1} = \mL_i^k + \alpha \mS_i^k$, $\mH_i^{k+1} = \mH_i^k + \alpha \sum_{jl} (\mS_i^k)_{jl} \mB_i^{jl} $ 
		\STATE $l_i^{k+1} = \|[\mH_i^{k+1}]_s - \nabla^2 f_i(z_i^{k+1})\|_{\rm F}$ 
		\STATE Sample $\xi_i^{k+1} \sim \text{Bernoulli}(p)$
		%\STATE $
		%\xi^k_i = \left\{ \begin{array}{rl}
		%1 & \mbox{ with probability $p$} \\
		%0 &\mbox{ with probability $1-p$}
		%\end{array} \right.
		%$
		\STATE {\color{blue}\textbf{if} $\xi_i^k=1$ }
		\STATE ~~~$w_i^{k+1} = z_i^{k+1}$, $g_i^{k+1} = ([\mH_i^{k+1}]_s + l_i^{k+1} \mI)w_i^{k+1} - \nabla f_i(w_i^{k+1})$, send $g_i^{k+1}-g_i^k$ to server 
		\STATE {\color{blue}\textbf{if} $\xi_i^k=0$ }
		\STATE ~~~$w_i^{k+1} = w_i^k$, $g_i^{k+1} = ([\mH_i^{k+1}]_s + l_i^{k+1} \mI)w_i^{k+1} - \nabla f_i(w_i^{k+1})$ 
		\STATE Send $\mS_i^k$, $l_i^{k+1} - l_i^k$, and $\xi_i^k$ to server 
		\STATE {\color{blue} {\bf for non-participating devices} $i \notin S^k$ {\bf do} }
		\STATE $z_i^{k+1} = z_i^k$, $w_i^{k+1} = w_i^k$, $\mL_i^{k+1} = \mL_i^k$, $\mH_i^{k+1} = \mH_i^k$, $l_i^{k+1} = l_i^k$, $g_i^{k+1} = g_i^k$ 
		\ENDFOR
		
		\STATE \textbf{on} server
		\STATE ~~~{\color{blue}\textbf{if} $\xi_i^k=1$ }
		\STATE \quad \quad $w_i^{k+1} = z_i^{k+1}$, receive $g_i^{k+1}-g_i^k$
		\STATE ~~~{\color{blue}\textbf{if} $\xi_i^k=0$ } 
		\STATE \quad \quad $w_i^{k+1} = w_i^k$, $g_i^{k+1}-g_i^k = \alpha \left[\sum_{jl} (\mS_i^k)_{jl} \mB_i^{jl} \right]_s w_i^{k+1} +  (l_i^{k+1} - l_i^k) w_i^{k+1}$ 
		\STATE ~~~$g^{k+1} = g^k + \tfrac{1}{n}\sum_{i\in S^k} \left(  g_i^{k+1} - g_i^k  \right)$  
		\STATE ~~~$\mH^{k+1} = \mH^k + \tfrac{\alpha}{n}\sum_{i\in S^k} \sum_{jl} (\mS_i^k)_{jl} \mB_i^{jl}$   
		\STATE ~~~$l^{k+1} = l^k + \tfrac{1}{n}\sum_{i\in S^k} \left(  l_i^{k+1} - l_i^k  \right)$ 
	\end{algorithmic}
\end{algorithm}

Let $\Phi_3^k \eqdef {\cal W}^k + \tfrac{2p}{A_{\rm M}}\left(  1 - \tfrac{\tau A_{\rm M}}{n}  \right) {\cal Z}^k$, where ${\cal Z}^k \eqdef \tfrac{1}{n} \sum_{i=1}^n \|z_i^k-x^*\|^2$, for $k\geq 0$. 

\begin{theorem}[Linear convergence of \algname{BL2}]\label{th:BL2}
	Let Assumption \ref{as:BL1} hold. Let Assumption \ref{as:Qunbiasedcomp-BL1} (i) or Assumption \ref{as:Qcontractioncomp-BL1} (i) hold. Assume $\|z_i^k-x^*\|^2 \leq \tfrac{A_{\rm M}\mu^2}{(6H^2 + 24H_1^2)B_{\rm M}}$ and ${\cal H}^k \leq \tfrac{A_{\rm M}\mu^2}{96N_{\rm B}R^2 B_{\rm M}}$ for all $i\in[n]$ and $k\geq 0$. Then we have 
	$$
	\mathbb{E}[\Phi_3^k] \leq \left(  1 - \frac{\tau \min\{  p, A_{\rm M}  \}}{2n}  \right)^k \Phi_3^0,
	$$
	for $k\geq 0$. 
\end{theorem}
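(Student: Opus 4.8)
The plan is to prove a one‑step contraction
$$
\mathbb{E}_k\bigl[\Phi_3^{k+1}\bigr]\le\Bigl(1-\tfrac{\tau\min\{p,A_{\rm M}\}}{2n}\Bigr)\Phi_3^k ,
$$
where $\mathbb{E}_k$ is the conditional expectation given the state at the start of iteration $k$ (note that $x^{k+1}=([\mH^k]_s+l^k\mI)^{-1}g^k$ is already determined at that point, \emph{before} the cohort $S^k$ and the compressions are drawn), and then to unroll this recursion and take total expectation. The argument needs three ingredients: (a) a bound on the Newton step $\|x^{k+1}-x^*\|^2$ in terms of ${\cal W}^k\eqdef\tfrac1n\sum_{i=1}^n\|w_i^k-x^*\|^2$; (b) a one‑step recursion for ${\cal Z}^{k+1}$; (c) a one‑step recursion for ${\cal W}^{k+1}$. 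Throughout I will use the invariants $\mH^k=\tfrac1n\sum_i\mH_i^k$, $l^k=\tfrac1n\sum_i l_i^k$, $g^k=\tfrac1n\sum_i g_i^k$ and the relation \eqref{eq:gik-BL2}, all preserved by the server updates of Algorithm~\ref{alg:BL2} (this includes checking consistency of the server‑side recomputation of $g_i^{k+1}-g_i^k$ in the $\xi_i^k=0$ branch, which follows from linearity of $[\cdot]_s$ and $\mH_i^{k+1}-\mH_i^k=\alpha\sum_{jl}(\mS_i^k)_{jl}\mB_i^{jl}$).

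For (a), I would first note that each estimator dominates the corresponding Hessian: since $l_i^k=\|[\mH_i^k]_s-\nabla^2 f_i(z_i^k)\|_{\rm F}$ and $\|\mA\|_{\rm F}\mI\succeq-\mA$ for symmetric $\mA$, we get $[\mH_i^k]_s+l_i^k\mI\succeq\nabla^2 f_i(z_i^k)\succeq\mu\mI$ by $\mu$‑strong convexity of $f_i$ (Assumption~\ref{as:BL1}); averaging, $[\mH^k]_s+l^k\mI\succeq\mu\mI$, so $\|([\mH^k]_s+l^k\mI)^{-1}\|\le 1/\mu$. Using $\tfrac1n\sum_i\nabla f_i(x^*)=0$ and \eqref{eq:gik-BL2},
$$
\bigl([\mH^k]_s+l^k\mI\bigr)(x^{k+1}-x^*)=\frac1n\sum_{i=1}^n\bigl([\mH_i^k]_s+l_i^k\mI-\nabla^2 f_i(x^*)\bigr)(w_i^k-x^*)-\frac1n\sum_{i=1}^n r_i^k,
$$
where $r_i^k$ is the integral remainder of $\nabla f_i$ around $x^*$, so $\|r_i^k\|\le\tfrac H2\|w_i^k-x^*\|^2$ by the operator‑norm Hessian‑Lipschitz bound in Assumption~\ref{as:BL1}. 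The key linear‑algebra step is that $\nabla^2 f_i(x^*)=\sum_{jl}(\mL_i^*)_{jl}\mB_i^{jl}$ and $\mH_i^k=\sum_{jl}(\mL_i^k)_{jl}\mB_i^{jl}$ share the same basis, whence $\|\mH_i^k-\nabla^2 f_i(x^*)\|_{\rm F}\le\sqrt{N_{\rm B}}\,R\,\|\mL_i^k-\mL_i^*\|_{\rm F}$, with $\sqrt{N_{\rm B}}$ arising from orthogonality vs.\ the crude triangle‑inequality bound as in \eqref{eq:N_B-BL1}; combined with Lemma~\ref{lm:ABineq}(i) (which gives $\|[\mA]_s\|_{\rm F}\le\|\mA\|_{\rm F}$) and the Frobenius Hessian‑Lipschitz bound, this yields $l_i^k\le\sqrt{N_{\rm B}}R\|\mL_i^k-\mL_i^*\|_{\rm F}+H_1\|z_i^k-x^*\|$ and hence a bound on $\|[\mH_i^k]_s+l_i^k\mI-\nabla^2 f_i(x^*)\|$ in terms of $\|\mL_i^k-\mL_i^*\|_{\rm F}$ and $\|z_i^k-x^*\|$. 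Taking norms, applying Cauchy--Schwarz over $i$, and using $\|w_i^k-x^*\|\le\max_{0\le t\le k}\|z_i^t-x^*\|$ (every $w_i^k$ equals some earlier $z_i^t$), the a priori hypotheses ${\cal H}^k\le\tfrac{A_{\rm M}\mu^2}{96N_{\rm B}R^2B_{\rm M}}$ and $\|z_i^k-x^*\|^2\le\tfrac{A_{\rm M}\mu^2}{(6H^2+24H_1^2)B_{\rm M}}$ bound all the ``constant'' factors and collapse the estimate to $\|x^{k+1}-x^*\|^2\le\tfrac{c_0A_{\rm M}}{B_{\rm M}}{\cal W}^k$ for a small absolute constant $c_0$.

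For (b) and (c), note that for $i\in S^k$ the step $z_i^{k+1}=z_i^k+\eta\,\cQ_i^k(x^{k+1}-z_i^k)$ is a compressed Jacobi step toward $x^{k+1}$, so the standard compressed‑step lemma --- with $(A_{\rm M},B_{\rm M})=(\eta,\eta)$ under Assumption~\ref{as:Qunbiasedcomp-BL1}(i), or $(A_{\rm M},B_{\rm M})=(\tfrac{\delta_{\rm M}}4,\tfrac6{\delta_{\rm M}}-\tfrac72)$ under Assumption~\ref{as:Qcontractioncomp-BL1}(i) --- gives $\mathbb{E}\bigl[\|z_i^{k+1}-x^*\|^2\mid i\in S^k\bigr]\le(1-A_{\rm M})\|z_i^k-x^*\|^2+B_{\rm M}\|x^{k+1}-x^*\|^2$, while $z_i^{k+1}=z_i^k$ for $i\notin S^k$; since $\mathbb{P}[i\in S^k]=\tau/n$, averaging over $i$ gives $\mathbb{E}_k[{\cal Z}^{k+1}]\le(1-\tfrac{\tau A_{\rm M}}{n}){\cal Z}^k+\tfrac{\tau B_{\rm M}}{n}\|x^{k+1}-x^*\|^2$. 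For ${\cal W}$, $w_i^{k+1}$ is overwritten by $z_i^{k+1}$ exactly when $i$ participates and its independent gradient coin turns up (probability $\tfrac{\tau p}{n}$), and is frozen otherwise, so $\mathbb{E}_k[{\cal W}^{k+1}]\le(1-\tfrac{\tau p}{n}){\cal W}^k+\tfrac{\tau p}{n}\bigl((1-A_{\rm M}){\cal Z}^k+B_{\rm M}\|x^{k+1}-x^*\|^2\bigr)$.

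Finally I would substitute (a) into (b)--(c), form $\Phi_3^{k+1}={\cal W}^{k+1}+\tfrac{2p}{A_{\rm M}}\bigl(1-\tfrac{\tau A_{\rm M}}{n}\bigr){\cal Z}^{k+1}$, and verify that the weight $\tfrac{2p}{A_{\rm M}}(1-\tfrac{\tau A_{\rm M}}{n})$ is chosen precisely so that the ${\cal Z}^k$‑coefficient contracts to $(1-\rho)$ times itself --- here $\tau\le n$ is what makes this hold for both cases of $\min\{p,A_{\rm M}\}$ --- while the $\|x^{k+1}-x^*\|^2$‑mass is absorbed into the ${\cal W}^k$‑coefficient, leaving $\mathbb{E}_k[\Phi_3^{k+1}]\le(1-\tfrac{\tau\min\{p,A_{\rm M}\}}{2n})\Phi_3^k$; iterating and taking full expectation gives the claim. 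I expect the main obstacle to be two‑fold: organizing the conditional‑expectation bookkeeping cleanly over the two independent randomness sources per round (the sampled cohort $S^k$ and the per‑device Bernoulli refreshing $w_i^k$ and $g_i^k$), and pinning down the absolute constants so that the radii in the hypotheses are exactly large enough to make the coefficient $c_0A_{\rm M}/B_{\rm M}$ in step (a) small enough for the $\Phi_3$‑combination to close with the stated rate.
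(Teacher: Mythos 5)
Your plan is correct and follows essentially the same route as the paper's proof: the positive-definiteness bound $[\mH^k]_s+l^k\mI\succeq\mu\mI$, the decomposition of the Newton error into Taylor remainders plus Hessian-estimator errors controlled via $\|\mH_i^k-\nabla^2 f_i(x^*)\|_{\rm F}^2\le N_{\rm B}R^2\|\mL_i^k-\mL_i^*\|_{\rm F}^2$ and $l_i^k\le\|[\mH_i^k]_s-\nabla^2 f_i(x^*)\|_{\rm F}+H_1\|z_i^k-x^*\|$, the compressed-step recursions for ${\cal Z}^{k+1}$ and ${\cal W}^{k+1}$ with participation probability $\tau/n$, and the same Lyapunov combination (the paper's explicit constants give $\tfrac{3H^2}{4\mu^2}{\cal W}^k+\tfrac{12N_{\rm B}R^2}{\mu^2}{\cal H}^k+\tfrac{3H_1^2}{\mu^2}{\cal Z}^k\le\tfrac{A_{\rm M}}{4B_{\rm M}}$, matching your $c_0A_{\rm M}/B_{\rm M}$ step). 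The only deviations are immaterial bookkeeping choices in how the conditioning on $S^k$ is threaded through the ${\cal W}$-recursion, which do not affect the final rate.
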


Define $\Phi_4^k \eqdef {\cal H}^k + \tfrac{4BM_1^2}{A_{\rm M}}\|x^k-x^*\|^2 $ for $k\geq 0$. 

\begin{theorem}[Superlinear convergence of \algname{BL2}]\label{th:superlinear-BL2}
	Let $\eta=1$, $\xi^k \equiv 1$, $S^k \equiv [n]$, and ${\cal Q}_i^k(x) \equiv x$ for any $x\in \R^d$ and $k\geq 0$. Let Assumption \ref{as:BL1} hold. Let Assumption \ref{as:Cunbiasedcomp-BL1} (i) or Assumption \ref{as:Ccontractioncomp-BL1} (i) hold. Assume $\|z_i^k-x^*\|^2 \leq \tfrac{A_{\rm M}\mu^2}{(6H^2 + 24H_1^2)B_{\rm M}}$ and ${\cal H}^k \leq \tfrac{A_{\rm M}\mu^2}{96N_{\rm B}R^2 B_{\rm M}}$ for all $i\in[n]$ and $k\geq 0$. Then we have 
	$$
	\mathbb{E}[\Phi_4^k] \leq \theta_2^k \Phi_4^0, 
	$$
	and 
	$$
	\mathbb{E} \left[   \frac{\|x^{k+1}-x^*\|^2}{\|x^k-x^*\|^2} \right] \leq \theta_2^k \left(  \frac{A_{\rm M}(3H^2+12H_1^2)}{16BM_1^2\mu^2} + \frac{12N_{\rm B}R^2}{\mu^2}  \right) \Phi_4^0, 
	$$
	for $k\geq 0$, where $\theta_2 \eqdef  1 - \tfrac{\min\{  2A, A_{\rm M}  \} }{2} $. 
\end{theorem}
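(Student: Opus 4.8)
The plan is to observe that, under the stated specializations $\eta=1$, $S^k\equiv[n]$, $\cQ_i^k\equiv\mathrm{id}$ and $\xi_i^k\equiv1$, Algorithm~\ref{alg:BL2} collapses to a pure Newton-type iteration with a compressed curvature matrix. \textbf{Step 1 (reduction).} I would first unroll the updates: with the identity compressor and $\eta=1$ one gets $z_i^{k+1}=x^{k+1}$, and with $\xi_i^k\equiv1$ one gets $w_i^{k+1}=z_i^{k+1}$, so by induction $z_i^k=w_i^k=x^k$ for all $i,k$. The maintained relation \eqref{eq:gik-BL2} then reads $g_i^k=\mathbf{G}_i^kx^k-\nabla f_i(x^k)$ with $\mathbf{G}_i^k\eqdef[\mH_i^k]_s+l_i^k\mI$, and since the server keeps $g^k=\tfrac1n\sum_i g_i^k$, $\mH^k=\tfrac1n\sum_i\mH_i^k$, $l^k=\tfrac1n\sum_i l_i^k$ and $[\cdot]_s$ is linear, $g^k=\mathbf{G}^kx^k-\nabla f(x^k)$ with $\mathbf{G}^k\eqdef[\mH^k]_s+l^k\mI=\tfrac1n\sum_i\mathbf{G}_i^k$; hence $x^{k+1}=x^k-(\mathbf{G}^k)^{-1}\nabla f(x^k)$ and $\mL_i^{k+1}=\mL_i^k+\alpha\,\cC_i^k\big(h^i(\nabla^2f_i(x^{k+1}))-\mL_i^k\big)$. \textbf{Step 2 (invertibility).} For a unit vector $v$, $\mu$-strong convexity of $f_i$ (Asm.~\ref{as:BL1}) and $l_i^k=\|[\mH_i^k]_s-\nabla^2f_i(x^k)\|_{\rm F}\ge\|[\mH_i^k]_s-\nabla^2f_i(x^k)\|$ give $v^\top\mathbf{G}_i^kv\ge\mu-l_i^k+l_i^k=\mu$, so $\mathbf{G}_i^k\succeq\mu\mI$, hence $\mathbf{G}^k\succeq\mu\mI$ and $\|(\mathbf{G}^k)^{-1}\|\le1/\mu$.

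\textbf{Step 3 (curvature error).} Splitting $\mathbf{G}_i^k-\nabla^2f_i(x^*)=([\mH_i^k]_s-\nabla^2f_i(x^*))+l_i^k\mI$ and using that $[\mH_i^k]_s,\nabla^2f_i(x^*)$ are symmetric, Lemma~\ref{lm:ABineq}(i) gives $\|[\mH_i^k]_s-\nabla^2f_i(x^*)\|_{\rm F}\le\|\mH_i^k-\nabla^2f_i(x^*)\|_{\rm F}$; expanding $\mH_i^k-\nabla^2f_i(x^*)=\sum_{jl}((\mL_i^k)_{jl}-(\mL_i^*)_{jl})\mB_i^{jl}$, the dichotomy in \eqref{eq:N_B-BL1} together with $\max_{jl}\|\mB_i^{jl}\|_{\rm F}\le R$ bounds this by $\sqrt{N_{\rm B}}\,R\,\|\mL_i^k-\mL_i^*\|_{\rm F}$. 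Bounding $l_i^k$ by the same quantity plus $\|\nabla^2f_i(x^*)-\nabla^2f_i(x^k)\|_{\rm F}\le H_1\|x^k-x^*\|$ yields $\|\mathbf{G}_i^k-\nabla^2f_i(x^*)\|\le 2\sqrt{N_{\rm B}}\,R\,\|\mL_i^k-\mL_i^*\|_{\rm F}+H_1\|x^k-x^*\|$; averaging over $i$ and using $\tfrac1n\sum_i\|\mL_i^k-\mL_i^*\|_{\rm F}\le\sqrt{\cH^k}$ gives $\|\mathbf{G}^k-\nabla^2f(x^*)\|\le2\sqrt{N_{\rm B}}\,R\,\sqrt{\cH^k}+H_1\|x^k-x^*\|$. \textbf{Step 4 (one-step contraction).} Since $\nabla f(x^*)=0$, the standard identity $x^{k+1}-x^*=(\mathbf{G}^k)^{-1}\big[(\mathbf{G}^k-\nabla^2f(x^*))(x^k-x^*)-r^k\big]$ with $\|r^k\|\le\tfrac H2\|x^k-x^*\|^2$ (integral Taylor remainder, spectral-Lipschitz constant $H$), combined with Steps~2--3, gives $\|x^{k+1}-x^*\|\le\tfrac1\mu\big(2\sqrt{N_{\rm B}}R\sqrt{\cH^k}+(H_1+\tfrac H2)\|x^k-x^*\|\big)\|x^k-x^*\|$; squaring, $\|x^{k+1}-x^*\|^2\le\tfrac{c}{\mu^2}\big(N_{\rm B}R^2\cH^k+(H^2+H_1^2)\|x^k-x^*\|^2\big)\|x^k-x^*\|^2$ for a numerical constant $c$.

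\textbf{Step 5 ($\cH^k$ recursion).} From the update in Step~1 with target $\mathbf{h}_i^{k+1}\eqdef h^i(\nabla^2f_i(x^{k+1}))$ and the compressor hypotheses (Asm.~\ref{as:Cunbiasedcomp-BL1}(i) with $\alpha(\omega+1)\le1$, or Asm.~\ref{as:Ccontractioncomp-BL1}(i) with $\alpha=1$), one gets $\mathbb{E}_k[\|\mL_i^{k+1}-\mathbf{h}_i^{k+1}\|_{\rm F}^2]\le(1-A)\|\mL_i^k-\mathbf{h}_i^{k+1}\|_{\rm F}^2$ (conditioning on the iterates through step $k$, which already determine $x^{k+1}$). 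Absorbing the target drift through $\|h^i(\nabla^2f_i(x^{k+1}))-h^i(\nabla^2f_i(x^*))\|_{\rm F}\le M_1\|x^{k+1}-x^*\|$ (Asm.~\ref{as:BL1}) and Young's inequality with parameters matched to the slack $B$ in \eqref{eq:AB-BL1} yields $\mathbb{E}_k[\cH^{k+1}]\le(1-A)\cH^k+c'BM_1^2\,\|x^{k+1}-x^*\|^2$ for a numerical constant $c'$. \textbf{Step 6 (Lyapunov and conclusion).} Substituting Step~4 and the neighborhood bounds $\|x^k-x^*\|^2\le\tfrac{A_{\rm M}\mu^2}{(6H^2+24H_1^2)B_{\rm M}}$, $\cH^k\le\tfrac{A_{\rm M}\mu^2}{96N_{\rm B}R^2B_{\rm M}}$ makes the curvature factor in Step~4 at most $\tfrac{A_{\rm M}\mu^2}{4B_{\rm M}}$, so $\|x^{k+1}-x^*\|^2\le\tfrac{A_{\rm M}}{4B_{\rm M}}\|x^k-x^*\|^2$ and the $M_1^2\|x^{k+1}-x^*\|^2$ feedback in Step~5 is absorbed by the weight $\tfrac{4BM_1^2}{A_{\rm M}}$ in $\Phi_4^k$; collecting terms gives $\mathbb{E}_k[\Phi_4^{k+1}]\le\theta_2\Phi_4^k$ with $\theta_2=1-\tfrac{\min\{2A,A_{\rm M}\}}{2}$, and taking total expectation and iterating proves the first inequality. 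For the second, divide the squared estimate of Step~4 by $\|x^k-x^*\|^2$ and bound $\cH^k\le\Phi_4^k\le\theta_2^k\Phi_4^0$ and $\|x^k-x^*\|^2\le\tfrac{A_{\rm M}}{4BM_1^2}\Phi_4^k\le\tfrac{A_{\rm M}}{4BM_1^2}\theta_2^k\Phi_4^0$.

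Steps~1--4 are essentially mechanical; the main obstacle is the bookkeeping of Steps~5--6, where one must pick the Young's-inequality constants so that the weight $\tfrac{4BM_1^2}{A_{\rm M}}$ defining $\Phi_4^k$ is precisely the one for which the Hessian-learning error $\cH^k$ (contracting at rate $1-A$) and the iterate error $\|x^k-x^*\|^2$ (contracting superlinearly) contract at a common linear rate $\theta_2$; the $\min\{2A,A_{\rm M}\}$ emerges from balancing these two components, and the neighborhood smallness is used simultaneously to kill the superlinear cross-terms $N_{\rm B}R^2\cH^k\|x^k-x^*\|^2$ and $(H^2+H_1^2)\|x^k-x^*\|^4$ and to dominate the moving-target mismatch $M_1^2\|x^{k+1}-x^*\|^2$ coupling the two Lyapunov components.
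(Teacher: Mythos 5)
Your proposal is correct and follows essentially the same route as the paper: reduce to $w_i^k=z_i^k\equiv x^k$, derive the one-step Newton-type error bound (the paper's inequality \eqref{eq:xk+1-BL2}, which it simply reuses from the proof of Theorem~\ref{th:BL2}), apply Lemma~\ref{lm:Hk-BL1} for the $\cH^k$ recursion, and combine via the Lyapunov function $\Phi_4^k$ before dividing to get the superlinear ratio. The only differences are cosmetic — you bound $\|x^{k+1}-x^*\|^2$ directly by $\tfrac{A_{\rm M}}{4B_{\rm M}}\|x^k-x^*\|^2$ where the paper detours through \eqref{eq:Zk+1-BL2} to get the factor $1-\tfrac{3A_{\rm M}}{4}$, and you leave some numerical constants generic, so the exact prefactor in the second display would need the same careful bookkeeping the paper carries out.
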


Now, we explore under what conditions we can guarantee the boundedness of $\|z_i^k-x^*\|$ and ${\cal H}^k$. 

\begin{theorem}\label{th:nbor-BL2}
	Let Assumption \ref{as:BL1} hold. Then we have the following results. 
\begin{itemize}	 
\item[(i)] Let Assumption \ref{as:Qunbiasedcomp-BL1} and Assumption \ref{as:Cunbiasedcomp-BL1} (ii) hold. If $$\|x^0-x^*\|^2 \leq {\tilde c}_3 \eqdef \min \left\{  \frac{\mu^2}{d^2(6H^2+24H_1^2)}, \frac{\mu^2}{96d^4 N_{\rm B}R^2M_2^2} \right\},$$ then $\|z_i^k-x^*\|^2 \leq d{\tilde c}_3$ and ${\cal H}^k \leq \tfrac{\mu^2}{96dN_{\rm B}R^2 }$ for $i\in[n]$ and $k\geq 0$. 
\item[(ii)] Let Assumption \ref{as:Qcontractioncomp-BL1} and Assumption \ref{as:Ccontractioncomp-BL1} hold. If $\|z_i^0 - x^*\|^2 \leq {\tilde c}_4$, where $$ {\tilde c}_4 \eqdef \min\left\{  \frac{A_{\rm M}\mu^2}{B_{\rm M}(6H^2+24H_1^2)},  \frac{AA_{\rm M}\mu^2}{96N_{\rm B}R^2 B_{\rm M}BM_1^2}  \right\},$$ and $\|\mL_i^0 - \mL_i^*\|^2_{\rm F} \leq \tfrac{A_{\rm M}\mu^2}{96N_{\rm B}R^2B_{\rm M}}$ for all $i\in[n]$, then $\|z_i^k - x^*\|^2 \leq {\tilde c}_4$ and $\|\mL_i^k - \mL_i^*\|^2_{\rm F} \leq \tfrac{A_{\rm M}\mu^2}{96N_{\rm B}R^2B_{\rm M}}$ for all $i\in[n]$ and $k\geq 0$. 
\end{itemize}	
\end{theorem}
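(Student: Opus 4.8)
The plan is to mimic the argument used for Theorem~\ref{th:nbor-BL1} (and for the analogous ``neighborhood'' results in \citep{FedNL2021}), upgrading it to the partial-participation setting. The core idea is an induction on $k$: one shows that if the bounds $\|z_i^k-x^*\|^2\le c$ and ${\cal H}^k\le C$ hold up to step $k$ for the appropriate constants $c,C$, then Theorem~\ref{th:BL2} (resp. the superlinear bound in the unbiased case, or a direct one-step estimate) applies at step $k$, which in turn forces the same bounds at step $k+1$. For part~(ii) the two monitored quantities are exactly $\sup_i\|z_i^k-x^*\|^2$ and $\sup_i\|\mL_i^k-\mL_i^*\|_{\rm F}^2$ (note ${\cal H}^k$ is the average of the latter, so controlling the supremum is strictly stronger and is what makes the induction close); under Assumptions~\ref{as:Qcontractioncomp-BL1} and \ref{as:Ccontractioncomp-BL1} all compressors are deterministic and $\eta=1$, so there is no expectation to carry and the recursion is pathwise.

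For part~(i), the unbiased case, the key point is that the model sequence $z_i^k$ is, coordinatewise, a convex combination of the iterates $\{x^t\}_{t\le k}$ (Assumption~\ref{as:Qunbiasedcomp-BL1}(ii)), hence $\|z_i^k-x^*\|^2\le \sum_j \max_{t\le k}((x^t)_j-(x^*)_j)^2 \le d\max_{t\le k}\|x^t-x^*\|^2$; similarly $(\mL_i^k)_{jl}$ is a convex combination of $\{h^i(\nabla^2 f_i(z_i^t))_{jl}\}_{t\le k}$ (Assumption~\ref{as:Cunbiasedcomp-BL1}(ii)), so using the Lipschitz bound with constant $M_2$ from Assumption~\ref{as:BL1} one gets $\max_{jl}|(\mL_i^k)_{jl}-(\mL_i^*)_{jl}| \le M_2\max_{t\le k}\|z_i^t-x^*\|$ and therefore $\|\mL_i^k-\mL_i^*\|_{\rm F}^2\le d^2 M_2^2\max_{t\le k}\|z_i^t-x^*\|^2 \le d^3 M_2^2\max_{t\le k}\|x^t-x^*\|^2$, giving ${\cal H}^k\le d^3M_2^2\max_{t\le k}\|x^t-x^*\|^2$. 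Thus it suffices to prove $\|x^k-x^*\|^2\le \tilde c_3$ for all $k$ by induction: the choice of $\tilde c_3$ as a minimum of $\mu^2/(d^2(6H^2+24H_1^2))$ and $\mu^2/(96 d^4 N_{\rm B}R^2 M_2^2)$ is exactly what is needed so that the hypotheses $\|z_i^k-x^*\|^2\le A_{\rm M}\mu^2/((6H^2+24H_1^2)B_{\rm M})$ (here $A_{\rm M}=B_{\rm M}=\eta$, and using $\|z_i^k-x^*\|^2\le d\tilde c_3$) and ${\cal H}^k\le A_{\rm M}\mu^2/(96N_{\rm B}R^2B_{\rm M})$ of Theorem~\ref{th:BL2} are met at step $k$; then $\Phi_3^k$ is non-increasing in expectation, and since $\Phi_3^k\ge {\cal W}^k \ge \tfrac1n\sum_i\|w_i^k-x^*\|^2$ and also dominates ${\cal Z}^k$ up to a constant, one extracts $\mathbb{E}\|x^{k+1}-x^*\|^2$ control from the Stochastic-Newton–type update $x^{k+1}=([\mH^k]_s+l^k\mI)^{-1}g^k$ exactly as in the proof of Theorem~\ref{th:BL2} — the per-step contraction of $\|x^{k+1}-x^*\|$ there is what yields $\|x^{k+1}-x^*\|^2\le \tilde c_3$ and closes the induction.

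For part~(ii) the structure is the same but cleaner: with deterministic compressors and $\eta=1$ there is no averaging, and the bound $\tilde c_4=\min\{A_{\rm M}\mu^2/(B_{\rm M}(6H^2+24H_1^2)),\,AA_{\rm M}\mu^2/(96N_{\rm B}R^2B_{\rm M}BM_1^2)\}$ is calibrated so that (a) the first term guarantees the $z$-hypothesis of Theorem~\ref{th:BL2}, and (b) the second term, together with the contraction factor $A$ (resp. $\delta/4$) coming from Assumption~\ref{as:Ccontractioncomp-BL1}, ensures that the Hessian-learning error $\|\mL_i^k-\mL_i^*\|_{\rm F}^2$ — which updates contractively toward $\|h^i(\nabla^2 f_i(z_i^{k+1}))-\mL_i^k\|^2$ and is perturbed by a term of order $M_1^2\|z_i^{k+1}-x^*\|^2$ via Assumption~\ref{as:BL1} — stays below $A_{\rm M}\mu^2/(96N_{\rm B}R^2B_{\rm M})$. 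One runs a simultaneous induction on the pair $(\sup_i\|z_i^k-x^*\|^2,\ \sup_i\|\mL_i^k-\mL_i^*\|_{\rm F}^2)$: the first is controlled by the linear-convergence contraction of $\Phi_3^k$ from Theorem~\ref{th:BL2} (which applies precisely because the inductive hypotheses hold at step $k$), and the second by the contractive Hessian-learning recursion plus the freshly-established bound on $\|z_i^{k+1}-x^*\|$.

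The main obstacle I expect is bookkeeping the partial-participation asymmetry: for $i\notin S^k$ the quantities $z_i^k,\mL_i^k$ are frozen, so the ``$\max_{t\le k}$'' in part~(i) and the supremum-over-$i$ monitoring in part~(ii) must be handled carefully to ensure a non-participating client never violates a bound that a participating one would have improved — this is exactly why the statement tracks $\sup_i$ rather than the average ${\cal H}^k$ alone, and why the recursion must be shown to be monotone rather than merely contractive in expectation. A secondary technical nuisance is propagating the factor $d$ (from $\|z_i^k-x^*\|^2\le d\max_t\|x^t-x^*\|^2$) and the factor $d^2$ (from $\|\mL_i^k-\mL_i^*\|_{\rm F}^2$ in terms of the $\max_{jl}$ bound) correctly through the definition of $\tilde c_3$; getting the powers of $d$ to match the stated $d^2$ and $d^4$ is the routine-but-error-prone part.
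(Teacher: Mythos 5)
Your overall strategy is the right one and matches the paper's: an induction on $k$, driven in part~(i) by the pathwise convex-combination properties of Assumptions~\ref{as:Qunbiasedcomp-BL1}(ii) and \ref{as:Cunbiasedcomp-BL1}(ii) (giving exactly the factors $d$ and $d^2$ you identify, hence the $d^2$ and $d^4$ in $\tilde c_3$), and in part~(ii) by a pathwise contraction with non-participating clients simply frozen. The paper packages these one-step implications into Lemma~\ref{lm:Hkbound-BL2}, whose four cases are precisely the four ingredients you describe.

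There is, however, one genuine gap in how you propose to close the induction: you repeatedly appeal to the fact that ``$\Phi_3^k$ is non-increasing in expectation'' (Theorem~\ref{th:BL2}) to conclude $\|x^{k+1}-x^*\|^2\le \tilde c_3$. The conclusion of Theorem~\ref{th:nbor-BL2} is a deterministic, almost-sure bound on every iterate, and an in-expectation contraction of a Lyapunov function cannot produce that: a single realization may exceed its conditional mean and escape the neighborhood, after which the hypotheses needed to apply Theorem~\ref{th:BL2} at the next step are no longer available. What actually closes the induction is the \emph{deterministic} per-step inequality \eqref{eq:xk+1-BL2},
\[
\|x^{k+1}-x^*\|^2 \;\leq\; \tfrac{3H^2}{4\mu^2}({\cal W}^k)^2  + \tfrac{12N_{\rm B}R^2}{\mu^2} {\cal H}^k {\cal W}^k +  \tfrac{3H_1^2}{\mu^2} {\cal Z}^k {\cal W}^k,
\]
which holds for every realization given the state at step $k$; under the inductive hypotheses it yields $\|x^{k+1}-x^*\|^2\le \tfrac{1}{d}{\cal W}^k$ in case~(i) (so that the convex-combination argument then bounds $\|z_i^{K+1}-x^*\|^2\le d\max_{t\le K+1}\|x^t-x^*\|^2$) and $\|x^{k+1}-x^*\|^2\le \tfrac{A_{\rm M}}{B_{\rm M}}{\cal W}^k$ in case~(ii) (so that Lemma~\ref{lm:zQcomp}(ii) with a deterministic compressor gives $\|z_i^{k+1}-x^*\|^2\le(1-A_{\rm M})\|z_i^k-x^*\|^2+B_{\rm M}\|x^{k+1}-x^*\|^2\le\tilde c_4$ for $i\in S^k$, trivially for $i\notin S^k$). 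You do gesture at this ``per-step contraction,'' so the fix is to drop the expectation framing entirely and run the argument pathwise; note that the randomness of $S^k$ and $\xi_i^k$ is harmless precisely because frozen clients preserve their bounds, as you observe in your final paragraph.
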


\section{Basis Learning in ${\cal S}^d$}
% {\bf 4.2. Basis Learn in ${\cal S}^d$.}
Let $\{ \mB_i^{jl} \ | \  j, l \in [d], j\geq l \}$ be a basis in the symmetric subspace ${\cal S}^d$ of $\R^{d \times d}$ that consists of all the symmetric matrices for $i\in [n]$. In this case, the number of symmetric matrices in the basis is ${\tilde N} \eqdef \tfrac{d(d+1)}{2}$. Then for any symmetric matrix $\mA$ in $\R^{d\times d}$, it can be uniquely represented as 
$$
\mA = \sum_{j\geq l} {\tilde h}^i_{jl}(\mA) \mB^{jl}, 
$$
where ${\tilde h}^i_{jl}(\mA) \in \R$ is the coefficient corresponding to $\mB_i^{jl}$. Let $\mB_i^{lj} \eqdef \mB_i^{jl}$ for $j >l$ and define ${\tilde h}^i(\mA)$ as a symmetric matrix in $\R^{d\times d}$ such that ${\tilde h}^i(\mA)_{jl} \eqdef \tfrac{1}{2}{\tilde h}^i_{jl}$ for $j>l$ and ${\tilde h}^i(\mA)_{jl} \eqdef {\tilde h}^i_{jl}$ for $j=l$. Let $svec: {\cal S}^d \to \R^{{\tilde N}}$ be defined as 
$$ 
svec(\mA) \eqdef
(\mA_{11}, 2\mA_{21}..., 2\mA_{d1}, ..., \mA_{jj}, ..., 2\mA_{dj}, ..., \mA_{dd})^\top, 
$$
and ${\tilde {\cal M}}_i \eqdef (\mB_i^{11}, ..., \mB_i^{d1}, ..., \mB_i^{jj}, ..., \mB_i^{dj}, ..., \mB_i^{dd})$. Then we have $\mA = {\tilde {\cal M}}_i svec({\tilde h}^i(\mA))$, which is equivalent to 
\begin{equation}\label{eq:SArep}
	svec(\mA) = {\tilde {\cal B}}_i \cdot svec({\tilde h}^i(\mA)), 
\end{equation}
for any symmetric matrix $\mA$, where 
\begin{align*}
	{ \tilde {\cal B}}_i & \eqdef (svec(\mB_i^{11}), ..., svec(\mB_i^{d1}), ..., svec(\mB_i^{jj}), ..., svec(\mB_i^{dj}), ..., svec(\mB_i^{dd})) \in \R^{{\tilde N} \times {\tilde N}}. 
\end{align*}
Since the representation (\ref{eq:SArep}) is unique, we know ${\tilde {\cal B}}_i$ is invertible, and thus 
\begin{equation}\label{eq:hmAsym}
	svec({\tilde h}^i(\mA)) = ({\tilde {\cal B}}_i)^{-1} svec(\mA). 
\end{equation}

% \noindent {\bf An example of a basis in ${\cal S}^d$.}
\begin{example}
	We choose $\mB_i^{jl} \in {\cal S}^d$ such that for $j\neq l$, $(\mB_i^{jl})_{jl} = (\mB_i^{jl})_{lj} = (\mB_i^{jl})_{jj} = (\mB_i^{jl})_{ll} =1$ and the other entries are $0$; for $j=l$, $(\mB_i^{jj})_{jj}=1$ and the other entries are $0$. It is easy to verify it is a basis in ${\cal S}^d$, and we also have $\mB_i^{jl} \succeq 0$.
\end{example}

\begin{algorithm}[h!]
	\caption{\algname{BL3} }
	\label{alg:BL3}
	\begin{algorithmic}
		\STATE {\bfseries Parameters:} learning rate $\alpha>0$, positive constant $c>0$, minibatch size $\tau \in \{1,2,\dots,n\}$ 
		\STATE {\bfseries Initialization:}
		$\mB_i^{jl} \succeq 0$; $w^0_i=z_i^0=x^0$ for $i\in [n]$; $\mL_i^0 \in \R^{d\times d}$; $\gamma_i^0 = \max_{jl}\{  c, |(\mL_i^0)_{jl}|  \} $; $\mA^0_i = \sum_{jl}((\mL_i^0)_{jl} + 2\gamma_i^0 ) \mB_i^{jl} $; $\mC_i^0 = \sum_{jl} 2 \gamma_i^0 \mB_i^{jl}$; $\mA^0 = \tfrac{1}{n} \sum_{i=1}^n \mA_i^0$; $\mC^0 = \tfrac{1}{n} \sum_{i=1}^n \mC_i^0$; $\beta_i^{0} = \max_{jl} \tfrac{{\tilde h}^i(\nabla^2 f_i(w_i^0))_{jl} + 2 \gamma_i^{0}}{(\mL_i^{0})_{jl} + 2 \gamma_i^{0} }$; $\beta^0 = \max_i \{  \beta^0_i  \}$; $g_{i,1}^{0} = \mA_i^{0} w_i^{0}$; $g_{i,2}^{0} = \mC_i^{0} w_i^{0} + \nabla f_i(w_i^{0})$;  
		$g^0_1 = \tfrac{1}{n} \sum_{i=1}^n g_{i,1}^0$; $g^0_2 = \tfrac{1}{n} \sum_{i=1}^n g_{i,2}^0$; $\mH^0 = \beta^0 \mA^0 - \mC^0$; $g^0 = \beta^0g_1^0 - g_2^0$
		
		\STATE \textbf{on} server
		\STATE ~~~ $x^{k+1} = \left(  \mH^k   \right)^{-1} g^k$ \hfill { \scriptsize Main step: Update the global model}
		\STATE ~~~ {\color{blue} Choose a subset $S^{k} \subseteq \{1,\dots, n\}$ such that $\mathbb{P}[ i \in S^k] = \nicefrac{\tau}{n}$ for all $i\in [n]$}
		\STATE ~~~ $v_i^k = \cQ_i^k(x^{k+1} - z_i^k)$, \quad $z_i^{k+1} = z_i^k + \eta v_i^k$ for $i \in S^k$ 
		\STATE ~~~ $z_i^{k+1} = z_i^k$, \quad  $w_i^{k+1}=w_i^k$ for $i \notin S^k$ 
		\STATE ~~~ Send $v_i^k$ to {\color{blue} the selected devices $i\in S^k$} \hfill { \scriptsize Communicate to selected clients}
		\FOR{each node $i = 1, \dots, n$} 
		\STATE {\color{blue} {\bf for participating devices} $i \in S^k$ {\bf do} }
		\STATE $z_i^{k+1} = z_i^k + \eta v_i^k$, $\mL_i^{k+1} = \mL_i^k + \alpha \cC_i^k \left(  {\tilde h}^i(\nabla^2 f_i(z_i^{k+1}) )  - \mL_i^k  \right)$, \quad $\gamma_i^{k+1} = \max_{jl} \{  c, |(\mL_i^{k+1})_{jl}| \}$ 
		\STATE {\color{blue}\textit{Option 1:}} $\beta_i^{{k+1}} = \max_{jl} \tfrac{{\tilde h}^i(\nabla^2 f_i(z_i^k))_{jl} + 2 \gamma_i^{k+1}}{(\mL_i^{k+1})_{jl} + 2 \gamma_i^{k+1} }$ \quad  {\color{blue}\textit{Option 2:}} $\beta_i^{{k+1}} = \max_{jl} \tfrac{{\tilde h}^i(\nabla^2 f_i(z_i^{k+1}))_{jl} + 2 \gamma_i^{k+1}}{(\mL_i^{k+1})_{jl} + 2 \gamma_i^{k+1} }$ %, \quad $\gamma_i^{k+1} = \max_{jl} \{  c, |(\mL_i^{k+1})_{jl}|, |{\tilde h}^i(\nabla^2 f_i(z_i^{k+1}))_{jl}|  \}$ 
		%\STATE {\color{blue}\textit{Option 2:}} $\beta_i^{{k+1}} = \max_{jl} \tfrac{{\tilde h}^i(\nabla^2 f_i(z_i^{k+1}))_{jl} + 2 \gamma_i^{k+1}}{(\mL_i^{k+1})_{jl} + 2 \gamma_i^{k+1} }$
		\STATE $\mA_i^{k+1} = \mA_i^k + \sum_{jl}\left(  (\mL_i^{k+1})_{jl} - (\mL_i^k)_{jl} + 2 \gamma_i^{k+1} - 2 \gamma_i^k \right) \mB_i^{jl}$
		%\STATE $\mC_i^{k+1} = \mC_i^k + \sum_{jl}\left( 2 \gamma_i^{k+1} - 2 \gamma_i^k  \right) \mB_i^{jl}$
		\STATE $\mC_i^{k+1} = \mC_i^k + \sum_{jl}\left( 2 \gamma_i^{k+1} - 2 \gamma_i^k  \right) \mB_i^{jl}$, \quad Sample $\xi_i^{k+1} \sim \text{Bernoulli}(p)$
%		\STATE $
%		\xi^k_i = \left\{ \begin{array}{rl}
%		1 & \mbox{ with probability $p$} \\
%		0 &\mbox{ with probability $1-p$}
%		\end{array} \right.
%		$
		\STATE {\color{blue}\textbf{if} $\xi_i^k=1$ }
		\STATE ~~~ $w_i^{k+1} = z_i^{k+1}$, $g_{i,1}^{k+1} = \mA_i^{k+1} w_i^{k+1}$, $g_{i,2}^{k+1} = \mC_i^{k+1} w_i^{k+1} + \nabla f_i(w_i^{k+1})$ 
		\STATE ~~~ Send $g_{i,1}^{k+1} - g_{i,1}^k$, $g_{i,2}^{k+1} - g_{i,2}^k$ to server 
		\STATE {\color{blue}\textbf{if} $\xi_i^k=0$ }
		\STATE ~~~ $w_i^{k+1} = w_i^k$, $g_{i,1}^{k+1} = \mA_i^{k+1} w_i^{k+1}$, $g_{i,2}^{k+1} = \mC_i^{k+1} w_i^{k+1} + \nabla f_i(w_i^{k+1})$  
		\STATE Send $\mL_i^{k+1} - \mL_i^k$, $\beta_i^{k+1}$, $\xi_i^k$, $\gamma_i^{k+1} - \gamma_i^k$ to server 
		
		\STATE {\color{blue} {\bf for non-participating devices} $i \notin S^k$ {\bf do} }
		\STATE $z_i^{k+1} = z_i^k$, $w_i^{k+1} = w_i^k$, $\mL_i^{k+1} = \mL_i^k$, $\gamma_i^{k+1} = \gamma_i^k$, $\beta_i^{k+1} = \beta_i^k$,  $\mA_i^{k+1} = \mA_i^k$, $\mC_i^{k+1} = \mC_i^k$, $g_{i,1}^{k+1} = g_{i ,1}^k$, $g_{i,2}^{k+1} = g_{i,2}^k$
		
		\ENDFOR
		\STATE \textbf{on} server
		\STATE ~~~ {\color{blue}\textbf{if} $\xi_i^k=1$ }
		\STATE ~~~~~~ $w_i^{k+1} = z_i^{k+1}$, Receive $g_{i,1}^{k+1}-g_{i,1}^k$, $g_{i,2}^{k+1}-g_{i,2}^k$, 
		\STATE ~~~ {\color{blue}\textbf{if} $\xi_i^k=0$ } 
		\STATE ~~~~~~ $w_i^{k+1} = w_i^k$, $g_{i,1}^{k+1}-g_{i,1}^k = \sum_{jl}(\mL_i^{k+1} - \mL_i^k)_{jl} \mB_i^{jl} w_i^{k+1} + 2(\gamma_i^{k+1} - \gamma_i^k)w_i^{k+1}$
		\STATE ~~~~~~  $g_{i,2}^{k+1}-g_{i,2}^k = \sum_{jl} 2(\gamma_i^{k+1} - 2\gamma_i^k)\mB_i^{jl}w_i^{k+1}$ 
		%\STATE ~~~ $\beta^{k+1} = \max_{i} \{  \beta_i^{k+1}  \}$
		\STATE ~~~ $g_1^{k+1} = g_1^k + \tfrac{1}{n} \sum_{i\in S^k} \left(  g_{i, 1}^{k+1} - g_{i,1}^k  \right) $, $g_2^{k+1} = g_2^k + \tfrac{1}{n} \sum_{i\in S^k} \left(  g_{i, 2}^{k+1} - g_{i,2}^k  \right)$, \ $\beta^{k+1} = \max_{i} \{  \beta_i^{k+1}  \}$
		\STATE ~~~ $g^{k+1} = \beta^{k+1} g_1^{k+1} - g_2^{k+1}$, \ $\mA^{k+1} = \mA^k + \tfrac{1}{n}\sum_{i\in S^k} \sum_{jl}\left(  (\mL_i^{k+1})_{jl} - (\mL_i^k)_{jl} + 2 \gamma_i^{k+1} - 2 \gamma_i^k  \right) \mB_i^{jl}$
		\STATE ~~~ $\mC^{k+1} = \mC^k + \tfrac{1}{n} \sum_{i \in S^k} \sum_{jl}\left( 2 \gamma_i^{k+1} - 2 \gamma_i^k  \right) \mB_i^{jl}$, \quad $\mH^{k+1} = \beta^{k+1}\mA^{k+1}  - \mC^{k+1}$ 
		%\STATE ~~~ $\mH^{k+1} = \beta^{k+1}\mA^{k+1}  - \mC^{k+1}$ 
	\end{algorithmic}
\end{algorithm}

We choose a basis $\{\mB_i^{jl}\}$ in ${\cal S}^d$ such that $\mB_i^{jl} \succeq 0$ for \algname{BL3} (Algorithm~\ref{alg:BL3}). The way to guarantee the positive definiteness of the Hessian estimator is similar to the approach of \citet{Islamov2021NewtonLearn}. From the definition of $\gamma_i^k$, we know $(\mL_i^k)_{jl} + 2\gamma_i^k\geq c>0$. Noticing that $\nabla^2 f_i(z_i^k)$ can be expressed in the form 
$$
\sum_{jl} \left(  \frac{{\tilde h}^i(\nabla^2 f_i(z_i^k))_{jl} + 2\gamma_i^k}{(\mL_i^k)_{jl} + 2\gamma_i^k} \cdot ((\mL_i^k)_{jl} + 2\gamma_i^k) -2\gamma_i^k  \right) \mB_i^{jl}, 
$$
for $\beta_i^k$ in Option 2, we have the inequality 
\begin{align*}
	& \sum_{jl} \left( \beta^k  ((\mL_i^k)_{jl} + 2\gamma_i^k)  -2\gamma_i^k  \right) \mB_i^{jl} - \nabla^2 f_i(z_i^k)  = \sum_{jl} \left( \beta^k - \tfrac{{\tilde h}^i(\nabla^2 f_i(z_i^k))_{jl} + 2\gamma_i^k}{(\mL_i^k)_{jl} + 2\gamma_i^k} \right) \cdot ((\mL_i^k)_{jl} + 2\gamma_i^k)   \mB_i^{jl}  \succeq \mathbf{0}. 
\end{align*}
Thus, if we can maintain the Hessian estimator in the form $\mH_i^k \eqdef \sum_{jl} \left( \beta^k  ((\mL_i^k)_{jl} + 2\gamma_i^k)  -2\gamma_i^k  \right) \mB_i^{jl}$, then $\mH_i^k \succeq \nabla^2 f_i(z_i^k)$ (we can get $\mH_i^k \succeq \nabla^2 f_i(z_i^{k-1})$ for Option 1 similarly). To achieve this goal, we use two auxiliary matrices $\mA_i^k$, $\mC_i^k$, and maintain $\mA_i^k = \sum_{jl}   ((\mL_i^k)_{jl} + 2\gamma_i^k)   \mB_i^{jl}$, $\mC_i^k = \sum_{jl} 2\gamma_i^k \mB_i^{jl}$, and $\mH_i^k = \beta^k \mA_i^k - \mC_i^k$. \algname{BL3} follows the same structure of \algname{BL2}, thus we also need to keep the relation $g_i^k = \mH_i^k w_i^k - \nabla f_i(w_i^k)$, which is actually $g_i^k = \beta^k \mA_i^k w_i^k - \mC_i^k w_i^k - \nabla f_i(w_i^k)$. Since for non-participating devices, $\beta^k$ usually changes at each step, we split $g_i^k$ to two parts by using two auxiliary vectors $g_{i,1}^k$, $g_{i,2}^k$, and keeping $g_{i,1}^k = \mA_i^kw_i^k$, $g_{i,2}^k=\mC_i^kw_i^k - \nabla f_i(w_i^k)$, and $g_i^k = \beta^k g_{i,1}^k - g_{i,2}^k$. The rest of \algname{BL3} is the same as \algname{BL2}. We need the following assumption for \algname{BL3}.

\begin{assumption}\label{as:BL3}
	Assume $\|\nabla^2 f_i(x) - \nabla^2 f_i(y)\| \leq H\|x-y\|$ for any $x,y\in \R^d$ and $i\in[n]$. Assume $\max_{jl}\{|(\mL_i^k)_{jl}|\} \leq M_3$ for all $i\in[n]$ and $k\geq 0$. Assume $\|{\tilde h}^i(\nabla^2 f_i(x)) - {\tilde h}^i(\nabla^2 f_i(y))\|_{\rm F} \leq M_4\|x-y\|$, $\max_{jl} \{|{\tilde h}^i(\nabla^2 f_i(x))_{jl} - {\tilde h}^i(\nabla^2 f_i(y))_{jl} | \} \leq M_5 \|x-y\|$ for any $x,y\in \R^d$ and $i\in[n]$, and $\max_{jl}\{\|\mB_i^{jl}\|_{\rm F}\}\leq R$ for $i\in[n]$. Assume each $f_i$ is $\mu$-strongly convex. 
\end{assumption}

We estimate $M_3$, $M_4$, and $M_5$ in Assumption \ref{as:BL3} in the following lemma. 
\begin{lemma}\label{lm:EstM-2} We have the following bounds on $M_3$, $M_4$, and $M_5$:
\begin{itemize} 
\item[(i)] Assume $\|\nabla^2 f_i(x) - \nabla^2 f_i(y)\|_{\rm F} \leq H_1\|x-y\|$ and $\max_{jl} \{ |(\nabla^2 f_i(x))_{jl} - (\nabla^2 f_i(y))_{jl} | \} \leq \nu \|x-y\|$ for any $x, y\in \R^d$, and $i \in [n]$. Then we have $M_4 \leq  \sqrt{2} \max_i \{ \|({\tilde {\cal B}}_i)^{-1}\| \} H_1$ and $M_5 \leq 2\nu \max_{i} \{\|({\tilde {\cal B}}_i)^{-1}\|_{\infty}\}$.  

\item[(ii)] Assume $\max_{jl} \{|(\nabla^2 f_i(x))_{jl}| \} \leq \gamma$ for any $x\in \R^d$ and $i \in [n]$. If Assumption \ref{as:Cunbiasedcomp-BL1} (ii) holds, then $M_3 \leq 2\gamma \max_{i} \{\|({\tilde {\cal B}}_i)^{-1}\|_{\infty} \}$. 

\item[(iii)] Assume $\|\nabla^2 f_i(x)\|_{\rm F} \leq {\tilde \gamma}$ for any $x\in \R^d$ and $i \in [n]$. If Assumption \ref{as:Ccontractioncomp-BL1} holds and $\|\mL_i^0\|_{\rm F} \leq \frac{\sqrt{2B}}{\sqrt{A}} \|({\tilde {\cal B}}_i)^{-1}\| {\tilde \gamma}$ for all $i \in [n]$, then we have $\|\mL_i^k\|_{\rm F} \leq \frac{\sqrt{2B}}{\sqrt{A}} \|({\tilde {\cal B}}_i)^{-1}\| {\tilde \gamma}$ for $k\geq 0$ and $i\in [n]$, and $M_3 \leq \frac{\sqrt{2B} {\tilde \gamma}}{\sqrt{A}} \max_{i} \{ \|({\tilde {\cal B}}_i)^{-1}\| \}$. 
\end{itemize}
\end{lemma}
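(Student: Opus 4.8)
The plan is to reduce all three bounds, exactly in the spirit of Lemma~\ref{lm:EstM-1}, to the identity $svec({\tilde h}^i(\mathbf{A})) = ({\tilde {\cal B}}_i)^{-1} svec(\mathbf{A})$ together with a handful of elementary norm comparisons. First I would record, for an arbitrary $\mathbf{A}\in{\cal S}^d$, the inequalities $\|\mathbf{A}\|_{\rm F}\le\|svec(\mathbf{A})\|\le\sqrt 2\,\|\mathbf{A}\|_{\rm F}$ and $\|svec(\mathbf{A})\|_\infty\le 2\max_{jl}|\mathbf{A}_{jl}|$, both immediate from the definition of $svec$ (its off-diagonal entries carry a factor $2$, so $\|svec(\mathbf{A})\|^2-2\|\mathbf{A}\|_{\rm F}^2=-\sum_j\mathbf{A}_{jj}^2\le 0$); and, since ${\tilde h}^i(\mathbf{A})$ halves the off-diagonal coefficients while $svec$ doubles them, $svec({\tilde h}^i(\mathbf{A}))$ is precisely the vector of coefficients $({\tilde h}^i_{jl}(\mathbf{A}))_{j\ge l}$, whence $\|{\tilde h}^i(\mathbf{A})\|_{\rm F}\le\|svec({\tilde h}^i(\mathbf{A}))\|$ and $\max_{jl}|{\tilde h}^i(\mathbf{A})_{jl}|\le\|svec({\tilde h}^i(\mathbf{A}))\|_\infty$. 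Because ${\tilde h}^i$ and $svec$ are linear, composing with $({\tilde {\cal B}}_i)^{-1}$ and using submultiplicativity of the spectral norm (resp.\ of the induced $\ell_\infty$ operator norm $\|\cdot\|_\infty$) converts each of these into a bound on the coefficients in terms of the matrix itself.

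For part~(i), set $\mathbf{D}:=\nabla^2 f_i(x)-\nabla^2 f_i(y)\in{\cal S}^d$. Then
\begin{align*}
\|{\tilde h}^i(\nabla^2 f_i(x))-{\tilde h}^i(\nabla^2 f_i(y))\|_{\rm F}
&=\|{\tilde h}^i(\mathbf{D})\|_{\rm F}\le\|svec({\tilde h}^i(\mathbf{D}))\|=\|({\tilde {\cal B}}_i)^{-1}svec(\mathbf{D})\|\\
&\le\|({\tilde {\cal B}}_i)^{-1}\|\,\|svec(\mathbf{D})\|\le\sqrt 2\,\|({\tilde {\cal B}}_i)^{-1}\|\,\|\mathbf{D}\|_{\rm F}\le\sqrt 2\,\|({\tilde {\cal B}}_i)^{-1}\|H_1\|x-y\|,
\end{align*}
which gives the bound on $M_4$. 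Replacing the Euclidean norm by the entrywise maximum and the spectral norm by $\|\cdot\|_\infty$, the same chain yields $\max_{jl}|{\tilde h}^i(\mathbf{D})_{jl}|\le\|({\tilde {\cal B}}_i)^{-1}\|_\infty\|svec(\mathbf{D})\|_\infty\le 2\|({\tilde {\cal B}}_i)^{-1}\|_\infty\max_{jl}|\mathbf{D}_{jl}|\le 2\nu\|({\tilde {\cal B}}_i)^{-1}\|_\infty\|x-y\|$, i.e.\ the bound on $M_5$. Part~(ii) then follows from the same $\ell_\infty$-chain applied to $\nabla^2 f_i(x)$ in place of $\mathbf{D}$: by Assumption~\ref{as:Cunbiasedcomp-BL1}(ii) each $(\mathbf{L}_i^k)_{jl}$ is a convex combination of $\{{\tilde h}^i(\nabla^2 f_i(z_i^t))_{jl}\}_{t=0}^k$, hence $|(\mathbf{L}_i^k)_{jl}|\le\sup_x\max_{jl}|{\tilde h}^i(\nabla^2 f_i(x))_{jl}|\le 2\gamma\|({\tilde {\cal B}}_i)^{-1}\|_\infty$, and taking a maximum over $i$ gives $M_3\le 2\gamma\max_i\|({\tilde {\cal B}}_i)^{-1}\|_\infty$.

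Part~(iii) is the only one requiring an induction. By the Frobenius chain above, ${\tilde\gamma}_i:=\sqrt 2\,\|({\tilde {\cal B}}_i)^{-1}\|{\tilde\gamma}$ is an upper bound on $\|{\tilde h}^i(\nabla^2 f_i(x))\|_{\rm F}$ for every $x$, so it suffices to prove $\|\mathbf{L}_i^k\|_{\rm F}\le\sqrt{B/A}\,{\tilde\gamma}_i=\tfrac{\sqrt{2B}}{\sqrt A}\|({\tilde {\cal B}}_i)^{-1}\|{\tilde\gamma}$ for all $k\ge 0$; then $M_3\le\max_{jl}|(\mathbf{L}_i^k)_{jl}|\le\|\mathbf{L}_i^k\|_{\rm F}$ and a maximum over $i$ finish it. The base case is the hypothesis on $\mathbf{L}_i^0$. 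In the inductive step, non-participating clients keep $\mathbf{L}_i^{k+1}=\mathbf{L}_i^k$; for a participating client, writing $\mathbf{Z}:={\tilde h}^i(\nabla^2 f_i(z_i^{k+1}))$ (so $\|\mathbf{Z}\|_{\rm F}\le{\tilde\gamma}_i$) and using $\alpha=1$, the update gives $\mathbf{L}_i^{k+1}-\mathbf{Z}=\cC_i^k(\mathbf{Z}-\mathbf{L}_i^k)-(\mathbf{Z}-\mathbf{L}_i^k)$, so contractiveness yields $\|\mathbf{L}_i^{k+1}-\mathbf{Z}\|_{\rm F}\le\sqrt{1-\delta}\,\|\mathbf{Z}-\mathbf{L}_i^k\|_{\rm F}\le\sqrt{1-\delta}\,({\tilde\gamma}_i+\|\mathbf{L}_i^k\|_{\rm F})$, whence the triangle inequality and the inductive hypothesis give $\|\mathbf{L}_i^{k+1}\|_{\rm F}\le{\tilde\gamma}_i\big(1+\sqrt{1-\delta}+\sqrt{1-\delta}\,\sqrt{B/A}\big)$. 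The crux — and the one genuinely non-mechanical step — is the purely algebraic inequality $1+\sqrt{1-\delta}+\sqrt{1-\delta}\,\sqrt{B/A}\le\sqrt{B/A}$ for the value $B/A=(24-14\delta)/\delta^2$ coming from \eqref{eq:AB-BL1}; equivalently $\sqrt{24-14\delta}\ge(1+\sqrt{1-\delta})^2$, which after squaring twice (both sides nonnegative throughout) reduces to $192-64\delta-76\delta^2+28\delta^3+\delta^4\ge 0$ on $(0,1]$, true because the left-hand side is decreasing there and equals $81$ at $\delta=1$. This closes the induction; everything else is bookkeeping with the norm comparisons of the first paragraph.
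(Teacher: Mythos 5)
Your parts (i) and (ii) coincide with the paper's argument: both rest on the identity $svec({\tilde h}^i(\mA)) = ({\tilde {\cal B}}_i)^{-1} svec(\mA)$, the norm comparisons $\|\mA\|_{\rm F}\le\|svec(\mA)\|\le\sqrt{2}\,\|\mA\|_{\rm F}$ and $\|svec(\mA)\|_\infty\le 2\max_{jl}|\mA_{jl}|$, and, for (ii), the convex-combination structure of $(\mL_i^k)_{jl}$ guaranteed by Assumption~\ref{as:Cunbiasedcomp-BL1}(ii). Part (iii) is where you take a genuinely different route. The paper runs the induction on the \emph{squared} Frobenius norm: the same computation as in Lemma~\ref{lm:Hk-BL3}(ii), applied with target $\mathbf{0}$, gives $\|\mL_i^{K+1}\|_{\rm F}^2\le(1-A)\|\mL_i^K\|_{\rm F}^2+B\,\|{\tilde h}^i(\nabla^2 f_i(z_i^{K+1}))\|_{\rm F}^2$, and the threshold $\tfrac{2B}{A}\|({\tilde {\cal B}}_i)^{-1}\|^2{\tilde\gamma}^2$ is an exact fixed point of this recursion because $(1-A)\tfrac{B}{A}+B=\tfrac{B}{A}$ --- no further algebra is needed, and the argument is insensitive to the particular values of $(A,B)$. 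You instead work with first powers, using only the pointwise contraction bound $\|\cC(\mA)-\mA\|_{\rm F}\le\sqrt{1-\delta}\,\|\mA\|_{\rm F}$ (valid here by the determinism in Assumption~\ref{as:Ccontractioncomp-BL1}(ii)) and the triangle inequality, which forces you to verify the $\delta$-specific inequality $\sqrt{24-14\delta}\ge(1+\sqrt{1-\delta})^2$; I checked your reduction to $192-64\delta-76\delta^2+28\delta^3+\delta^4\ge 0$ and the monotonicity argument on $(0,1]$, and both are correct, so your proof closes. The trade-off is that your route is more elementary (it bypasses Lemma~\ref{lm:Hk-BL3} entirely) but is wedded to the constants $(A,B)=\bigl(\tfrac{\delta}{4},\tfrac{6}{\delta}-\tfrac{7}{2}\bigr)$ and would need rechecking if they changed, whereas the paper's second-moment recursion is shorter and structurally robust. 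A minor point in your favor: you explicitly handle the non-participating clients via $\mL_i^{k+1}=\mL_i^k$, which the paper's proof leaves implicit.
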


Let $\Phi_5^k \eqdef {\cal W}^k + \tfrac{2p}{A_{\rm M}}\left(  1 - \tfrac{\tau A_{\rm M}}{n}  \right) {\cal Z}^k$, where ${\cal Z}^k \eqdef \tfrac{1}{n} \sum_{i=1}^n \|z_i^k-x^*\|^2$, for $k\geq 0$. 

\begin{theorem}[Linear convergence of \algname{BL3}]\label{th:BL3}
	Let Assumption \ref{as:BL3} hold. Let Assumption \ref{as:Qunbiasedcomp-BL1} (i) or Assumption \ref{as:Qcontractioncomp-BL1} (i) hold. Assume $\|z_i^k-x^*\|^2 \leq \tfrac{A_{\rm M}\mu^2}{4(H^2 + 4c_1)B_{\rm M}}$ and ${\cal H}^k \leq \tfrac{A_{\rm M}\mu^2}{16c_2 B_{\rm M}}$ for all $i\in[n]$ and $k\geq 0$, where $c_1 \eqdef \tfrac{4N^2 R^2 M_5^2 (M_3 + 2\max\{c, M_3\})^2}{c^2}$ and $c_2 \eqdef {2NR^2} \left(  1 + \tfrac{2N(M_3 + 2\max\{c, M_3\})^2}{c^2}  \right)$. Then we have 
	$$
	\mathbb{E}[\Phi_5^k] \leq \left(  1 - \frac{\tau \min\{  p, A_{\rm M}  \}}{2n}  \right)^k \Phi_5^0,
	$$
	for $k\geq 0$. 
\end{theorem}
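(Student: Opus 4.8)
The plan is to adapt the \algname{FedNL-PP}/\algname{BL2} Lyapunov argument behind Theorem~\ref{th:BL2}, replacing the $l_i^k\mI$ positive-definiteness correction of \algname{BL2} by the $\beta^k$-based construction of \algname{BL3} recalled above. Write ${\cal W}^k\eqdef\frac1n\sum_{i=1}^n\|w_i^k-x^*\|^2$ and $\hat{\mH}_i^k\eqdef\sum_{jl}(\mL_i^k)_{jl}\mB_i^{jl}=\mA_i^k-\mC_i^k$, so that $\mH_i^k=\beta^k\mA_i^k-\mC_i^k=\hat{\mH}_i^k+(\beta^k-1)\mA_i^k$ and $g^k=\frac1n\sum_i(\mH_i^kw_i^k-\nabla f_i(w_i^k))$. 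Since $\mH_i^k\succeq\nabla^2 f_i(z_i^k)\succeq\mu\mI$ (for Option~2; with $z_i^{k-1}$ for Option~1) by the $\beta^k$-construction and $\mu$-strong convexity, $\mH^k\succeq\mu\mI$, hence $\mH^k$ is invertible with $\|(\mH^k)^{-1}\|\le1/\mu$; subtracting $\mH^kx^*=\frac1n\sum_i\mH_i^kx^*$ from $\mH^kx^{k+1}=g^k$ and using $\sum_i\nabla f_i(x^*)=0$ yields the Stochastic-Newton identity
\begin{equation*}
x^{k+1}-x^*=(\mH^k)^{-1}\frac1n\sum_{i=1}^n\Big\{\big[\mH_i^k-\nabla^2 f_i(w_i^k)\big](w_i^k-x^*)+\big[\nabla^2 f_i(w_i^k)(w_i^k-x^*)-(\nabla f_i(w_i^k)-\nabla f_i(x^*))\big]\Big\},
\end{equation*}
whose second bracket is bounded by $\tfrac H2\|w_i^k-x^*\|^2$ via the $H$-Lipschitzness of $\nabla^2 f_i$ (Assumption~\ref{as:BL3}).

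The new ingredient is bounding the Hessian-estimator error via $\|\mH_i^k-\nabla^2 f_i(w_i^k)\|_{\rm F}\le\|\hat{\mH}_i^k-\nabla^2 f_i(w_i^k)\|_{\rm F}+|\beta^k-1|\,\|\mA_i^k\|_{\rm F}$. For the first term, $\mL_i^k$ tracks $\tilde h^i(\nabla^2 f_i(z_i^k))$ and $\mL_i^*=\tilde h^i(\nabla^2 f_i(x^*))$, so using $\|\sum_{jl}a_{jl}\mB_i^{jl}\|_{\rm F}^2\le NR^2\|a\|_{\rm F}^2$ (Cauchy--Schwarz with $\max_{jl}\|\mB_i^{jl}\|_{\rm F}\le R$), the identity $\sum_{jl}\tilde h^i(\mathbf A)_{jl}\mB_i^{jl}=\mathbf A$ for symmetric $\mathbf A$, and $H$-Lipschitzness to pass from $\nabla^2 f_i(z_i^k)$ to $\nabla^2 f_i(w_i^k)$, one gets $\|\hat{\mH}_i^k-\nabla^2 f_i(w_i^k)\|_{\rm F}\le\sqrt{N}R\big(\|\mL_i^k-\mL_i^*\|_{\rm F}+M_4\|z_i^k-x^*\|\big)+H\|z_i^k-w_i^k\|$. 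For $\|\mA_i^k\|_{\rm F}$, Assumption~\ref{as:BL3} gives $\max_{jl}|(\mL_i^k)_{jl}|\le M_3$, hence $\gamma_i^k\le\max\{c,M_3\}$ and $\|\mA_i^k\|_{\rm F}\le\sqrt{N}R(M_3+2\max\{c,M_3\})$. Finally $|\beta^k-1|=|\max_{i'}\beta_{i'}^k-1|\le\max_{i'}|\beta_{i'}^k-1|$, and since every denominator in $\beta_{i'}^k$ satisfies $(\mL_{i'}^k)_{jl}+2\gamma_{i'}^k\ge c$ and $\tilde h^i(\nabla^2 f_i(\cdot))_{jl}$ is $M_5$-Lipschitz, $|\beta_{i'}^k-1|\le\tfrac1c\big(\|\mL_{i'}^k-\mL_{i'}^*\|_{\rm F}+M_5\|z_{i'}^k-x^*\|\big)$ (Option~2; use $z_{i'}^{k-1}$ for Option~1). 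Squaring, averaging over $i$, and invoking the hypotheses ${\cal H}^k\le\tfrac{A_{\rm M}\mu^2}{16c_2B_{\rm M}}$ and $\|z_i^k-x^*\|^2\le\tfrac{A_{\rm M}\mu^2}{4(H^2+4c_1)B_{\rm M}}$ (which, by a propagation argument as in Theorem~\ref{th:nbor-BL2}, hold for all $k$), the dimension factors assemble exactly into $c_1=\tfrac{4N^2R^2M_5^2(M_3+2\max\{c,M_3\})^2}{c^2}$ and $c_2=2NR^2\big(1+\tfrac{2N(M_3+2\max\{c,M_3\})^2}{c^2}\big)$, and feeding this into the identity above produces a bound of the shape
\begin{equation*}
\mathbb{E}\big[\|x^{k+1}-x^*\|^2\big]\;\le\;\tfrac{A_{\rm M}}{4B_{\rm M}}\Big({\cal W}^k+\tfrac1n\sum_{i=1}^n\|z_i^k-x^*\|^2\Big).
\end{equation*}

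For the iterate recursions, apply Assumption~\ref{as:Qunbiasedcomp-BL1}(i) (with $(A_{\rm M},B_{\rm M})=(\eta,\eta)$, $0<\eta\le1/(\omega_{\rm M}+1)$) or Assumption~\ref{as:Qcontractioncomp-BL1}(i) (with $(A_{\rm M},B_{\rm M})=(\delta_{\rm M}/4,\,6/\delta_{\rm M}-7/2)$, where Lemma~\ref{lm:ABineq} handles the symmetrization) to $\cQ_i^k$ acting on $x^{k+1}-z_i^k$; together with $\mathbb{P}[i\in S^k]=\tau/n$ and the fact that $x^{k+1}$ is formed before $S^k,\xi_i^k$ are drawn, the standard compressed-iterate estimate gives $\mathbb{E}[{\cal Z}^{k+1}]\le(1-\tfrac{\tau A_{\rm M}}{n}){\cal Z}^k+\tfrac{\tau B_{\rm M}}{n}\mathbb{E}[\|x^{k+1}-x^*\|^2]$, while the Bernoulli$(p)$ resynchronization $w_i^{k+1}\leftarrow z_i^{k+1}$ among participating devices gives $\mathbb{E}[{\cal W}^{k+1}]=\tfrac{\tau p}{n}\mathbb{E}[{\cal Z}^{k+1}]+(1-\tfrac{\tau p}{n}){\cal W}^k$. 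Substituting the bound on $\mathbb{E}[\|x^{k+1}-x^*\|^2]$ (the $B_{\rm M}$ cancels against $1/B_{\rm M}$, leaving a $\tfrac{\tau A_{\rm M}}{4n}$-small multiple of ${\cal W}^k+\tfrac1n\sum_i\|z_i^k-x^*\|^2$), forming $\mathbb{E}[\Phi_5^{k+1}]=\mathbb{E}[{\cal W}^{k+1}]+\tfrac{2p}{A_{\rm M}}(1-\tfrac{\tau A_{\rm M}}{n})\mathbb{E}[{\cal Z}^{k+1}]$, and using that the weight $\tfrac{2p}{A_{\rm M}}(1-\tfrac{\tau A_{\rm M}}{n})$ on ${\cal Z}$ is exactly the value that makes the cross-terms collapse, a short case split on $p\le A_{\rm M}$ versus $A_{\rm M}\le p$ yields $\mathbb{E}[\Phi_5^{k+1}]\le(1-\tfrac{\tau\min\{p,A_{\rm M}\}}{2n})\Phi_5^k$; unrolling this recursion gives the claim.

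I expect the main obstacle to be the second paragraph --- controlling the Hessian-estimator error of \algname{BL3}, whose heart is bounding $|\beta^k-1|$ (equivalently $\max_i|\beta_i^k-1|$) by ${\cal H}^k$ and ${\cal Z}^k$ through the uniform lower bound $c$ on the denominators $(\mL_i^k)_{jl}+2\gamma_i^k$ and the entrywise Lipschitz constant $M_5$, while simultaneously absorbing the Option~1 versus Option~2 mismatch ($z_i^{k-1}$ versus $z_i^k$), which forces carrying a previous-iterate term, and tracking the dimension factors $N,R$ so that the assembled constants coincide with the specific $c_1,c_2$ in the statement. The pieces built on the Newton identity and on the $({\cal Z},{\cal W})$-recursions are essentially a transcription of the \algname{BL2}/\algname{FedNL-PP} argument.
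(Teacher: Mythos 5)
Your proposal follows the paper's route almost step for step: the Stochastic--Newton identity for $x^{k+1}-x^*$ with $\mH^k\succeq\mu\mI$ coming from the $\beta^k$-construction, the separation of the Hessian-estimator error into an $\mL_i^k$-tracking part and a $(\beta^k-1)$ part, the bound $|\beta^k-1|^2\le\tfrac{2M_5^2}{c^2}\|z_i^{k-1 \text{ or } k}-x^*\|^2+\tfrac{2}{c^2}\|\mL_i^k-\mL_i^*\|_{\rm F}^2$ via the denominator lower bound $c$, and the same $(\cZ^k,\cW^k)$ recursions and Lyapunov function $\Phi_5^k$ as in the \algname{BL2} analysis. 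The overall architecture is correct.

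The one place where your sketch does not deliver the theorem as stated is the second paragraph. You center the Hessian error at $\nabla^2 f_i(w_i^k)$, writing $\|\mH_i^k-\nabla^2 f_i(w_i^k)\|_{\rm F}\le\|\hat\mH_i^k-\nabla^2 f_i(w_i^k)\|_{\rm F}+|\beta^k-1|\|\mA_i^k\|_{\rm F}$ and then passing from $\nabla^2 f_i(z_i^k)$ to $\nabla^2 f_i(w_i^k)$; this injects extra terms $\sqrt{N}RM_4\|z_i^k-x^*\|$ and $H\|z_i^k-w_i^k\|$ into the error bound, so the dimension factors do \emph{not} ``assemble exactly'' into the stated $c_1$ and $c_2$ --- you would end up with strictly larger constants (and hence stronger smallness hypotheses than the theorem assumes). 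The paper instead compares $\mH_i^k$ directly to $\nabla^2 f_i(x^*)$ and $\mL_i^k$ to $\mL_i^*=\tilde h^i(\nabla^2 f_i(x^*))$, so that $\hat\mH_i^k-\nabla^2 f_i(x^*)=\sum_{jl}(\mL_i^k-\mL_i^*)_{jl}\mB_i^{jl}$ contributes only $NR^2\|\mL_i^k-\mL_i^*\|_{\rm F}^2$ with no $M_4$ or $\|z_i^k-w_i^k\|$ leakage; together with $|(\mL_i^k)_{jl}+2\gamma_i^k|\le M_3+2\max\{c,M_3\}$ and the $|\beta^k-1|$ bound this yields exactly $\tfrac1n\sum_i\|\mH_i^k-\nabla^2 f_i(x^*)\|^2\le c_1\cZ^{k-1}+c_2\cH^k$ (Option 1; $\cZ^k$ for Option 2). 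This is a repairable deviation --- re-center your triangle inequality at $x^*$ --- but as written the claim that the constants match is not justified. A second, minor point: the hypotheses on $\|z_i^k-x^*\|^2$ and $\cH^k$ are \emph{assumed} for all $k$ in this theorem, so no propagation argument \`a la Theorem~\ref{th:nbor-BL2} is needed here (that is the content of the separate Theorem~\ref{th:nbor-BL3}).
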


Define $\Phi_6^k \eqdef {\cal H}^k + \tfrac{4BM_4^2}{A_{\rm M}}\|x^k-x^*\|^2 $ for $k\geq 0$. 

\begin{theorem}[Superlinear convergence of \algname{BL3}]\label{th:superlinear-BL3}
	Let $\eta=1$, $\xi^k \equiv 1$, $S^k \equiv [n]$, and ${\cal Q}_i^k(x) \equiv x$ for any $x\in \R^d$ and $k\geq 0$. Let Assumption \ref{as:BL3} hold. Let Assumption \ref{as:Cunbiasedcomp-BL1} (i) or Assumption \ref{as:Ccontractioncomp-BL1} (i) hold. Assume $\|z_i^k-x^*\|^2 \leq \tfrac{A_{\rm M}\mu^2}{4(H^2 + 4c_1)B_{\rm M}}$ and ${\cal H}^k \leq \tfrac{A_{\rm M}\mu^2}{16c_2 B_{\rm M}}$ for all $i\in[n]$ and $k\geq 0$. Then we have 
	$$
	\mathbb{E}[\Phi_6^k] \leq \theta_3^k \Phi_6^0, 
	$$
	for $k\geq 0$, where $\theta_3 \eqdef \left(  1 - \tfrac{\min\{  2A, A_{\rm M}  \} }{2} \right)$. Moreover, for Option 1, we have 
	$$
	\mathbb{E} \left[   \tfrac{\|x^{k+1}-x^*\|^2}{\|x^k-x^*\|^2} \right] \leq \theta_3^k \left(  \frac{A_{\rm M}(H^2 \theta_3+4c_1)}{8BM_4^2\mu^2 \theta_3} + \frac{2c_2}{\mu^2}  \right) \Phi_6^0, 
	$$
	and for Option 2, we have 
	$$
	\mathbb{E} \left[   \frac{\|x^{k+1}-x^*\|^2}{\|x^k-x^*\|^2} \right] \leq \theta_3^k \left(  \frac{A_{\rm M}(H^2 + 4c_1)}{8BM_4^2\mu^2} + \frac{2c_2}{\mu^2}  \right) \Phi_6^0, 
	$$
	for $k\geq 0$. 
\end{theorem}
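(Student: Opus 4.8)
The plan is to reduce \algname{BL3}, under the stated restrictions, to an undamped Newton-type iteration and then run the superlinear argument of FedNL~\citep{FedNL2021} / NewtonLearn~\citep{Islamov2021NewtonLearn}, the only genuinely new ingredient being control of the Hessian estimator in the learned basis. First I would record the reduction: with $\eta=1$, $\xi^k\equiv1$, $S^k\equiv[n]$, $\cQ_i^k=\mathrm{id}$ an easy induction gives $z_i^k=w_i^k=x^k$ for all $i\in[n]$, $k\ge0$, so on the server $g^k=\mH^k x^k-\nabla f(x^k)$ with $\mH^k=\tfrac1n\sum_i\mH_i^k$, $\mH_i^k=\sum_{jl}\big(\beta^k((\mL_i^k)_{jl}+2\gamma_i^k)-2\gamma_i^k\big)\mB_i^{jl}$, and the main step becomes $x^{k+1}=x^k-(\mH^k)^{-1}\nabla f(x^k)$. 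By the construction of $\beta^k$ (the displayed inequality preceding Assumption~\ref{as:BL3}) one has $\mH_i^k\succeq\nabla^2 f_i(z_i^k)$ for Option~2 and $\mH_i^k\succeq\nabla^2 f_i(z_i^{k-1})$ for Option~1; together with $\mu$-strong convexity of each $f_i$ and $\mB_i^{jl}\succeq0$ this yields $\mH^k\succeq\mu\mI$, so $\mH^k$ is invertible and $\norm{(\mH^k)^{-1}}\le\mu^{-1}$. Finally $\cQ_i^k=\mathrm{id}$ is unbiased with $\omega_{\rm M}=0$, so $A_{\rm M}=B_{\rm M}=1$ in \eqref{eq:AB_M-BL1} and $\Phi_6^k$ is as stated.

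Next I would set up the basis-coefficient learning recursion. Since $\mL_i^{k+1}=\mL_i^k+\alpha\cC_i^k({\tilde h}^i(\nabla^2 f_i(z_i^{k+1}))-\mL_i^k)$ with $z_i^{k+1}=x^{k+1}$, the standard one-step estimate for compressed learning --- valid for unbiased $\cC_i^k$ with $\alpha\le(\omega+1)^{-1}$ (Assumption~\ref{as:Cunbiasedcomp-BL1}) and for deterministic contractive $\cC_i^k$ with $\alpha=1$ (Assumption~\ref{as:Ccontractioncomp-BL1}), unified through $(A,B)$ in \eqref{eq:AB-BL1} --- gives $\E_k[\norm{\mL_i^{k+1}-\mL_i^*}_{\rm F}^2]\le(1-A)\norm{\mL_i^k-\mL_i^*}_{\rm F}^2+B\norm{{\tilde h}^i(\nabla^2 f_i(x^{k+1}))-\mL_i^*}_{\rm F}^2$, with $\mL_i^*={\tilde h}^i(\nabla^2 f_i(x^*))$. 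Using the $M_4$-Lipschitzness of $x\mapsto{\tilde h}^i(\nabla^2 f_i(x))$ (Assumption~\ref{as:BL3}) and averaging over $i$,
\begin{equation*}
\E_k[{\cal H}^{k+1}]\le(1-A)\,{\cal H}^k+BM_4^2\,\E_k\big[\norm{x^{k+1}-x^*}^2\big].
\end{equation*}

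The crux, and the step I expect to be the main obstacle, is bounding how far $\mH^k$ is from the true Hessian in the learned basis. Writing $\hat\mH_i^k\eqdef\sum_{jl}(\mL_i^k)_{jl}\mB_i^{jl}$ I would split
\begin{equation*}
\mH^k-\nabla^2 f(x^k)=(\beta^k-1)\,\mA^k+\tfrac1n\sum_{i=1}^n\big(\hat\mH_i^k-\nabla^2 f_i(x^k)\big).
\end{equation*}
For the second summand, $\hat\mH_i^k-\nabla^2 f_i(x^k)=\sum_{jl}\big((\mL_i^k)_{jl}-{\tilde h}^i(\nabla^2 f_i(x^k))_{jl}\big)\mB_i^{jl}$, and by $\norm{\mB_i^{jl}}_{\rm F}\le R$, Cauchy--Schwarz over the $N=d^2$ index pairs, the triangle inequality through $\mL_i^*$, $M_4$-Lipschitzness and Jensen, its averaged Frobenius norm is $\le R\sqrt N\big(\sqrt{{\cal H}^k}+M_4\norm{x^k-x^*}\big)$. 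For the first summand, $\norm{\mA^k}_{\rm F}\le N(M_3+2\max\{c,M_3\})R$ (from $\max_{jl}|(\mL_i^k)_{jl}|\le M_3$ and $\gamma_i^k\le\max\{c,M_3\}$, Assumption~\ref{as:BL3}), while each $\beta_i^{k,jl}-1=\big({\tilde h}^i(\nabla^2 f_i(z_i^{k}))_{jl}-(\mL_i^k)_{jl}\big)/\big((\mL_i^k)_{jl}+2\gamma_i^k\big)$ has denominator $\ge c$ and numerator $\le M_5\norm{z_i^k-x^*}+\norm{\mL_i^k-\mL_i^*}_{\rm F}$ (again through $\mL_i^*$ and the entrywise $M_5$-Lipschitzness), so $|\beta^k-1|\le\max_{i,jl}|\beta_i^{k,jl}-1|$ is controlled by $\norm{x^k-x^*}$ and the learning errors (with $z_i^{k-1},x^{k-1}$ replacing $z_i^k,x^k$ for Option~1, where $\beta_i^k$ uses the one-step-stale iterate). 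Squaring and collecting --- the delicate points being that $\beta^k$ couples all clients, so the learned-coefficient error enters only through the averaged ${\cal H}^k$, and that the denominators stay $\ge c$ --- yields a bound of the form $\norm{\mH^k-\nabla^2 f(x^k)}^2\le c_1\norm{x^k-x^*}^2+c_2{\cal H}^k$ for Option~2 (and with $\norm{x^{k-1}-x^*}^2$ in place of $\norm{x^k-x^*}^2$ in the $c_1$-term for Option~1), with $c_1,c_2$ exactly the constants of Theorem~\ref{th:BL3}.

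Finally I would close the recursion. From $x^{k+1}-x^*=(\mH^k)^{-1}\int_0^1[\mH^k-\nabla^2 f(x^*+t(x^k-x^*))](x^k-x^*)\,dt$, $\norm{(\mH^k)^{-1}}\le\mu^{-1}$, and the $H$-Lipschitzness of $\nabla^2 f$ (so $\int_0^1\norm{\nabla^2 f(x^k)-\nabla^2 f(x^*+t(x^k-x^*))}\,dt\le\tfrac H2\norm{x^k-x^*}$), together with the crux estimate and elementary inequalities,
\begin{equation*}
\norm{x^{k+1}-x^*}^2\le\frac1{\mu^2}\Big(\tfrac{H^2+4c_1}{2}\norm{x^k-x^*}^2+2c_2{\cal H}^k\Big)\norm{x^k-x^*}^2
\end{equation*}
(Option~2; with $\norm{x^{k-1}-x^*}^2$ multiplying $c_1$ for Option~1). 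The hypotheses $\norm{x^k-x^*}^2\le\tfrac{A_{\rm M}\mu^2}{4(H^2+4c_1)B_{\rm M}}$, ${\cal H}^k\le\tfrac{A_{\rm M}\mu^2}{16c_2B_{\rm M}}$ (recall $z_i^k=x^k$) make the bracket $\le\tfrac{A_{\rm M}\mu^2}{4B_{\rm M}}$, hence $\E_k[\norm{x^{k+1}-x^*}^2]\le\tfrac{A_{\rm M}}{4B_{\rm M}}\norm{x^k-x^*}^2$; inserting this into the Step~1 recursion, the weight $4BM_4^2/A_{\rm M}$ in $\Phi_6^k$ is chosen precisely so the cross-terms balance, giving $\E_k[\Phi_6^{k+1}]\le\theta_3\Phi_6^k$ with $\theta_3=1-\tfrac12\min\{2A,A_{\rm M}\}$, hence $\E[\Phi_6^k]\le\theta_3^k\Phi_6^0$ (for Option~1 the extra $\norm{x^{k-1}-x^*}^2$ is absorbed via $\norm{x^{k-1}-x^*}^2\le\tfrac{A_{\rm M}}{4BM_4^2}\Phi_6^{k-1}$ and the induction hypothesis). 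For the superlinear ratio, divide the displayed inequality by $\norm{x^k-x^*}^2$, take expectations, and use $\E[\norm{x^k-x^*}^2]\le\tfrac{A_{\rm M}}{4BM_4^2}\theta_3^k\Phi_6^0$ and $\E[{\cal H}^k]\le\theta_3^k\Phi_6^0$: this gives the Option~2 ratio $\theta_3^k\big(\tfrac{A_{\rm M}(H^2+4c_1)}{8BM_4^2\mu^2}+\tfrac{2c_2}{\mu^2}\big)\Phi_6^0$, and for Option~1 the $c_1$-term, being driven by $\norm{x^{k-1}-x^*}^2$ which decays only as $\theta_3^{k-1}$, picks up the stated extra factor $\theta_3^{-1}$.
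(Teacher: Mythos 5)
Your proposal is correct in structure and follows essentially the same route as the paper: reduce to the undamped iteration $x^{k+1}=x^k-(\mH^k)^{-1}\nabla f(x^k)$ with $z_i^k=w_i^k\equiv x^k$, get $\E_k[{\cal H}^{k+1}]\le(1-A){\cal H}^k+BM_4^2\,\E_k\|x^{k+1}-x^*\|^2$ from the compressed-learning lemma, control the Hessian-estimator error by splitting off the $(\beta^k-1)$ contribution (denominator bounded below by $c$, numerator by the entrywise $M_5$-Lipschitzness through $\mL_i^*$) from the coefficient error $\mL_i^k-\mL_i^*$, and close the $\Phi_6$ recursion; the Option~1 versus Option~2 distinction via the one-step-stale iterate, and the resulting extra $\theta_3^{-1}$ in the ratio bound, are handled exactly as in the paper. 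Your one-step contraction $\E_k\|x^{k+1}-x^*\|^2\le\tfrac{A_{\rm M}}{4B_{\rm M}}\|x^k-x^*\|^2$ is obtained directly from the neighborhood hypotheses rather than through the paper's chain $(1-A_{\rm M})+B_{\rm M}\cdot\tfrac{A_{\rm M}}{4B_{\rm M}}=1-\tfrac{3A_{\rm M}}{4}$, but both close the recursion with the stated $\theta_3$.

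The one place your sketch drifts from the stated constants is the centering. You bound $\|\mH^k-\nabla^2 f(x^k)\|$, whereas the paper bounds $\tfrac1n\sum_i\|\mH_i^k-\nabla^2 f_i(x^*)\|^2\le c_1{\cal Z}^{k-1}+c_2{\cal H}^k$ directly, with the $\tfrac{H}{2}\|w_i^k-x^*\|^2$ term coming from the Taylor remainder at $x^*$. Recentring at $x^k$ forces you to pass from $\mL_i^k-{\tilde h}^i(\nabla^2 f_i(x^k))$ back to $\mL_i^k-\mL_i^*$ via the $M_4$-Lipschitz bound, which injects an additional term of order $NR^2M_4^2\|x^k-x^*\|^2$ into the coefficient of $\|x^k-x^*\|^2$; the constants you obtain are therefore not \emph{exactly} the $c_1,c_2$ of \eqref{eq:c1c2}, and the theorem's neighborhood hypotheses (calibrated to those $c_1,c_2$) would need a matching adjustment. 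Centering at $x^*$ as the paper does removes this mismatch; everything else goes through as you describe.
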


Next, we explore under what conditions we can guarantee the boundedness of $\|z_i^k-x^*\|^2$ and ${\cal H}^k$. 

\begin{theorem}\label{th:nbor-BL3}
	Let Assumption \ref{as:BL3} hold. Then we have the following results. 
\begin{itemize}	
\item[(i)] Let Assumption \ref{as:Qunbiasedcomp-BL1} and Assumption \ref{as:Cunbiasedcomp-BL1} (ii) hold. If $$\|x^0-x^*\|^2 \leq \min \left\{  \frac{\mu^2}{4d^2(H^2+4c_1)}, \frac{\mu^2}{16d^4 c_2M_5^2} \right\},$$ then $\|z_i^k-x^*\|^2 \leq \min \left\{ \tfrac{\mu^2}{4d(H^2+4c_1)},  \tfrac{\mu^2}{16d^3 c_2M_5^2}  \right\}$ and ${\cal H}^k \leq \tfrac{\mu^2}{16dc_2 }$ for $i\in[n]$ and $k\geq 0$. 
\item[(ii)] Let Assumption \ref{as:Qcontractioncomp-BL1} and Assumption \ref{as:Ccontractioncomp-BL1} hold. If $\|z_i^0 - x^*\|^2 \leq \min\left\{  \tfrac{A_{\rm M}\mu^2}{4B_{\rm M}(H^2+4c_1)},  \tfrac{AA_{\rm M}\mu^2}{16c_2 B_{\rm M}BM_4^2}  \right\}$ and $\|\mL_i^0 - \mL_i^*\|^2_{\rm F} \leq \tfrac{A_{\rm M}\mu^2}{16c_2B_{\rm M}}$ for all $i\in[n]$, then $\|z_i^k - x^*\|^2 \leq \min\left\{  \tfrac{A_{\rm M}\mu^2}{4B_{\rm M}(H^2+4c_1)},  \tfrac{AA_{\rm M}\mu^2}{16c_2 B_{\rm M}BM_4^2}  \right\}$ and $\|\mL_i^k - \mL_i^*\|^2_{\rm F} \leq \tfrac{A_{\rm M}\mu^2}{16c_2B_{\rm M}}$ for all $i\in[n]$ and $k\geq 0$. 
\end{itemize}
\end{theorem}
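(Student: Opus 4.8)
Both parts are proved by induction on $k$, with the common engine being the one-step analysis underlying Theorem~\ref{th:BL3}. Set $\bJ_i^k \eqdef \int_0^1 \nabla^2 f_i(x^* + s(w_i^k - x^*))\,ds$, so $\nabla f_i(w_i^k) = \nabla f_i(x^*) + \bJ_i^k(w_i^k-x^*)$. Since $\nabla f(x^*)=0$ and, by construction in \algname{BL3}, $g^k = \tfrac1n\sum_i(\mH_i^k w_i^k - \nabla f_i(w_i^k))$ with $\mH_i^k = \beta^k\mA_i^k - \mC_i^k$ and $\mH^k = \tfrac1n\sum_i\mH_i^k$, one gets the Stochastic-Newton identity $g^k - \mH^k x^* = \tfrac1n\sum_i(\mH_i^k - \bJ_i^k)(w_i^k - x^*)$; because each $f_i$ is $\mu$-strongly convex and $\mH_i^k \succeq \nabla^2 f_i(z_i^k)$ (Option~2; $\succeq \nabla^2 f_i(z_i^{k-1})$ for Option~1, established before the statement), we have $\mH^k\succeq\mu\mI$ and hence
\[
\|x^{k+1}-x^*\| \;\le\; \frac{1}{\mu}\cdot\frac1n\sum_{i=1}^n \|\mH_i^k - \bJ_i^k\|\;\|w_i^k - x^*\|.
\]
The first ingredient in both parts is to bound $\|\mH_i^k - \nabla^2 f_i(z_i^k)\|$ by $\|\mL_i^k - \mL_i^*\|_{\rm F}$ and $\|z_i^k-x^*\|$, where $\mL_i^* = {\tilde h}^i(\nabla^2 f_i(x^*))$: starting from $\mH_i^k = \sum_{jl}\big(\beta^k((\mL_i^k)_{jl}+2\gamma_i^k) - 2\gamma_i^k\big)\mB_i^{jl}$ and using $\mB_i^{jl}\succeq\mathbf{0}$, $\max_{jl}|(\mL_i^k)_{jl}|\le M_3$, $(\mL_i^k)_{jl}+2\gamma_i^k\ge c$, $\max_{jl}\|\mB_i^{jl}\|_{\rm F}\le R$, and the $M_5$-Lipschitzness of the entries of ${\tilde h}^i(\nabla^2 f_i(\cdot))$, one gets $\|\mH_i^k - \nabla^2 f_i(z_i^k)\|^2 \lesssim c_1\|z_i^k-x^*\|^2 + c_2\|\mL_i^k-\mL_i^*\|_{\rm F}^2$ with exactly the $c_1,c_2$ of Theorem~\ref{th:BL3}; combining with $H$-Lipschitzness of $\nabla^2 f_i$ (so $\|\nabla^2 f_i(z_i^k) - \bJ_i^k\| \le H(\|z_i^k-x^*\| + \|w_i^k-x^*\|)$) and the display gives a one-step estimate $\|x^{k+1}-x^*\| \le \tfrac{1}{\mu}\big(\mathrm{err}\big)\cdot\max_i\|w_i^k-x^*\|$, where $\mathrm{err} \lesssim \sqrt{c_1}\max_i\|z_i^k-x^*\| + H\max_i\|w_i^k-x^*\| + \sqrt{c_2}\sqrt{\mathcal{H}^k}$.

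\emph{Part (i) (unbiased compressors).} Here Assumptions~\ref{as:Qunbiasedcomp-BL1}(ii) and \ref{as:Cunbiasedcomp-BL1}(ii) make everything pathwise: each coordinate of $z_i^k$ is a convex combination of $\{(x^t)_j\}_{t\le k}$, hence so is each coordinate of $w_i^k$ (which equals some $z_i^t$), and each entry of $\mL_i^k$ is a convex combination of $\{{\tilde h}^i(\nabla^2 f_i(z_i^t))_{jl}\}_{t\le k}$. Summing over the $d$ (resp.\ $d^2$) coordinates and using $|(x^t)_j-(x^*)_j|\le\|x^t-x^*\|$ and $|{\tilde h}^i(\nabla^2 f_i(z_i^t))_{jl} - (\mL_i^*)_{jl}| \le M_5\|z_i^t-x^*\|$ yields, almost surely,
\[
\|z_i^k-x^*\|^2,\ \|w_i^k-x^*\|^2 \le d\max_{t\le k}\|x^t-x^*\|^2, \qquad \mathcal{H}^k \le d^3 M_5^2\max_{t\le k}\|x^t-x^*\|^2.
\]
It therefore suffices to show $\|x^k-x^*\|^2\le\|x^0-x^*\|^2$ for all $k$, by strong induction. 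Assuming it for $t\le k$, the two displays give $\|z_i^t-x^*\|^2\le d\|x^0-x^*\|^2$ and $\mathcal{H}^t\le d^3M_5^2\|x^0-x^*\|^2$ for $t\le k$; the stated bound on $\|x^0-x^*\|^2$ (and $A_{\rm M}=B_{\rm M}$ in the unbiased case) makes these at most $\tfrac{A_{\rm M}\mu^2}{4(H^2+4c_1)B_{\rm M}}$ and $\tfrac{A_{\rm M}\mu^2}{16c_2B_{\rm M}}$, so $\mathrm{err}\le\tfrac{\mu}{2\sqrt d}$ and $\|x^{k+1}-x^*\|\le\tfrac12\max_{t\le k}\|x^t-x^*\|\le\|x^0-x^*\|$, closing the induction. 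Reading the two displays with $\|x^t-x^*\|^2\le\|x^0-x^*\|^2$ for all $t$ gives exactly $\|z_i^k-x^*\|^2\le\min\{\tfrac{\mu^2}{4d(H^2+4c_1)},\tfrac{\mu^2}{16d^3c_2M_5^2}\}$ and $\mathcal{H}^k\le\tfrac{\mu^2}{16dc_2}$.

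\emph{Part (ii) (deterministic contraction compressors).} Now the convex-combination structure is unavailable, so I track the three coupled quantities $\|x^k-x^*\|^2$, $\max_i\|z_i^k-x^*\|^2$, and $\mathcal{H}^k = \tfrac1n\sum_i\|\mL_i^k-\mL_i^*\|_{\rm F}^2$ and show inductively that none exceeds its target. For participating $i$, $z_i^{k+1}=z_i^k+\cQ_i^k(x^{k+1}-z_i^k)$ with $\eta=1$, and contractiveness gives $\|z_i^{k+1}-x^{k+1}\|^2\le(1-\delta_{\rm M})\|x^{k+1}-z_i^k\|^2$; a Young split with the weights encoded in $A_{\rm M}=\tfrac{\delta_{\rm M}}{4}$, $B_{\rm M}=\tfrac{6}{\delta_{\rm M}}-\tfrac72$ then bounds $\|z_i^{k+1}-x^*\|^2$ by a suitable combination of $\|z_i^k-x^*\|^2$ and $\|x^{k+1}-x^*\|^2$. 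Similarly, from $\mL_i^{k+1}=\mL_i^k+\cC_i^k({\tilde h}^i(\nabla^2 f_i(z_i^{k+1}))-\mL_i^k)$, contractiveness of $\cC_i^k$, and $\|{\tilde h}^i(\nabla^2 f_i(z_i^{k+1}))-\mL_i^*\|_{\rm F}\le M_4\|z_i^{k+1}-x^*\|$, one bounds $\|\mL_i^{k+1}-\mL_i^*\|_{\rm F}^2$ by a combination of $\|\mL_i^k-\mL_i^*\|_{\rm F}^2$ and $\|z_i^{k+1}-x^*\|^2$; and $\|x^{k+1}-x^*\|$ is controlled by the Step-1 estimate with $\|w_i^k-x^*\|\le\max_{t\le k}\|z_i^t-x^*\|$, since $w_i^k$ equals some earlier $z_i^t$. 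Feeding the inductive bounds together with the initialization $\|z_i^0-x^*\|^2\le\tilde c_4$, $\|\mL_i^0-\mL_i^*\|_{\rm F}^2\le\tfrac{A_{\rm M}\mu^2}{16c_2B_{\rm M}}$ — where the $\min$ defining $\tilde c_4$ and its extra $\tfrac{AA_{\rm M}}{B_{\rm M}BM_4^2}$ factor are exactly what absorb the $z\!\leftrightarrow\!\mL$ coupling — shows each of the three quantities stays below its target, which is the claim.

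\textbf{Main obstacle.} The delicate step is the Step-1 one-step estimate: forcing the overall factor in $\|x^{k+1}-x^*\|$ below $\tfrac12$ requires carrying the $\beta^k$-rescaling in $\mH_i^k$ carefully through the bounds (this is where the factors $M_3+2\max\{c,M_3\}$ and the division by $c$ that build $c_1,c_2$ originate), distinguishing Option~1 from Option~2 (whether $\nabla^2 f_i(z_i^{k-1})$ or $\nabla^2 f_i(z_i^k)$ is the dominated Hessian), and juggling $\|x^k-x^*\|$, $\|z_i^k-x^*\|$, $\|w_i^k-x^*\|$, $\|\mL_i^k-\mL_i^*\|_{\rm F}$ through one iteration without the constants blowing up. Part~(i) is comparatively routine once the convex-combination reductions are recorded, since they make the whole neighborhood argument pathwise; part~(ii) additionally needs the error-feedback telescoping hidden in the $(A_{\rm M},B_{\rm M})$ choice.
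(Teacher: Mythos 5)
Your proposal is correct and follows essentially the same route as the paper: the one‑step bound on $\|x^{k+1}-x^*\|$ via the $c_1,c_2$ decomposition of $\|\mH_i^k-\nabla^2 f_i(x^*)\|$ (inequalities (\ref{eq:xk+1-BL3})--(\ref{eq:xk+1-BL3-2})), the pathwise convex‑combination reductions for part (i), the deterministic contraction recursions for $z_i^k$ and $\mL_i^k$ for part (ii), and an induction closing the loop. The paper merely packages the individual induction steps as Lemma~\ref{lm:Hkbound-BL3}; the substance and the constants match yours.
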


\section{Experiments}

We conduct numerical experiments to compare the performance of \algname{BL} methods with various efficient methods in federated learning. We consider regularized logistic regression problem
\begin{equation}
	\squeeze
	\min\limits_{x\in \mathbb{R}^d}\left\{f(x)\eqdef \frac{1}{n}\sum\limits_{i=1}^nf_i(x) + \frac{\lambda}{2}\|x\|^2\right\},
\end{equation}
where $$f_i(x) \eqdef \frac{1}{m}\sum_{j=1}^m\log\(1+\exp\(-b_{ij}a_{ij}^\top x\)\).$$ Here $\{a_{ij}, b_{ij}\}_{j\in [m]}$ are data points stored on $i$-th device. In our experiments, we used the following datasets from LibSVM \citep{chang2011libsvm}: \dataname{a1a}, \dataname{a9a}, \dataname{phishing}, \dataname{w2a}, \dataname{w8a}, \dataname{covtype}, \dataname{madelon}. We use two values of the regularization parameter: $\lambda \in \{10^{-3}, 10^{-4}\}$. In the figures we plot the
relation of the optimality gap $f(x^k) - f(x^*)$
and the number of communicated bits per node. The optimal value $f(x^*)$ is chosen as the function value at the
$20$-th iterate of standard Newton’s method.

% \subsection{Basis computation for \algname{BL}}
\subsection{Basis computation for \algname{BL}}
One of the most popular types of data preprocessing in classical machine learning is dimension reduction. One of such techniques is based on SVD of the feature matrix. We want to point out that SVD could also be used to find a basis for each client. In our experiments, we use linalg.orth function from SciPy module \citep{scipy}. In other words, such data preprocessing could be used not only for the stability of certain machine learning models but also for improving the optimization process.

% \vspace{-5pt}
% \begin{figure}[ht]
% 	\begin{center}
% 		\centerline{\begin{tabular}{cc}
% 				\includegraphics[width = 0.2 \textwidth]{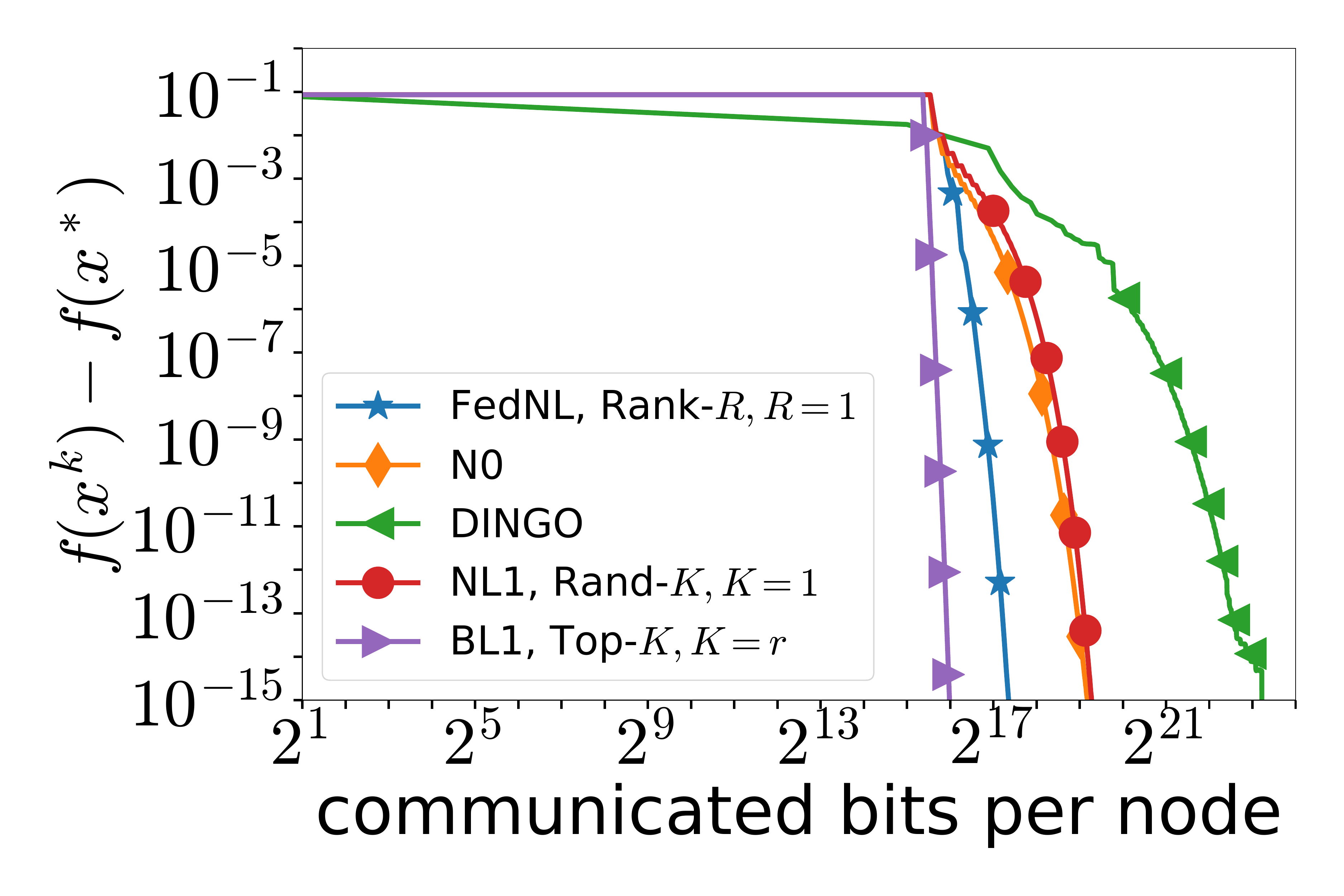} &
% 				\includegraphics[width = 0.2 \textwidth]{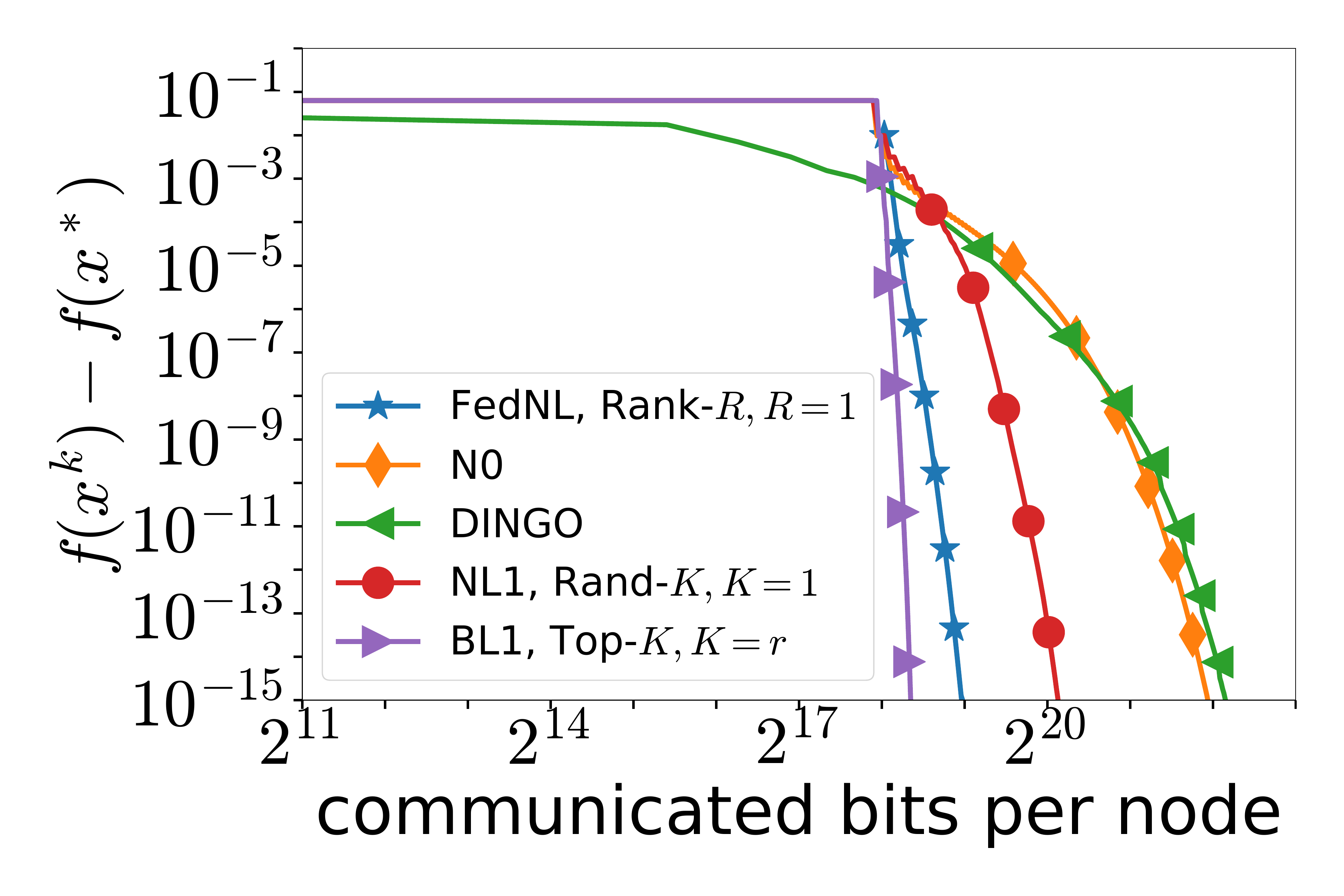}\\
% 				\dataname{covtype}, $\lambda=10^{-3}$ &
% 				\dataname{a1a}, $\lambda=10^{-4}$\\
% 				\includegraphics[width = 0.2 \textwidth]{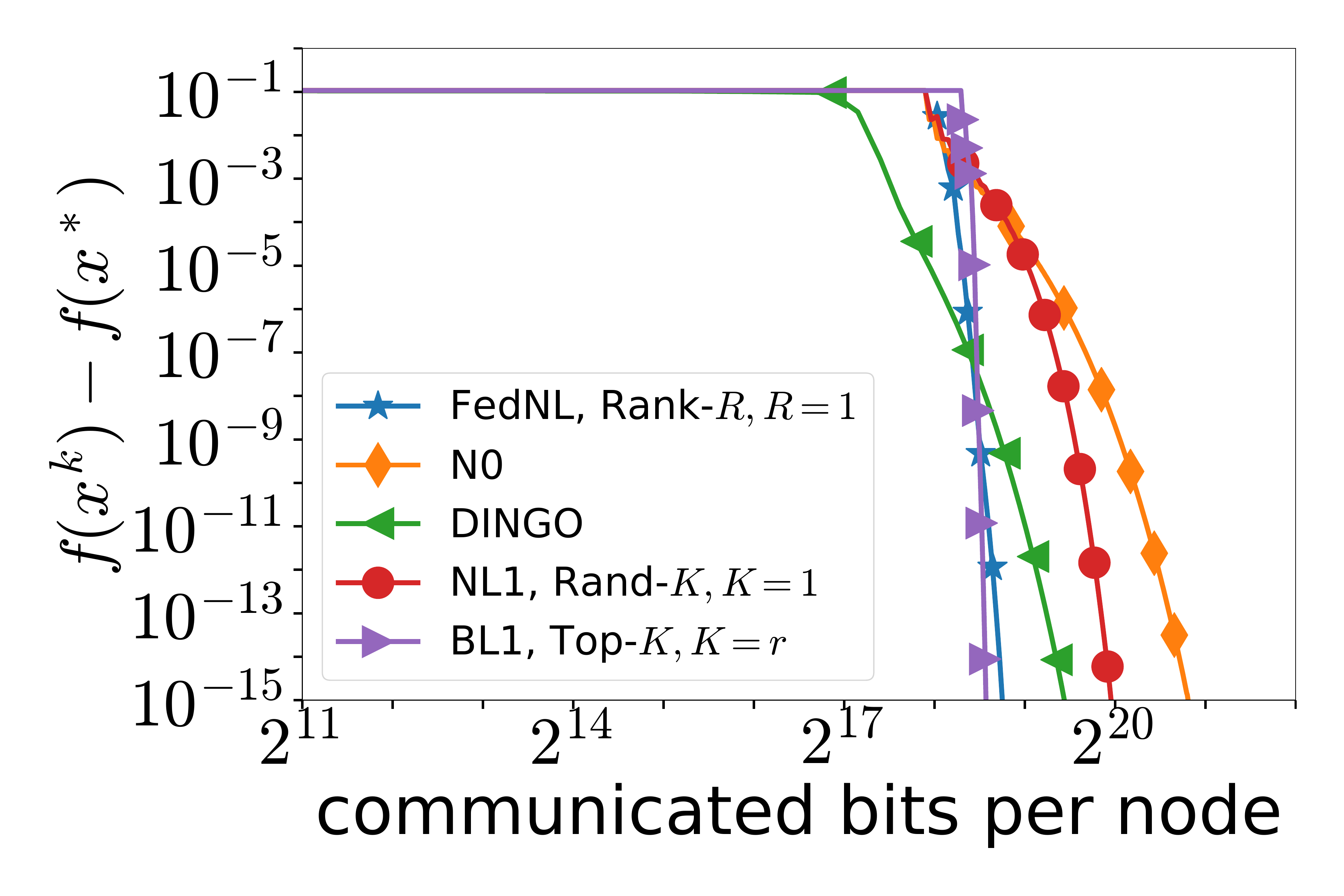} &
% 				\includegraphics[width = 0.2 \textwidth]{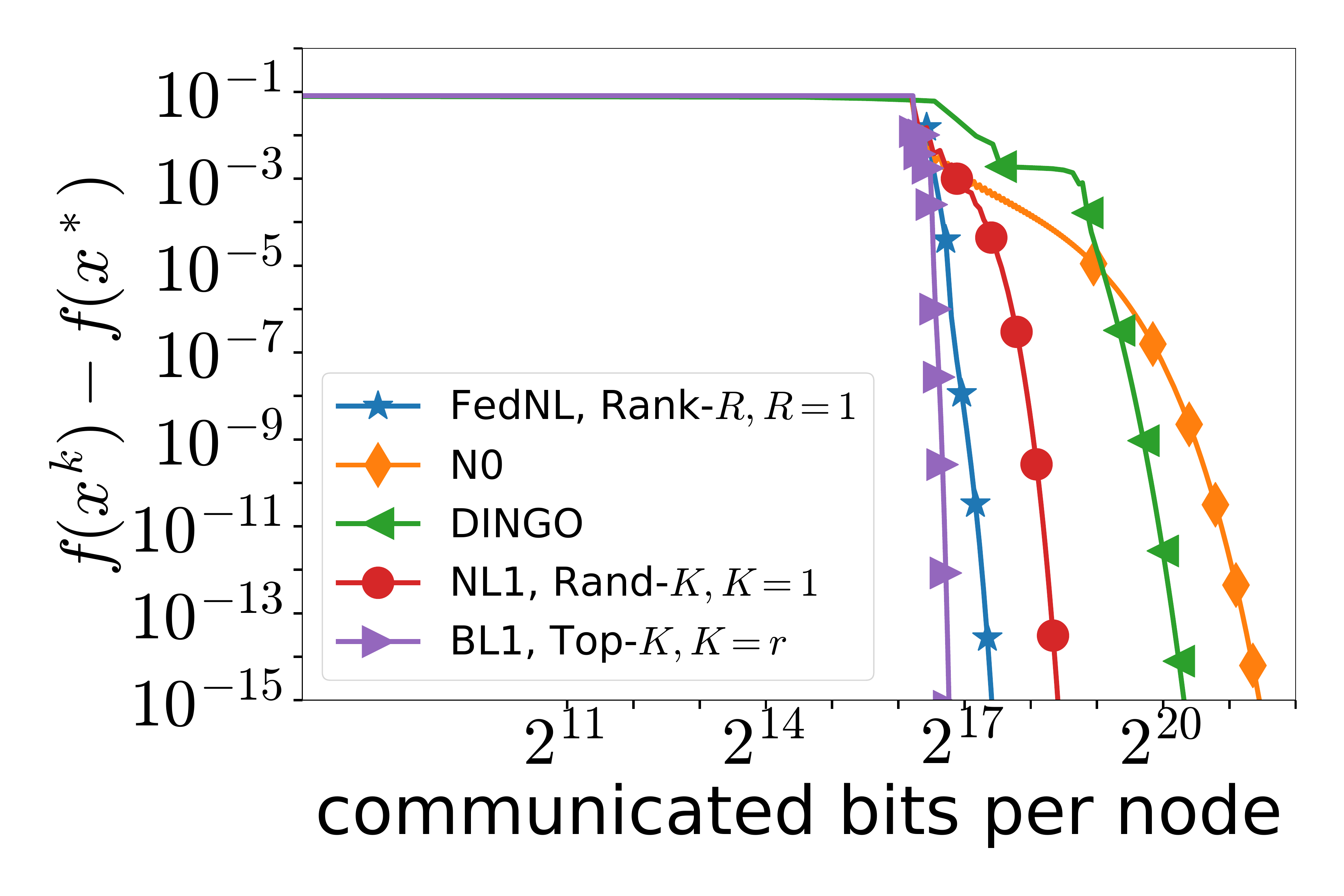}\\
% 				\dataname{a9a}, $\lambda=10^{-3}$ &
% 				\dataname{phishing}, $\lambda=10^{-4}$\\
% 		\end{tabular}}
% 	\end{center}
% 	\caption{Comparison of \algname{BL1} with \algname{N0}, \algname{FedNl}, \algname{NL1}, \algname{DINGO} in terms of communication complexity.}
% 	\label{fig_main:second_local_comp}
% \end{figure}

% \subsection{Comparison with second-order methods} 
\subsection{Comparison with second-order methods}
We compare the performance of \algname{BL1} with \algname{DINGO} \citep{DINGO}, \algname{FedNL} \citep{FedNL2021}, \algname{NL1} \citep{Islamov2021NewtonLearn}, \algname{N0} \citep{FedNL2021} in terms of communication complexity. For \algname{FedNL}, \algname{NL1}, and \algname{BL1} we use $\nabla^2 f_i(x^0)$ as the initialization of $\mH_i^0$. Besides, the stepsize $\alpha=1$, Rank-$1$ compression for matrices, and option $1$ (projection) were used for \algname{FedNL}. For \algname{NL1} compression mechanism is Rand-$1$ with stepsize $\alpha=\nicefrac{1}{(\omega+1)}.$ Backtracking linesearch for \algname{DINGO} selects the largest stepsize from $\{1, 2^{-1}, 2^{-2}, \dotsc, 2^{-10}\}$. We set the authors' choice for other parameters of the method: $\theta=10^{-4}, \phi = 10^{-6}, \rho=10^{-4}$. Compression operator $\cC_i^k$ in \algname{BL1} is Top-$K$, where $K=r$ ($r$ is the dimension of the local data). We set $p=1$ and use identity compression for $\cQ^k$ with stepsize $\eta=1$ for models (backside compression is not used). According the results in Figure~\ref{fig_main} ($1^{st}$ row), \algname{BL1} is the most efficient method in all cases.

% \vspace{-5pt}
% \begin{figure}[ht]
% 	\begin{center}
% 		\centerline{\begin{tabular}{cc}
% 				\includegraphics[width = 0.2 \textwidth]{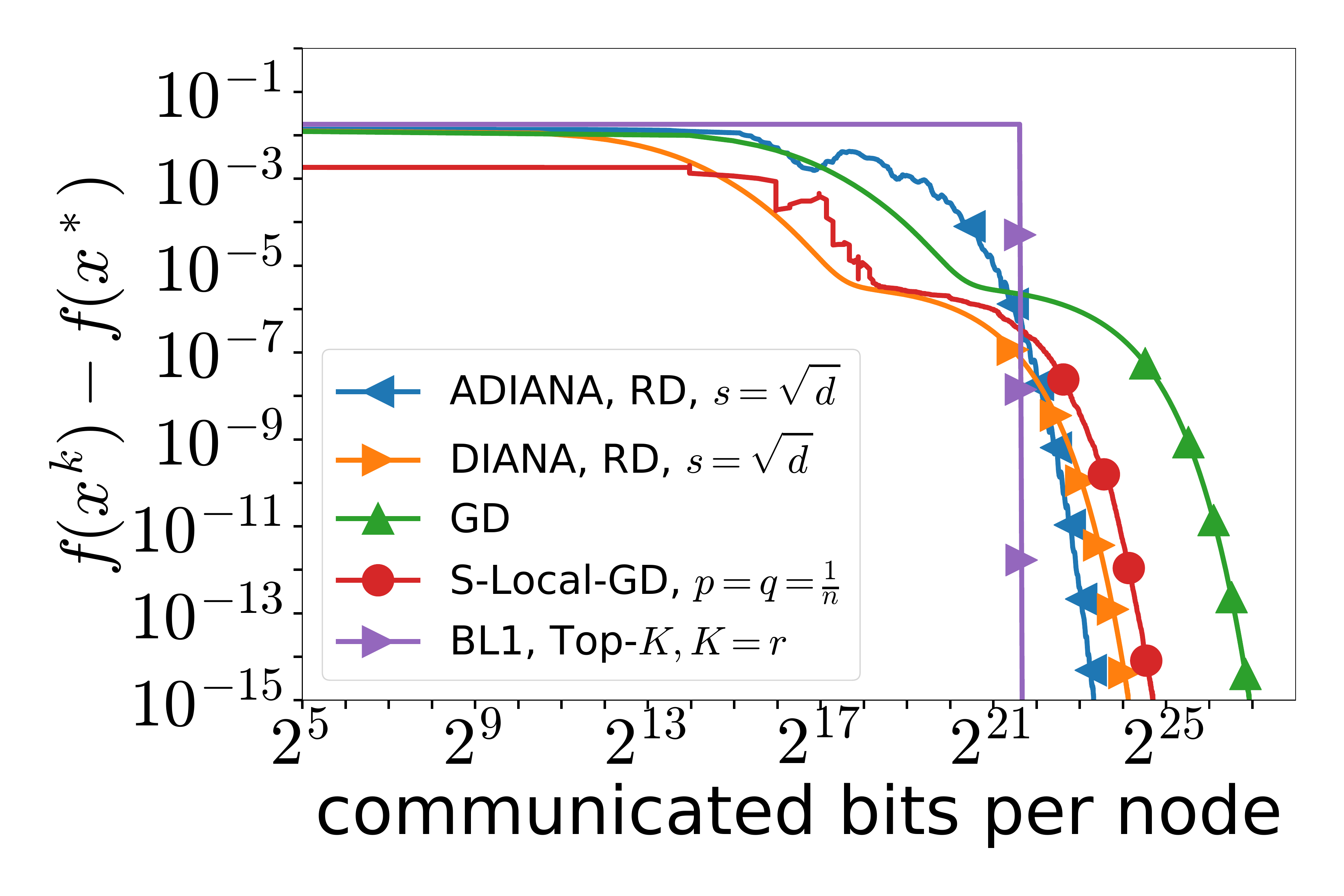} &
% 				\includegraphics[width = 0.2 \textwidth]{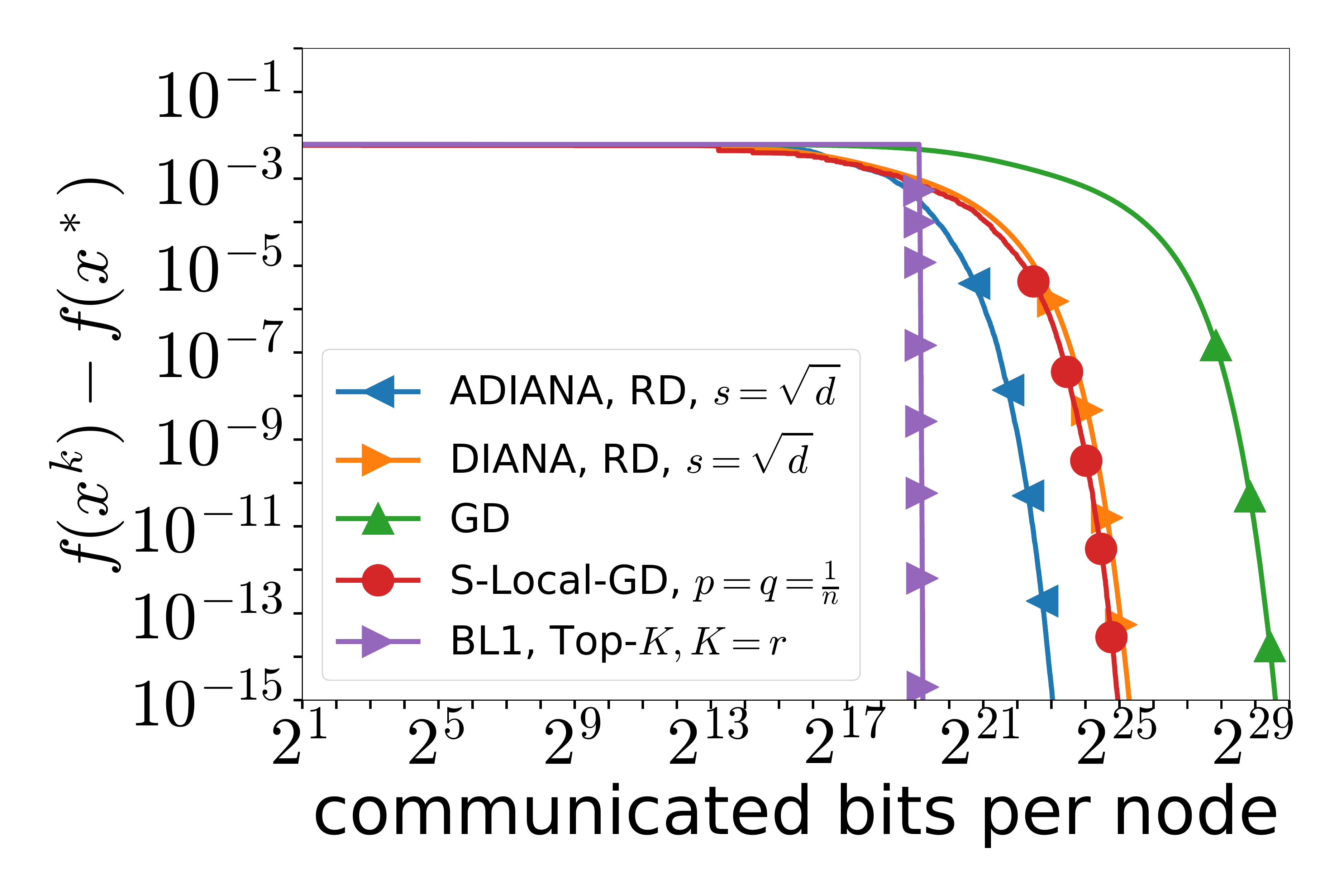}\\
% 				\dataname{madelon}, $\lambda=10^{-3}$ &
% 				\dataname{w2a}, $\lambda=10^{-4}$\\
% 				\includegraphics[width = 0.2 \textwidth]{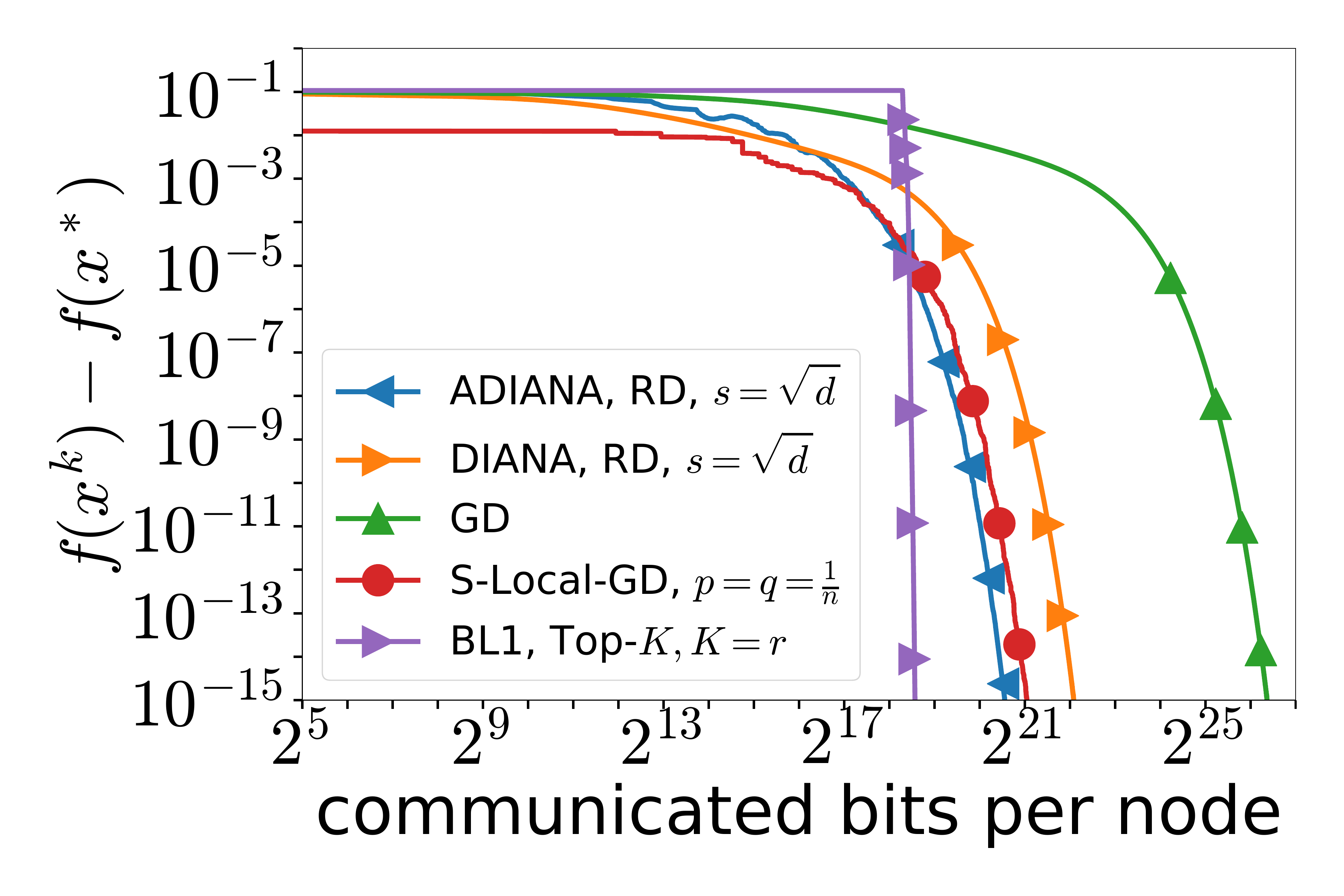} &
% 				\includegraphics[width = 0.2 \textwidth]{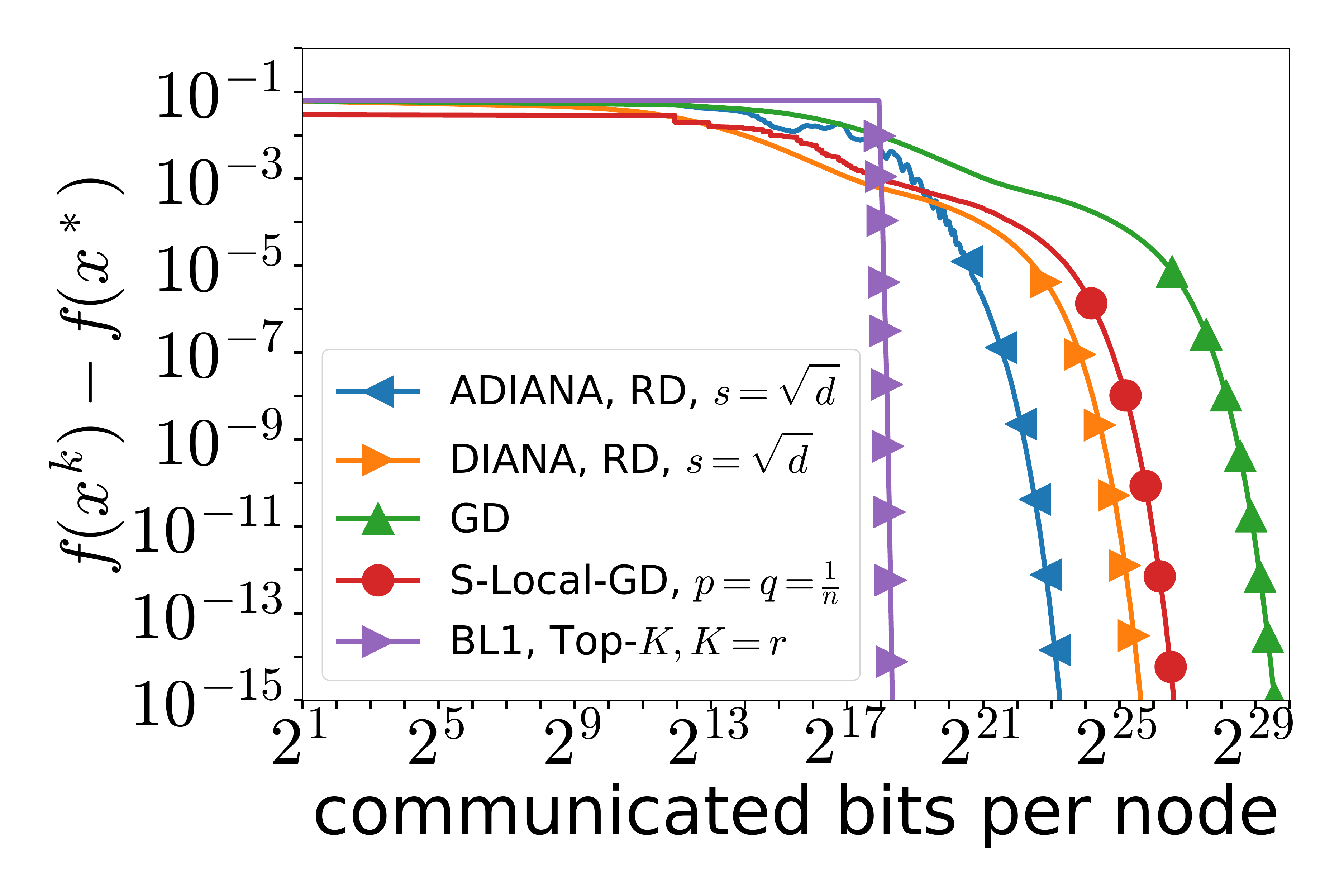}\\
% 				\dataname{a9a}, $\lambda=10^{-3}$ &
% 				\dataname{a1a}, $\lambda=10^{-4}$\\
% 		\end{tabular}}
% 	\end{center}
% 	\caption{Comparison of \algname{BL1} with \algname{DIANA}, \algname{ADIANA}, \algname{GD}, \algname{S-Local-GD} in terms of communication complexity.}
% 	\label{fig_main:first_local_comp}
% \end{figure}

% \vspace{-0.4cm}
\begin{figure*}[ht]
    \begin{center}
        \begin{tabular}{cccc}
            \includegraphics[width=0.23\linewidth]{Experiments/covtype/lmb=0.001/Local_Comparison/Newton_Local_CompAll_covtype_lmb_0.001_bits.pdf} & 
            \includegraphics[width=0.23\linewidth]{Experiments/a1a/lmb=0.0001/Local_Comparison/Newton_Local_CompAll_a1a_lmb_0.0001_bits.pdf} &
            \includegraphics[width=0.23\linewidth]{Experiments/a9a/lmb=0.001/Local_Comparison/Newton_Local_CompAll_a9a_lmb_0.001_bits.pdf} &
            \includegraphics[width=0.23\linewidth]{Experiments/phishing/lmb=0.0001/Local_Comparison/Newton_Local_CompAll_phishing_lmb_0.0001_bits.pdf}\\
            (a) \dataname{covtype}, {\scriptsize  $\lambda=10^{-3}$} &
            (b) \dataname{a1a}, {\scriptsize  $\lambda=10^{-4}$} &
            (c) \dataname{a9a}, {\scriptsize  $\lambda=10^{-3}$} &
            (d) \dataname{phishing}, {\scriptsize  $\lambda=10^{-4}$}
        \end{tabular}
        \\
        \begin{tabular}{cccc}
            \includegraphics[width=0.23\linewidth]{Experiments/madelon/lmb=0.001/Local_Comparison/Gradient_Local_CompAll_madelon_lmb_0.001_bits.pdf} & 
            \includegraphics[width=0.23\linewidth]{Experiments/w2a/lmb=0.0001/Local_Comparison/Gradient_Local_CompAll_w2a_lmb_0.0001_bits.pdf} &
            \includegraphics[width=0.23\linewidth]{Experiments/a9a/lmb=0.001/Local_Comparison/Gradient_Local_CompAll_a9a_lmb_0.001_bits.pdf} &
            \includegraphics[width=0.23\linewidth]{Experiments/a1a/lmb=0.0001/Local_Comparison/Gradient_Local_CompAll_a1a_lmb_0.0001_bits.pdf} \\
            (a) \dataname{madelon}, {\scriptsize  $\lambda=10^{-3}$} &
            (b) \dataname{w2a}, {\scriptsize  $\lambda=10^{-4}$} &
            (c) \dataname{a9a}, {\scriptsize$ \lambda=10^{-3}$} &
            (d) \dataname{a1a}, {\scriptsize$ \lambda=10^{-4}$}
        \end{tabular}
        \\
        \begin{tabular}{cccc}
            \includegraphics[width=0.23\linewidth]{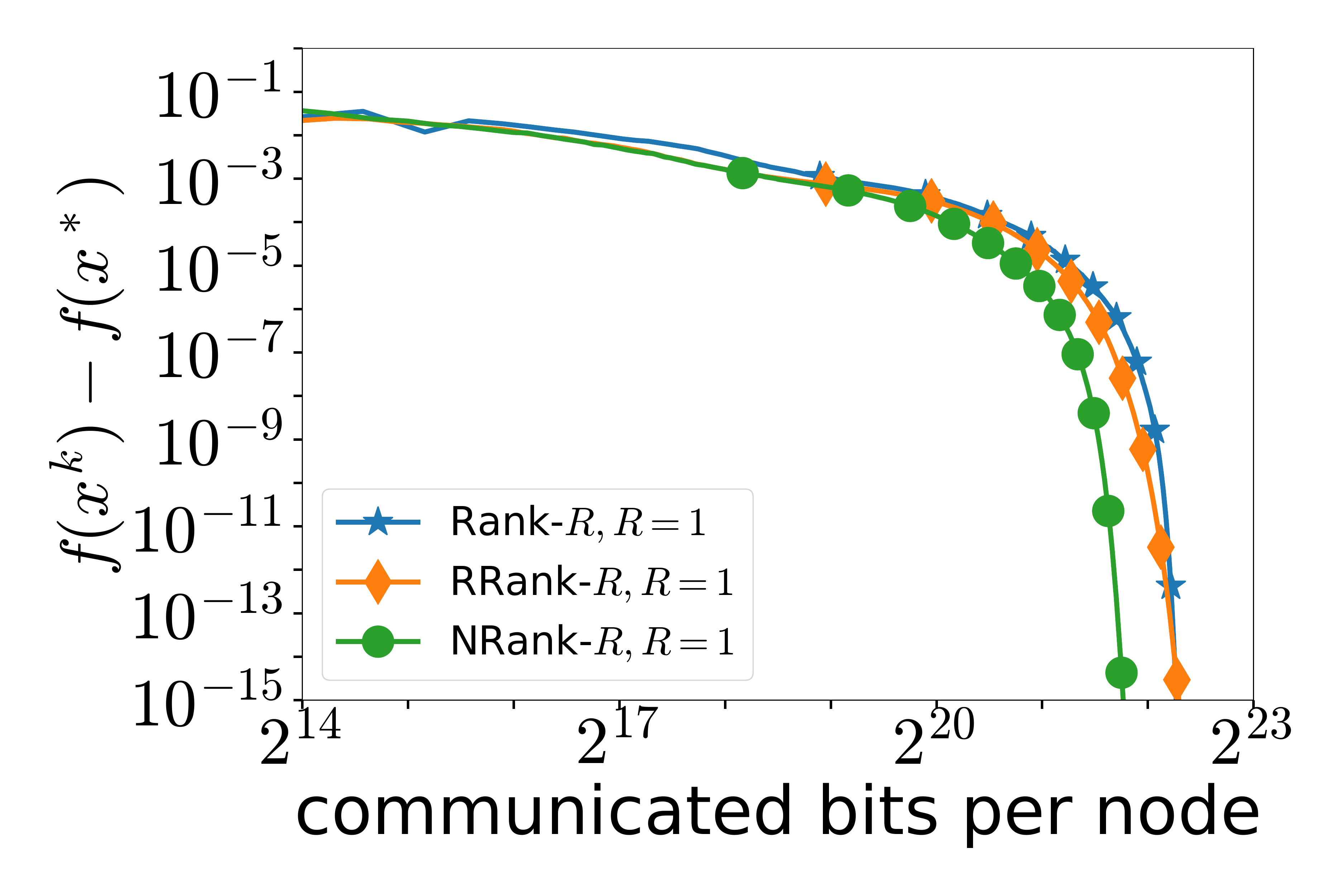} & 
            \includegraphics[width=0.23\linewidth]{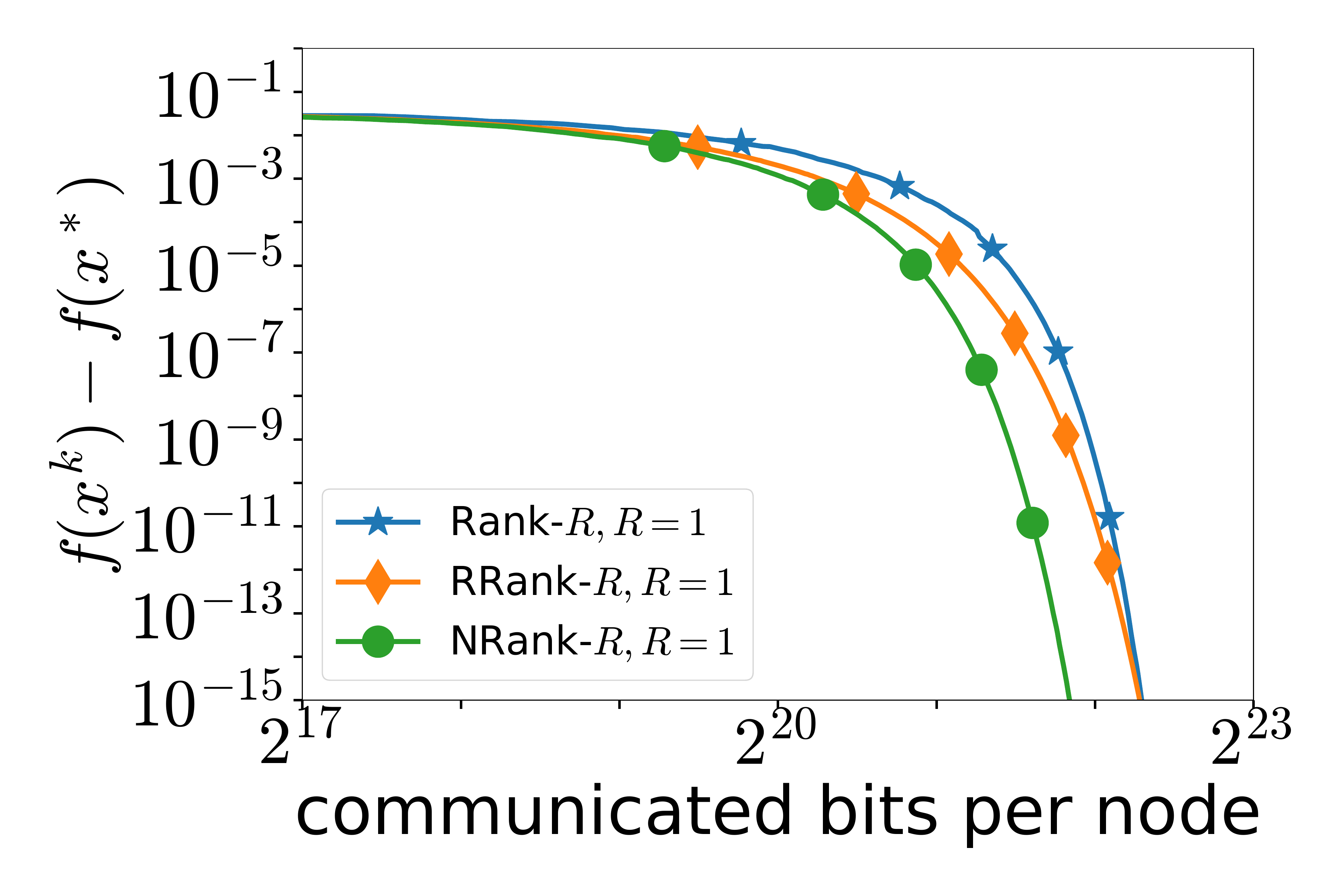} &
            \includegraphics[width=0.23\linewidth]{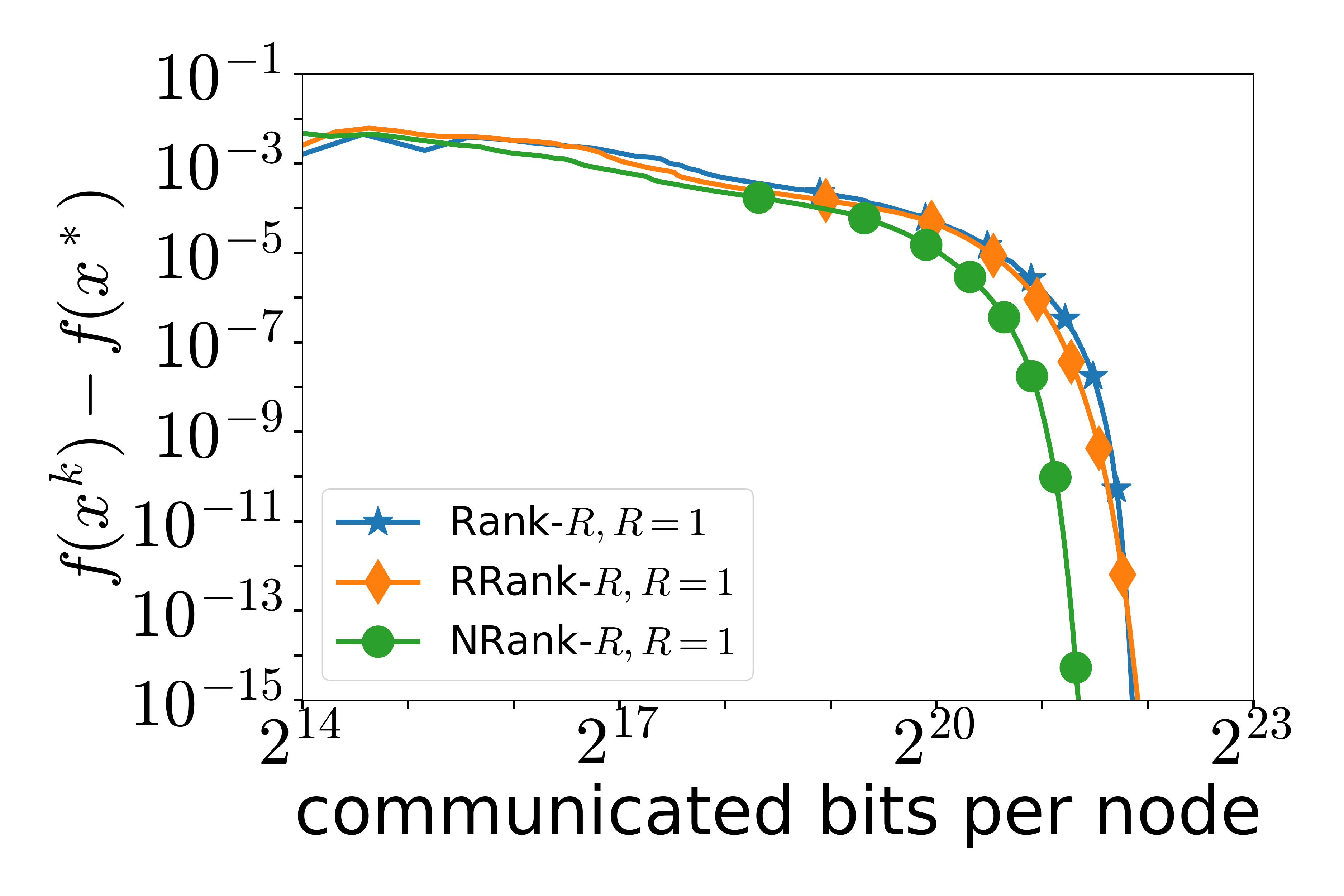} &
            \includegraphics[width=0.23\linewidth]{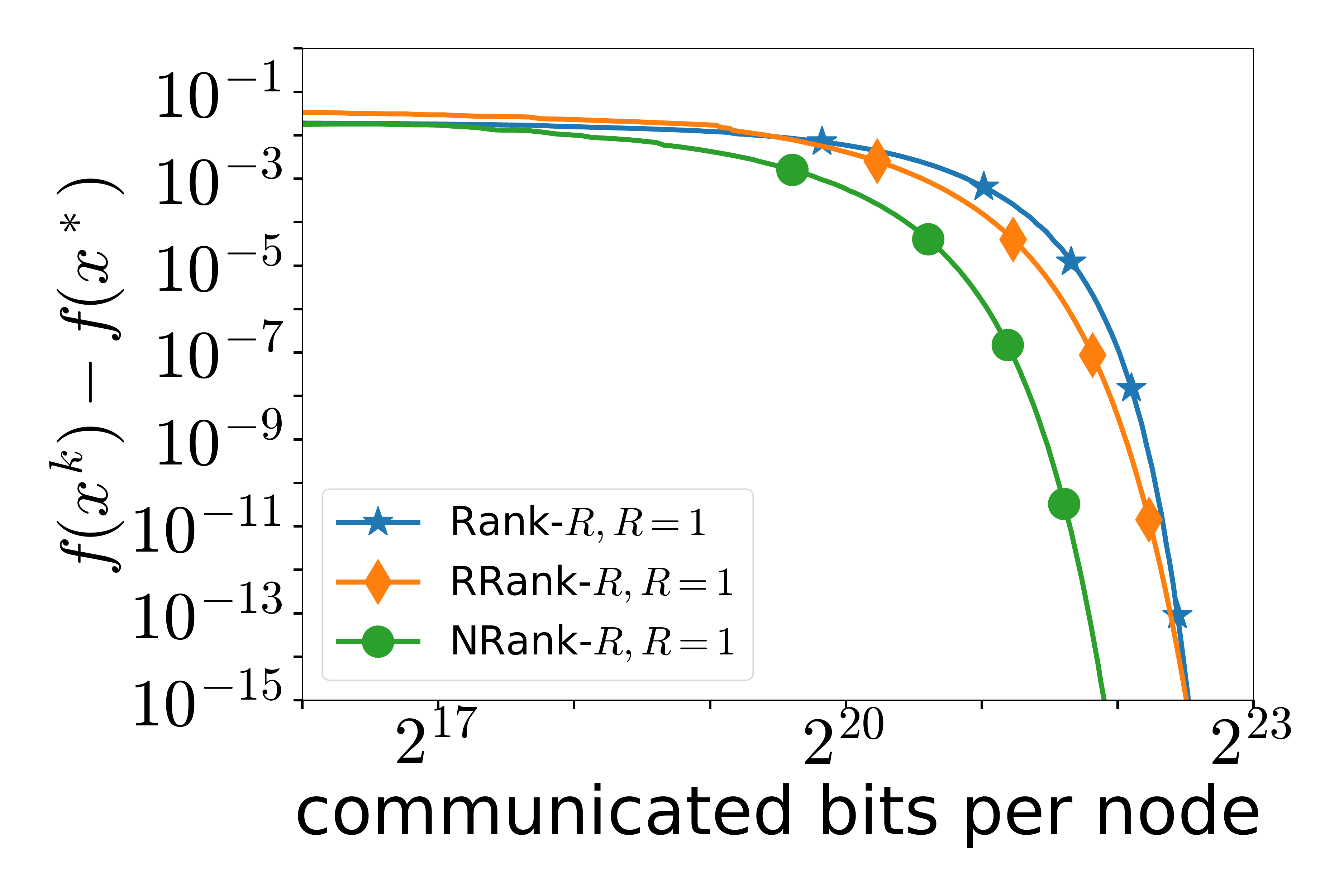} \\
            (a) \dataname{a9a}, {\scriptsize$ \lambda=10^{-4}$} &
            (b) \dataname{w8a}, {\scriptsize $\lambda=10^{-3}$} &
            (c) \dataname{a1a}, {\scriptsize$ \lambda=10^{-4}$} &
            (d) \dataname{w2a}, {\scriptsize$ \lambda=10^{-3}$} 
        \end{tabular}               
    \end{center}
    \caption{Comparison of \algname{BL1} with \algname{N0}, \algname{FedNl}, \algname{NL1}, \algname{DINGO} ({\bf first row}), \algname{DIANA}, \algname{ADIANA}, \algname{GD}, \algname{S-Local-GD} ({\bf second row}) and the performance of \algname{BL2} with compressors Rank-$R$, RRank-$R$, and NRank-$R$ ({\bf third row}) in terms of communication complexity.}
    \label{fig_main}
\end{figure*}

% \subsection{Comparison with first-order methods}
\subsection{Comparison with first-order methods}

Next we compare the performance of \algname{BL1} with vanilla gradient descent (\algname{GD}), \algname{DIANA} \citep{DIANA}, \algname{ADIANA} \citep{ADIANA}, and shifted local gradient descent (\algname{S-Local-GD}) \citep{Gorbunov2020localSGD} in terms of communication complexity. Theoretical stepsizes were chosen for first-order methods. For \algname{DIANA} and \algname{ADIANA} we use random dithering compression \citep{qsgd, DIANA-VR} with $s=\sqrt{d}$ levels. Probabilities $p$ and $q$ are equal to $\nicefrac{1}{n}$ for \algname{S-Local-GD}. Parameters of \algname{BL1} are the same as in the previous section. We clearly see in Figure~\ref{fig_main} ($2^{nd}$ row) that \algname{BL1} is more communication efficient than all gradient type methods by {\it several orders in magnitude.}

% \vspace{-5pt}
% \begin{figure}[ht]
% 	\begin{center}
% 		\centerline{\begin{tabular}{cc}
% 				\includegraphics[width = 0.2 \textwidth]{Experiments/a9a/lmb=0.0001/Composition/Rank_R_Comp_a9a_lmb_0.0001_bits.pdf} &
% 				\includegraphics[width = 0.2 \textwidth]{Experiments/w8a/lmb=0.001/Composition/Rank_R_Comp_w8a_lmb_0.001_bits.pdf}\\
% 				\dataname{a9a}, $\lambda=10^{-4}$ &
% 				\dataname{w8a}, $\lambda=10^{-3}$\\
% 				\includegraphics[width = 0.2 \textwidth]{Experiments/a1a/lmb=0.0001/Composition/Rank_R_Comp_a1a_lmb_0.0001_bits.pdf} &
% 				\includegraphics[width = 0.2 \textwidth]{Experiments/a1a/lmb=0.0001/Composition/Rank_R_Comp_a1a_lmb_0.0001_bits.pdf}\\
% 				\dataname{a1a}, $\lambda=10^{-4}$ &
% 				\dataname{covtype}, $\lambda=10^{-3}$\\
% 		\end{tabular}}
% 	\end{center}
% 	\caption{The performance of \algname{BL2} with different types of compression operators: Rank-$R$, RRank-$R$, and NRank-$R$.}
% 	\label{fig_main:rankr_comp}
% \end{figure}

% \subsection{Composition of compressors}
\subsection{Composition of compressors}
In our next experiment we analyse the composition of Rank-$R$ and unbiased compression operators; see Section~\ref{sec:3} for more details. We consider \algname{BL2} with $3$ compression mechanisms: Rank-$R$, RRank-$R$ (composition of Rank-$R$ and random dithering with $s=\sqrt{d}$ levels), and NRank-$R$ (composition of Rank-$R$ and natural compression). For all three compressors $R=1$, and initializaion is $\mH^0=\nabla^2 f(x^0)$. Besides, the parameters of \algname{BL2} are the following: $\tau=n$, $p=\frac{1}{10}$. Finally, we use Top-$K$ with $K=\lfloor \frac{d}{10}\rfloor$ for $\cQ_i^k$. In this experiment we use standard basis in the space of matrices which means that \algname{BL2} turns to be \algname{FedNL}. According to numerical results presented in Figure~\ref{fig_main} ($3^{rd}$ row), composition is indeed useful.

\section{Extensions}

In this paper, we consider the basis in $\R^{d\times d}$ and ${\cal S}^d$. It is actually possible to extend Basis Learn to the case where $\{\mB_i^{jl}\}$ is not necessarily a basis in some space. More precisely, if there exist a set $\{\mB_i^j\}_{j\in S^i}$ and a map $h^i : \R^d \to \R^{|S^i|}$ such that for any $x\in\R^d$, $\nabla^2 f_i(x)$ can be represented by $\sum_{j} h^i(x)_j \mB_i^j$ and $h^i$ is $L$-Lipschitz continuous, i.e., $\|h^i(x)-h^i(y)\| \leq L \|x-y\|$ for any $x,y\in \R^d$, then we can get the corresponding algorithm and convergence results in the same way.

%%%%%%%%%%%%%%%%%%%%%%%%%%%%%%%%%%%%%%%%%%%%%%%%
% \clearpage
\bibliography{references}
\bibliographystyle{plainnat}
%%%%%%%%%%%%%%%%%%%%%%%%%%%%%%%%%%%%%%%%%%%%%%%%

%%%%%%%%%%%%%%%%%%%%%%%%%%%%%%%%%%%%%%%%%%%%%%%%
\newpage
%%%%%%%%%%%%%%%%%%%%%%%%%%%%%%%%%%%%%%%%%%%%%%%%
\appendix

\part*{Appendix}

\section{Extra experiments}

In this section we demonstrate additional numerical experiments comparing \algname{BL} with relevant benchmarks and with state-of-the-art methods. We consider regularized logistic regression problem 

\begin{equation*}
	\min\limits_{x\in \R^d}\left\{\frac{1}{n}\sum\limits_{i=1}^nf_i(x) + \frac{\lambda}{2}\|x\|^2\right\}, \quad \text{where} \quad f_i(x) = \frac{1}{m}\sum\limits_{j=1}^m\log\(1+\exp(-b_{ij}a_{ij}^\top x)\),
\end{equation*}
and $\{a_{ij},b_{ij}\}_{j\in [m]}$ are data samples belonging to the  $i$-th node. 

\subsection{Parameters setting and data sets}

The data sets were taken from LibSVM library \citep{chang2011libsvm}: \dataname{a1a}, \dataname{a9a}, \dataname{phishing}, \dataname{covtype}, \dataname{madelon}, \dataname{w2a}, \dataname{w8a}. Each data set was partitioned across several nodes to cover a variety of scenarios. See Table \ref{tab:datasets} for more detailed description.  

\begin{table}[h]
	\caption{Data sets used in the experiments with the number of worker nodes $n$ used in each case.}
	\label{tab:datasets}
	\centering
	\begin{tabular}{|l|c|c|c|c|}
		\toprule
		{\bf data set} & {\bf \# workers} $n$ & {\bf \# data points} ($=nm$) & {\bf \# features} $d$  &  {\bf average}               \\
		& & & & {\bf dimension $r$} \\
		\midrule
		\dataname{a1a} & $16$ & $1600$ & $123$ & $64$\\ \hline
		\dataname{a9a} & $80$ & $32560$ & $123$ & $82$\\ \hline
		\dataname{phishing} & $100$ & $110$ & $68$ & $35$\\ \hline
		\dataname{covtype} & $200$ & $581000$ & $54$& $24$\\ \hline
		\dataname{madelon} & $10$ & $2000$ & $500$ & $200$\\ \hline
		\dataname{w2a} & $50$ & $3450$ & $300$ & $59$\\ \hline
		\dataname{w8a} & $142$ & $49700$ & $300$ & $133$\\
		\bottomrule
	\end{tabular}
\end{table}

Theoretical parameters were used for gradient type methods: vanilla gradient descent (\algname{GD}), \algname{DIANA} \citep{DIANA}, \algname{ADIANA} \citep{ADIANA}, and local gradient descent (\algname{Local-GD}). The parameter constants for \algname{DINGO} \citep{DINGO} were chosen following authors' choice: $\theta = 10^{-4}, \phi = 10^{-6}, \rho=10^{-4}$. Backtracking line search was used for \algname{DINGO} to find the largest stepsize from $\{1,2^{-1},\dotsc, 2^{-10}\}$. The initialization of $\mH^0_i$ for \algname{NL1} \citep{Islamov2021NewtonLearn} and vanilla \algname{FedNL} \citep{FedNL2021} is $\nabla^2f_i(x^0)$. Besides, for \algname{NL1} we use Rand-$K$ compressor with $K=1$ and the stepsize $\alpha = \frac{1}{\omega+1}$, where $\omega=\frac{m}{K}-1$. For \algname{FedNL} we use option $1$ to make the Hessian approximation to be positive definite (projection onto the cone of positive definite matrices), stepsize $\alpha=1$, and compression operator Rank-$R$ with $R=1$. For \algname{BL3}, we use option 2. 

We carry out experiments for two values of regularization parameter $\lambda \in \{10^{-3}, 10^{-4}, 10^{-5}\}$. In the
figures we plot the optimality gap $f(x^k) - f(x^*)$ versus the number of communicated
bits per node. The optimal value $f(x^*)$ is chosen as the
function value at the 20-th iterate of standard Newton’s method. 

\subsection{Compression operators}

\paragraph{Unbiased compression operator: random dithering.}

In all experiments with \algname{ADIANA} and \algname{DIANA} the compression operator applied on gradient differences is random dithering \citep{qsgd, DIANA-VR}. This compressor has the parameter $s$ (number of levels) and can be defined via the formula

\begin{equation}
	\cC(x) \eqdef \text{sign}(x) \cdot \|x\|_q \cdot \frac{\xi_s}{s},
\end{equation}
where $\|x\|_q \eqdef \(\sum_i |x_i|^q\)^{1/q}$ and $\xi_s \in \R^d$ is a random vector whose $i$-th entire defind as follows
\begin{equation}
	(\xi_s)_i = \begin{cases}
		l+1 & \text{with probability } \frac{|x_i|}{\|x\|_q}s-l,\\
		l & \text{otherwise}.
	\end{cases}
\end{equation}
Here $s \in \N_+$ denotes the levels of rounding, and $l$ satisfies $\frac{|x_i|}{\|x\|_q} \in \[\frac{l}{s}, \frac{l+1}{s}\]$. This compressor has  variance parameter satsfying $\omega \leq 2+\frac{d^{1/2}+d^{1/q}}{s}$ \citep{DIANA-VR}. However, for Euclidean norm ($q=2$) one can improve the bound to $\omega \leq \min\left\{\frac{d}{s^2}, \frac{\sqrt{d}}{s}\right\}$ \citep{qsgd}.

\paragraph{Examples of contractive compression operators for matrices.} 

One of the examples of contractive compression operators is low-rank approximation or Rank-$R$ compressor. This compression operator is based on singular value decomposition of the matrix and belongs to the class of contractive compressors with $\delta=\frac{R}{d}$ \citep{FedNL2021}. Let $\mX \in \R^{d\times d}$ and singular value decomposition of $\mX$ is 
\begin{equation}\label{eq:b97gs_9098fd}
	\mathbf{X} = \sum\limits_{i=1}^d \sigma_i u_iv_i^\top,
\end{equation}
where the singular values $\sigma_i$ are sorted in non-increasing order: $\sigma_1\geq \sigma_2 \geq \cdots \geq \sigma_d$. Then, the Rank-$R$ compressor, for $R \le d$, is  defined by
\begin{equation}\label{eq:n988fdgfd}
	\cC(\mathbf{X}) \eqdef \sum\limits_{i=1}^{R} \sigma_i u_iv_i^\top.
\end{equation}
Note that if the input of Rank-$R$ compressor is a symmetric matrix, then its output is automatically symmetric matrix.

Another popular choice of contractive compressors in practice is Top-$K$. This compressor applied on matrices sorts the entires of input in non-increasing order by magnitude, and then selects $K$ maximal elements. Top-$K$ compressor belongs to the class of contractive compressors with $\delta =\frac{d^2}{K}$. For arbitrary matrix $\mathbf{X} \in \R^{d\times d}$ let sort its entires in non-increasing order by magnitude, i.e., $X_{i_k j_k}$ is the $k$-th maximal element of $\mathbf{X}$ by magnitude. Let  $\{\mathbf{B}_{ij}\}_{i,j=1}^d$ be a standard basis in the space of matrices. Then, the Top-$K$ compression operator can be defined via
\begin{equation}
	\cC(\mathbf{X}) \eqdef \sum\limits_{k=1}^K X_{i_k j_k}\cdot \mathbf{B}_{i_k j_k}.
\end{equation}
One way how to make the output of this compressor to be a symmetric matrix is to apply Top-$K$ on upper triangular part of the input. 

\subsection{Example of unbiased compression operators for matrices}

The simplest example of unbiased compressor which could be applied on matrices is random sparsification operator or Rand-$K$. This compressor belongs to the class of unbiased compressors with $\omega =\frac{d^2}{K}-1$. For the input matrix $\mathbf{X} \in \R^{d\times d}$ we choose a set $\mathcal{S}_K$ of indexes $(i,j)$  of cardinality $K$ uniformly at random. Then Rand-$K$ compressor can be defined via
\begin{equation}
	\cC\(\mathbf{X}\)_{ij} \eqdef \begin{cases}\frac{d^2}{K}X_{ij} & \text{if } (i,j) \in \mathcal{S}_K,\\
		0 & \text{if } (i,j) \notin \mathcal{S}_K. \end{cases}
\end{equation}
The way how to make the output of Rand-$K$ to be a symmetric matrix is exaclty the same as for Top-$K$.

\subsection{The performance of Newton's method in different basis}

First, we investigate how the performance of Newton's method is influenced by the choice of the basis. We compare the efficiency of Newton's method on two bases: the one that was described in Section 2.3 and the standard one. The results are presented in Figure~\ref{fig_apx:newton_diff_basis}. We clearly see that Newton's method in the specific basis is approximately $4$ times more communication-efficient than in standard one.

\begin{figure}[ht]
	\begin{center}%         
		\centerline{\begin{tabular}{cccc}
				\includegraphics[width = 0.22 \textwidth]{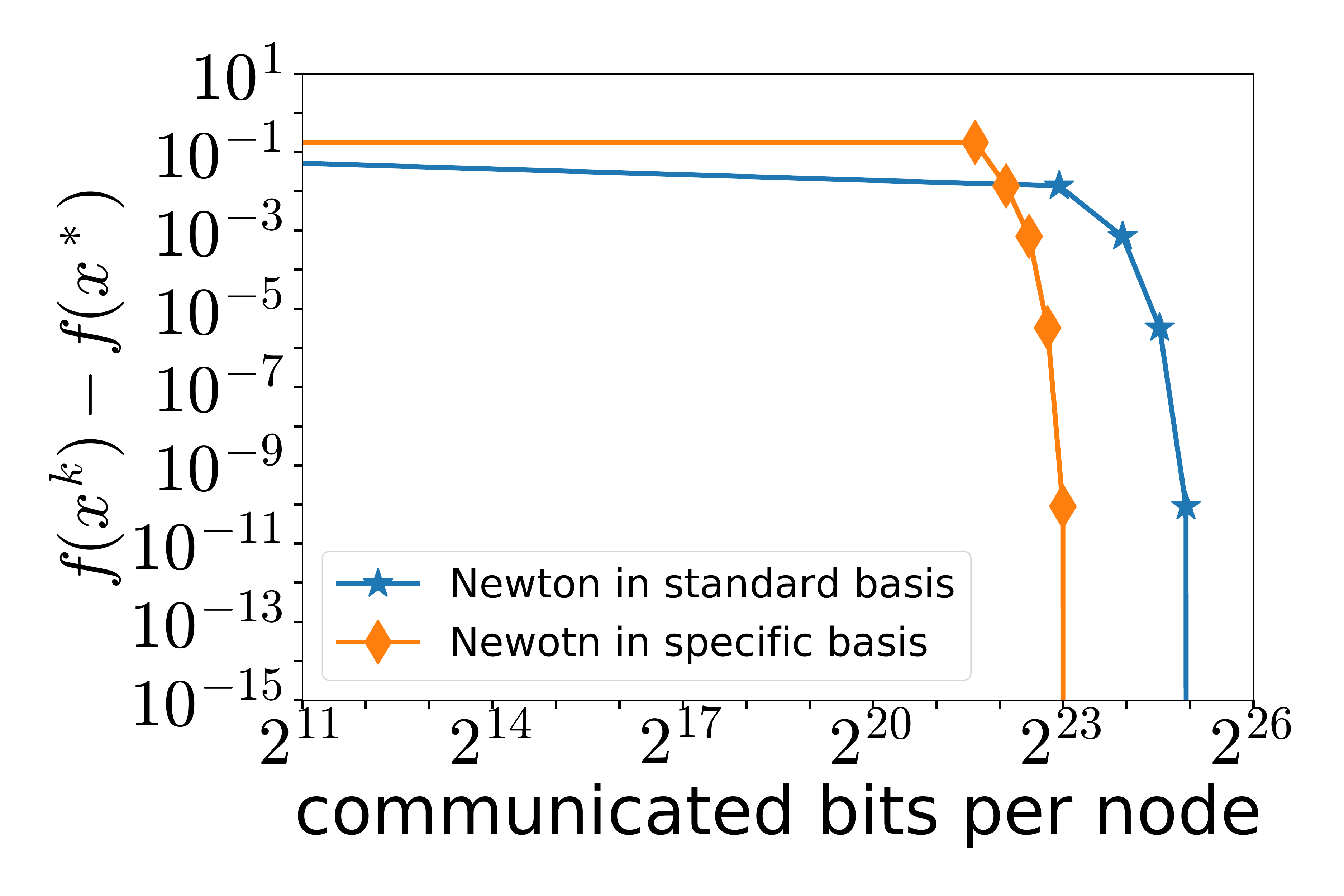} &
				\includegraphics[width = 0.22 \textwidth]{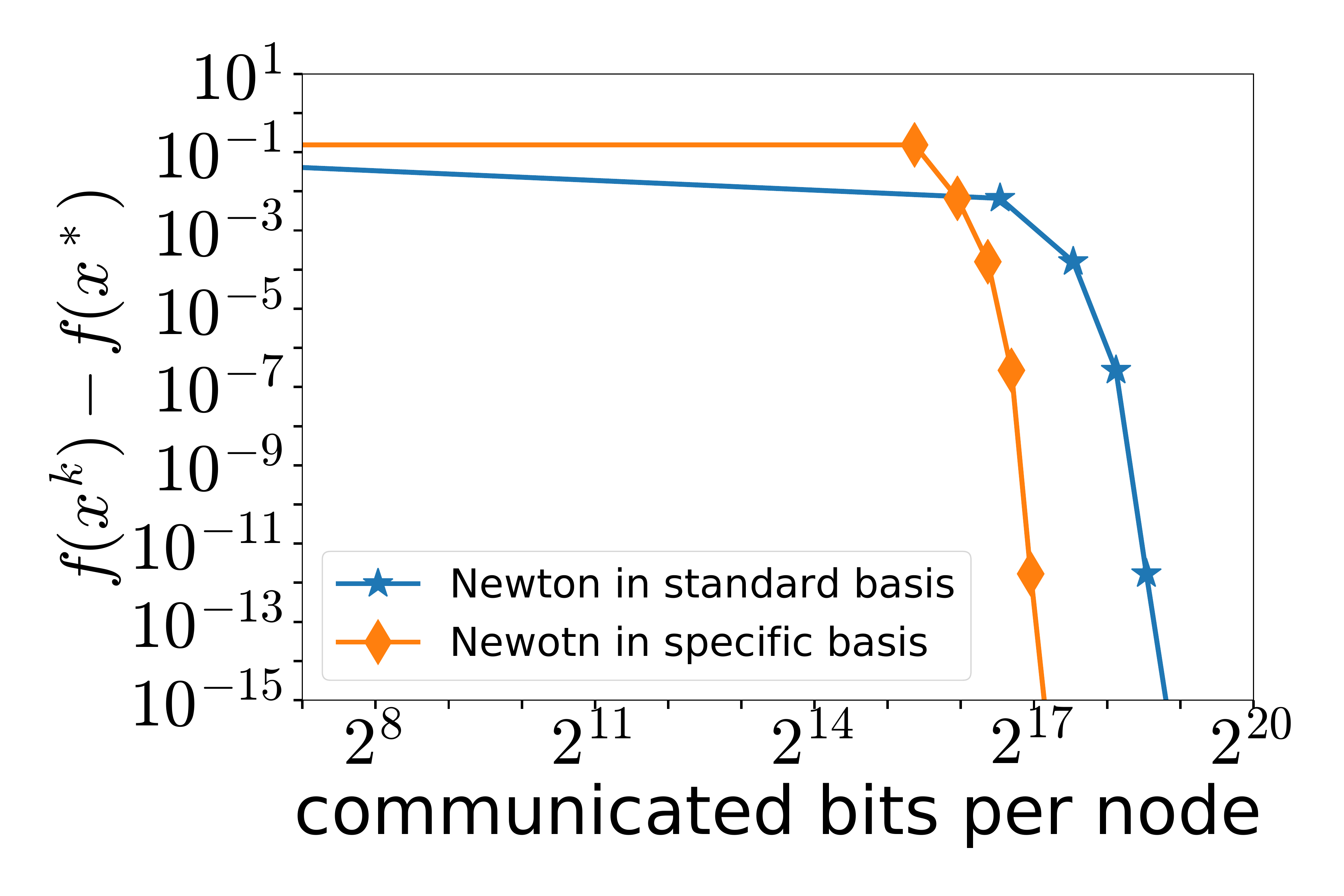}&
				\includegraphics[width = 0.22 \textwidth]{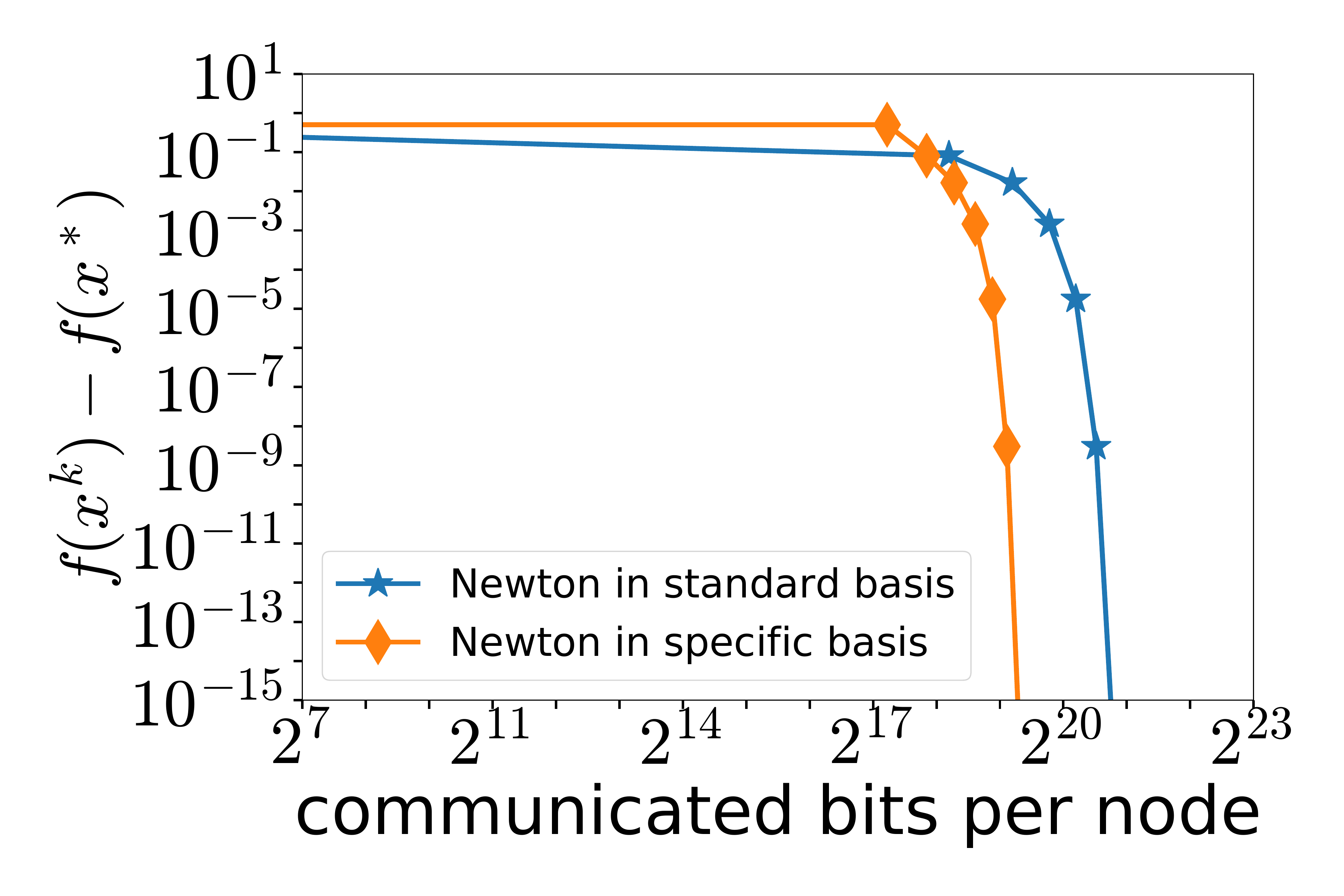} &
				\includegraphics[width = 0.22 \textwidth]{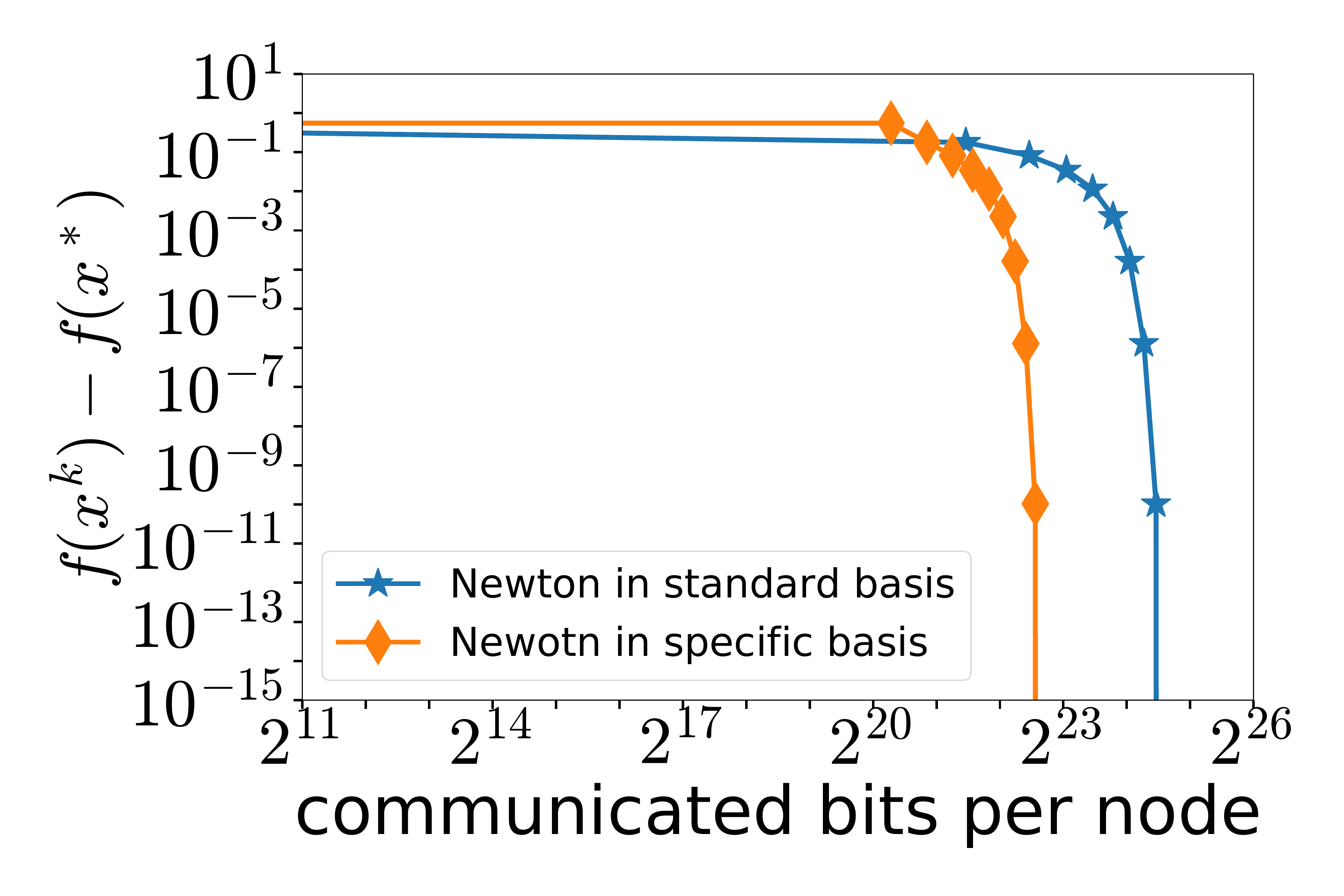}\\
				\dataname{madelon}, $\lambda=10^{-3}$ &
				\dataname{covtype}, $\lambda=10^{-3}$&
				\dataname{phishing}, $\lambda=10^{-4}$ &
				\dataname{w8a}, $\lambda=10^{-4}$\\
		\end{tabular}}
	\end{center}
	\caption{The performance of Newton's method in different basis in terms of communication complexity.}
	\label{fig_apx:newton_diff_basis}
\end{figure}

\subsection{Composition of Top-$K$ and unbiased compressor}

\begin{figure}[ht]
	\begin{center}
		\centerline{\begin{tabular}{cccc}
				\includegraphics[width = 0.22 \textwidth]{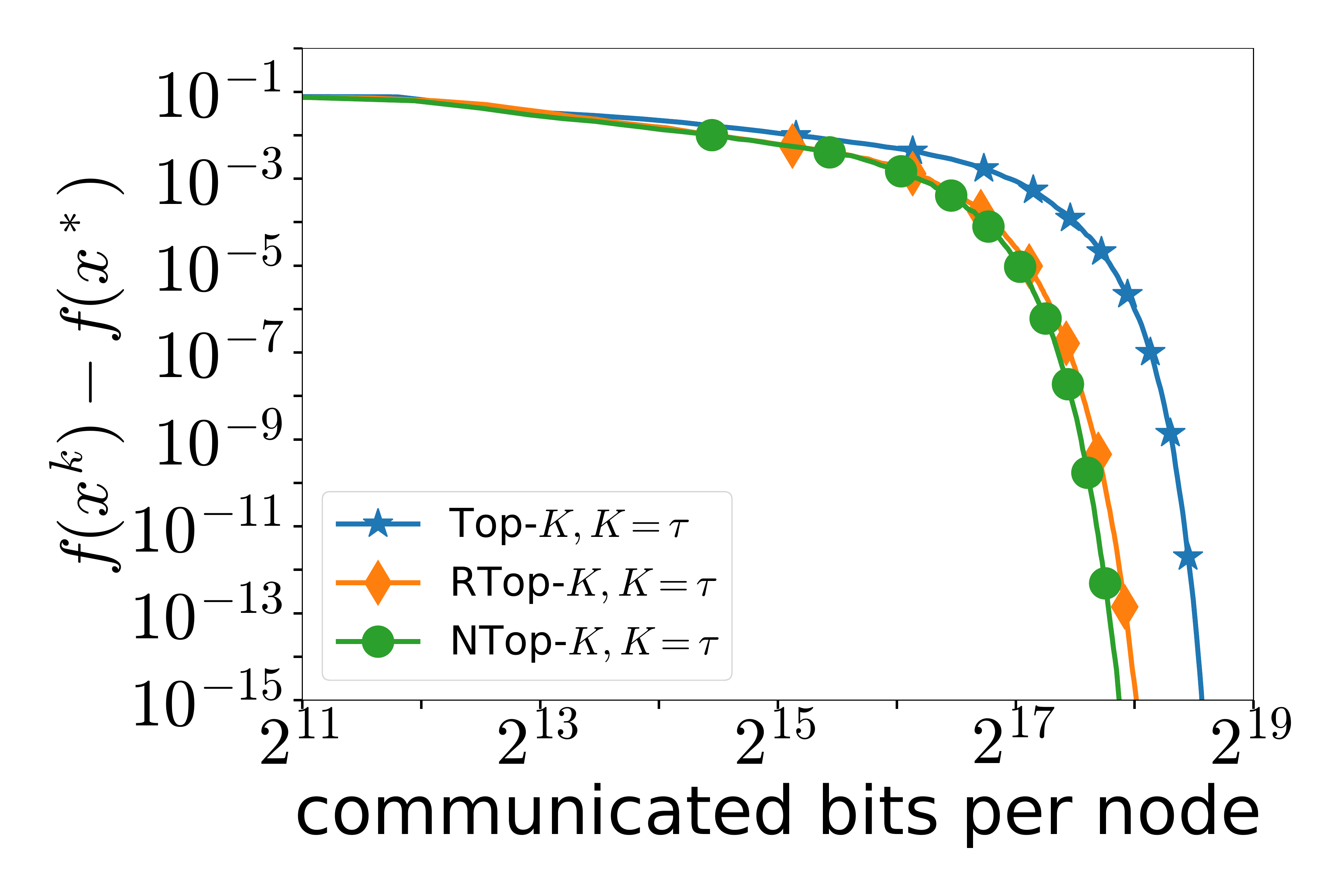} &
				\includegraphics[width = 0.22 \textwidth]{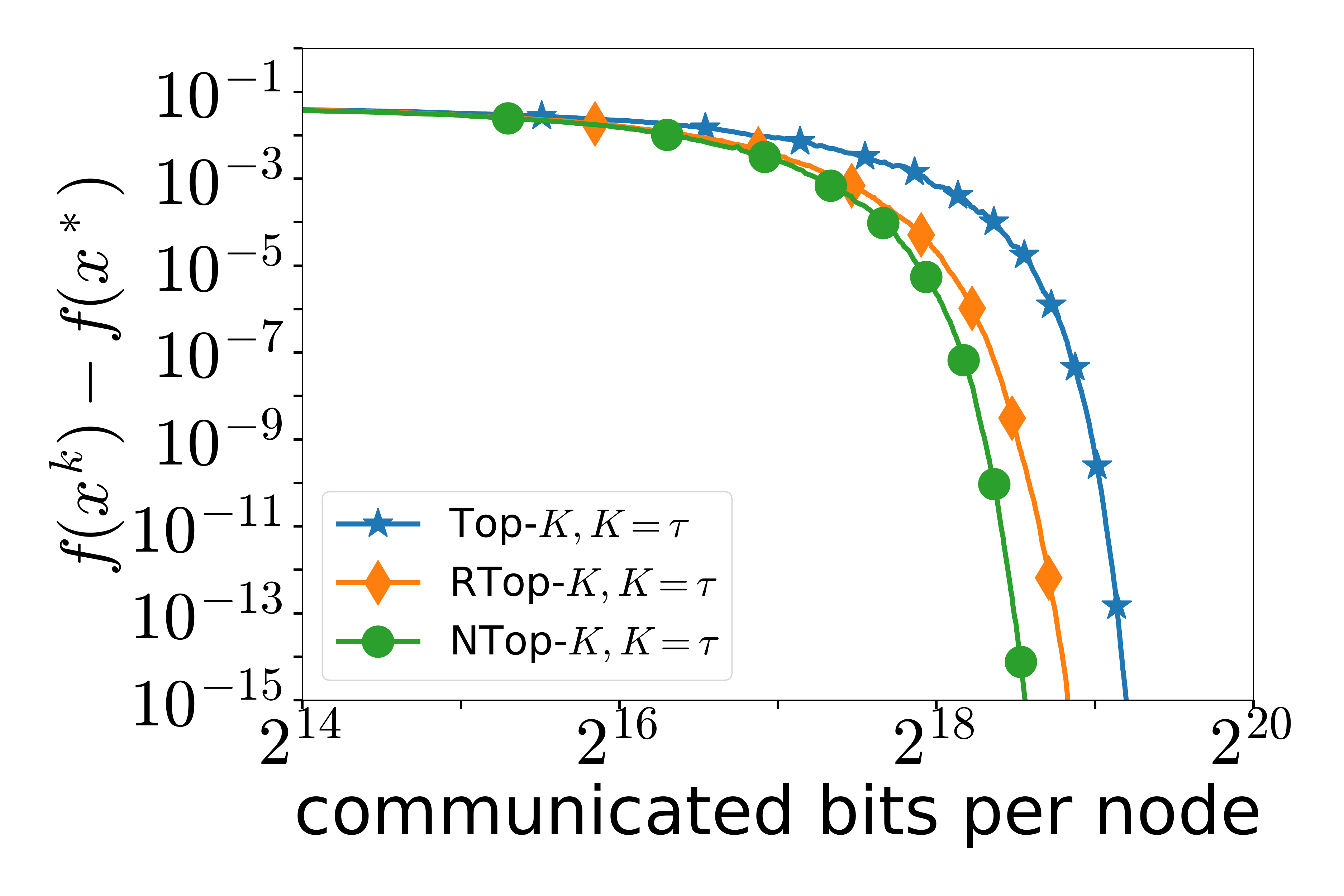}&
				\includegraphics[width = 0.22 \textwidth]{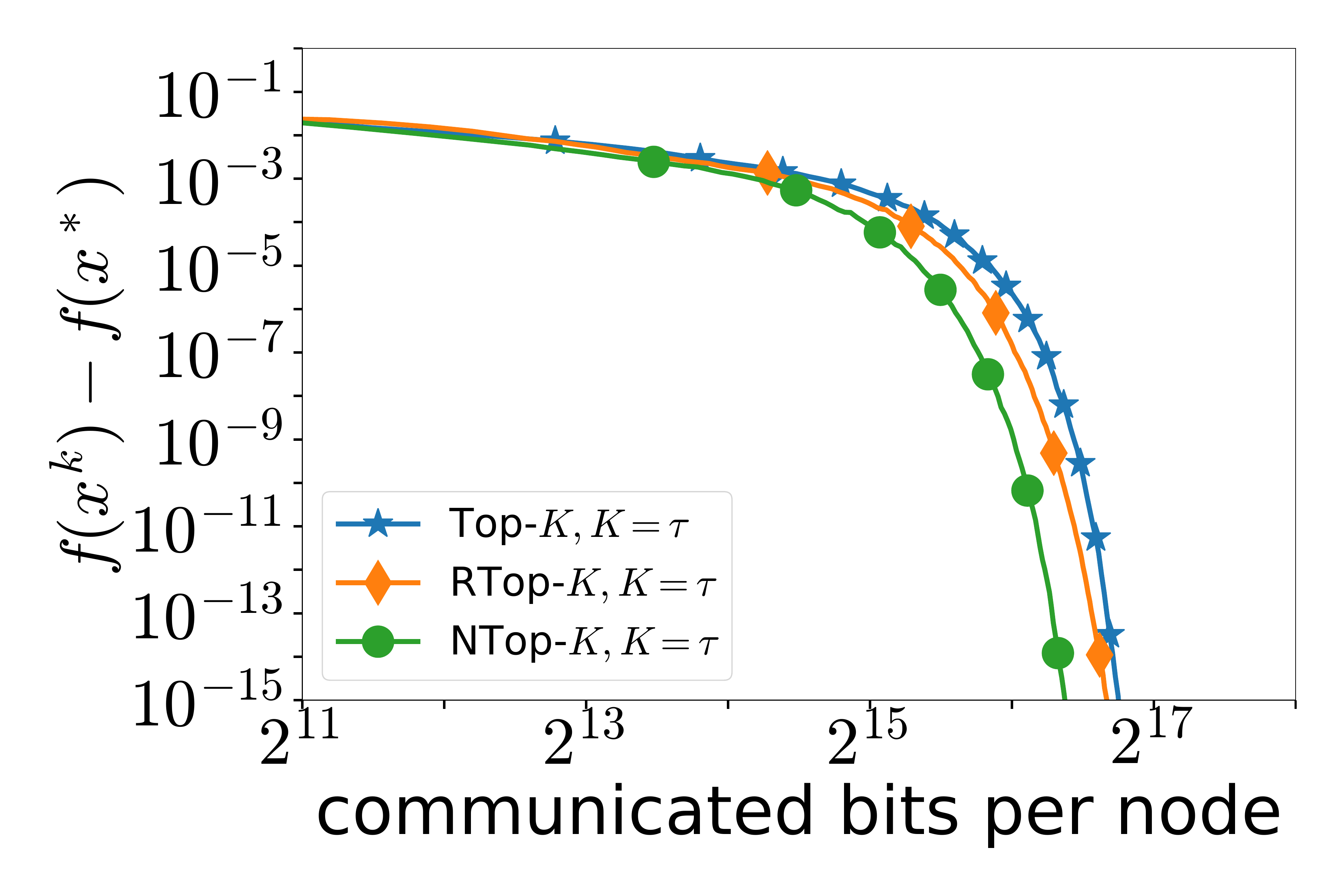} &
				\includegraphics[width = 0.22 \textwidth]{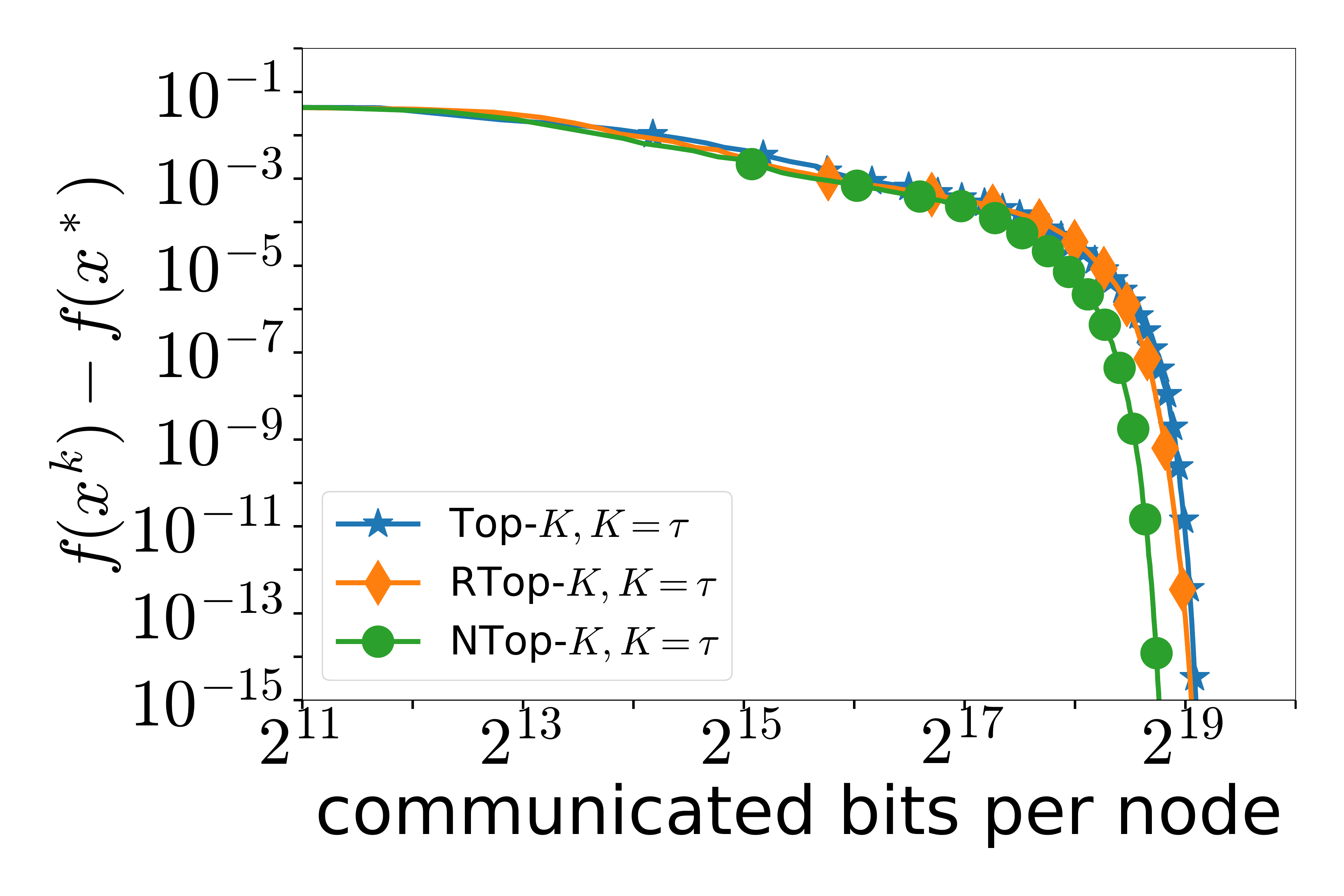}\\
				\dataname{a9a}, $\lambda=10^{-3}$ &
				\dataname{w2a}, $\lambda=10^{-3}$&
				\dataname{phishing}, $\lambda=10^{-4}$ &
				\dataname{a1a}, $\lambda=10^{-4}$\\
		\end{tabular}}
	\end{center}
	\caption{The performance of \algname{BL2} with different types of compression operators: Top-$K$, RTop-$K$ (composition of Top-$K$ and random dithering with $s=\sqrt{K}$), and NTop-$K$ (composition of Top-$K$ and natrual compression).}
	\label{fig_apx:topk_comp}
\end{figure}

Next, we study other type of composition of compression operators. We investigate how composition of Top-$K$ and unbiased compression operator \citep{EC-LSVRG} influences the performance of \algname{BL2}. We compare the performance of \algname{BL2} with Top-$K$ $(K=r)$, RTop-$K$ $(K=r)$ (composition of Top-$K$ and random dithering with $s=\sqrt{K}$), and NTop-$K$ $(K=r)$ (composition of Top-$K$ and natural compression). The initialization of $\mH^0$ is $\nabla^2f(x^0)$. Besides, we use the basis that was decribed in Section~2.3. We set the following parameters for \algname{BL2}: $p=\frac{r}{2d}$, $\tau=n$, and Top-$K$  with $K=\lfloor\frac{r}{2}\rfloor$ for models in the experiments on \dataname{w2a}, \dataname{a1a} data sets. In the experiments on \dataname{a9a}, \dataname{phishing} data sets, these parameters are $p=\frac{r}{4d}$, $\tau=n$, and Top-$K$, ($K=\lfloor\frac{r}{4}\rfloor$) compressor for models. The results are presented in Figure~\ref{fig_apx:topk_comp}.  According to numerical results, we can conclude that composition of Top-$K$ and natural compression is the most efficient compressor in all cases. However, RTop-$K$ have almost the same performance as Top-$K$ on data sets \dataname{a1a}, \dataname{a9a}.

\subsection{The effect of partial participation}

\begin{figure}[ht]
	\begin{center}
		\centerline{\begin{tabular}{cccc}
				\includegraphics[width = 0.15 \textwidth]{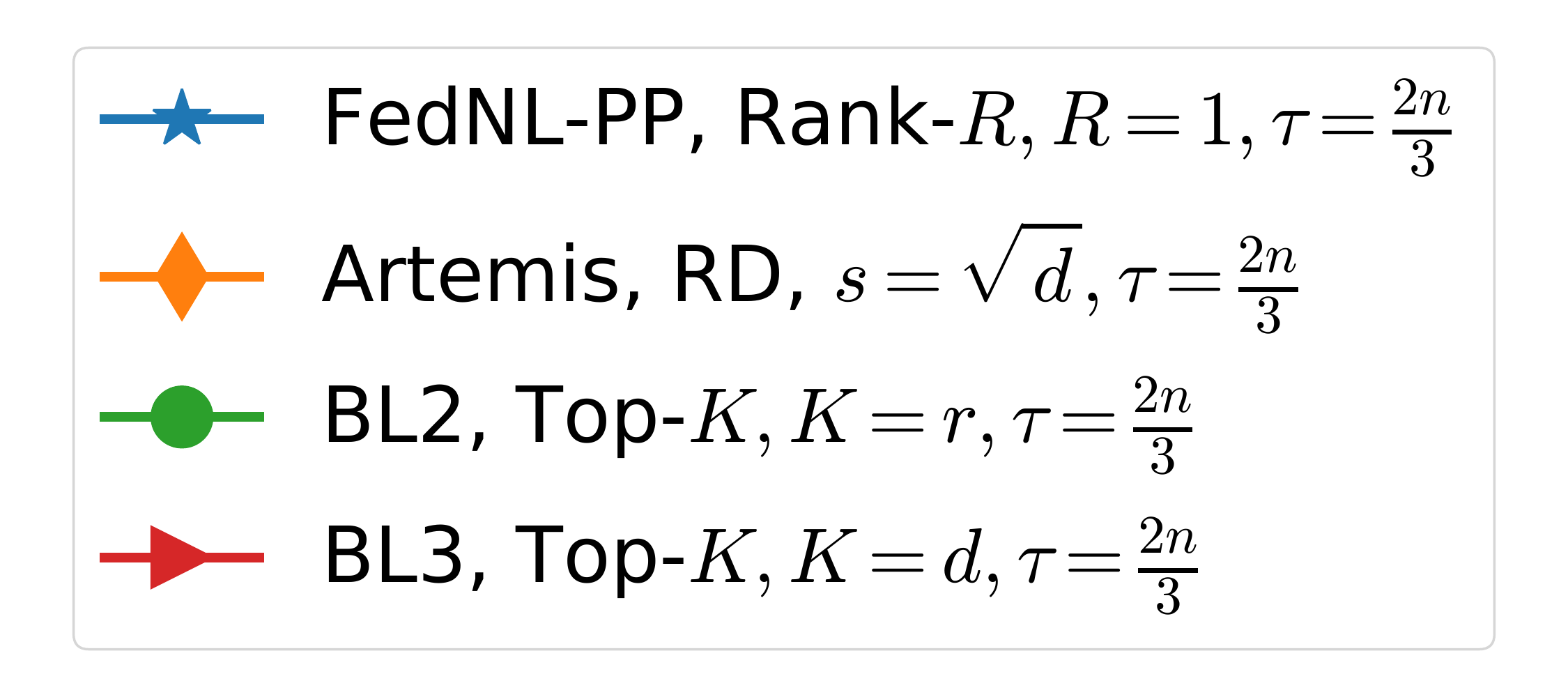}&
				\includegraphics[width = 0.15 \textwidth]{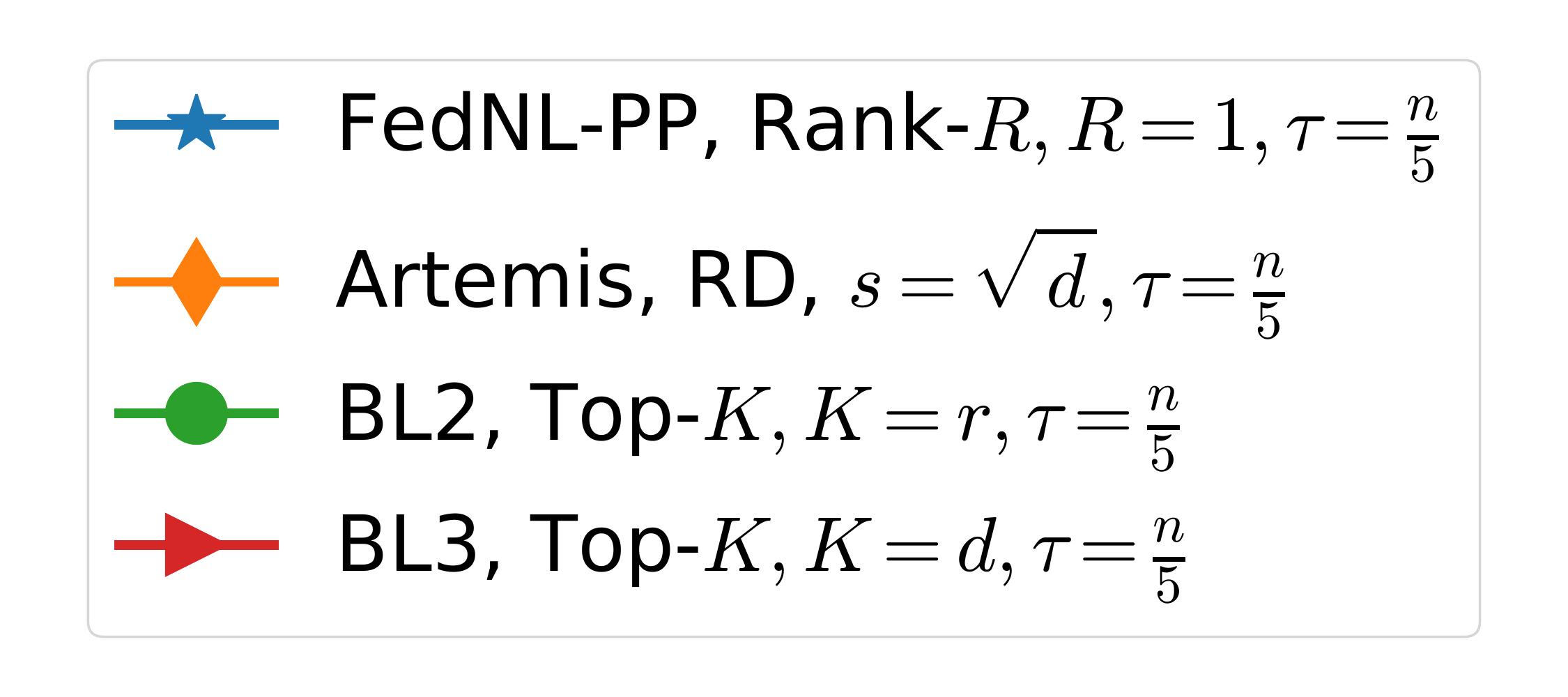} &
				\includegraphics[width = 0.15 \textwidth]{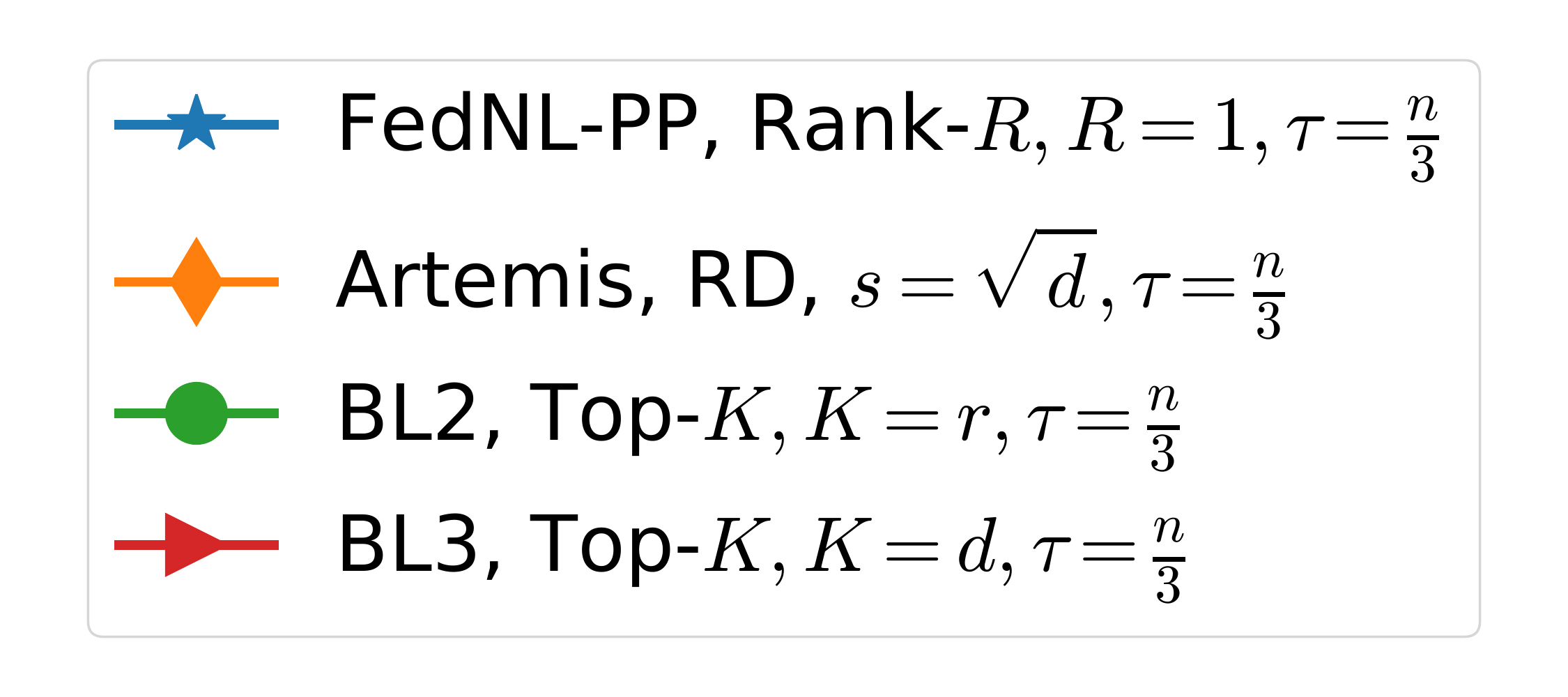} &
				\includegraphics[width = 0.15 \textwidth]{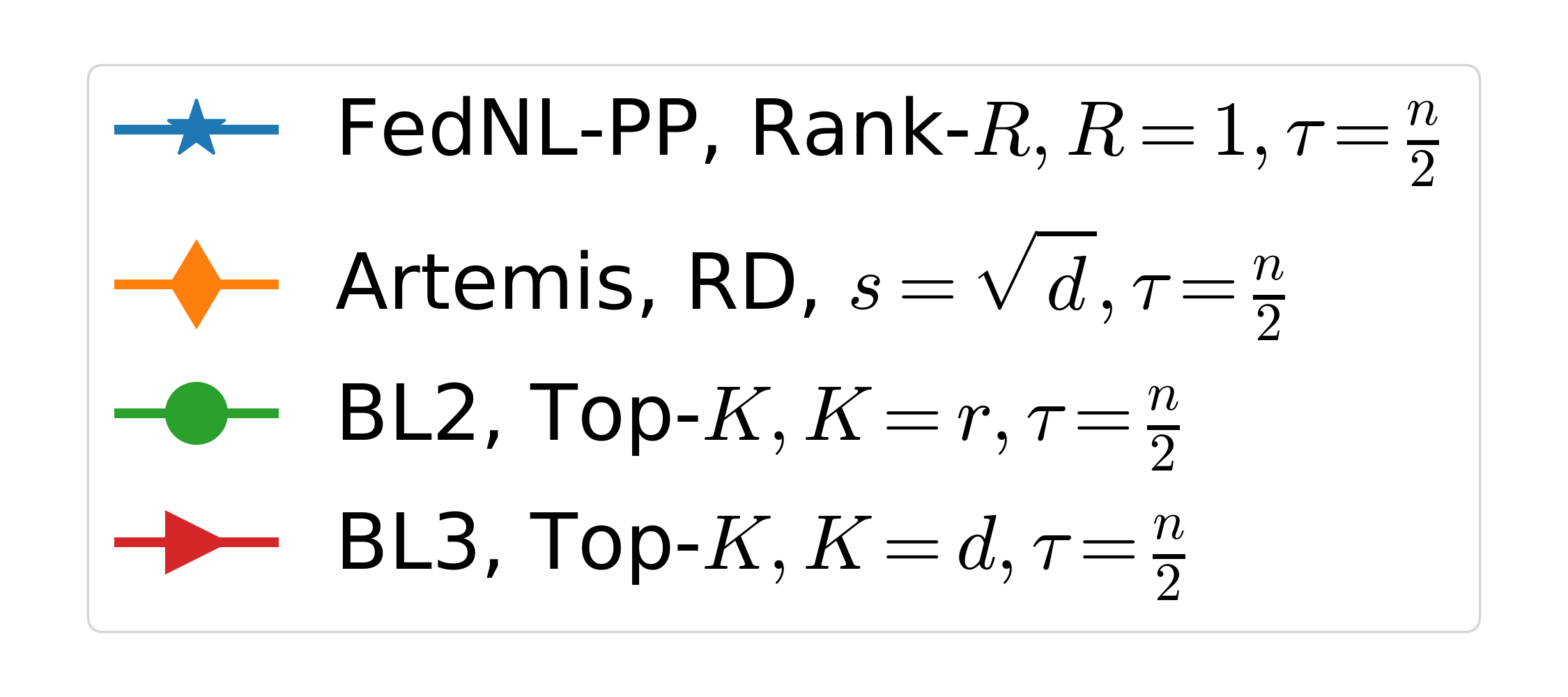}\\
				\includegraphics[width = 0.22 \textwidth]{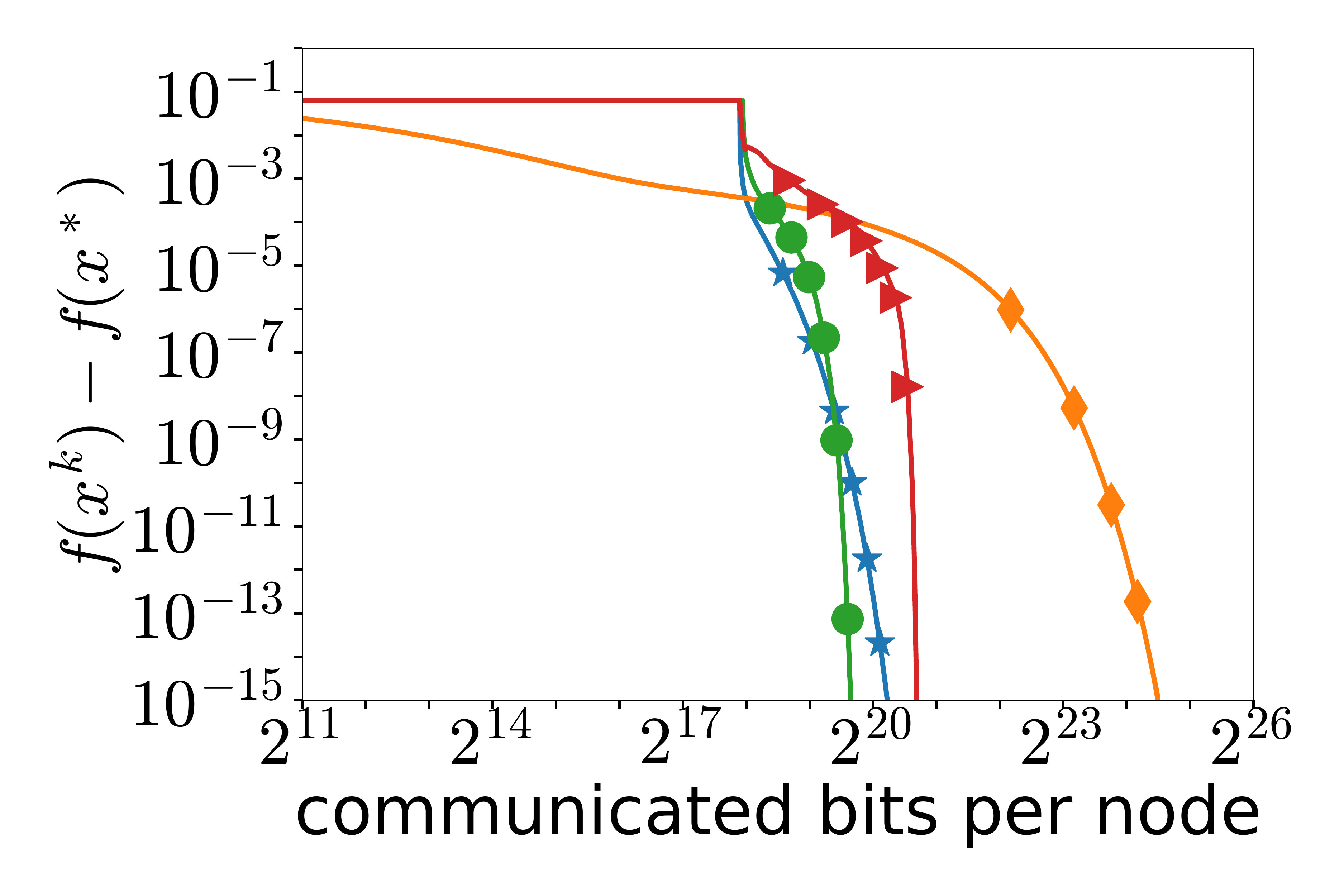}&
				\includegraphics[width = 0.22 \textwidth]{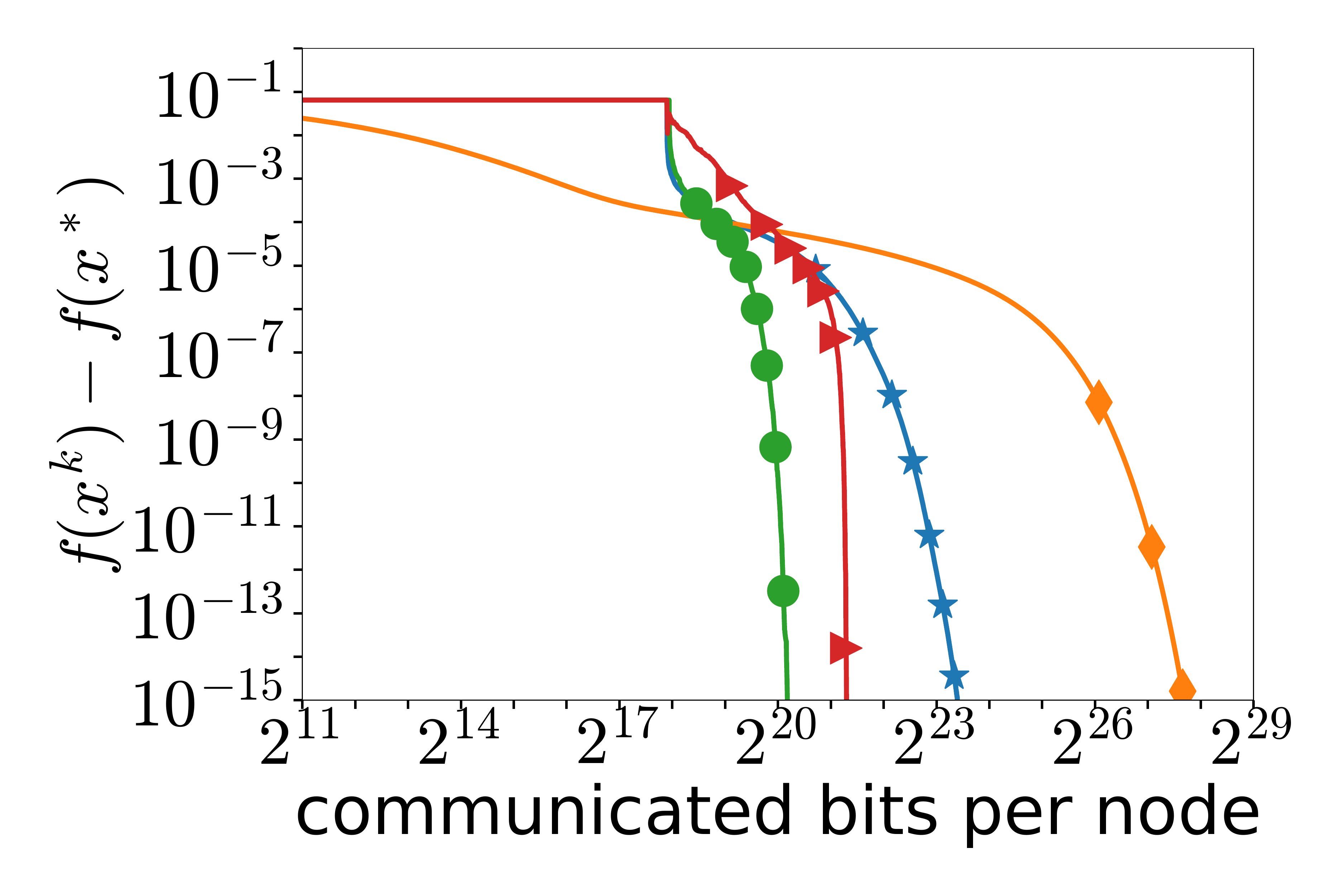} &
				\includegraphics[width = 0.22 \textwidth]{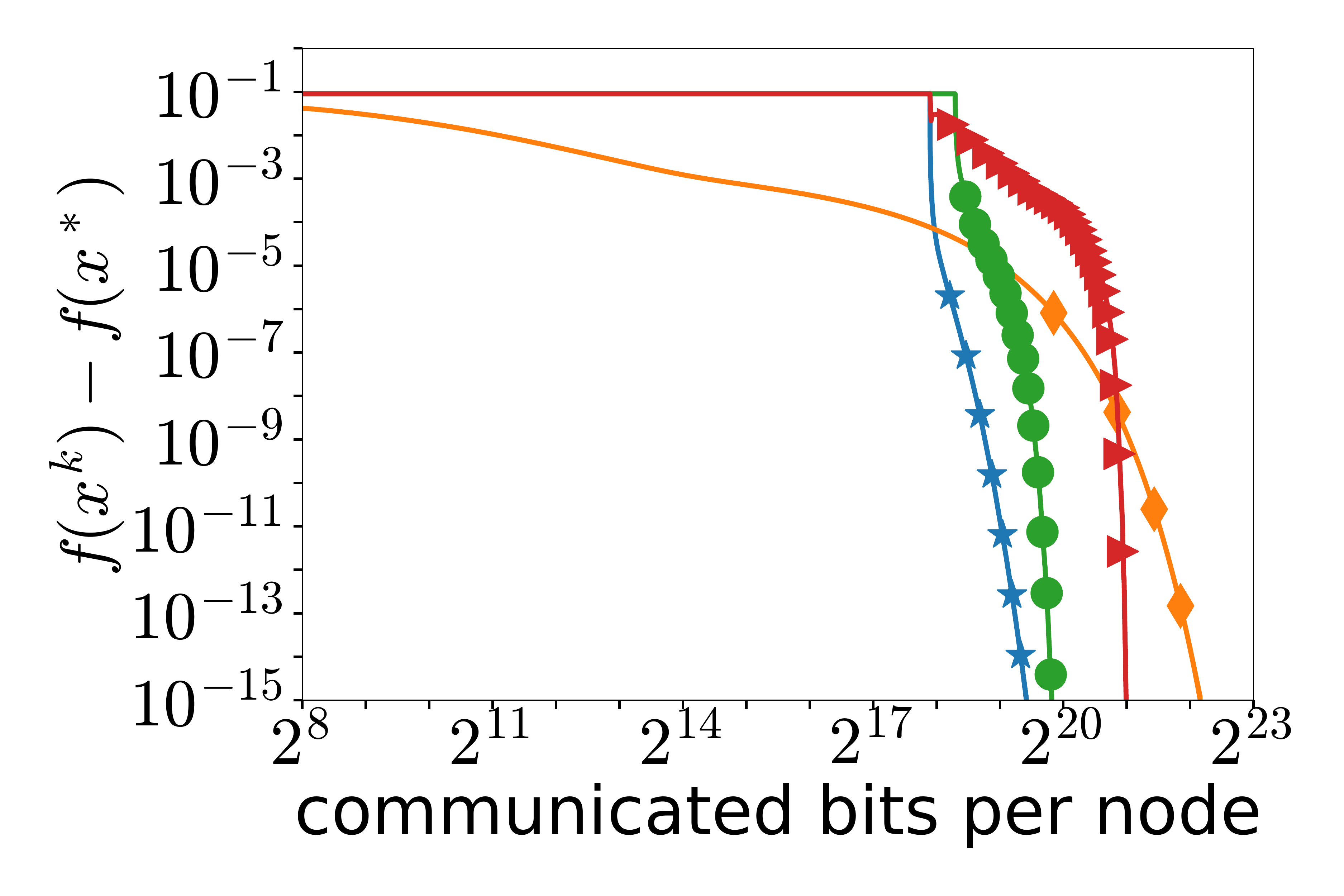} &
				\includegraphics[width = 0.22 \textwidth]{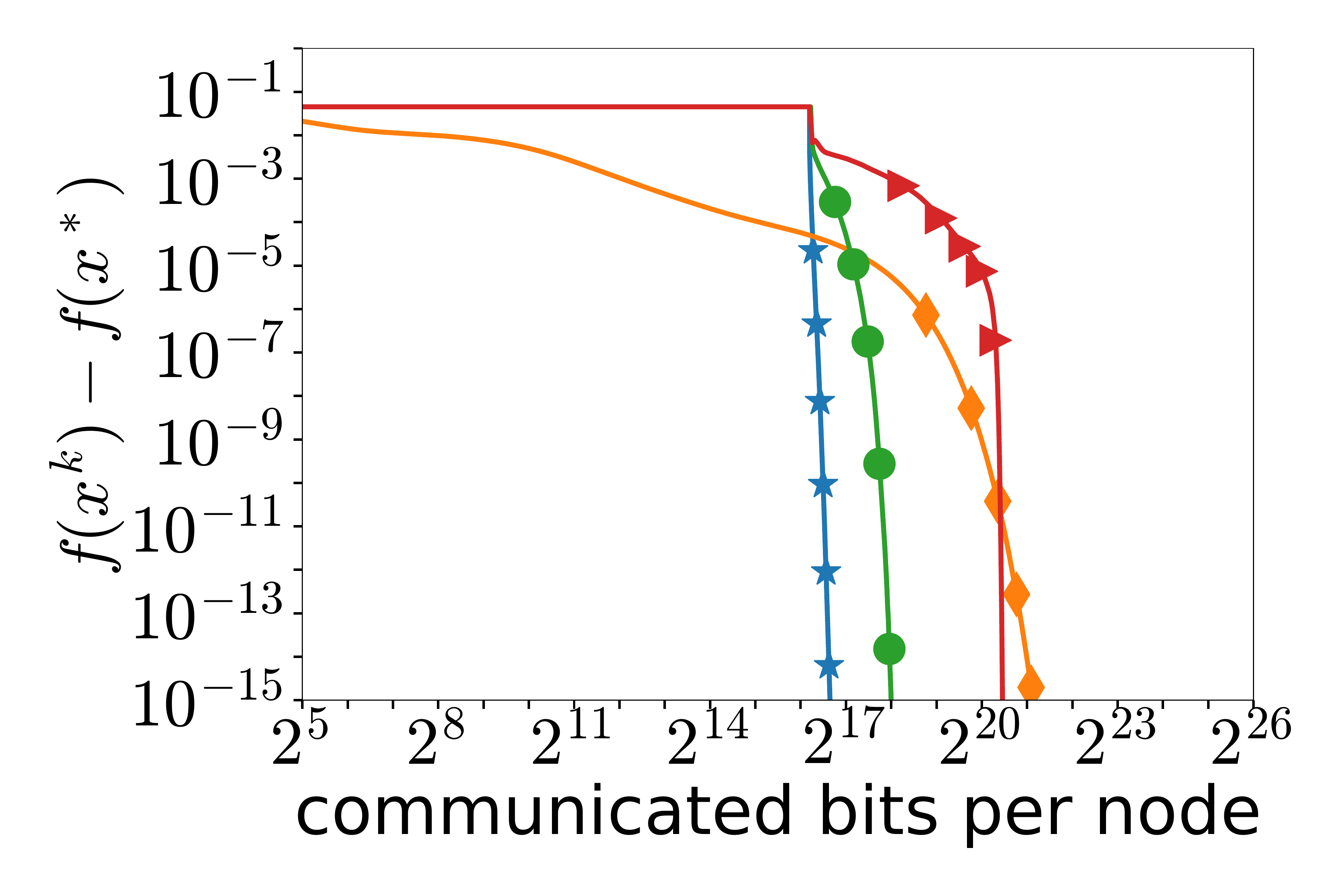}\\
				\dataname{a1a}, $\lambda=10^{-4}$ &
				\dataname{a1a}, $\lambda=10^{-5}$&
				\dataname{a9a}, $\lambda=10^{-4}$ &
				\dataname{phishing}, $\lambda=10^{-5}$\\
		\end{tabular}}
	\end{center}
	\caption{The comparison of \algname{FedNL-PP}, \algname{BL2}, \algname{BL3}, and \algname{Artemis} with partial device participation in terms of communication complexity.}
	\label{fig_apx:pp}
\end{figure}

In this section we study the effect of partial participation. For \algname{FedNL-PP} \citep{FedNL2021} we use stepsize $\alpha=1$ and Rank-$R$ ($R=1$) compression operator. The specific basis described in Section $2.3$ were used for \algname{BL2}. Besides, the parameters of this method are the following: compression operator $\cC^k_i$ is Top-$K$ with $K=r$, $p=1$. The basis for \algname{BL3} were chosen from the Example $4.15$. We use Top-$K$ compressor with $K=d$ for $\cC^k_i$, and set $p=1$ for this method. Both for \algname{BL2} and \algname{BL3} stepsizes are $\alpha=\eta=1$, model comressor $\cQ^k_i$ is identity. Random dithering with $s=\sqrt{d}$ levels was used for \algname{Artemis} \citep{philippenko2021bidirectional}. In different cases we set the number of active devices $\tau$ equal to various fractions of $n$. The results of the experiment are presented in Figure~\ref{fig_apx:pp}. According to the plots, \algname{BL2} and \algname{FedNL-PP} are the best methods, they outperform each depending on data set. \algname{BL3} also outperform \algname{FedNL-PP} on \dataname{a1a} ($\lambda=10^{-5}$)  data set. In almost all cases \algname{FedNL-PP} and \algname{BL2} outperform \algname{Artemis} be {\it many orders in magnitude}. We can conclude that specific for the problem basis could be beneficial.

\subsection{Bidirectional compression}

In our next test we compare \algname{FedNL-BC} \citep{FedNL2021}, \algname{BL1}, \algname{BL2}, \algname{BL3}, and \algname{DORE} \citep{liu2019double}. The parameters of \algname{FedNL-BC} are the following: matrix compression operator is Top-$K$, $K=\lfloor\frac{d}{2}\rfloor$; model compression operator is Top-$K$, $K=\lfloor\frac{d}{2}\rfloor$; stepsizes are $\alpha=\eta=1$; probability $p=1$. We use option $1$ (projection) to make Hessian approximation to be positive definite. Next, we use the basis described in Section $2.3$ for \algname{BL1} and \algname{BL2}. We use Top-$K$, $K=\lfloor\frac{r}{2}\rfloor$, for matrices and models compression, probability $p=\frac{r}{2d}$, and stepsizes $\alpha=\eta=1$. The basis for \algname{BL3} is described in Example $4.15$ in the main paper. Besides, this method has the following parameters: Top-$K$, $K=\lfloor\frac{d}{2}\rfloor$ for models and Hessians compression; stepsize $\alpha=\eta=1$; probability $p=\frac{1}{2}$. Finally, all devices are active for \algname{BL2} and \algname{BL3}, i.e. $\tau=n$. The results of this test can be found in Figure~\ref{fig_apx:bc}. 

\begin{figure}[ht]
	\begin{center}
		\begin{tabular}{cccc}
			\multicolumn{4}{c}{
				\includegraphics[width=0.8\linewidth]{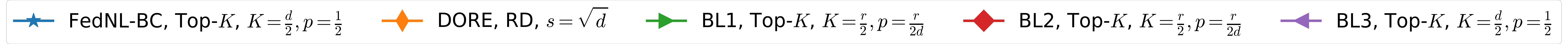}
			}\\
			\includegraphics[width = 0.22 \textwidth]{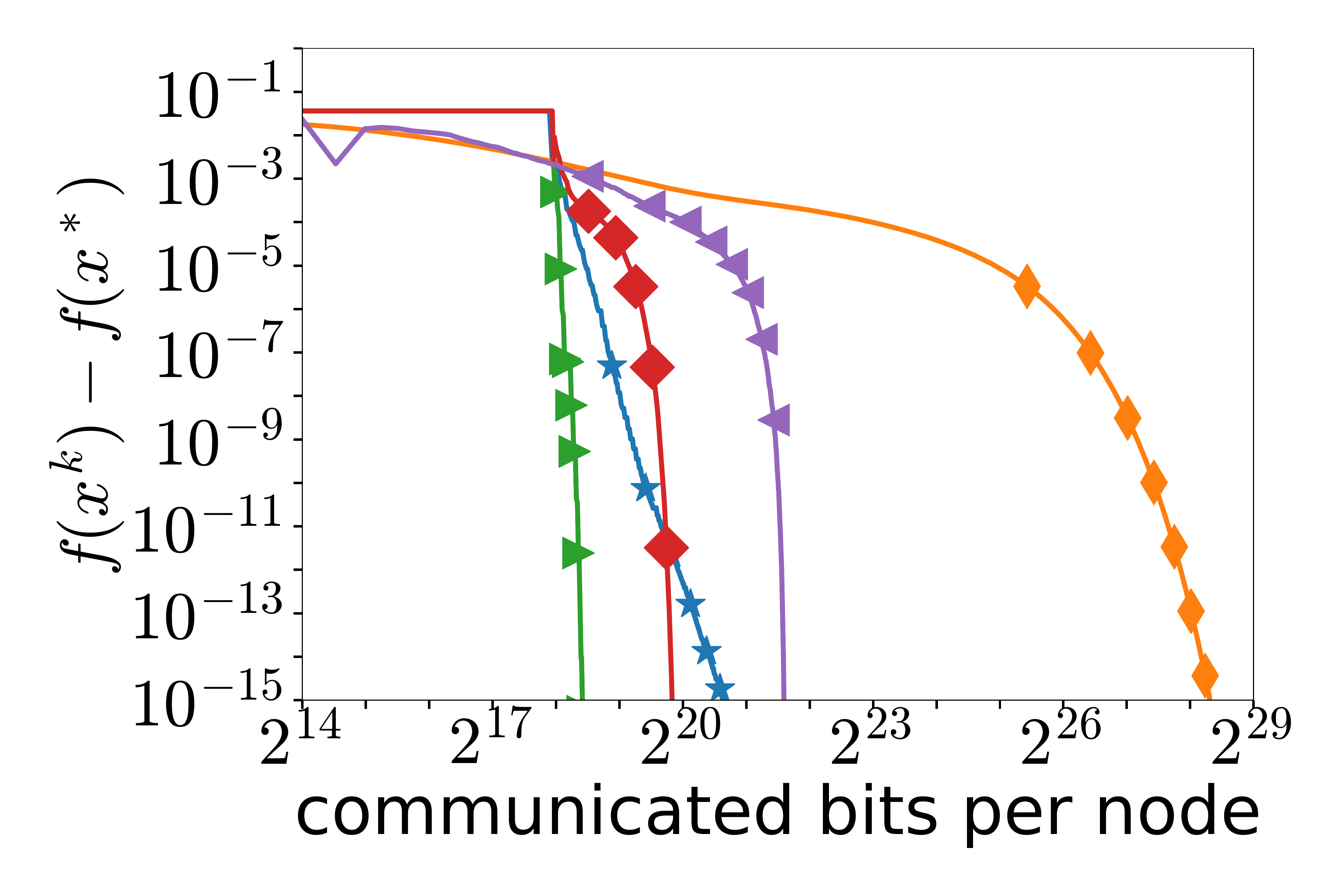} &
			\includegraphics[width = 0.22 \textwidth]{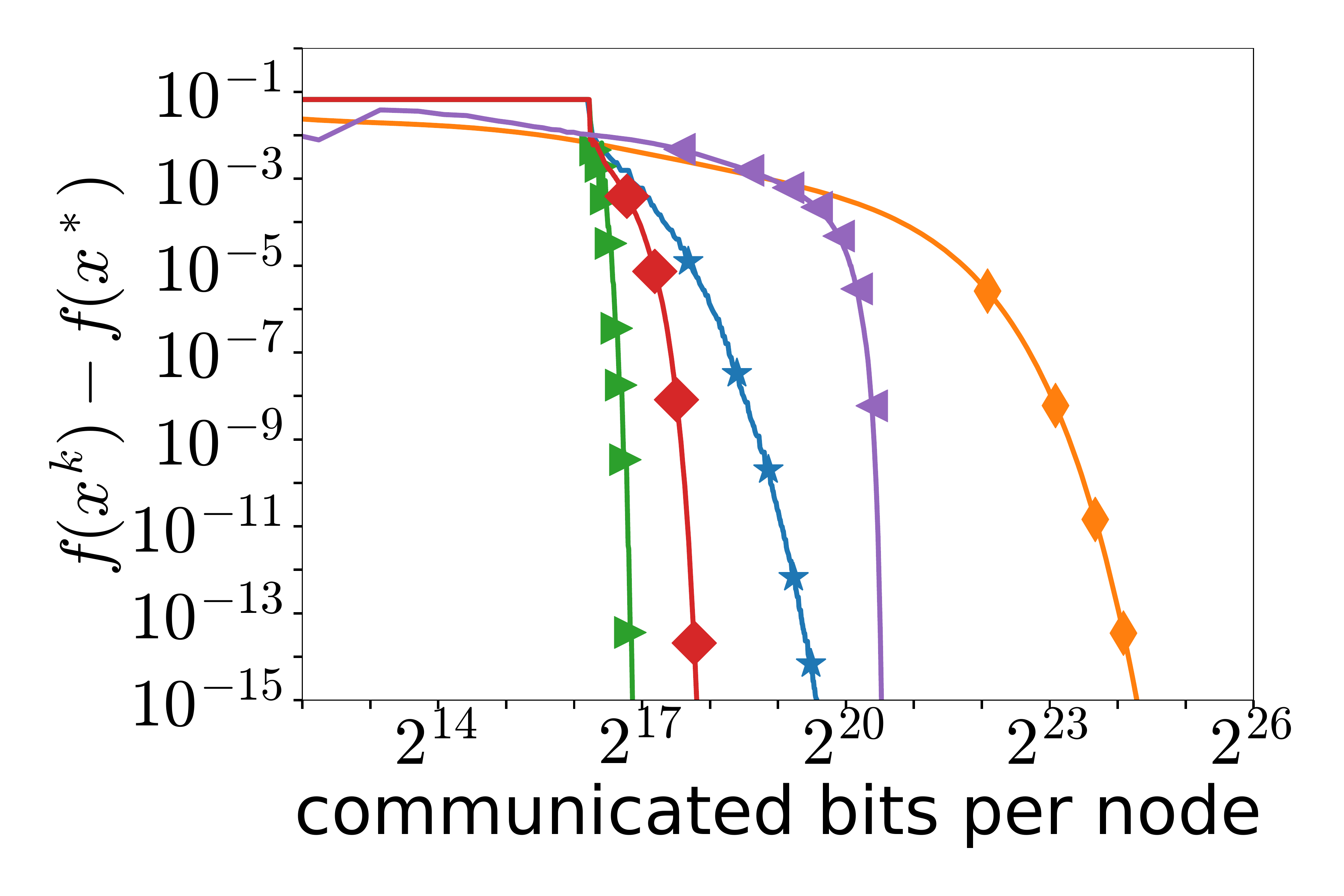}&
			\includegraphics[width = 0.22 \textwidth]{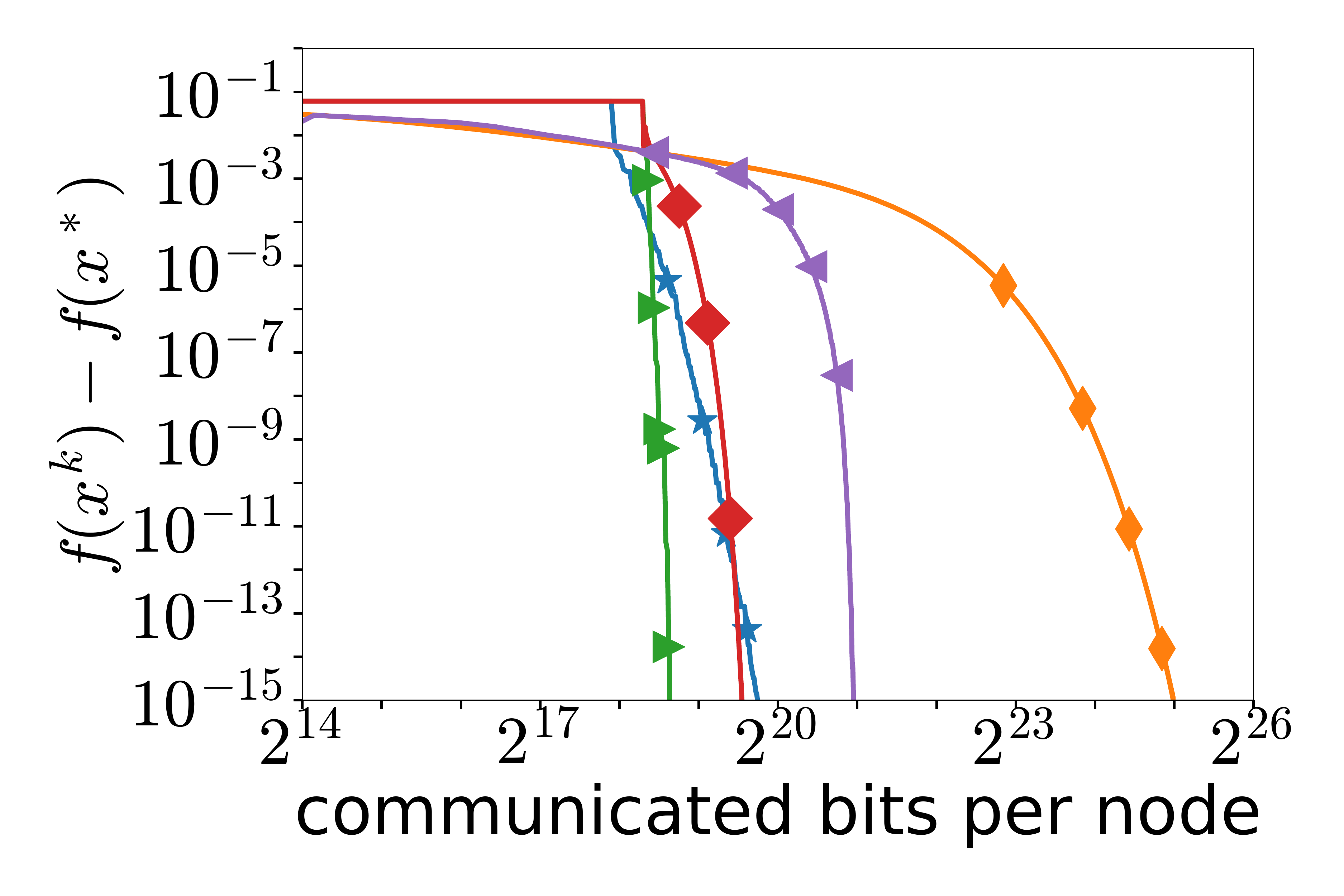}&
			\includegraphics[width = 0.22 \textwidth]{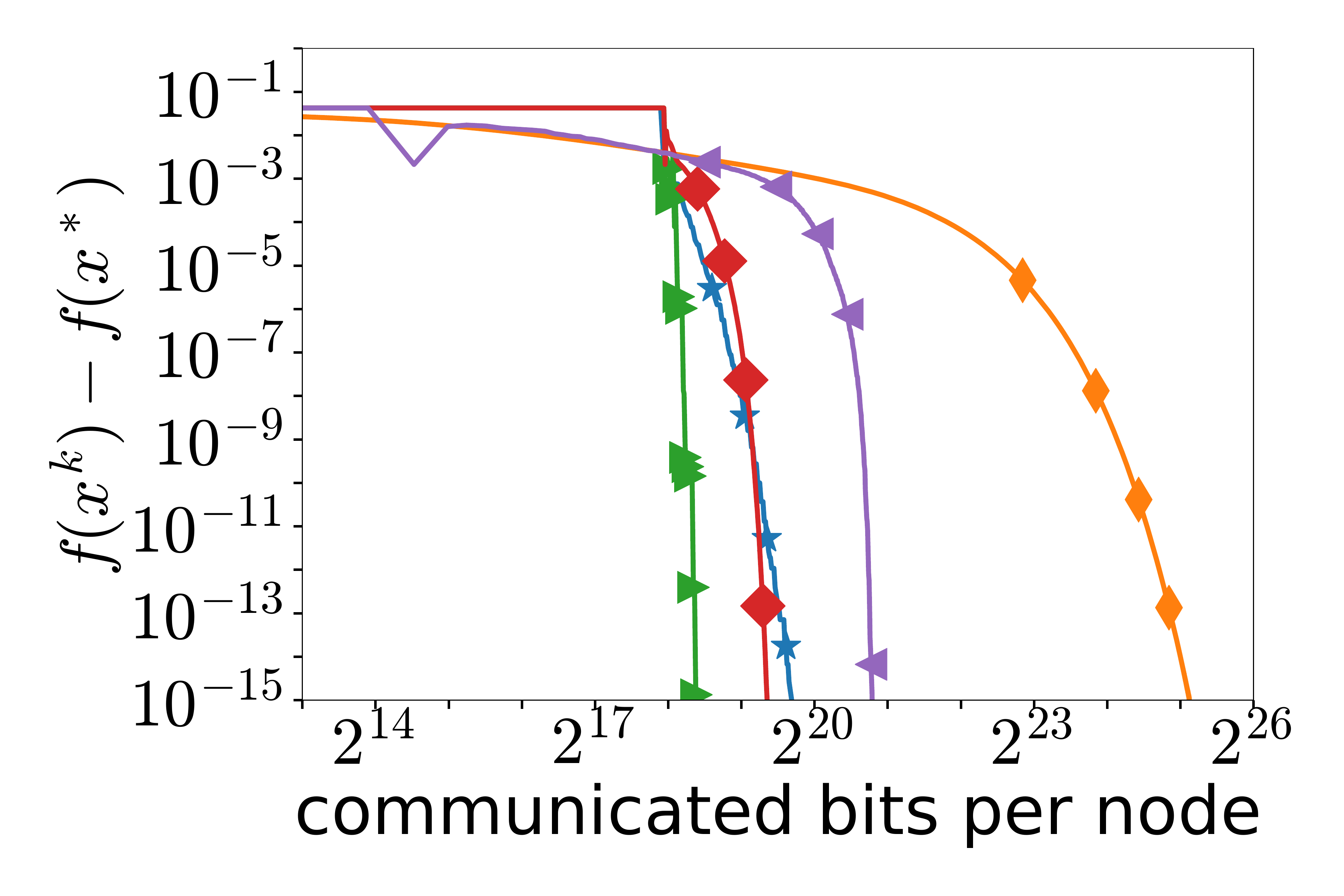} \\
			\dataname{a1a}, $\lambda=10^{-4}$ &
			\dataname{phishing}, $\lambda=10^{-4}$&
			\dataname{a9a}, $\lambda=10^{-3}$ &
			\dataname{a1a}, $\lambda=10^{-3}$\\
		\end{tabular}
	\end{center}
	\caption{The comparison of \algname{FedNL-BC}, \algname{BL1}, \algname{BL2}, \algname{BL3} and \algname{DORE} with bidirectional compression in terms of communication complexity.}
	\label{fig_apx:bc}
\end{figure}

We see that all second-order methods outperform \algname{DORE} in terms of communication complexity by {\it many orders in magnitude}. Moreover, we can conclude that specific to the problem basis is helpful since \algname{BL1} and \algname{BL2} outperform \algname{FedNL-BC}.

\subsection{Comparison of \algname{BL2} and \algname{BL3}}

Finally, we compare \algname{BL2} and \algname{BL3} with bidirectional compression and partial participation simultaneously. We set the number of active devices to $\frac{n}{2}$. For \algname{BL2} we use standard basis in the space of matrices, for \algname{BL3} the basis is one that was given in the example $4.15$. For both methods the compression operator is Top-$K$, $K=\lfloor pd \rfloor$, both for models and matrices. The gradient compressor is lazy Bernoulli compressor with parameter $p$. We set $p \in \{1, \nicefrac{1}{3}, \nicefrac{1}{5}\}$. In the
Figure~\ref{fig_apx:bl2_bl3} we plot the optimality gap $f(x^k) - f(x^*)$ versus the average number of communicated bits per node. 

\begin{figure}[ht]
	\begin{center}
		\begin{tabular}{cccc}
			\multicolumn{4}{c}{
				\includegraphics[width=0.9\linewidth]{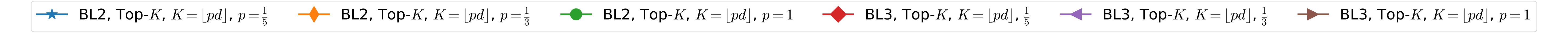}
			}\\
			\includegraphics[width = 0.22 \textwidth]{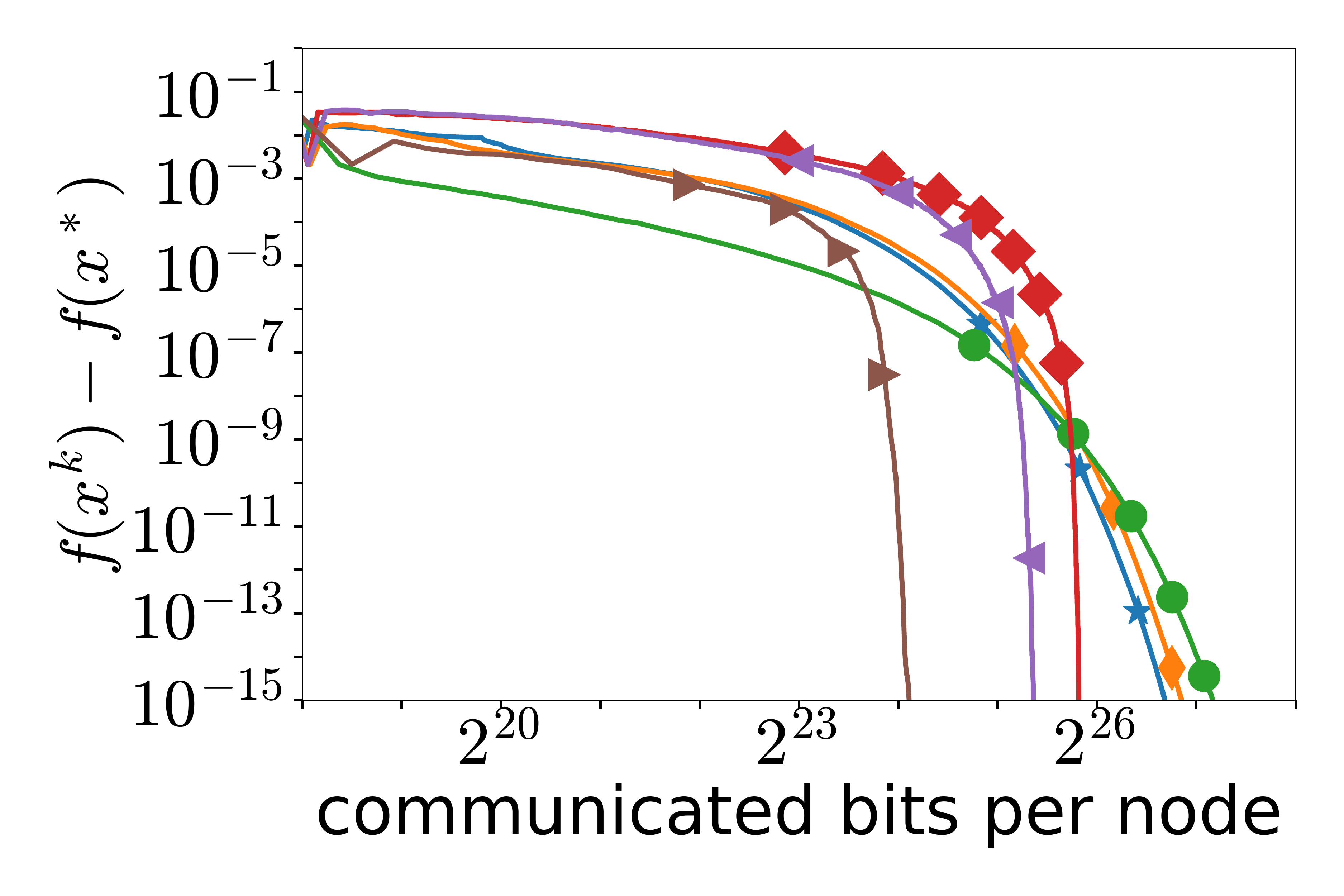} &
			\includegraphics[width = 0.22 \textwidth]{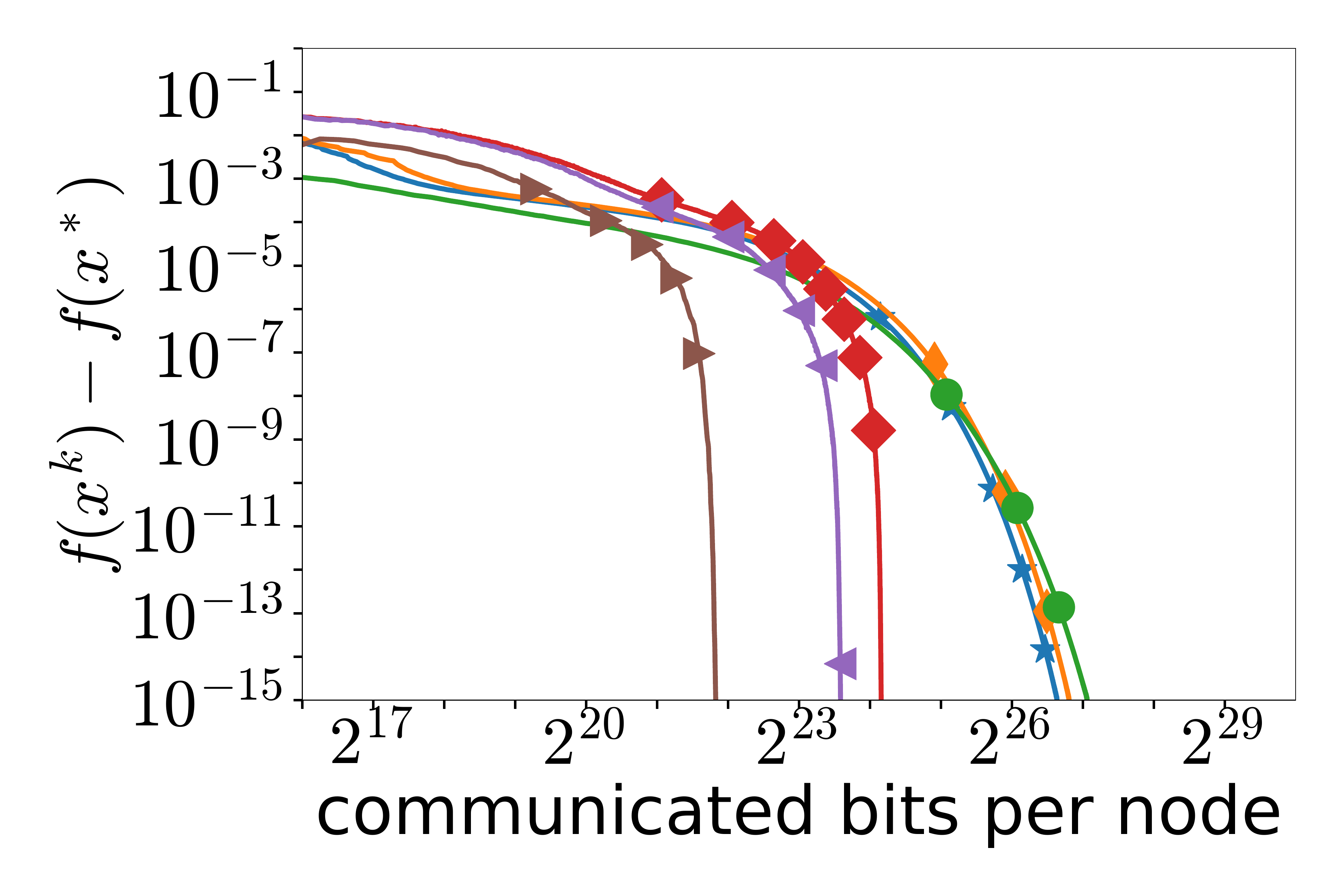}&
			\includegraphics[width = 0.22 \textwidth]{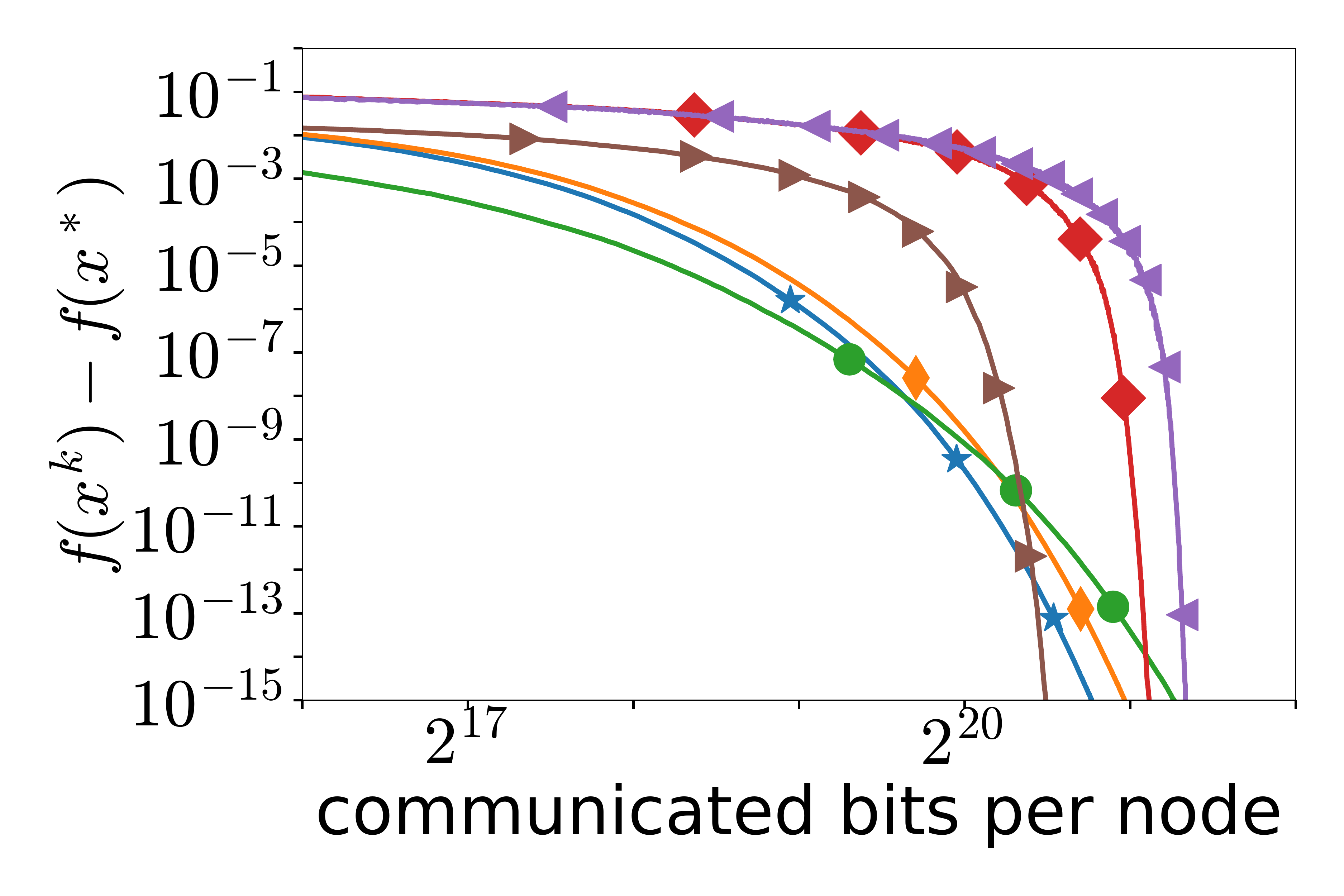}&
			\includegraphics[width = 0.22 \textwidth]{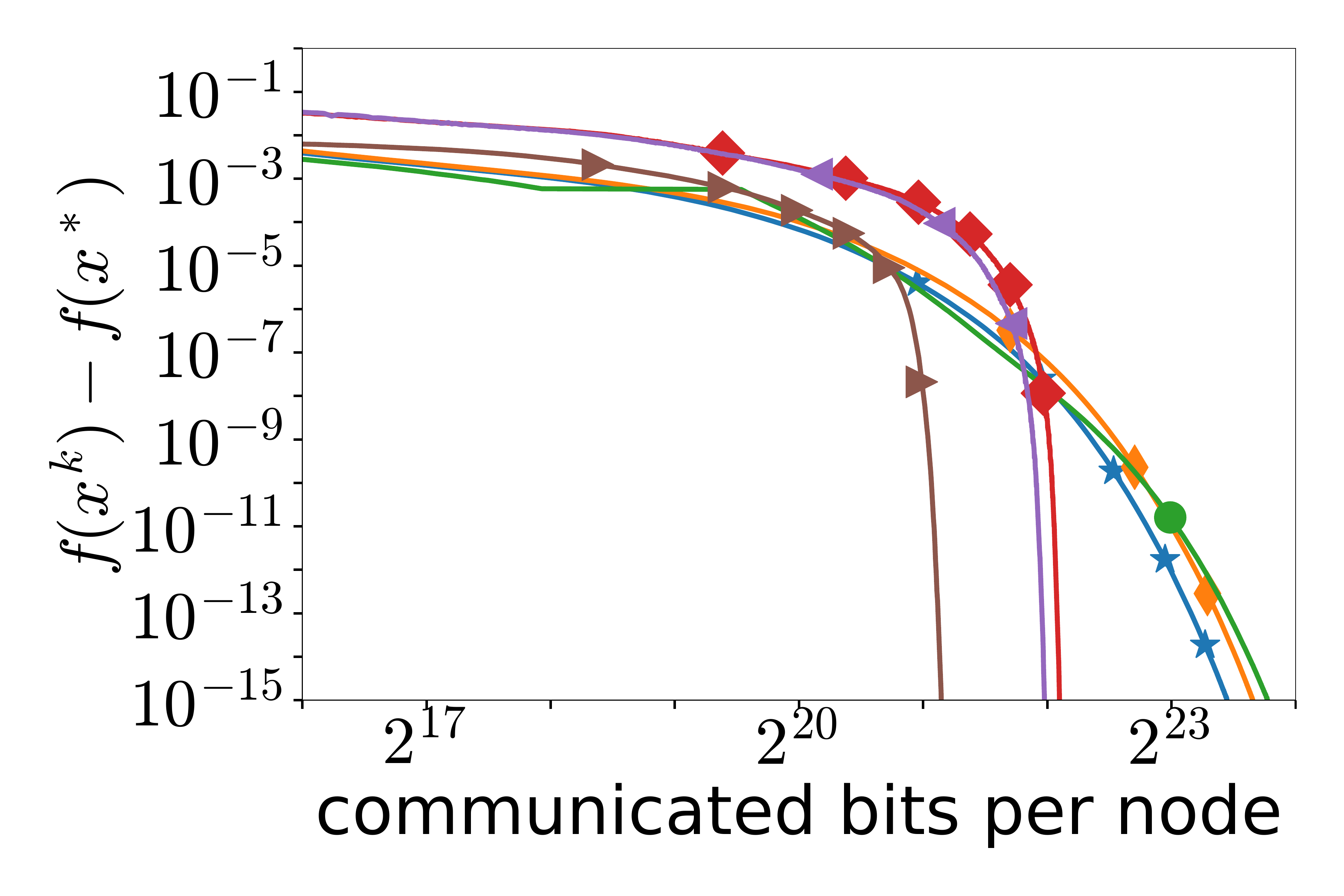} \\
			\dataname{a1a}, $\lambda=10^{-3}$ &
			\dataname{a1a}, $\lambda=10^{-4}$&
			\dataname{phishing}, $\lambda=10^{-3}$ &
			\dataname{phishing}, $\lambda=10^{-4}$\\
		\end{tabular}
	\end{center}
	\caption{The comparison of \algname{BL2} and \algname{BL3} with bidirectional compression and partial participation in terms of communication complexity.}
	\label{fig_apx:bl2_bl3}
\end{figure}

The first observation from the numerical results is that \algname{BL2} is less communication-efficient method than \algname{BL3}. However, if we use specific basis for \algname{BL2}, then it improves the performance of the method; in Figure~\ref{fig_apx:bc} \algname{BL2} is better than \algname{BL3}. Besides, we clearly see that bicompression improves the performance of \algname{BL2} in partial participation setting. However, this is not the case for \algname{BL3}.

\section{Proofs of Lemma \ref{lm:ABineq} and Proposition \ref{th:C1C2}}

\subsection{Proof of Lemma \ref{lm:ABineq}} 

(i) We have 
\begin{align*}
	&\quad \left\|\frac{(\mB + \mB^\top)}{2} - \mA \right\|_{\rm F}^2 - \|\mB-\mA\|_{\rm F}^2 \\ 
	& = \frac{1}{4} \|\mB + \mB^\top\|_{\rm F}^2 + \|\mA\|_{\rm F}^2 - \langle \mB+\mB^\top, \mA \rangle - \|\mB\|_{\rm F}^2 - \|\mA\|_{\rm F}^2 + 2\langle \mB, \mA \rangle \\ 
	& = \frac{1}{4}\|\mB\|_{\rm F}^2 + \frac{1}{4}\|\mB^\top\|_{\rm F}^2 + \frac{1}{2} \langle \mB, \mB^\top \rangle - \|\mB\|_{\rm F}^2 + \langle \mB - \mB^\top, \mA \rangle \\ 
	& = \frac{1}{2} \langle \mB, \mB^\top \rangle  - \frac{1}{2}\|\mB\|_{\rm F}^2 + \langle \mB - \mB^\top, \mA \rangle \\ 
	& \leq \langle \mB - \mB^\top, \mA \rangle \\ 
	& = 0, 
\end{align*}
where the first inequality comes from the Cauchy-Schwartz inequality, and the last equality comes from the fact that $\mA$ is symmetric. \\ 

\noindent (ii) From (i), for any $\mA \in \R^{d\times d}$ we have 
$$
\mathbb{E} \|{\tilde {\cal C}}(\mA) - \mA\|_{\rm F}^2 \leq \mathbb{E} \|{\cal C}(\mA) - \mA\|_{\rm F}^2 \leq (1-\delta) \|\mA\|_{\rm F}^2. 
$$

\subsection{Proof of Proposition \ref{th:C1C2}}
From the definition of ${\cal C}_1$, we have 

\begin{align*}
	\mathbb{E} [\|{\cal C}_1(\mA) - \mA \|_{\rm F}^2] & = \mathbb{E} \|{\cal C}_1(\mA)\|_{\rm F}^2 + \|\mA\|_{\rm F}^2 - 2\mathbb{E}[\langle {\cal C}_1(\mA), \mA \rangle] \\ 
	& = \|\mA\|_{\rm F}^2 + \sum_{i=1}^R \mathbb{E} \left[  \frac{\sigma_i^2 {\cal Q}_2^i(b_i v_i)^\top {\cal Q}_2^i(b_i v_i) {\cal Q}_1^i(a_i u_i)^\top  {\cal Q}_1^i(a_i u_i) }{a_i^2 b_i^2 (\omega_1 +1)^2(\omega_2+1)^2}  \right] \\ 
	& \quad + \sum_{i,j\in[R], i\neq j} \mathbb{E} \left[  \frac{\sigma_i \sigma_j {\cal Q}_2^i(b_i v_i)^\top  {\cal Q}_2^j(b_j v_j) {\cal Q}_1^j(a_j u_j)^\top {\cal Q}_1^i(a_i u_i) }{a_ia_jb_ib_j(\omega_1 +1)^2(\omega_2+1)^2}  \right] \\ 
	& \quad - 2 \left \langle \sum_{i=1}^R \frac{\sigma_i u_i v_i^\top}{(\omega_1+1)(\omega_2+1)}, \mA \right \rangle \\ 
	& = \|\mA\|_{\rm F}^2 + \sum_{i=1}^R  \frac{\sigma_i^2 \mathbb{E} \|{\cal Q}_2^i(b_i v_i)\|^2 \cdot \mathbb{E} \|{\cal Q}_1^i(a_i u_i)\|^2    }{a_i^2 b_i^2 (\omega_1 +1)^2(\omega_2+1)^2} - 2 \left \langle \sum_{i=1}^R \frac{\sigma_i u_i v_i^\top}{(\omega_1+1)(\omega_2+1)}, \mA \right \rangle, 
\end{align*}
where in the last two equalities, we use the independence of each ${\cal Q}_1^i, {\cal Q}_2^i$, and the fact that $u_j^\top u_i=0$ and $v_i^\top v_j=0$ for $i\neq j$. From the definition of unbiased compressors, we further have 
\begin{align*}
	\mathbb{E} [\|{\cal C}_1(\mA) - \mA \|_{\rm F}^2] & \leq  \|\mA\|_{\rm F}^2 + \sum_{i=1}^R \frac{\sigma_i^2 \|u_i\|^2 \|v_i\|^2}{(\omega_1+1)(\omega_2+1)} - 2 \left \langle \sum_{i=1}^R \frac{\sigma_i u_i v_i^\top}{(\omega_1+1)(\omega_2+1)}, \mA \right \rangle \\ 
	& = \left(  1 - \frac{1}{(\omega_1+1)(\omega_2+1)}  \right) \|\mA\|_{\rm F}^2 \\ 
	& \quad + \frac{1}{(\omega_1+1)(\omega_2+1)} \left(  \|\mA\|_{\rm F}^2 + \sum_{i=1}^R \sigma_i^2 \|u_i\|^2 \|v_i\|^2 - 2 \left\langle \sum_{i=1}^R \sigma_iu_iv_i^\top, \mA \right\rangle  \right) \\ 
	& = \left(  1 - \frac{1}{(\omega_1+1)(\omega_2+1)}  \right) \|\mA\|_{\rm F}^2 + \frac{1}{(\omega_1+1)(\omega_2+1)} \left\| \sum_{i=1}^R \sigma_iu_iv_i^\top - \mA \right\|_{\rm F}^2 \\
	& \leq \left(  1 - \frac{1}{(\omega_1+1)(\omega_2+1)}  \right) \|\mA\|_{\rm F}^2 + \frac{(1-\nicefrac{R}{d})}{(\omega_1+1)(\omega_2+1)} \|\mA\|_{\rm F}^2 \\ 
	& = \left(  1 -  \frac{R}{d(\omega_1+1)(\omega_2+1)} \right) \|\mA\|_{\rm F}^2,
\end{align*}
where in the last inequality we use the fact that Rank-R is a contraction compressor with parameter $\nicefrac{R}{d}$ \citep{FedNL2021}. 

For ${\cal C}_2$, the result follows from Lemma \ref{lm:ABineq} (ii).

\subsection{Linear Independence of Outer Products}

\begin{lemma}\label{lem:outer-prod-ind}
	Let vectors $\{v_1,v_2,\dots,v_{r}\}\subset\R^d$ are linearly independent. Then outer products $\{v_i v_j^\top \colon i,j=1,2,\dots,r\}$ are linearly independent matrices in $\R^{d\times d}$.
\end{lemma}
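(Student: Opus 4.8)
The plan is to show that if $\sum_{i,j=1}^{r} c_{ij}\, v_i v_j^\top = \mathbf{0}$ for scalars $c_{ij}\in\R$, then all $c_{ij}=0$. First I would extend $\{v_1,\dots,v_r\}$ to a basis of $\R^d$ and pick a dual system: vectors $\{w_1,\dots,w_r\}\subset\R^d$ with $w_k^\top v_i = \delta_{ki}$ for $i,k\in[r]$ (this exists precisely because the $v_i$ are linearly independent). Alternatively, and more cleanly, one can work with the Gram matrix: since the $v_i$ are independent, $G\eqdef (v_i^\top v_j)_{i,j\in[r]}$ is invertible.

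The key step is to contract the hypothesized relation with these dual vectors on the left and right. Fix $k,l\in[r]$. Multiplying $\sum_{i,j} c_{ij} v_i v_j^\top = \mathbf 0$ on the left by $w_k^\top$ and on the right by $w_l$ gives
\begin{equation*}
0 = \sum_{i,j=1}^r c_{ij}\, (w_k^\top v_i)(v_j^\top w_l) = \sum_{i,j=1}^r c_{ij}\, \delta_{ki}\delta_{lj} = c_{kl}.
\end{equation*}
Since $k,l$ were arbitrary, every coefficient vanishes, which is exactly linear independence of the matrices $\{v_i v_j^\top\}_{i,j\in[r]}$. If one prefers to avoid constructing dual vectors explicitly, the same conclusion follows by multiplying on the left by $v_k^\top$ and on the right by $v_l$: this yields $\sum_{i,j} c_{ij}(v_k^\top v_i)(v_j^\top v_l)=0$, i.e.\ $G C G = \mathbf 0$ where $C=(c_{ij})$; invertibility of $G$ then forces $C=\mathbf 0$.

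I do not anticipate a serious obstacle here — the only mild care needed is justifying the existence of the dual vectors $w_k$ (or, equivalently, the invertibility of the Gram matrix $G$), both of which are standard consequences of linear independence in an inner product space. One should also note in passing that the $r^2$ matrices live in $\R^{d\times d}$ and $r^2\le d^2$ is automatic since $r\le d$, so there is no dimension obstruction to their being independent.
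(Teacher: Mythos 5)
Your proof is correct, and it takes a genuinely different (and cleaner) route than the paper's. The paper expands each $v_i$ in the standard basis of $\R^d$, reads off the resulting coefficient identities $\sum_{i,j} c_{ij} v_{it}v_{jl}=0$ entrywise, and then applies linear independence of the $v_i$'s twice in succession --- first to the inner sums $c'_{il}=\sum_j c_{ij}v_{jl}$ and then to the $c'_{il}$ themselves --- to force all $c_{ij}=0$. You instead contract the relation $\sum_{i,j}c_{ij}v_iv_j^\top=\mathbf{0}$ against dual vectors $w_k$ (or, equivalently, sandwich it between $v_k^\top$ and $v_l$ to get $GCG=\mathbf{0}$ with $G$ the invertible Gram matrix), which extracts each coefficient in one step. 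Both arguments rest on the same underlying fact, but yours avoids the coordinate bookkeeping and makes the mechanism transparent: it is the standard ``tensor product of independent sets is independent'' argument via dual functionals. The only ingredient you need to justify --- existence of the $w_k$ with $w_k^\top v_i=\delta_{ki}$, or invertibility of $G$ --- is an immediate consequence of linear independence (e.g.\ take $W=V(V^\top V)^{-1}$ where $V=[v_1,\dots,v_r]$ has full column rank), so there is no gap.
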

\begin{proof}
	Let $\{e_1,e_2,\dots,e_d\}$ be the standard basis in $\R^d$. Then, for all $i\in[r]$
	$$
	v_i = \sum_{t=1}^r v_{it}e_t.
	$$
	
	Denote $\mE_{tl} = e_te_l^\top$. Suppose linear combination of matrices $\{v_i v_j^\top \colon i,j=1,2,\dots,r\}$ with some coefficients $c_{ij}$ is zero matrix. After simple transformations, we get
	\begin{eqnarray*}
		\bm{0}
		=   \sum_{i,j=1}^{r} c_{ij}v_iv_j^\top
		= \sum_{i,j=1}^{r} c_{ij} \sum_{t,l=1}^d v_{it}v_{jl} \mE_{tl}
		= \sum_{t,l=1}^d \[ \sum_{i,j=1}^{r} c_{ij} v_{it}v_{jl} \] \mE_{tl},
	\end{eqnarray*}
	which implies that
	$$
	\sum_{i,j=1}^{r} c_{ij} v_{it}v_{jl} = 0, \quad \text{for all } t,l\in[d].
	$$
	
	Then notice that
	$$
	0
	= \sum_{i,j=1}^{r} c_{ij} v_{it}v_{jl}
	= \sum_{i=1}^{r}\[\sum_{j=1}^{r} c_{ij}v_{jl}\]v_{it}
	= \sum_{i=1}^{r}c'_{il} v_{it}
	$$
	holds for all $t\in[d]$, which implies that $\sum_{i=1}^{r}c'_{il}v_i = 0$ (where that last 0 is a vector of size $d$). Since $v_i$'s are linearly independent, we get $c'_{il} = 0$ for all $i\in[d]$ and $l\in[r]$. By definition $c'_{il} = \sum_{j=1}^{r} c_{ij}v_{jl}$, hence $\sum_{j=1}^{r} c_{ij}v_{j} = 0$. Again using linear independence of $v_i$'s, we get $c_{ij}=0$ for all $i,j\in[d]$. Therefore outer products $v_iv_j^\top$ are also independent.
\end{proof}

\newpage
\section{Proofs for \algname{BL1}} 

We denote $\mathbb{E}_k[\cdot]$ as the conditional expectation on $z^k$, $w^k$, and $\mH_i^k$.

\subsection{Proof of Lemma \ref{lm:EstM-1}}

If $\|\nabla^2 f_i(x) - \nabla^2 f_i(y)\|_{\rm F} \leq H_1\|x-y\|$ for any $x, y\in \R^d$, and $i \in [n]$, then from (\ref{eq:hmA}) we have 
\begin{align*}
	\|{h}^i(\nabla^2 f_i(x)) - {h}^i(\nabla^2 f_i(y))\|_{\rm F} & = \| vec({h}^i(\nabla^2 f_i(x))) - vec({h}^i(\nabla^2 f_i(y)))\| \\ 
	& \leq \|{\cal B}_i^{-1}\| \cdot \|vec(\nabla^2 f_i(x)) - vec(\nabla^2 f_i(y))\| \\ 
	& =  \| {\cal B}_i^{-1}\| \cdot \|\nabla^2 f_i(x) - \nabla^2 f_i(y)\|_{\rm F} \\ 
	& \leq   \| {\cal B}_i^{-1}\| H_1 \|x-y\|, 
\end{align*}
which implies that $M_1$ in Assumption \ref{as:BL1} satisfies $M_1 \leq   \max_i \{ \| {\cal B}_i^{-1}\| \} H_1$.  \\ 

\noindent If $|(\nabla^2 f_i(x))_{jl} - (\nabla^2 f_i(y))_{jl} | \leq \nu \|x-y\|$ for any $x, y\in \R^d$, $i \in [n]$, and $j, l\in [d]$, then from (\ref{eq:hmA}), every entry of ${ h}^i(\nabla^2 f_i(x)) - {h}^i(\nabla^2 f_i(y))$ will be bounded by $\nu \| {\cal B}_i^{-1}\|_{\infty} \|x-y\|$. Hence $M_2$ in Assumption~\ref{as:BL1} satisfies $M_2 \leq \nu \max_{i} \{\| {\cal B}_i^{-1}\|_{\infty}\}$.

\subsection{Lemmas}

The proofs of Lemma \ref{lm:zQcomp} and Lemma \ref{lm:Hk-BL1} are the same as that of Lemma B.1 in \citep{FedNL2021}. Thus we omit them. 

\begin{lemma}\label{lm:zQcomp}
	Let ${\cal Q}$ be a compressor and $\eta >0$. For any $x, y, z\in \R^d$, we have following results. 
\begin{itemize}		
	\item[(i)] If ${\cal Q}$ is an unbiased compressor with parameter $\omega_{\rm M}$ and $\eta\leq \nicefrac{1}{(\omega_{\rm M}+1)}$, then 
	$$
	\mathbb{E} \| z + \eta {\cal Q}(x-z) -y   \|^2 \leq (1 - \eta)\|z-y\|^2 + \eta\|x-y\|^2, 
	$$
	where $\mathbb{E}[\cdot]$ is the expectation with respect to ${\cal Q}$. 
	\item[(ii)] If ${\cal Q}$ is a contraction compressor with parameter $\delta_{\rm M}$ and $\eta=1$, then 
	$$
	\mathbb{E} \| z + \eta {\cal Q}(x-z) -y   \|^2 \leq \left(1 - \frac{\delta_{\rm M}}{4} \right)\|z-y\|^2 + \left(  \frac{6}{\delta_{\rm M}} - \frac{7}{2}  \right)\|x-y\|^2,
	$$
\end{itemize}	
\end{lemma}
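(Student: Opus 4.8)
The plan is to treat the two parts separately, exploiting unbiasedness in (i) and only contractiveness in (ii). Throughout write $e \eqdef x-z$ and $\hat e \eqdef {\cal Q}(e)$, and record the identity $z + \eta\hat e - y = (z-y) + \eta\hat e$, together with the observation that if $\hat e$ equalled $e$ the expression would collapse to the convex combination $(1-\eta)(z-y) + \eta(x-y)$, which makes sense since $0 \le \eta \le 1$ in both cases (as $\eta \le 1/(\omega_{\rm M}+1) \le 1$ in (i) and $\eta = 1$ in (ii)).

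For part (i) I would split into mean and fluctuation. Since ${\cal Q}$ is unbiased, $\mathbb{E}[\hat e] = e$, so
\begin{equation*}
	\mathbb{E}\|z + \eta\hat e - y\|^2 = \big\|(1-\eta)(z-y) + \eta(x-y)\big\|^2 + \eta^2\,\mathbb{E}\|\hat e - e\|^2 .
\end{equation*}
The first term I expand with the convex-combination identity $\|(1-\eta)a + \eta b\|^2 = (1-\eta)\|a\|^2 + \eta\|b\|^2 - \eta(1-\eta)\|a-b\|^2$ applied to $a = z-y$, $b = x-y$ (so $a-b = x-z$); the second term is the variance, bounded via $\mathbb{E}\|\hat e - e\|^2 = \mathbb{E}\|\hat e\|^2 - \|e\|^2 \le \omega_{\rm M}\|x-z\|^2$. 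Collecting everything, the $\|x-z\|^2$ contributions merge into $\eta\big(\eta(\omega_{\rm M}+1) - 1\big)\|x-z\|^2$, which is nonpositive exactly because $\eta \le 1/(\omega_{\rm M}+1)$, leaving the claimed bound.

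For part (ii), with $\eta=1$, the compressor need not be unbiased, so instead I would use $z + \hat e - y = (x-y) - (e - \hat e)$ and apply Young's inequality twice. First, with a parameter $\gamma>0$, $\mathbb{E}\|(x-y) - (e-\hat e)\|^2 \le (1+\gamma)\|x-y\|^2 + (1+\gamma^{-1})\,\mathbb{E}\|e - \hat e\|^2$, and the contraction property gives $\mathbb{E}\|e-\hat e\|^2 \le (1-\delta_{\rm M})\|x-z\|^2$. Second, with a parameter $s>0$, $\|x-z\|^2 = \|(x-y) - (z-y)\|^2 \le (1+s^{-1})\|x-y\|^2 + (1+s)\|z-y\|^2$. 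Combining these, the coefficient of $\|z-y\|^2$ becomes $(1+\gamma^{-1})(1-\delta_{\rm M})(1+s)$ and that of $\|x-y\|^2$ becomes $(1+\gamma) + (1+\gamma^{-1})(1-\delta_{\rm M})(1+s^{-1})$. I would then commit to $\gamma = \tfrac{2}{\delta_{\rm M}} - 1$ and choose $s$ so that the first coefficient equals exactly $1 - \tfrac{\delta_{\rm M}}{4}$ (this forces $1+s = \tfrac{(1-\delta_{\rm M}/4)(2-\delta_{\rm M})}{2(1-\delta_{\rm M})}$), and finally simplify the second coefficient to the rational function $\tfrac{12 - 8\delta_{\rm M} + 2\delta_{\rm M}^2}{\delta_{\rm M}(2+\delta_{\rm M})}$ and check that it is $\le \tfrac{6}{\delta_{\rm M}} - \tfrac{7}{2}$; after clearing denominators this reduces to $\delta_{\rm M}(11\delta_{\rm M} - 14) \le 0$, true on $(0,1]$. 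The degenerate case $\delta_{\rm M}=1$ (exact compression, where $s$ is undefined) is immediate since then $z + \hat e - y = x-y$.

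The bias–variance split in (i) and the two applications of Young's inequality in (ii) are routine; the only genuinely delicate point is the constant bookkeeping in (ii) — one must pin down the pair $(\gamma,s)$ above (or an equivalent choice) and verify that it yields exactly the contraction factor $1-\delta_{\rm M}/4$ while keeping the $\|x-y\|^2$ coefficient no larger than $\tfrac{6}{\delta_{\rm M}} - \tfrac{7}{2}$. I expect essentially all the real effort to sit there, even though it ultimately amounts to a single quadratic inequality in $\delta_{\rm M}$.
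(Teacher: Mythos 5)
Your proof is correct, and it follows essentially the same route as the source the paper defers to (the paper omits this proof, citing Lemma B.1 of FedNL): a bias--variance decomposition plus the convex-combination identity for the unbiased case, and two applications of Young's inequality combined with the contraction property for the contractive case, differing only in the particular choice of the two Young parameters (FedNL's choice makes the $\|x-y\|^2$ coefficient come out as $\tfrac{6}{\delta_{\rm M}}-4+\tfrac{\delta_{\rm M}}{2}\le\tfrac{6}{\delta_{\rm M}}-\tfrac{7}{2}$ directly, whereas yours requires the final quadratic check $\delta_{\rm M}(11\delta_{\rm M}-14)\le 0$). I verified your constant bookkeeping, including the $\delta_{\rm M}=1$ edge case, and it is sound.
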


\begin{lemma}\label{lm:Hk-BL1}
	%Assume $\|h^i(\nabla^2 f_i(y)) - h^i(\nabla^2 f_i(z))\|_{\rm F} \leq M_1 \|y-z\|$ for any $y$, $z \in \R^d$. 
	Let $\cC$ be a compressor and $\alpha>0$. For any matrix $\mL \in \R^{d\times d}$ and $y, z\in \R^d$, we have the following results. 
\begin{itemize}		
	\item[(i)] If $\cC$ is an unbiased compressor with parameter $\omega$ and $\alpha \leq \nicefrac{1}{\omega+1}$, then 
	$$
	\mathbb{E}\| \mL + \alpha \cC(h^i(\nabla^2 f_i(y)) - \mL) - h^i(\nabla^2 f_i(z)) \|^2_{\rm F} \leq (1-\alpha) \| \mL - h^i(\nabla^2 f_i(z))\|^2_{\rm F} + \alpha M_1^2 \|y-z\|^2, 
	$$
	where $\mathbb{E}[\cdot]$ is the expectation with respect to $\cC$. 
	\item[(ii)] If $\cC$ is a contraction compressor with parameter $\delta$ and $\alpha=1$, then 
	$$
	\mathbb{E}\| \mL + \alpha \cC(h^i(\nabla^2 f_i(y)) - \mL) - h^i(\nabla^2 f_i(z)) \|^2_{\rm F} \leq \left(  1 - \frac{\delta}{4}  \right) \| \mL - h^i(\nabla^2 f_i(z))\|^2_{\rm F} + \left(  \frac{6}{\delta} - \frac{7}{2}  \right) M_1^2 \|y-z\|^2. 
	$$
\end{itemize}		
\end{lemma}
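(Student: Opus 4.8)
The plan is to follow the argument used for Lemma~B.1 in \citep{FedNL2021} almost verbatim, the only change being that the role of the local Hessian is played by its coefficient matrix $h^i(\cdot)$ and, consequently, the Lipschitz constant $M_1$ from Assumption~\ref{as:BL1} replaces $H_1$. Fix $i\in[n]$ and write $\mA \eqdef h^i(\nabla^2 f_i(y))$, $\mA' \eqdef h^i(\nabla^2 f_i(z))$ and $\mathbf{D} \eqdef \mA-\mL$; by Assumption~\ref{as:BL1} we have the deterministic bound $\|\mA-\mA'\|_{\rm F}^2 \le M_1^2\|y-z\|^2$, and the quantity to be bounded is $\mathbb{E}\|(\mL-\mA')+\alpha\cC(\mathbf{D})\|_{\rm F}^2$, the expectation being over $\cC$.

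For (i) I would expand the square directly, using unbiasedness $\mathbb{E}[\cC(\mathbf{D})]=\mathbf{D}$ and the second-moment bound $\mathbb{E}\|\cC(\mathbf{D})\|_{\rm F}^2\le(\omega+1)\|\mathbf{D}\|_{\rm F}^2$, to obtain
$$\mathbb{E}\|(\mL-\mA')+\alpha\cC(\mathbf{D})\|_{\rm F}^2 \le \|\mL-\mA'\|_{\rm F}^2 + 2\alpha\langle \mL-\mA',\,\mathbf{D}\rangle + \alpha^2(\omega+1)\|\mathbf{D}\|_{\rm F}^2 .$$
Since $0<\alpha\le \nicefrac{1}{(\omega+1)}$ implies $\alpha^2(\omega+1)\le\alpha$, the right-hand side is at most
$$(1-\alpha)\|\mL-\mA'\|_{\rm F}^2 + \alpha\bigl(\|\mL-\mA'\|_{\rm F}^2 + 2\langle \mL-\mA',\,\mathbf{D}\rangle + \|\mathbf{D}\|_{\rm F}^2\bigr) = (1-\alpha)\|\mL-\mA'\|_{\rm F}^2 + \alpha\|(\mL-\mA')+\mathbf{D}\|_{\rm F}^2 .$$
Because $(\mL-\mA')+\mathbf{D} = \mA-\mA'$, the second term is $\alpha\|\mA-\mA'\|_{\rm F}^2 \le \alpha M_1^2\|y-z\|^2$, which is exactly the assertion of (i).

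For (ii), where $\alpha=1$, I would instead invoke the contraction inequality \eqref{eq:contractor} in the form $\mathbb{E}\|\mathbf{D}-\cC(\mathbf{D})\|_{\rm F}^2 \le (1-\delta)\|\mathbf{D}\|_{\rm F}^2$, i.e. the updated shift $\mL+\cC(\mathbf{D})$ satisfies $\mathbb{E}\|\mL+\cC(\mathbf{D})-\mA\|_{\rm F}^2 \le (1-\delta)\|\mL-\mA\|_{\rm F}^2$. Then I would apply Young's inequality $\|\mathbf{a}+\mathbf{b}\|_{\rm F}^2 \le (1+s)\|\mathbf{a}\|_{\rm F}^2 + (1+s^{-1})\|\mathbf{b}\|_{\rm F}^2$ twice: once to split $\mL+\cC(\mathbf{D})-\mA' = (\mL+\cC(\mathbf{D})-\mA)+(\mA-\mA')$, and once to split $\mL-\mA=(\mL-\mA')+(\mA'-\mA)$. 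Multiplying out and choosing the two free parameters as in \citep{FedNL2021} makes the coefficient of $\|\mL-\mA'\|_{\rm F}^2$ collapse to $1-\nicefrac{\delta}{4}$ and the coefficient of $\|\mA-\mA'\|_{\rm F}^2$ no larger than $\nicefrac{6}{\delta}-\nicefrac{7}{2}$; the deterministic Lipschitz bound $\|\mA-\mA'\|_{\rm F}^2\le M_1^2\|y-z\|^2$ then completes the proof.

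The only step that requires genuine care is the one in (i): one must keep the factor $(\omega+1)$ in the second-moment estimate rather than passing to the variance $\omega\|\mathbf{D}\|_{\rm F}^2$, so that after using $\alpha(\omega+1)\le 1$ the cross term $2\alpha\langle\mL-\mA',\mathbf{D}\rangle$ recombines with $\alpha\|\mathbf{D}\|_{\rm F}^2$ into the perfect square $\alpha\|\mA-\mA'\|_{\rm F}^2$ — this is exactly what yields the clean contraction factor $(1-\alpha)$ on $\|\mL-\mA'\|_{\rm F}^2$ with no residual variance term. In (ii) the difficulty is purely bookkeeping, namely checking that a single choice of the Young parameters produces the stated constants uniformly for $\delta\in(0,1]$ (equivalently, that $(1+s)(1+t)(1-\delta)\le 1-\nicefrac{\delta}{4}$ can be met with the residual coefficient at most $\nicefrac{6}{\delta}-\nicefrac{7}{2}$), which is routine.
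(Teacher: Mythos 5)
Your argument is correct and is exactly the route the paper takes: the paper omits the proof and points to Lemma~B.1 of the FedNL paper, whose expansion-plus-regrouping argument for the unbiased case and double Young's inequality for the contractive case you reproduce faithfully (with $h^i(\nabla^2 f_i(\cdot))$ and $M_1$ in place of the Hessian and $H_1$). For the record, the parameter choice $\beta=\tfrac{\delta}{2(1-\delta)}$ and $\gamma=\tfrac{\delta}{2(2-\delta)}$ in the two Young steps yields the coefficient $1-\tfrac{\delta}{4}$ exactly and $\tfrac{6}{\delta}-4+\tfrac{\delta}{2}\le\tfrac{6}{\delta}-\tfrac{7}{2}$ for the residual, confirming the bookkeeping you left implicit.
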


\begin{lemma}\label{lm:Hkbound-BL1}
	%Assume $\max_{jl} \{ |h^i(\nabla^2 f_i(y))_{jl} - h^i(\nabla^2 f_i(z))_{jl}| \} \leq M_2 \|y-z\|$ for any $y$, $z \in \R^d$. \\
\begin{itemize}	We consider four cases:	 
	\item[(i)] If Assumption \ref{as:Qunbiasedcomp-BL1} (ii) holds, $\|x^0-x^*\|^2 \leq \min\{  \frac{\mu^2}{4d^2H^2}, \frac{M}{d}  \}$,  $\|z^k-x^*\|^2 \leq \min\{  \frac{\mu^2}{4dH^2}, M  \}$, and ${\cal H}^k \leq \frac{\mu^2}{4dN_{\rm B}R^2}$ for $k\leq K$ and any $M>0$, then $\|z^{K+1}-x^*\|^2 \leq \min\{  \frac{\mu^2}{4dH^2}, M  \}$. 
	\item[(ii)] If Assumption \ref{as:Qcontractioncomp-BL1} holds, ${\cal H}^K \leq \frac{A_{\rm M}\mu^2}{4N_{\rm B}R^2B_{\rm M}}$, $\|z^k-x^*\|^2 \leq \min\{  \frac{A_{\rm M}\mu^2}{4H^2 B_{\rm M}}, M  \}$ for $k\leq K$ and any $M>0$, then $\|z^{K+1} - x^*\|^2 \leq  \min\{  \frac{A_{\rm M}\mu^2}{4H^2 B_{\rm M}}, M  \}$. 
	\item[(iii)] If Assumption \ref{as:Cunbiasedcomp-BL1}(ii) holds, and $\|z^k-x^*\|^2 \leq \frac{M}{d^2M_2^2} $ for $k\leq K$ and any $M>0$, then ${\cal H}^{K} \leq M$. 
	\item[(iv)] If Assumption \ref{as:Ccontractioncomp-BL1} holds, ${\cal H}^K \leq M$, and $\|z^K-x^*\|^2 \leq \frac{AM}{BM_1^2}$ for any any $M>0$, then ${\cal H}^{K+1} \leq M$. 
	\end{itemize}	
\end{lemma}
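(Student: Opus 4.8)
The plan is to prove the four items separately, in each case combining the perturbed-Newton structure of Algorithm~\ref{alg:BL1} with Lemmas~\ref{lm:zQcomp} and \ref{lm:Hk-BL1} and with one structural observation: the change of basis is \emph{exact at $x^*$}, so the only error in the Hessian estimate measured at $x^*$ is the coefficient error $\cH^k$. Throughout I use that $f$ is $\mu$-strongly convex, hence $\nabla^2 f(x^*)\succeq\mu\mI$, and Assumption~\ref{as:BL1} (for $H$, $R$, $M_1$, $M_2$).

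For (i) and (ii) I would first record a single identity valid for both $\xi^k\in\{0,1\}$: unrolling the definitions of $g^k$ and $x^{k+1}$ in Algorithm~\ref{alg:BL1} gives $x^{k+1}=w^{k+1}-[\mH^k]_\mu^{-1}\nabla f(w^{k+1})$ in either branch (the $[\mH^k]_\mu(z^k-w^k)+\nabla f(w^k)$ form is designed precisely so that this holds). Writing $\nabla f(w^{k+1})-\nabla f(x^*)=\bJ_k(w^{k+1}-x^*)$ with $\bJ_k\eqdef\int_0^1\nabla^2 f(x^*+s(w^{k+1}-x^*))\,ds$, using $[\mH^k]_\mu\succeq\mu\mI$ and the $H$-Lipschitzness of $\nabla^2 f$, one gets the standard estimate $\|x^{k+1}-x^*\|\le\frac1\mu\bigl(\varepsilon_k+\tfrac H2\|w^{k+1}-x^*\|\bigr)\|w^{k+1}-x^*\|$ with $\varepsilon_k\eqdef\|[\mH^k]_\mu-\nabla^2 f(x^*)\|$. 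The key step is bounding $\varepsilon_k$: since $\nabla^2 f(x^*)$ is symmetric and $\succeq\mu\mI$ it lies in the convex set onto which $[\cdot]_\mu$ projects, so $\varepsilon_k\le\|\mH^k-\nabla^2 f(x^*)\|_{\rm F}$; and because $\mH_i^k-\nabla^2 f_i(x^*)=\sum_{jl}(\mL_i^k-\mL_i^*)_{jl}\mB_i^{jl}$ (as $\nabla^2 f_i(x^*)=\sum_{jl}(\mL_i^*)_{jl}\mB_i^{jl}$), expanding in the basis and using $\max_{jl}\|\mB_i^{jl}\|_{\rm F}\le R$ gives $\|\mH_i^k-\nabla^2 f_i(x^*)\|_{\rm F}^2\le N_{\rm B}R^2\|\mL_i^k-\mL_i^*\|_{\rm F}^2$ (the case split in \eqref{eq:N_B-BL1}: an equality-type bound for orthogonal bases, Cauchy--Schwarz over the $d^2$ terms otherwise). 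Averaging over $i$ yields $\varepsilon_k\le\sqrt{N_{\rm B}}R\sqrt{\cH^k}$; note that no Lipschitz-of-$h^i$ term enters, which is exactly why the neighbourhood conditions involve only $\cH^k$.

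To finish (i): Assumption~\ref{as:Qunbiasedcomp-BL1}(ii) together with $w^{k+1}\in\{z^k,w^k\}$ and $w^0=z^0=x^0$ shows by a short induction that $\|w^{k+1}-x^*\|^2\le\max\{\max_{s\le k}\|z^s-x^*\|^2,\ \|x^0-x^*\|^2\}$ surely and that $z^{K+1}$ is a coordinatewise convex combination of $x^0,\dots,x^{K+1}$, whence $\|z^{K+1}-x^*\|^2\le d\max_{t\le K+1}\|x^t-x^*\|^2$. Feeding the hypotheses $\cH^k\le\frac{\mu^2}{4dN_{\rm B}R^2}$ and $\|z^k-x^*\|^2\le\frac{\mu^2}{4dH^2}$ into the Newton estimate gives $\varepsilon_k+\tfrac H2\|w^{k+1}-x^*\|\le\tfrac{3\mu}{4\sqrt d}\le\tfrac{\mu}{\sqrt d}$, hence $\|x^{t}-x^*\|^2\le\tfrac1d\|w^{t}-x^*\|^2\le\tfrac1d\min\{\tfrac{\mu^2}{4dH^2},M\}$ for every $1\le t\le K+1$ (the case $t=0$ being the assumption on $\|x^0-x^*\|$), and multiplying by $d$ closes the bound on $\|z^{K+1}-x^*\|^2$. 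For (ii) the compressor $\cQ^k$ is deterministic with $\eta=1$, so Lemma~\ref{lm:zQcomp}(ii) gives surely $\|z^{K+1}-x^*\|^2\le(1-A_{\rm M})\|z^K-x^*\|^2+B_{\rm M}\|x^{K+1}-x^*\|^2$; using $\cH^K\le\frac{A_{\rm M}\mu^2}{4N_{\rm B}R^2 B_{\rm M}}$ and $\|z^k-x^*\|^2\le\frac{A_{\rm M}\mu^2}{4H^2 B_{\rm M}}$ in the Newton estimate gives $\|x^{K+1}-x^*\|^2\le\frac{A_{\rm M}}{B_{\rm M}}\|w^{K+1}-x^*\|^2$, so the right-hand side becomes a convex combination of $\|z^K-x^*\|^2$ and $\|w^{K+1}-x^*\|^2$, both of which are $\le\min\{\frac{A_{\rm M}\mu^2}{4H^2 B_{\rm M}},M\}$; hence that bound is preserved.

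Items (iii) and (iv) are shorter. For (iii), Assumption~\ref{as:Cunbiasedcomp-BL1}(ii) makes $(\mL_i^K)_{jl}$ a convex combination of $\{h^i(\nabla^2 f_i(z^t))_{jl}\}_{t\le K}$; subtracting $(\mL_i^*)_{jl}=h^i(\nabla^2 f_i(x^*))_{jl}$ and applying $\max_{jl}|h^i(\nabla^2 f_i(x))_{jl}-h^i(\nabla^2 f_i(y))_{jl}|\le M_2\|x-y\|$ gives $\|\mL_i^K-\mL_i^*\|_{\rm F}^2\le d^2 M_2^2\max_{t\le K}\|z^t-x^*\|^2\le M$, hence $\cH^K\le M$. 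For (iv), apply Lemma~\ref{lm:Hk-BL1}(ii) with $\mL=\mL_i^K$, $y=z^K$, $z=x^*$ (so $h^i(\nabla^2 f_i(z))=\mL_i^*$) and use that $\cC_i^k$ is deterministic to get $\|\mL_i^{K+1}-\mL_i^*\|_{\rm F}^2\le(1-A)\|\mL_i^K-\mL_i^*\|_{\rm F}^2+BM_1^2\|z^K-x^*\|^2$; averaging over $i$ and inserting $\cH^K\le M$ and $\|z^K-x^*\|^2\le\frac{AM}{BM_1^2}$ yields $\cH^{K+1}\le(1-A)M+AM=M$. The only genuinely delicate point is the branch-by-branch verification of the Newton identity and the careful accounting of which hypotheses are available for which indices $k\le K$ so that the $w$-recursion closes; everything else is bookkeeping.
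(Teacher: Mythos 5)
Your proof is correct and follows essentially the same route as the paper's: the same Newton-type error bound with projection non-expansiveness and the basis estimate $\|\mH_i^k-\nabla^2 f_i(x^*)\|_{\rm F}^2\le N_{\rm B}R^2\|\mL_i^k-\mL_i^*\|_{\rm F}^2$, the same convex-combination arguments for parts (i) and (iii), and the same deterministic applications of Lemma~\ref{lm:zQcomp}(ii) and Lemma~\ref{lm:Hk-BL1}(ii) for parts (ii) and (iv). The only cosmetic difference is that you unify the $\xi^k\in\{0,1\}$ branches through the identity $x^{k+1}=w^{k+1}-[\mH^k]_{\mu}^{-1}\nabla f(w^{k+1})$ (and use a triangle-inequality rather than Young's-inequality split), whereas the paper treats the two branches separately and reaches the same per-branch bound.
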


\begin{proof}

	(i) If $\xi^k=1$, from (\ref{eq:11-BL1}), (\ref{eq:22-BL1}), and (\ref{eq:44-BL1}), we have 
	\begin{align*}
		\|x^{k+1} - x^*\|^2 & \leq \frac{1}{\mu^2} \left(  \frac{H^2}{2} \|z^k-x^*\|^2 + 2N_{\rm B} R^2 {\cal H}^k  \right) \|z^k-x^*\|^2 \\ 
		& \leq \frac{1}{d} \|z^k-x^*\|^2 \\ 
		& \leq \min\left\{  \frac{\mu^2}{4d^2H^2}, \frac{M}{d}  \right\}, 
	\end{align*}
	for $0\leq k\leq K$. \\ 
	
	If $\xi^k=0$, since we also have $\|w^k - x^*\|^2 \leq \min\{  \frac{\mu^2}{4dH^2}, M  \}$, then from (\ref{eq:11-BL1}), (\ref{eq:33-BL1}), and (\ref{eq:44-BL1}), we can get the above inequality in the same way. \\ 
	Since $\|x^0-x^*\|^2 \leq \min\{  \frac{\mu^2}{4d^2H^2}, \frac{M}{d}  \}$, we know $\|x^k-x^*\|^2 \leq \min\{  \frac{\mu^2}{4d^2H^2}, \frac{M}{d}  \}$ for all $0\leq k\leq K+1$. Then from Assumption \ref{as:Qunbiasedcomp-BL1} (ii), we can get 
	\begin{align*}
		\|z^{K+1} - x^*\|^2 & \leq d \max_{j} |z^{K+1}_j - x^*_j|^2 \\ 
		& \leq d \max_{0\leq t \leq K+1} \|x^t - x^*\|^2 \\ 
		& \leq \min\left\{  \frac{\mu^2}{4dH^2}, M  \right\}. 
	\end{align*}
	
	(ii) First, from the update rule of $w^k$, we know $\|w^k - x^*\|^2 \leq \min\{  \frac{A_{\rm M}\mu^2}{4H^2 B_{\rm M}}, M  \}$ for $k\leq K$. If $\xi^K=1$, from (\ref{eq:11-BL1}), (\ref{eq:22-BL1}), and (\ref{eq:44-BL1}), we have 
	\begin{align*}
		\|x^{K+1} - x^*\|^2 & \leq \frac{1}{\mu^2} \left(  \frac{H^2}{2} \|z^K-x^*\|^2 + 2N_{\rm B} R^2 {\cal H}^K  \right) \|z^K-x^*\|^2 \\ 
		& \leq \left(  \frac{A_{\rm M}}{8B_{\rm M}} +   \frac{A_{\rm M}}{2B_{\rm M}} \right) \|z^{K}-x^*\|^2 \\ 
		& \leq \frac{A_{\rm M}}{B_{\rm M}} \min\left\{  \frac{A_{\rm M}\mu^2}{4H^2 B_{\rm M}}, M  \right\} . 
	\end{align*}
	If $\xi^K=0$, from $\|w^K - x^*\|^2 \leq \min\{  \frac{A_{\rm M}\mu^2}{4H^2 B_{\rm M}}, M  \}$ and (\ref{eq:33-BL1}), we can obtain the above inequality similarly. Then from Lemma \ref{lm:zQcomp} (ii), we arrive at 
	\begin{align*}
		\|z^{K+1} - z^*\|^2 & \leq (1-A_{\rm M}) \|z^K-x^*\|^2 + B_{\rm M}\|x^{K+1} - x^*\|^2 \\ 
		& \leq (1-A_{\rm M}) \min\left\{  \frac{A_{\rm M}\mu^2}{4H^2 B_{\rm M}}, M  \right\} + A_{\rm M} \min\left\{  \frac{A_{\rm M}\mu^2}{4H^2 B_{\rm M}}, M  \right\} \\ 
		& =  \min\left\{  \frac{A_{\rm M}\mu^2}{4H^2 B_{\rm M}}, M  \right\}. 
	\end{align*}
	(iii) From Assumption \ref{as:Cunbiasedcomp-BL1}(ii), we have 
	\begin{align*}
		{\cal H}^K & = \frac{1}{n} \sum_{i=1}^n \|\mL_i^K - \mL_i^*\|^2_{\rm F} \\ 
		& \leq  \frac{1}{n} \sum_{i=1}^n d^2 \max_{jl} \{ |(\mL_i^K)_{jl} - (\mL_i^*)_{jl}|^2 \} \\ 
		& \leq d^2 M_2^2 \max_{0\leq t \leq K} \|z^t-x^*\|^2 \\ 
		& \leq M. 
	\end{align*}
	
	(iv) From Assumption \ref{as:Ccontractioncomp-BL1} and Lemma \ref{lm:Hk-BL1} (ii), we have 
	\begin{align*}
		\|\mL_i^{K+1} - \mL_i^*\|^2_{\rm F} & \leq (1-A) \|\mL_i^K - \mL_i^*\|^2_{\rm F} + BM_1^2 \|z^K - x^*\|^2 \\ 
		& \leq  (1-A)  \|\mL_i^K - \mL_i^*\|^2_{\rm F} + AM, 
	\end{align*}
	which implies that 
	$$
	{\cal H}^{K+1} = \frac{1}{n} \sum_{i=1}^n \|\mL_i^{K+1} - \mL_i^*\|^2_{\rm F} \leq (1-A)M + AM \leq M. 
	$$
\end{proof}

\subsection{Proof of Theorem \ref{th:linear-BL1}}

First we have 
\begin{align}
	\|x^{k+1} - x^*\|^2 & = \|z^k - x^* - [\mH^k]_\mu^{-1} g^k \|^2 \nonumber \\ 
	& = \left\| [\mH^k]_\mu^{-1} \left(  [\mH^k]_\mu (z^k - x^*) - (g^k - \nabla f(x^*))  \right)   \right\|^2 \nonumber \\ 
	& \leq \frac{1}{\mu^2} \left\|   [\mH^k]_\mu (z^k - x^*) - (g^k - \nabla f(x^*))   \right\|^2, \label{eq:11-BL1}
\end{align}
where we use $\nabla f(x^*) = 0$ in the second equality, and $\|[\mH^k]_\mu^{-1}\| \leq \frac{1}{\mu}$ in the last inequality. 

If $\xi^k = 1$, then 
\begin{align}
	& \quad \left\|   [\mH^k]_\mu (z^k - x^*) - (g^k - \nabla f(x^*))   \right\|^2 \nonumber \\ 
	& = \left\|  \nabla f(z^k) - \nabla f(x^*) - \nabla^2 f(x^*) (z^k-x^*) + (\nabla^2 f(x^*) - [\mH^k]_\mu) (z^k-x^*)  \right\|^2 \nonumber \\ 
	& \leq 2\left\|  \nabla f(z^k) - \nabla f(x^*) - \nabla^2 f(x^*) (z^k-x^*) \right\|^2  + 2\left\| (\nabla^2 f(x^*) - [\mH^k]_\mu) (z^k-x^*)  \right\|^2 \nonumber \\ 
	& \leq \frac{H^2}{2} \|z^k - x^*\|^4 + 2\| [\mH^k]_\mu - \nabla^2 f(x^*)\|^2 \cdot \|z^k-x^*\|^2 \nonumber \\ 
	& \leq \frac{H^2}{2} \|z^k - x^*\|^4 + 2\| \mH^k - \nabla^2 f(x^*)\|_{\rm F}^2 \|z^k-x^*\|^2 \nonumber \\ 
	& = \frac{H^2}{2} \|z^k - x^*\|^4 + 2 \left\| \frac{1}{n} \mH_i^k - \frac{1}{n} \nabla^2 f_i(x^*) \right\|^2_{\rm F} \|z^k - x^*\|^2 \nonumber \\
	& \leq \frac{H^2}{2} \|z^k - x^*\|^4 +  \frac{2}{n} \sum_{i=1}^n \| \mH_i^k - \nabla^2 f_i(x^*) \|^2_{\rm F} \|z^k-x^*\|^2, \label{eq:22-BL1}
\end{align}
where in the second inequality, we use the Lipschitz continuity of the Hessian of $f$, and in the last inequality, we use the convexity of $\|\cdot\|^2_{\rm F}$. 

If $\xi^k = 0$, then 
\begin{align}
	& \quad \left\|   [\mH^k]_\mu (z^k - x^*) - (g^k - \nabla f(x^*))   \right\|^2 \nonumber \\ 
	& = \left\|  [\mH^k]_\mu(z^k-w^k) + \nabla f(w^k) - \nabla f(x^*) - [\mH^k]_\mu (z^k - x^*)  \right\|^2 \nonumber \\ 
	& = \left\|  [\mH^k]_\mu(x^* - w^k) +   \nabla f(w^k) - \nabla f(x^*)  \right\|^2 \nonumber \\ 
	& = \left\| \nabla f(w^k) - \nabla f(x^*) - \nabla^2 f(x^*) (w^k-x^*) + (\nabla^2 f(x^*) - [\mH^k]_\mu) (w^k-x^*)   \right\|^2 \nonumber \\ 
	& \leq \frac{H^2}{2}\|w^k-x^*\|^4 +  2\| \mH^k - \nabla^2 f(x^*)\|_{\rm F}^2 \|w^k-x^*\|^2 \nonumber \\ 
	& \leq \frac{H^2}{2}\|w^k-x^*\|^4 +  \frac{2}{n} \sum_{i=1}^n \| \mH_i^k - \nabla^2 f_i(x^*) \|^2_{\rm F} \|w^k-x^*\|^2. \label{eq:33-BL1}
\end{align}

From the above three inequalities, we can obtain 
\begin{align*}
	\mathbb{E}_k \|x^{k+1} - x^*\|^2 & \leq \frac{H^2p}{2\mu^2} \|z^k - x^*\|^4 + \frac{2p}{n \mu^2} \sum_{i=1}^n \| \mH_i^k - \nabla^2 f_i(x^*) \|^2_{\rm F} \|z^k-x^*\|^2 \\ 
	& \quad + \frac{H^2(1-p)}{2\mu^2}\|w^k-x^*\|^4 +  \frac{2(1-p)}{n \mu^2} \sum_{i=1}^n \| \mH_i^k - \nabla^2 f_i(x^*) \|^2_{\rm F} \|w^k-x^*\|^2. 
\end{align*}

From the definition of $(A_{\rm M}, B_{\rm M})$ and Lemma \ref{lm:zQcomp}, by choosing $z=z^k$, $x=x^{k+1}$, and $y=x^*$ in Lemma \ref{lm:zQcomp}, we can obtain 
\begin{align*}
	\mathbb{E}_k\|z^{k+1} - x^*\|^2 & = \mathbb{E}_k\|z^k + \eta {\cal Q}^k(x^{k+1} - z^k) - x^*\|^2 \\ 
	& \leq (1 - A_{\rm M}) \|z^k - x^*\|^2 + B_{\rm M} \mathbb{E}_k\|x^{k+1} - x^*\|^2. 
\end{align*}

Combining the above two inequalities, we arrive at 
\begin{align*}
	\mathbb{E}_k\|z^{k+1} - x^*\|^2 & \leq (1 - A_{\rm M}) \|z^k - x^*\|^2 + \frac{B_{\rm M}p}{\mu^2} \left(  \frac{H^2}{2}\|z^k-x^*\|^2 + \frac{2}{n} \sum_{i=1}^n \| \mH_i^k - \nabla^2 f_i(x^*) \|^2_{\rm F}  \right) \|z^k-x^*\|^2 \\ 
	& \quad +  \frac{B_{\rm M}(1-p)}{\mu^2} \left(  \frac{H^2}{2}\|w^k-x^*\|^2 + \frac{2}{n} \sum_{i=1}^n \| \mH_i^k - \nabla^2 f_i(x^*) \|^2_{\rm F}   \right) \|w^k-x^*\|^2. 
\end{align*}

From the update rule of $\mH_i^k$, we know $\mH_i^k = \sum_{jl} (\mL_i^k)_{jl} \mB_i^{jl}$. Denote $\mL_i^* = h^i(\nabla^2 f_i(x^*))$. Then we have 
\begin{align}
	\|\mH_i^k - \nabla^2 f_i(x^*) \|^2_{\rm F}  = \left\| \sum_{jl} (\mL_i^k - \mL_i^*)_{jl} \mB_i^{jl} \right\|^2_{\rm F} & \leq N_{\rm B} \sum_{jl} \| (\mL_i^k - \mL_i^*)_{jl} \mB_i^{jl} \|^2_{\rm F} \nonumber \\ 
	& \leq N_{\rm B} R^2 \|\mL_i^k - \mL_i^*\|^2_{\rm F}. \label{eq:44-BL1}
\end{align}

Define ${\cal H}^k \eqdef \frac{1}{n} \sum_{i=1}^n \|\mL_i^k - \mL_i^*\|^2_{\rm F}$. Then we have 
\begin{align*}
	\mathbb{E}_k\|z^{k+1} - x^*\|^2 & \leq (1 - A_{\rm M}) \|z^k - x^*\|^2 + \frac{B_{\rm M}p}{\mu^2} \left(  \frac{H^2}{2}\|z^k-x^*\|^2 +2N_{\rm B} R^2 {\cal H}^k  \right) \|z^k-x^*\|^2 \\ 
	& \quad +  \frac{B_{\rm M}(1-p)}{\mu^2} \left(  \frac{H^2}{2}\|w^k-x^*\|^2 + 2N_{\rm B} R^2 {\cal H}^k  \right) \|w^k-x^*\|^2. 
\end{align*}

Assume $\|z^k-x^*\|^2 \leq \frac{A_{\rm M}\mu^2}{4H^2B_{\rm M}}$ and ${\cal H}^k \leq \frac{A_{\rm M}\mu^2}{16 N_{\rm B}R^2B_{\rm M}}$ for $k\geq 0$. Then from the update rule of $w^k$, we know $\|w^k-x^*\|^2 \leq \frac{A_{\rm M}\mu^2}{4H^2B_{\rm M}}$ for $k\geq 0$. Thus we have 
\begin{equation}\label{eq:zk+1-BL1}
	\mathbb{E}_k\|z^{k+1} - x^*\|^2  \leq \left(1 - A_{\rm M} + \frac{A_{\rm M}p}{4} \right) \|z^k - x^*\|^2 + \frac{A_{\rm M}(1-p)}{4}\|w^k-x^*\|^2. 
\end{equation}

From the update rule of $w^k$, we have 
\begin{equation}\label{eq:wk+1-BL1}
	\mathbb{E}_k\|w^{k+1} - x^*\|^2 = p\|z^k-x^*\|^2 + (1-p) \|w^k-x^*\|^2. 
\end{equation}

Define $\Phi^k_1 \eqdef \|z^k-x^*\|^2 + \frac{A_{\rm M}(1-p)}{2p}\|w^k-x^*\|^2$. Then we can get 
\begin{eqnarray*}
	\mathbb{E}_k [\Phi^{k+1}_1] & = & \mathbb{E}_k \|z^{k+1} - x^*\|^2 +  \frac{A_{\rm M}(1-p)}{2p} \mathbb{E}_k\|w^{k+1} - x^*\|^2 \\ 
	& \overset{(\ref{eq:zk+1-BL1})}{\leq} & \left(1 - A_{\rm M} + \frac{A_{\rm M}p}{4} \right) \|z^k - x^*\|^2 + \frac{A_{\rm M}(1-p)}{4}\|w^k-x^*\|^2 +  \frac{A_{\rm M}(1-p)}{2p} \mathbb{E}_k\|w^{k+1} - x^*\|^2 \\ 
	& \overset{(\ref{eq:wk+1-BL1})}{\leq} & \left(  1 - \frac{A_{\rm M}}{2}  \right) \|z^k-x^*\|^2 + \left(  1 - \frac{p}{2}  \right)  \frac{A_{\rm M}(1-p)}{2p} \|w^k-x^*\|^2 \\ 
	& \leq & \left(  1 - \frac{\min\{  A_{\rm M}, p  \}}{2}  \right) \Phi^k_1. 
\end{eqnarray*}

By applying the tower property, we have 
$$
\mathbb{E} [\Phi^{k+1}_1]  \leq \left(  1 - \frac{\min\{  A_{\rm M}, p  \}}{2}  \right) \mathbb{E}[\Phi^k_1]. 
$$
Unrolling the recursion, we can get the result.

\subsection{Proof of Theorem \ref{th:superlinear-BL1}}

Since $\xi^k \equiv 1$, $\eta=1$, and ${\cal Q}^k(x) \equiv x$ for any $x\in \R^d$, it is easy to see that $z^k \equiv x^k$ for $k\geq 0$. In this case, we can view ${\cal Q}^k$ as an unbiased compressor with $\omega_{\rm M}=0$ or a contraction compressor with $\delta_{\rm M}=1$. Then from (\ref{eq:zk+1-BL1}), we have 
$$
\mathbb{E}_k \|x^{k+1}-x^*\|^2 \leq \left(  1 - \frac{A_{\rm M}}{2}  \right) \|x^k-x^*\|^2. 
$$

From Lemma \ref{lm:Hk-BL1}, we can obtain 
$$
\mathbb{E}_k[{\cal H}^{k+1}] \leq  (1-A) {\cal H}^k + BM_1^2 \|x^k-x^*\|^2. 
$$

Thus, 
\begin{align*}
	\mathbb{E}_k[\Phi_2^{k+1}] & = \mathbb{E}_k[{\cal H}^{k+1}] + \frac{4BM_1^2}{A_{\rm M}}\mathbb{E}_k\|x^{k+1} - x^*\|^2 \\ 
	& \leq (1-A) {\cal H}^k + BM_1^2 \|x^k-x^*\|^2 +  \frac{4BM_1^2}{A_{\rm M}}\left(  1 - \frac{A_{\rm M}}{2}  \right) \|x^k-x^*\|^2 \\ 
	& \leq \left(  1 - \frac{\min\{  4A, A_{\rm M}  \} }{4} \right) \Phi_2^k. 
\end{align*}

By applying the tower property, we have $\mathbb{E}[\Phi_2^{k+1}] \leq \theta_1 \mathbb{E}[\Phi_2^k]$. Unrolling the recursion, we have $\mathbb{E}[\Phi_2^k] \leq \theta_1^k \Phi_2^0$. 

Then we further have $\mathbb{E}[{\cal H}^k] \leq \theta_1^k \Phi_2^0$ and $\mathbb{E}\|x^k-x^*\|^2 \leq \frac{A_{\rm M}}{4BM_1^2} \theta_1^k \Phi_2^0$. From $z^k\equiv x^k$, (\ref{eq:11-BL1}), and (\ref{eq:22-BL1}), we can get 
$$
\|x^{k+1} - x^*\|^2 \leq \frac{1}{\mu^2} \left(  \frac{H^2}{2} \|x^k-x^*\|^2 + 2N_{\rm B}R^2 {\cal H}^k  \right) \|x^k-x^*\|^2. 
$$
Assume $x^k\neq x^*$ for all $k\geq 0$. Then we have 
$$
\frac{\|x^{k+1}-x^*\|^2}{\|x^k-x^*\|^2} \leq \frac{1}{\mu^2} \left(  \frac{H^2}{2} \|x^k-x^*\|^2 + 2N_{\rm B}R^2 {\cal H}^k  \right), 
$$
and by taking expectation, we arrive at 
\begin{align*}
	\mathbb{E} \left[   \frac{\|x^{k+1}-x^*\|^2}{\|x^k-x^*\|^2} \right] & \leq \frac{H^2}{2\mu^2} \mathbb{E}\|x^k-x^*\|^2 + \frac{2N_{\rm B}R^2}{\mu^2} \mathbb{E}[{\cal H}^k] \\ 
	& \leq \theta_1^k \left(  \frac{A_{\rm M}H^2}{8BM_1^2\mu^2} + \frac{2N_{\rm B}R^2}{\mu^2}  \right) \Phi_2^0. 
\end{align*}

\subsection{Proof of Theorem \ref{th:nbor-BL1}}

(i) Noticed that under Assumption \ref{as:Qunbiasedcomp-BL1}, we have $A_{\rm M} = B_{\rm M} = \eta$. We prove this by mathematical induction. First, since $z^0=x^0$, we know $\|z^0-x^*\|^2 \leq \min\{ \frac{\mu^2}{4dH^2},  \frac{\mu^2}{16d^3 N_{\rm B}R^2M_2^2}  \}$. Then from Lemma \ref{lm:Hkbound-BL1} (iii), we have ${\cal H}^0 \leq \frac{\mu^2}{16dN_{\rm B}R^2 }$. Next, assume 
$$
\|z^k-x^*\|^2 \leq \min\left\{ \frac{\mu^2}{4dH^2},  \frac{\mu^2}{16d^3 N_{\rm B}R^2M_2^2}  \right\} \quad  {\rm and} \quad  {\cal H}^k \leq \frac{\mu^2}{16dN_{\rm B}R^2 }, 
$$
for $k\leq K$. 
By choosing $M = \frac{\mu^2}{16d^3 N_{\rm B}R^2M_2^2}$ in Lemma \ref{lm:Hkbound-BL1} (i), we have $\|z^{K+1}-x^*\|^2 \leq \min\left\{ \frac{\mu^2}{4dH^2},  \frac{\mu^2}{16d^3 N_{\rm B}R^2M_2^2}  \right\} $. By further using Lemma \ref{lm:Hkbound-BL1} (iii), we can get ${\cal H}^{K+1} \leq \frac{\mu^2}{16dN_{\rm B}R^2 }$. \\ 

(ii) We prove the result by induction. Assume $\|z^k - x^*\|^2 \leq \min\left\{  \frac{A_{\rm M}\mu^2}{4H^2 B_{\rm M}},  \frac{AA_{\rm M}\mu^2}{16N_{\rm B}R^2 B_{\rm M}BM_1^2}  \right\}$ and ${\cal H}^k \leq \frac{A_{\rm M}\mu^2}{16N_{\rm B}R^2B_{\rm M}}$ for $k\leq K$. Then by Lemma \ref{lm:Hkbound-BL1} (iv), we have ${\cal H}^{K+1} \leq \frac{A_{\rm M}\mu^2}{16N_{\rm B}R^2B_{\rm M}}$. Moreover, by Lemma \ref{lm:Hkbound-BL1} (ii), we have $\|z^{K+1} - x^*\|^2 \leq \min\left\{  \frac{A_{\rm M}\mu^2}{4H^2 B_{\rm M}},  \frac{AA_{\rm M}\mu^2}{16N_{\rm B}R^2 B_{\rm M}BM_1^2}  \right\}$.

\newpage 
\section{Proofs for \algname{BL2}} 

We denote $\mathbb{E}_k[\cdot]$ as the conditonal expectation on $z_i^k$, $w_i^k$, $l_i^k$, $\mL_i^k$, and $\mH_i^k$.

\subsection{A lemma}

\begin{lemma}\label{lm:Hkbound-BL2}
	%Assume $\max_{jl} \{ |h^i(\nabla^2 f_i(y))_{jl} - h^i(\nabla^2 f_i(z))_{jl}| \} \leq M_2 \|y-z\|$ for any $y$, $z \in \R^d$. \\ 
\begin{itemize}	We consider four cases:	
	\item[(i)] If Assumption \ref{as:Qunbiasedcomp-BL1} (ii) holds, $\|x^0-x^*\|^2 \leq \min\{  \frac{\mu^2}{d^2(6H^2+24H_1^2)}, \frac{M}{d}  \}$,  $\|z_i^k-x^*\|^2 \leq \min\{  \frac{\mu^2}{d(6H^2+24H_1^2)}, M \}$, and ${\cal H}^k \leq \frac{\mu^2}{24dN_{\rm B}R^2}$ for $k\leq K$, $i\in [n]$, and any $M>0$, then $\|z_i^{K+1}-x^*\|^2 \leq \min\{  \frac{\mu^2}{d(6H^2+24H_1^2)}, M  \}$ for $i\in [n]$.  
	\item[(ii)] If Assumption \ref{as:Qcontractioncomp-BL1} holds, ${\cal H}^K \leq \frac{A_{\rm M}\mu^2}{24N_{\rm B}R^2B_{\rm M}}$, $\|z_i^k-x^*\|^2 \leq \min\{  \frac{A_{\rm M}\mu^2}{B_{\rm M}(6H^2 + 24H_1^2)}, M  \}$ for $k\leq K$, $i\in [n]$, and any $M>0$, then $\|z_i^{K+1} - x^*\|^2 \leq  \min\{  \frac{A_{\rm M}\mu^2}{B_{\rm M}(6H^2+24H_1^2)}, M  \}$ for $i\in [n]$. 
	\item[(iii)] If Assumption \ref{as:Cunbiasedcomp-BL1}(ii) holds, and $\|z_i^k-x^*\|^2 \leq \frac{M}{d^2M_2^2} $ for $k\leq K$, $i\in [n]$, and any $M>0$, then ${\cal H}^{K} \leq M$. 
	\item[(iv)] If Assumption \ref{as:Ccontractioncomp-BL1} holds, $\|\mL_i^K - \mL_i^*\|_{\rm F}^2 \leq M$, and $\|z_i^K-x^*\|^2 \leq \frac{AM}{BM_1^2}$ for $i\in [n]$ and any $M>0$, then $\|\mL_i^{K+1} - \mL_i^*\|_{\rm F}^2 \leq M$ for $i\in[n]$. 
\end{itemize}	
\end{lemma}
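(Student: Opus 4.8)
The plan is to run the same ``self-improving neighborhood'' induction as in Lemma~\ref{lm:Hkbound-BL1}, now for Algorithm~\ref{alg:BL2}. Two things change. First, because of partial participation, for $i\notin S^k$ the quantities $z_i^{k+1}, w_i^{k+1}, \mL_i^{k+1}$ are unchanged, so every claim is trivial for such $i$ and all the work is for $i\in S^k$. Second, the server uses the Hessian estimator $[\mH^k]_s+l^k\mI$ with $l_i^k=\|[\mH_i^k]_s-\nabla^2f_i(z_i^k)\|_{\rm F}$; on the one hand, by Lemma~\ref{lm:ABineq}(i) and $\mu$-strong convexity of each $f_i$ (assumed here, per Assumption~\ref{as:BL1}) we get $[\mH_i^k]_s+l_i^k\mI\succeq\nabla^2f_i(z_i^k)\succeq\mu\mI$, hence $\|([\mH^k]_s+l^k\mI)^{-1}\|\le1/\mu$ with no projection; on the other hand the presence of $l_i^k$ introduces an extra $H_1$-dependence, which is exactly why the radii in the statement involve $6H^2+24H_1^2$ rather than $4H^2$.

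The one substantial step is the \algname{BL2} counterpart of~(\ref{eq:11-BL1})--(\ref{eq:44-BL1}): from $x^{k+1}=([\mH^k]_s+l^k\mI)^{-1}g^k$ and the key relation~(\ref{eq:gik-BL2}) (so $g^k=\tfrac{1}{n}\sum_i([\mH_i^k]_s+l_i^k\mI)w_i^k-\tfrac{1}{n}\sum_i\nabla f_i(w_i^k)$, with $\tfrac{1}{n}\sum_i\nabla f_i(x^*)=0$), together with Hessian-Lipschitzness (Assumption~\ref{as:BL1}), the bound $l_i^k\le\sqrt{N_{\rm B}}\,R\,\|\mL_i^k-\mL_i^*\|_{\rm F}+H_1\|z_i^k-x^*\|$, and~(\ref{eq:44-BL1}), one derives an estimate of the shape
$$
\|x^{k+1}-x^*\|^2\ \le\ \frac{1}{\mu^2}\Bigl(\tfrac{3H^2+12H_1^2}{2}\,\rho^k+6N_{\rm B}R^2\,{\cal H}^k\Bigr)\rho^k,
$$
where $\rho^k$ is a convex combination of $\tfrac{1}{n}\sum_i\|z_i^k-x^*\|^2$ and $\tfrac{1}{n}\sum_i\|w_i^k-x^*\|^2$, the latter being $\le\max_{t\le k}{\cal Z}^t$ since each $w_i^k$ equals some $z_i^t$ with $t\le k$. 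Feeding in the hypotheses, the bracket is $\le\tfrac{\mu^2}{2d}$ in case~(i) (where $A_{\rm M}=B_{\rm M}=\eta$) and $\le\tfrac{A_{\rm M}}{B_{\rm M}}\cdot(\text{const}<1)\cdot\mu^2$ in case~(ii). For~(i) this gives $\|x^{k+1}-x^*\|^2\le\tfrac{1}{d}\rho^k$ for $k\le K$; since $z_i^0=x^0$ and $\|x^0-x^*\|^2$ is small, $\|x^t-x^*\|^2\le\tfrac{1}{d}\cdot(\text{radius})$ for all $t\le K+1$, and Assumption~\ref{as:Qunbiasedcomp-BL1}(ii) (each $(z_i^{K+1})_j$ a convex combination of $\{(x^t)_j\}_{t\le K+1}$) yields $\|z_i^{K+1}-x^*\|^2\le d\max_{t\le K+1}\|x^t-x^*\|^2\le$ radius. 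For~(ii), the determinism of $\cQ_i^k$ (Assumption~\ref{as:Qcontractioncomp-BL1}(ii)) turns Lemma~\ref{lm:zQcomp}(ii) into the pathwise bound $\|z_i^{K+1}-x^*\|^2\le(1-A_{\rm M})\|z_i^K-x^*\|^2+B_{\rm M}\|x^{K+1}-x^*\|^2$, whose right side the bracket estimate keeps $\le$ the radius; for $i\notin S^K$ it is immediate.

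Parts~(iii) and~(iv) are essentially the \algname{BL1} arguments verbatim. For~(iii), Assumption~\ref{as:Cunbiasedcomp-BL1}(ii) gives $|(\mL_i^K)_{jl}-(\mL_i^*)_{jl}|\le M_2\max_{t\le K}\|z_i^t-x^*\|$, hence $\|\mL_i^K-\mL_i^*\|_{\rm F}^2\le d^2M_2^2\max_{t\le K}\|z_i^t-x^*\|^2\le M$ and ${\cal H}^K\le M$. For~(iv), Lemma~\ref{lm:Hk-BL1}(ii) (applied with the $z$-iterate that the shift $\mL_i$ tracks in Algorithm~\ref{alg:BL2}) together with the determinism of $\cC_i^k$ (Assumption~\ref{as:Ccontractioncomp-BL1}) give, for $i\in S^K$, $\|\mL_i^{K+1}-\mL_i^*\|_{\rm F}^2\le(1-A)\|\mL_i^K-\mL_i^*\|_{\rm F}^2+BM_1^2\|z_i^K-x^*\|^2\le(1-A)M+AM=M$, while for $i\notin S^K$ the shift is frozen.

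The main obstacle is the displayed one-step estimate: since $l^k=\tfrac{1}{n}\sum_il_i^k$ sits inside a matrix inverse, ${\cal H}^k$ and $\|z_i^k-x^*\|^2$ are coupled, and each $\|z_i^k-x^*\|$ enters both through the Hessian-Lipschitz remainder and through $l_i^k$; the constants in the radii must be tuned so that the resulting recursion is genuinely self-improving. Everything past that --- the participating/non-participating case split and verifying that the snapshot iterates $w_i^k$ inherit the neighborhood bounds --- is routine bookkeeping.
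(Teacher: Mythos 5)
Your proposal is correct and follows essentially the same route as the paper: the one-step bound on $\|x^{k+1}-x^*\|^2$ in terms of ${\cal W}^k$, ${\cal Z}^k$, ${\cal H}^k$ (the paper's inequality~(\ref{eq:xk+1-BL2}), derived exactly as you describe via the key relation~(\ref{eq:gik-BL2}), the bound on $l_i^k$, Lemma~\ref{lm:ABineq}, and~(\ref{eq:44-BL1})), combined for part~(i) with the convex-combination property of Assumption~\ref{as:Qunbiasedcomp-BL1}(ii) and the crude estimate $\|v\|^2\le d\max_j|v_j|^2$, for part~(ii) with the deterministic pathwise form of Lemma~\ref{lm:zQcomp}(ii), and for parts~(iii)--(iv) with the verbatim \algname{BL1} arguments, with the participating/non-participating split handled trivially. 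Your displayed constants differ from the paper's by harmless factors, but the radii in the statement are exactly tuned so that the bracket is at most $\nicefrac{1}{d}$ (resp.\ $\nicefrac{A_{\rm M}}{B_{\rm M}}$), which is all the induction needs.
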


\begin{proof}
	
	(i) First, from the update rule of $w_i^k$, we know ${\cal Z}^k \leq \min\{  \frac{\mu^2}{d(6H^2+24H_1^2)}, M \}$ and ${\cal W}^k \leq \min\{  \frac{\mu^2}{d(6H^2+24H_1^2)}, M  \}$ for $k\leq K$. Then from (\ref{eq:xk+1-BL2}), we have 
	\begin{align*}
		\|x^{k+1} - x^*\|^2 & \leq \frac{3H^2}{4\mu^2}({\cal W}^k)^2  + \frac{12N_{\rm B}R^2}{\mu^2} {\cal H}^k {\cal W}^k +  \frac{3H_1^2}{\mu^2} {\cal Z}^k {\cal W}^k \\ 
		& \leq \frac{1}{d} {\cal W}^k \\ 
		& \leq \min \left\{  \frac{\mu^2}{d^2(6H^2+24H_1^2)}, \frac{M}{d}  \right\}, 
	\end{align*}
	for $0\leq k\leq K$. \\ 
	Since $\|x^0-x^*\|^2 \leq \min \{  \frac{\mu^2}{d^2(6H^2+24H_1^2)}, \frac{M}{d}  \}$, we know $\|x^k-x^*\|^2 \leq \min \{  \frac{\mu^2}{d^2(6H^2+24H_1^2)}, \frac{M}{d} \}$ for all $0\leq k\leq K+1$. Then for $i\in S^k$, from Assumption \ref{as:Qunbiasedcomp-BL1} (ii), we can get 
	\begin{align*}
		\|z_i^{K+1} - x^*\|^2 & \leq d \max_{j} |(z^{K+1}_i)_j - x^*_j|^2 \\ 
		& \leq d \max_{0\leq t \leq K+1} \|x^t - x^*\|^2 \\ 
		& \leq \min \left\{  \frac{\mu^2}{d(6H^2+24H_1^2)},  M  \right\}. 
	\end{align*}
	For $i \notin S^k$, we have 
	$$
	\|z_i^{K+1} - x^*\|^2 = \|z_i^{K} - x^*\|^2 \leq \min \left\{  \frac{\mu^2}{d(6H^2+24H_1^2)},  M  \right\}. 
	$$
	
	(ii) First, from the update rule of $w^k$, we know ${\cal Z}^k \leq \min\{  \frac{A_{\rm M}\mu^2}{B_{\rm M}(6H^2 + 24H_1^2)}, M  \}$ and ${\cal W}^k \leq \min\{  \frac{A_{\rm M}\mu^2}{B_{\rm M}(6H^2 + 24H_1^2)}, M  \}$ for $k\leq K$. Then from (\ref{eq:xk+1-BL2}), we have 
	\begin{align*}
		\|x^{K+1} - x^*\|^2 & \leq \frac{3H^2}{4\mu^2}({\cal W}^K)^2  + \frac{12N_{\rm B}R^2}{\mu^2} {\cal H}^K {\cal W}^K +  \frac{3H_1^2}{\mu^2} {\cal Z}^K {\cal W}^K \\ 
		& \leq \frac{A_{\rm M}}{B_{\rm M}} {\cal W}^K \\ 
		& \leq \frac{A_{\rm M}}{B_{\rm M}} \min \left\{  \frac{A_{\rm M}\mu^2}{B_{\rm M}(6H^2 + 24H_1^2)}, M  \right\} . 
	\end{align*}
	Then for $i\in S^k$, from Lemma \ref{lm:zQcomp} (ii), we arrive at 
	\begin{align*}
		\|z_i^{K+1} - z^*\|^2 & \leq (1-A_{\rm M}) \|z_i^K-x^*\|^2 + B_{\rm M}\|x^{K+1} - x^*\|^2 \\ 
		& \leq (1-A_{\rm M}) \min \left\{  \frac{A_{\rm M}\mu^2}{B_{\rm M}(6H^2 + 24H_1^2)}, M  \right\} + A_{\rm M} \min \left\{  \frac{A_{\rm M}\mu^2}{B_{\rm M}(6H^2 + 24H_1^2)}, M  \right\} \\ 
		& =  \min \left\{  \frac{A_{\rm M}\mu^2}{B_{\rm M}(6H^2 + 24H_1^2)}, M  \right\}. 
	\end{align*}
	For $i \notin S^k$, we have 
	$$
	\|z_i^{K+1} - x^*\|^2 = \|z_i^{K} - x^*\|^2 \leq \min \left\{  \frac{A_{\rm M}\mu^2}{B_{\rm M}(6H^2 + 24H_1^2)}, M  \right\}. 
	$$
	
	(iii) From Assumption \ref{as:Cunbiasedcomp-BL1}(ii), we have 
	\begin{align*}
		{\cal H}^K & = \frac{1}{n} \sum_{i=1}^n \|\mL_i^K - \mL_i^*\|^2_{\rm F} \\ 
		& \leq  \frac{1}{n} \sum_{i=1}^n d^2 \max_{jl} \{ |(\mL_i^K)_{jl} - (\mL_i^*)_{jl}|^2 \} \\ 
		& \leq d^2 M_2^2 \max_{i\in [n], 0\leq t \leq K} \|z_i^t-x^*\|^2 \\ 
		& \leq M. 
	\end{align*}
	
	(iv) For $i\in S^k$, from Assumption \ref{as:Ccontractioncomp-BL1} and Lemma \ref{lm:Hk-BL1} (ii), we have 
	\begin{align*}
		\|\mL_i^{K+1} - \mL_i^*\|^2_{\rm F} & \leq (1-A) \|\mL_i^K - \mL_i^*\|^2_{\rm F} + BM_1^2 \|z_i^K - x^*\|^2 \\ 
		& \leq  (1-A) M + AM\\ 
		& = M. 
	\end{align*}
	For $i \notin S^k$, we also have 
	$$
	\|\mL_i^{K+1} - \mL_i^*\|^2_{\rm F}  =  \|\mL_i^K - \mL_i^*\|^2_{\rm F} \leq M. 
	$$
\end{proof}

\subsection{Proof of Theorem \ref{th:BL2}}

First, from $[\mH_i^k]_s+l_i^k\mI \succeq \nabla^2 f_i(z_i^k)\succeq \mu \mI$, we know $[\mH^k]_s + l^k\mI = \frac{1}{n} \sum_{i=1}^n ([\mH_i^k]_s + l_i^k\mI) \succeq \mu\mI$. Then we have 
\begin{align*}
	\|x^{k+1} - x^*\| & = \left\| \left([\mH^k]_s + l^k\mI \right)^{-1} \left( g^k -  \left([\mH^k]_s + l^k\mI \right)x^* + \nabla f(x^*)  \right)  \right\|\\ 
	& \leq \frac{1}{\mu} \left\|  \frac{1}{n} \sum_{i=1}^n g_i^{k} - \frac{1}{n} \sum_{i=1}^n ([\mH_i^k]_s + l_i^k\mI)x^* + \frac{1}{n} \sum_{i=1}^n \nabla f_i(x^*)  \right\|\\ 
	& \leq \frac{1}{n\mu} \sum_{i=1}^n \| g_i^k - ([\mH_i^k]_s+l_i^k)x^* + \nabla f_i(x^*) \| \\ 
	& = \frac{1}{n\mu} \sum_{i=1}^n \| ([\mH_i^k]_s + l_i^k\mI) w_i^k - \nabla f_i(w_i^k) - ([\mH_i^k]_s+l_i^k\mI)x^* + \nabla f_i(x^*) \| \\ 
	& \leq \frac{1}{n\mu} \sum_{i=1}^n \|\nabla f_i(w_i^k)-\nabla f_i(x^*) - \nabla^2 f_i(x^*) (w_i^k-x^*)\| \\ 
	& \quad +  \frac{1}{n\mu} \sum_{i=1}^n \|([\mH_i^k]_s + l_i^k\mI - \nabla^2 f_i(x^*)) (w_i^k-x^*)\|, 
\end{align*}
where we use $\nabla f(x^*)=0$ in the first equality and $g_i^k = ([\mH_i^k]_s + l_i^k\mI) w_i^k - \nabla f_i(w_i^k)$ in the second equality. Since $\|\nabla^2 f_i(x) - \nabla^2 f_i(y)\| \leq H\|x-y\|$ for any $x, y\in \R^d$, we further have 
\begin{eqnarray*}
	\|x^{k+1} - x^*\| &\leq& \frac{H}{2n\mu} \sum_{i=1}^n \|w_i^k-x^*\|^2 + \frac{1}{n\mu} \sum_{i=1}^n \|([\mH_i^k]_s + l_i^k\mI - \nabla^2 f_i(x^*))\| \cdot (w_i^k-x^*)\| \\ 
	&\leq& \frac{H}{2\mu} {\cal W}^k + \frac{1}{n\mu} \sum_{i=1}^n \left(  \|[\mH_i^k]_s - \nabla^2 f_i(x^*)\|_{\rm F} + l_i^k  \right) \|w_i^k-x^*\|,
\end{eqnarray*}
where ${\cal W}^k \eqdef \frac{1}{n} \sum_{i=1}^n \|w_i^k-x^*\|^2$. 

Since $\|\nabla^2 f_i(x) - \nabla^2 f_i(y)\|_{\rm F} \leq H_1\|x-y\|$ for any $x, y\in \R^d$, we can get 
\begin{align*}
	l_i^k & = \|[\mH_i^k]_s-\nabla^2 f_i(z_i^k)\|_{\rm F} \\ 
	& \leq \|[\mH_i^k]_s - \nabla^2 f_i(x^*)\|_{\rm F} + \|\nabla^2 f_i(z_i^k)- \nabla^2 f_i(x^*)\|_{\rm F} \\ 
	& \leq \|[\mH_i^k]_s - \nabla^2 f_i(x^*)\|_{\rm F} + H_1 \|z_i^k-x^*\|. 
\end{align*}

Thus, 
\begin{eqnarray*}
	\|x^{k+1} - x^*\| &\leq& \frac{H}{2\mu} {\cal W}^k + \frac{1}{n\mu} \sum_{i=1}^n \left(  \|[\mH_i^k]_s - \nabla^2 f_i(x^*)\| + \|[\mH_i^k]_s - \nabla^2 f_i(x^*)\|_{\rm F} + H_1 \|z_i^k-x^*\|  \right) \|w_i^k-x^*\| \\ 
	&\leq&  \frac{H}{2\mu} {\cal W}^k + \frac{2}{n\mu} \sum_{i=1}^n \|[\mH_i^k]_s - \nabla^2 f_i(x^*)\|_{\rm F}  \|w_i^k-x^*\| + \frac{H_1}{n\mu} \sum_{i=1}^n \|z_i^k-x^*\| \|w_i^k-x^*\| \\ 
	&\overset{Lemma~ \ref{lm:ABineq}}{\leq}& \frac{H}{2\mu} {\cal W}^k + \frac{2}{n\mu} \sum_{i=1}^n \|\mH_i^k - \nabla^2 f_i(x^*)\|_{\rm F}  \|w_i^k-x^*\| + \frac{H_1}{n\mu} \sum_{i=1}^n \|z_i^k-x^*\| \|w_i^k-x^*\|    \\
	&\leq& \frac{H}{2\mu} {\cal W}^k  + \frac{2}{n\mu} \left(  \sum_{i=1}^n \|\mH_i^k-\nabla^2 f_i(x^*)\|_{\rm F}^2  \right)^{\frac{1}{2}} \left(  n {\cal W}^k  \right)^{\frac{1}{2}} + \frac{H_1}{n\mu} \left(  n {\cal Z}^k  \right)^{\frac{1}{2}} \left(  n {\cal W}^k  \right)^{\frac{1}{2}}, 
\end{eqnarray*}
where we use the Cauchy-Schwarz inequality in the last inequality and ${\cal Z}^k \eqdef \frac{1}{n} \sum_{i=1}^n \|z_i^k-x^*\|^2$. Since $\mH_i^k = \sum_{jl} (\mL_i^k)_{jl} \mB_i^{jl}$, same as (\ref{eq:44-BL1}), we have 
\begin{equation}\label{eq:HikD-BL2}
	\|\mH_i^k - \nabla^2 f_i(x^*)\|_{\rm F}^2 \leq N_{\rm B} R^2 \|\mL_i^k - \mL_i^*\|^2_{\rm F}, 
\end{equation}
where $\mL_i^* = h^i(\nabla^2 f_i(x^*))$ and $N_{\rm B}$ is defined in \ref{eq:N_B-BL1}. Then from the convexity of $\|\cdot\|^2$, we further bound $\|x^{k+1}-x^*\|^2$ as 
\begin{align}
	\|x^{k+1}-x^*\|^2 & \leq \frac{3H^2}{4\mu^2}({\cal W}^k)^2 + \frac{12{\cal W}^k}{n\mu^2} \sum_{i=1}^n \|\mH_i^k-\nabla^2 f_i(x^*)\|^2_{\rm F} + \frac{3H_1^2}{\mu^2} {\cal Z}^k {\cal W}^k \nonumber \\ 
	& \overset{(\ref{eq:HikD-BL2})}{\leq} \frac{3H^2}{4\mu^2}({\cal W}^k)^2 + \frac{12N_{\rm B}R^2{\cal W}^k}{n\mu^2} \sum_{i=1}^n \|\mL_i^k-\mL_i^*\|_{\rm F}^2 + \frac{3H_1^2}{\mu^2} {\cal Z}^k {\cal W}^k \nonumber \\ 
	& = \frac{3H^2}{4\mu^2}({\cal W}^k)^2  + \frac{12N_{\rm B}R^2}{\mu^2} {\cal H}^k {\cal W}^k +  \frac{3H_1^2}{\mu^2} {\cal Z}^k {\cal W}^k, \label{eq:xk+1-BL2}
\end{align}
where ${\cal H}^k = \frac{1}{n} \sum_{i=1}^n \|\mL_i^k-\mL_i^*\|_{\rm F}^2$. 

For $i\in S^k$, we have $z_i^{k+1} = z_i^k + \eta {\cal Q}_i^k (x^{k+1}-z_i^k)$. Then from the definition of $(A_{\rm M}, B_{\rm M})$ and Lemma \ref{lm:zQcomp}, by choosing $z=z_i^k$, $x=x^{k+1}$, and $y=x^*$ in Lemma \ref{lm:zQcomp}, we can obtain 
\begin{align*}
	\mathbb{E}_k[\|z_i^{k+1} - x^*\|^2 \ | \  i\in S^k ]& = \mathbb{E}_k [\|z_i^k + \eta {\cal Q}^k(x^{k+1} - z_i^k) - x^*\|^2  \ | \  i\in S^k ]\\ 
	& \leq (1 - A_{\rm M}) \|z_i^k - x^*\|^2 + B_{\rm M} \mathbb{E}_k\|x^{k+1} - x^*\|^2 \\ 
	& = (1 - A_{\rm M}) \|z_i^k - x^*\|^2 + B_{\rm M} \|x^{k+1} - x^*\|^2. 
\end{align*}

Noticing that $\mathbb{P}[i \in S^k] = \nicefrac{\tau}{n}$ and $z_i^{k+1}=z_i^k$ for $i\notin S^k$, we further have 
\begin{align*}
	\mathbb{E}_k\|z_i^{k+1} - x^*\|^2 & = \frac{\tau}{n} \mathbb{E}_k[\|z_i^{k+1} - x^*\|^2 \ | \  i\in S^k ] + \left(  1 - \frac{\tau}{n}  \right) \mathbb{E}_k[\|z_i^{k+1} - x^*\|^2 \ | \  i\notin S^k ] \\ 
	& \leq \frac{\tau}{n} (1 - A_{\rm M}) \|z_i^k - x^*\|^2 + \frac{\tau B_{\rm M}}{n} \|x^{k+1} - x^*\|^2 + \left(  1 - \frac{\tau}{n}  \right) \|z_i^k-x^*\|^2 \\ 
	& = \left(  1 - \frac{\tau A_{\rm M}}{n}  \right) \|z_i^k-x^*\|^2 + \frac{\tau B_{\rm M}}{n} \|x^{k+1} - x^*\|^2, 
\end{align*}
which implies that 
\begin{align}
	\mathbb{E}_k [{\cal Z}^{k+1}] & = \frac{1}{n} \sum_{i=1}^n \mathbb{E}_k \|z_i^{k+1}-x^*\|^2  \nonumber \\ 
	& \leq \frac{1}{n} \sum_{i=1}^n \left(  1 - \frac{\tau A_{\rm M}}{n}  \right) \|z_i^k-x^*\|^2 + \frac{\tau B_{\rm M}}{n} \|x^{k+1} - x^*\|^2  \nonumber \\ 
	& =  \left(  1 - \frac{\tau A_{\rm M}}{n}  \right) {\cal Z}^k +  \frac{\tau B_{\rm M}}{n} \|x^{k+1} - x^*\|^2. \label{eq:Zk+1-BL2}
\end{align}

For $i\in S^k$, from the update rule of $w_i^{k+1}$, we have 
\begin{align*}
	\mathbb{E}_k[\|w_i^{k+1} - x^*\|^2 \ | \  i\in S^k ]& = p \mathbb{E}_k [\|z_i^{k+1}-x^*\|^2] + (1-p) \|w_i^k-x^*\|^2. 
\end{align*}

For $i\notin S^k$, we have $w_i^{k+1} = w_i^k$. Thus, 
\begin{align*}
	\mathbb{E}_k \|w_i^{k+1} - x^*\|^2 & = \frac{\tau}{n} \mathbb{E}_k[\|w_i^{k+1} - x^*\|^2 \ | \  i\in S^k ] + \left(  1 - \frac{\tau}{n}  \right) \mathbb{E}_k[\|w_i^{k+1} - x^*\|^2 \ | \  i\notin S^k ] \\ 
	& = \left(  1 - \frac{\tau p}{n}  \right) \|w_i^k-x^*\|^2 + \frac{\tau p}{n} \mathbb{E}_k \|z_i^{k+1} - x^*\|^2, 
\end{align*}
which yields that 
\begin{align}
	\mathbb{E}_k [{\cal W}^{k+1}] & = \frac{1}{n} \sum_{i=1}^n \mathbb{E}_k\|w_i^{k+1}-x^*\|^2 \nonumber \\ 
	& =  \frac{1}{n} \sum_{i=1}^n \left(  1 - \frac{\tau p}{n}  \right) \|w_i^k-x^*\|^2 + \frac{1}{n} \sum_{i=1}^n \frac{\tau p}{n} \mathbb{E}_k \|z_i^{k+1} - x^*\|^2 \nonumber \\ 
	& = \left(  1 - \frac{\tau p}{n}  \right) {\cal W}^k + \frac{\tau p}{n} \mathbb{E}_k [{\cal Z}^{k+1}] \nonumber \\ 
	& \overset{(\ref{eq:Zk+1-BL2})}{\leq}  \left(  1 - \frac{\tau p}{n}  \right) {\cal W}^k + \frac{\tau p}{n} \left(  1 - \frac{\tau A_{\rm M}}{n}  \right) {\cal Z}^k + \frac{\tau^2B_{\rm M}p}{n^2} \|x^{k+1}-x^*\|^2. \label{eq:Wk+1-BL2}
\end{align}

Let $\Phi_3^k \eqdef {\cal W}^k + \frac{2p}{A_{\rm M}}\left(  1 - \frac{\tau A_{\rm M}}{n}  \right) {\cal Z}^k$ for $k\geq 0$. Then from the above inequality we have 
\begin{align*}
	\mathbb{E}_k[\Phi_3^{k+1}] & \leq \left(  1 - \frac{\tau p}{n}  \right) {\cal W}^k + \frac{\tau p}{n} \left(  1 - \frac{\tau A_{\rm M}}{n}  \right) {\cal Z}^k + \frac{\tau^2B_{\rm M}p}{n^2} \|x^{k+1}-x^*\|^2 +  \frac{2p}{A_{\rm M}}\left(  1 - \frac{\tau A_{\rm M}}{n}  \right) \mathbb{E}_k[{\cal Z}^{k+1}] \\ 
	& \overset{(\ref{eq:Zk+1-BL2})}{\leq} \left(  1 - \frac{\tau p}{n}  \right) {\cal W}^k + \frac{2p}{A_{\rm M}}\left(  1 - \frac{\tau A_{\rm M}}{n}  \right) \left(  1 - \frac{\tau A_{\rm M}}{2n}  \right) {\cal Z}^k + \frac{2\tau p B_{\rm M}}{n A_{\rm M}} \left(  1 - \frac{\tau A_{\rm M}}{2n}  \right) \|x^{k+1}-x^*\|^2 \\ 
	& \overset{\ref{eq:xk+1-BL2}}{\leq} \left(  1 - \frac{\tau p}{n} + \frac{2\tau pB_{\rm M}}{nA_{\rm M}} \left(  \frac{3H^2}{4\mu^2}{\cal W}^k  + \frac{12N_{\rm B}R^2}{\mu^2} {\cal H}^k  +  \frac{3H_1^2}{\mu^2} {\cal Z}^k   \right)  \right) {\cal W}^k \\ 
	& \quad + \frac{2p}{A_{\rm M}}\left(  1 - \frac{\tau A_{\rm M}}{n}  \right) \left(  1 - \frac{\tau A_{\rm M}}{2n}  \right) {\cal Z}^k. 
\end{align*}

If $\|z_i^k-x^*\|^2 \leq \frac{A_{\rm M}\mu^2}{(6H^2 + 24H_1^2)B_{\rm M}}$ and ${\cal H}^k \leq \frac{A_{\rm M}\mu^2}{96N_{\rm B}R^2 B_{\rm M}}$ for all $k\geq 0$, then we have 
$$
\frac{3H^2}{4\mu^2}{\cal W}^k  + \frac{12N_{\rm B}R^2}{\mu^2} {\cal H}^k  +  \frac{3H_1^2}{\mu^2} {\cal Z}^k \leq \frac{A_{\rm M}}{4B_{\rm M}}, 
$$
which implies that 
\begin{align*}
	\mathbb{E}_k[\Phi_3^{k+1}] & \leq \left(  1 - \frac{\tau p}{2n}  \right) {\cal W}^k + \frac{2p}{A_{\rm M}}\left(  1 - \frac{\tau A_{\rm M}}{n}  \right) \left(  1 - \frac{\tau A_{\rm M}}{2n}  \right) {\cal Z}^k \\ 
	& \leq \left(  1 - \frac{\tau \min\{  p, A_{\rm M}  \}}{2n}  \right) \Phi_3^k. 
\end{align*}

By applying the tower property, we have 
$$
\mathbb{E} [\Phi_3^{k+1}] \leq \left(  1 - \frac{\tau \min\{  p, A_{\rm M}  \}}{2n}  \right) \mathbb{E}[\Phi_3^k]. 
$$

Unrolling the recursion, we can obtain the result.

\subsection{Proof of Theorem \ref{th:superlinear-BL2}}

Since $\xi^k \equiv 1$, $\eta=1$, $S^k \equiv [n]$, and ${\cal Q}_i^k(x) \equiv x$ for any $x\in \R^d$, it is easy to see that $w_i^k = z_i^k \equiv x^k$ for all $i\in[n]$ and $k\geq 0$. In this case, we can view ${\cal Q}_i^k$ as an unbiased compressor with $\omega_{\rm M}=0$ or a contraction compressor with $\delta_{\rm M}=1$. Then from (\ref{eq:Zk+1-BL2}), we have 
\begin{align*}
	\mathbb{E}_k \|x^{k+1}-x^*\|^2 & \leq \left(  1 - A_{\rm M}  \right) \|x^k-x^*\|^2 + B_{\rm M} \|x^{k+1}-x^*\|^2 \\ 
	& \overset{(\ref{eq:xk+1-BL2})}{\leq} \left(  1 - A_{\rm M}  \right) \|x^k-x^*\|^2 + \frac{1}{4}A_{\rm M} \|x^k-x^*\|^2 \\ 
	& = \left(  1 - \frac{3A_{\rm M}}{4}  \right) \|x^k-x^*\|^2. 
\end{align*}

From Lemma \ref{lm:Hk-BL1}, we can obtain 
$$
\mathbb{E}_k[{\cal H}^{k+1}] \leq  (1-A) {\cal H}^k + BM_1^2 \|x^k-x^*\|^2. 
$$

Thus, 
\begin{align*}
	\mathbb{E}_k[\Phi_4^{k+1}] & = \mathbb{E}_k[{\cal H}^{k+1}] + \frac{4BM_1^2}{A_{\rm M}}\mathbb{E}_k\|x^{k+1} - x^*\|^2 \\ 
	& \leq (1-A) {\cal H}^k + BM_1^2 \|x^k-x^*\|^2 +  \frac{4BM_1^2}{A_{\rm M}}\left(  1 - \frac{3A_{\rm M}}{4}  \right) \|x^k-x^*\|^2 \\ 
	& \leq \left(  1 - \frac{\min\{  2A, A_{\rm M}  \} }{2} \right) \Phi_4^k. 
\end{align*}

By applying the tower property, we have $\mathbb{E}[\Phi_4^{k+1}] \leq \theta_2 \mathbb{E}[\Phi_4^k]$. Unrolling the recursion, we have $\mathbb{E}[\Phi_4^k] \leq \theta_2^k \Phi_4^0$. 

Then we further have $\mathbb{E}[{\cal H}^k] \leq \theta_2^k \Phi_4^0$ and $\mathbb{E}\|x^k-x^*\|^2 \leq \frac{A_{\rm M}}{4BM_1^2} \theta_2^k \Phi_4^0$. From $w_i^k=z_i^k\equiv x^k$ and (\ref{eq:xk+1-BL2}), we can get 
$$
\|x^{k+1} - x^*\|^2 \leq  \left(  \frac{3H^2+12H_1^2}{4\mu^2} \|x^k-x^*\|^2 + \frac{12N_{\rm B}R^2}{\mu^2} {\cal H}^k  \right) \|x^k-x^*\|^2. 
$$
Assume $x^k\neq x^*$ for all $k\geq 0$. Then we have 
$$
\frac{\|x^{k+1}-x^*\|^2}{\|x^k-x^*\|^2} \leq  \frac{3H^2+12H_1^2}{4\mu^2} \|x^k-x^*\|^2 + \frac{12N_{\rm B}R^2}{\mu^2} {\cal H}^k, 
$$
and by taking expectation, we arrive at 
\begin{align*}
	\mathbb{E} \left[   \frac{\|x^{k+1}-x^*\|^2}{\|x^k-x^*\|^2} \right] & \leq \frac{3H^2+12H_1^2}{4\mu^2} \mathbb{E}\|x^k-x^*\|^2 + \frac{12N_{\rm B}R^2}{\mu^2} \mathbb{E}[{\cal H}^k] \\ 
	& \leq \theta_2^k \left(  \frac{A_{\rm M}(3H^2+12H_1^2)}{16BM_1^2\mu^2} + \frac{12N_{\rm B}R^2}{\mu^2}  \right) \Phi_4^0. 
\end{align*}

\subsection{Proof of Theorem \ref{th:nbor-BL2}}

(i) Noticed that under Assumption \ref{as:Qunbiasedcomp-BL1}, we have $A_{\rm M} = B_{\rm M} = \eta$. We prove this by mathematical induction. First, since $z_i^0=x^0$, we know $\|z_i^0-x^*\|^2 \leq \min\{ \frac{\mu^2}{d(6H^2+24H_1^2)},  \frac{\mu^2}{96d^3 N_{\rm B}R^2M_2^2}  \}$ for $i\in [n]$. Then from Lemma \ref{lm:Hkbound-BL2} (iii), we have ${\cal H}^0 \leq \frac{\mu^2}{96dN_{\rm B}R^2 }$. Next, assume 
$$
\|z_i^k-x^*\|^2 \leq \min\left\{ \frac{\mu^2}{d(6H^2+24H_1^2)},  \frac{\mu^2}{96d^3 N_{\rm B}R^2M_2^2}  \right\} \ {\rm for} \ i\in[n] \quad  {\rm and} \quad  {\cal H}^k \leq \frac{\mu^2}{96dN_{\rm B}R^2 }, 
$$
for $k\leq K$. 
By choosing $M = \frac{\mu^2}{96d^3 N_{\rm B}R^2M_2^2}$ in Lemma \ref{lm:Hkbound-BL2} (i), we have $$\|z_i^{K+1}-x^*\|^2 \leq \min\left\{ \frac{\mu^2}{d(6H^2+24H_1^2)},  \frac{\mu^2}{96d^3 N_{\rm B}R^2M_2^2}  \right\}, $$ for $i\in[n]$. By further using Lemma \ref{lm:Hkbound-BL2} (iii), we can get ${\cal H}^{K+1} \leq \frac{\mu^2}{96dN_{\rm B}R^2 }$. \\ 

(ii) We prove the result by induction. Assume $\|z_i^k - x^*\|^2 \leq \min\left\{  \frac{A_{\rm M}\mu^2}{B_{\rm M}(6H^2+24H_1^2)},  \frac{AA_{\rm M}\mu^2}{96N_{\rm B}R^2 B_{\rm M}BM_1^2}  \right\}$ and $\|\mL_i^k - \mL_i^*\|^2_{\rm F} \leq \frac{A_{\rm M}\mu^2}{96N_{\rm B}R^2B_{\rm M}}$ for all $i\in[n]$ and $k\leq K$. Then by Lemma \ref{lm:Hkbound-BL2} (iv), we have $\|\mL_i^{K+1} - \mL_i^*\|^2_{\rm F} \leq \frac{A_{\rm M}\mu^2}{96N_{\rm B}R^2B_{\rm M}}$. Moreover, by Lemma \ref{lm:Hkbound-BL2} (ii), we have $\|z_i^{K+1} - x^*\|^2 \leq \min\left\{  \frac{A_{\rm M}\mu^2}{B_{\rm M}(6H^2+24H_1^2)},  \frac{AA_{\rm M}\mu^2}{96N_{\rm B}R^2 B_{\rm M}BM_1^2}  \right\}$ for $i\in[n]$.

\newpage 
\section{Proofs for \algname{BL3}}

We denote $\mathbb{E}_k[\cdot]$ as the conditonal expectation on $z_i^k$, $w_i^k$, $\mL_i^k$, $\gamma_i^k$, $\beta_i^k$, $\mA_i^k$ and $\mC_i^k$.

\subsection{Proof of Lemma \ref{lm:EstM-2}}

\noindent (i) If $\|\nabla^2 f_i(x) - \nabla^2 f_i(y)\|_{\rm F} \leq H_1\|x-y\|$ for any $x, y\in \R^d$, and $i \in [n]$, then from (\ref{eq:hmAsym}) we have 
\begin{align*}
	\|{\tilde h}^i(\nabla^2 f_i(x)) - {\tilde h}^i(\nabla^2 f_i(y))\|_{\rm F} & \leq \| svec({\tilde h}^i(\nabla^2 f_i(x))) - svec({\tilde h}^i(\nabla^2 f_i(y)))\| \\ 
	& \leq \|({\tilde {\cal B}}_i)^{-1}\| \cdot \|svec(\nabla^2 f_i(x)) - svec(\nabla^2 f_i(y))\| \\ 
	& \leq \sqrt{2} \|({\tilde {\cal B}}_i)^{-1}\| \cdot \|\nabla^2 f_i(x) - \nabla^2 f_i(y)\|_{\rm F} \\ 
	& \leq  \sqrt{2} \|({\tilde {\cal B}}_i)^{-1}\| H_1 \|x-y\|, 
\end{align*}
which implies that $M_4$ in Assumption \ref{as:BL3} satisfies $M_4 \leq  \sqrt{2} \max_i \{ \|({\tilde {\cal B}}_i)^{-1}\| \} H_1$. \\  

\noindent If $|(\nabla^2 f_i(x))_{jl} - (\nabla^2 f_i(y))_{jl} | \leq \nu \|x-y\|$ for any $x, y\in \R^d$, $i \in [n]$, and $j, l\in [d]$, then from (\ref{eq:hmAsym}), every entry of ${\tilde h}^i(\nabla^2 f_i(x)) - {\tilde h}^i(\nabla^2 f_i(y))$ will be bounded by $2\nu \|({\tilde {\cal B}}_i)^{-1}\|_{\infty} \|x-y\|$. Hence $M_5$ in Assumption \ref{as:BL3} satisfies $M_5 \leq 2\nu \max_{i} \{\|({\tilde {\cal B}}_i)^{-1}\|_{\infty}\}$. 

\vskip 2mm 

(ii) If $|(\nabla^2 f_i(x))_{jl}| \leq \gamma$ for any $x\in \R^d$, $i \in [n]$, and $j, l\in [d]$, then from (\ref{eq:hmAsym}), every entry of ${\tilde h}^i(\nabla^2 f_i(x))$ will be bounded by $2\gamma \|({\tilde {\cal B}}_i)^{-1}\|_{\infty}$, i.e., $\max_{jl}\{ |{\tilde h}^i(\nabla^2 f_i(x))_{jl}|\} \leq 2\gamma \|({\tilde {\cal B}}_i)^{-1}\|_{\infty}$. In particular, under Assumption \ref{as:Cunbiasedcomp-BL1} (ii), $(\mL_i^k)_{jl}$ is a convex combination of $\{ {\tilde h}^i(\nabla^2 f_i(z_i^t))_{jl} \}_{t\leq k}$, and thus $M_3$ in Assumption \ref{as:BL3} satisfies $M_3 \leq 2\gamma \max_{i} \{\|({\tilde {\cal B}}_i)^{-1}\|_{\infty} \}$. 

\vskip 2mm 

(iii) First, from $\|\nabla^2 f_i(x)\|_{\rm F} \leq {\tilde \gamma}$ and (\ref{eq:hmAsym}), we have 
\begin{align*}
	\|{\tilde h}^i(\nabla^2 f_i(x))\|_{\rm F} & \leq \|svec({\tilde h}^i(\nabla^2 f_i(x)))\| \\ 
	& \leq \|({\tilde {\cal B}}_i)^{-1}\| \cdot \|svec(\nabla^2 f_i(x))\| \\ 
	& \leq \sqrt{2}  \|({\tilde {\cal B}}_i)^{-1}\| \cdot \|\nabla^2 f_i(x)\|_{\rm F} \\ 
	& \leq \sqrt{2}  \|({\tilde {\cal B}}_i)^{-1}\| {\tilde \gamma}, 
\end{align*}
for any $x\in \R^d$ and $i\in [n]$. Assume $\|\mL_i^K\|_{\rm F}^2 \leq \frac{2B}{A} \|({\tilde {\cal B}}_i)^{-1}\|^2 {\tilde \gamma}^2$. Then under Assumption \ref{as:Ccontractioncomp-BL1}, same as Lemma \ref{lm:Hk-BL3} (ii), we have 
\begin{align*}
	\|\mL_i^{K+1} \|_{\rm F}^2 & = \|\mL_i^K + {\cal C}_i^K\left(  {\tilde h}^i(\nabla^2 f_i(z_i^{K+1})) - \mL_i^K  \right)\|_{\rm F}^2 \\ 
	& \leq \left(  1 - \frac{\delta}{4}  \right) \|\mL_i^K\|_{\rm F}^2 + \left(  \frac{6}{\delta} - \frac{7}{2}  \right) \| {\tilde h}^i(\nabla^2 f_i(z_i^{K+1}))\|_{\rm F}^2 \\ 
	& = (1-A)  \|\mL_i^K\|_{\rm F}^2 + B  \| {\tilde h}^i(\nabla^2 f_i(z_i^{K+1}))\|_{\rm F}^2 \\ 
	& \leq (1-A) \cdot \frac{2B}{A} \|({\tilde {\cal B}}_i)^{-1}\|^2 {\tilde \gamma}^2 + B \cdot 2 \|({\tilde {\cal B}}_i)^{-1}\|^2 {\tilde \gamma}^2 \\ 
	& = \frac{2B}{A} \|({\tilde {\cal B}}_i)^{-1}\|^2 {\tilde \gamma}^2. 
\end{align*}
Since $\|\mL_i^0\|_{\rm F}^2 \leq \frac{2B}{A} \|({\tilde {\cal B}}_i)^{-1}\|^2 {\tilde \gamma}^2$, by mathematical induction, we can get $\|\mL_i^k\|_{\rm F}^2 \leq \frac{2B}{A} \|({\tilde {\cal B}}_i)^{-1}\|^2 {\tilde \gamma}^2$ for $k\geq 0$. At last, from $\max_{jl}\{  |(\mL_i^k)_{jl}|  \} \leq \|\mL_i^k\|_{\rm F} \leq \frac{\sqrt{2B} }{\sqrt{A}} \|({\tilde {\cal B}}_i)^{-1}\| {\tilde \gamma}$, we can obtain $M_3 \leq \frac{\sqrt{2B} {\tilde \gamma}}{\sqrt{A}} \max_{i} \{ \|({\tilde {\cal B}}_i)^{-1}\| \}$.

\subsection{Lemmas}

The proof of Lemma \ref{lm:Hk-BL3} is the same as that of Lemma B.1 in \citep{FedNL2021}. Hence we omit it. 

\begin{lemma}\label{lm:Hk-BL3}
	%Assume $\|{\tilde h}^i(\nabla^2 f_i(y)) - {\tilde h}^i(\nabla^2 f_i(z))\|_{\rm F} \leq M_4 \|y-z\|$ for any $y$, $z \in \R^d$. 
	Let $\cC$ be a compressor and $\alpha>0$. For any matrix $\mL \in \R^{d\times d}$ and $y, z\in \R^d$, we have the following results. 
\begin{itemize}	
	\item[(i)] If $\cC$ is an unbiased compressor with parameter $\omega$ and $\alpha \leq \nicefrac{1}{\omega+1}$, then 
	$$
	\mathbb{E}\| \mL + \alpha \cC({\tilde h}^i(\nabla^2 f_i(y)) - \mL) - {\tilde h}^i(\nabla^2 f_i(z)) \|^2_{\rm F} \leq (1-\alpha) \| \mL - {\tilde h}^i(\nabla^2 f_i(z))\|^2_{\rm F} + \alpha M_4^2 \|y-z\|^2, 
	$$
	where $\mathbb{E}[\cdot]$ is the expectation with respect to $\cC$. 
	\item[(ii)] If $\cC$ is a contraction compressor with parameter $\delta$ and $\alpha=1$, then 
	$$
	\mathbb{E}\| \mL + \alpha \cC({\tilde h}^i(\nabla^2 f_i(y)) - \mL) - {\tilde h}^i(\nabla^2 f_i(z)) \|^2_{\rm F} \leq \left(  1 - \frac{\delta}{4}  \right) \| \mL - {\tilde h}^i(\nabla^2 f_i(z))\|^2_{\rm F} + \left(  \frac{6}{\delta} - \frac{7}{2}  \right) M_4^2 \|y-z\|^2. 
	$$
\end{itemize}		
\end{lemma}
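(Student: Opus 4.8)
The statement is the symmetric-space twin of Lemma~\ref{lm:Hk-BL1}: the only feature of the coefficient map $\mA\mapsto{\tilde h}^i(\mA)$ I would actually invoke is the Lipschitz bound $\|{\tilde h}^i(\nabla^2 f_i(y))-{\tilde h}^i(\nabla^2 f_i(z))\|_{\rm F}\le M_4\|y-z\|$ from Assumption~\ref{as:BL3}. So first I would abbreviate $A\eqdef{\tilde h}^i(\nabla^2 f_i(y))$, $B\eqdef{\tilde h}^i(\nabla^2 f_i(z))$ and reduce everything to a one-step contraction statement for the shifted-compression update $\mL\mapsto\mL+\alpha\cC(A-\mL)$ toward $B$, finishing with $\|A-B\|_{\rm F}\le M_4\|y-z\|$.

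\emph{Case (i), unbiased $\cC$.} Put $D\eqdef A-\mL$. Since $\E[\cC(D)]=D$, the bias–variance identity gives
$$\E\|\mL+\alpha\cC(D)-B\|_{\rm F}^2=\|\mL+\alpha D-B\|_{\rm F}^2+\alpha^2\,\E\|\cC(D)-D\|_{\rm F}^2\le\|\mL+\alpha D-B\|_{\rm F}^2+\alpha^2\omega\|D\|_{\rm F}^2,$$
using $\E\|\cC(D)\|_{\rm F}^2\le(\omega+1)\|D\|_{\rm F}^2$. Next I would write $\mL+\alpha D-B=(1-\alpha)(\mL-B)+\alpha(A-B)$ and $D=(A-B)-(\mL-B)$, expand both squared norms, and collect the coefficients of $\|\mL-B\|_{\rm F}^2$, $\langle\mL-B,A-B\rangle$, $\|A-B\|_{\rm F}^2$. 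With $\beta\eqdef\alpha(\omega+1)\in(0,1]$ these turn out to be $1-\alpha(2-\beta)$, $2\alpha(1-\beta)$, $\alpha\beta$; the cross-term coefficient is nonnegative, so $2\langle\mL-B,A-B\rangle\le\|\mL-B\|_{\rm F}^2+\|A-B\|_{\rm F}^2$ collapses the bound to exactly $(1-\alpha)\|\mL-B\|_{\rm F}^2+\alpha\|A-B\|_{\rm F}^2$, and $\|A-B\|_{\rm F}\le M_4\|y-z\|$ closes the case.

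\emph{Case (ii), contraction $\cC$, $\alpha=1$.} Here I would start from the shifted-compression identity $\mL+\cC(A-\mL)-A=\cC(A-\mL)-(A-\mL)$, so $\E\|\mL+\cC(A-\mL)-A\|_{\rm F}^2\le(1-\delta)\|A-\mL\|_{\rm F}^2$ by the contraction property applied to $A-\mL$. Then decompose $\mL+\cC(A-\mL)-B=\big(\mL+\cC(A-\mL)-A\big)+(A-B)$ and apply Young's inequality $\|u+v\|^2\le(1+s)\|u\|^2+(1+s^{-1})\|v\|^2$, followed by a second Young step on $\|A-\mL\|_{\rm F}^2\le(1+r)\|\mL-B\|_{\rm F}^2+(1+r^{-1})\|A-B\|_{\rm F}^2$, to reach
$$\E\|\mL+\cC(A-\mL)-B\|_{\rm F}^2\le(1+s)(1+r)(1-\delta)\|\mL-B\|_{\rm F}^2+\Big[(1+s)(1-\delta)(1+r^{-1})+1+s^{-1}\Big]\|A-B\|_{\rm F}^2.$$
It then remains to choose $s,r>0$ of order $\delta$ so that $(1+s)(1+r)(1-\delta)\le1-\tfrac\delta4$ while the bracketed factor is $\le\tfrac6\delta-\tfrac72$, and to use $\|A-B\|_{\rm F}\le M_4\|y-z\|$.

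\emph{Main obstacle.} Nothing here is conceptually deep; the one delicate point is the parameter tuning in Case~(ii): the two free constants $s,r$ in the successive Young inequalities must be picked to hit the prescribed pair $(1-\delta/4,\ 6/\delta-7/2)$ simultaneously for every $\delta\in(0,1]$. I would isolate that as a short standalone computation (it is exactly the bookkeeping behind Lemma~\ref{lm:Hk-BL1}, equivalently Lemma~B.1 of \citep{FedNL2021}) rather than inline it, and note that the degenerate subcases $\omega=0$ in (i) and $\delta=1$ in (ii) are immediate.
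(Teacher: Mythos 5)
Your proposal is correct and follows exactly the argument the paper itself points to (it omits the proof, stating it is identical to Lemma~B.1 of FedNL): the bias–variance split for the unbiased case and the two successive Young inequalities for the contraction case, finished with the Lipschitz bound $\|{\tilde h}^i(\nabla^2 f_i(y))-{\tilde h}^i(\nabla^2 f_i(z))\|_{\rm F}\le M_4\|y-z\|$. The deferred tuning in Case~(ii) does go through, e.g.\ with $s=\tfrac{\delta/2}{1-\delta}$ and $r=\tfrac{\delta/4}{1-\delta/2}$, which give $(1+s)(1+r)(1-\delta)\le 1-\tfrac{\delta}{4}$ and a second coefficient of $\tfrac{6}{\delta}-4+\tfrac{\delta}{2}\le\tfrac{6}{\delta}-\tfrac{7}{2}$.
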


The constants $c_1$ and $c_2$ in the following lemma are defined in (\ref{eq:c1c2}). 
\begin{lemma}\label{lm:Hkbound-BL3}
	%Assume $\max_{jl} \{ |{\tilde h}^i(\nabla^2 f_i(y))_{jl} - {\tilde h}^i(\nabla^2 f_i(z))_{jl}| \} \leq M_5 \|y-z\|$ for any $y$, $z \in \R^d$. \\ 
\begin{itemize}		We consider four cases:
	\item[(i)] If Assumption \ref{as:Qunbiasedcomp-BL1} (ii) holds, $\|x^0-x^*\|^2 \leq \min\{  \frac{\mu^2}{4d^2(H^2+4c_1)}, \frac{M}{d}  \}$,  $\|z_i^k-x^*\|^2 \leq \min\{  \frac{\mu^2}{4d(H^2+4c_1)}, M  \}$, and ${\cal H}^k \leq \frac{\mu^2}{4dc_2}$ for $k\leq K$, $i\in [n]$, and any $M>0$, then $\|z_i^{K+1}-x^*\|^2 \leq \min\{  \frac{\mu^2}{4d(H^2+4c_1)}, M  \}$ for $i\in [n]$.  
	\item[(ii)] If Assumption \ref{as:Qcontractioncomp-BL1} holds, ${\cal H}^K \leq \frac{A_{\rm M}\mu^2}{4c_2B_{\rm M}}$, $\|z_i^k-x^*\|^2 \leq \min\{  \frac{A_{\rm M}\mu^2}{4B_{\rm M}(H^2 + 4c_1)}, M  \}$ for $k\leq K$, $i\in [n]$, and any $M>0$, then $\|z_i^{K+1} - x^*\|^2 \leq  \min\{  \frac{A_{\rm M}\mu^2}{4B_{\rm M}(H^2+4c_1)}, M  \}$ for $i\in [n]$. 
	\item[(iii)] If Assumption \ref{as:Cunbiasedcomp-BL1}(ii) holds, and $\|z_i^k-x^*\|^2 \leq \frac{M}{d^2M_5^2} $ for $k\leq K$, $i\in [n]$, and any $M>0$, then ${\cal H}^{K} \leq M$. 
	\item[(iv)] If Assumption \ref{as:Ccontractioncomp-BL1} holds, $\|\mL_i^K - \mL_i^*\|_{\rm F}^2 \leq M$, and $\|z_i^K-x^*\|^2 \leq \frac{AM}{BM_4^2}$ for $i\in [n]$ and any $M>0$, then $\|\mL_i^{K+1} - \mL_i^*\|_{\rm F}^2 \leq M$ for $i\in[n]$. 
\end{itemize}		
\end{lemma}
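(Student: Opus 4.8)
The plan is to follow exactly the template of the proofs of Lemma~\ref{lm:Hkbound-BL1} and Lemma~\ref{lm:Hkbound-BL2}, treating all four items as one-step inductive invariance claims. The single nontrivial input I will rely on is the \algname{BL3} counterpart of inequality (\ref{eq:xk+1-BL2}): the one-step estimate, derived inside the proof of Theorem~\ref{th:BL3}, that bounds $\|x^{k+1}-x^*\|^2$ by a quadratic form in ${\cal W}^k$, ${\cal Z}^k$ and ${\cal H}^k$ with coefficients proportional to $(H^2+4c_1)/\mu^2$ and $c_2/\mu^2$, where $c_1,c_2$ are the constants in (\ref{eq:c1c2}). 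I will also use the elementary fact that in \algname{BL3} each $w_i^k$ equals some earlier iterate $z_i^t$ with $t\le k$, so ${\cal W}^k$ is always dominated by the current bound on the $z_i$'s.

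For part~(i), under Assumption~\ref{as:Qunbiasedcomp-BL1} and Assumption~\ref{as:Cunbiasedcomp-BL1}(ii) one has $A_{\rm M}=B_{\rm M}=\eta$, and the smallness hypotheses $\|z_i^k-x^*\|^2\le \tfrac{\mu^2}{4d(H^2+4c_1)}$, ${\cal H}^k\le \tfrac{\mu^2}{4dc_2}$ collapse the one-step estimate to $\|x^{k+1}-x^*\|^2\le \tfrac1d{\cal W}^k\le \min\{\tfrac{\mu^2}{4d^2(H^2+4c_1)},\tfrac Md\}$ for $0\le k\le K$. Since $\|x^0-x^*\|^2$ already satisfies this, induction over $t$ gives $\|x^t-x^*\|^2\le \min\{\tfrac{\mu^2}{4d^2(H^2+4c_1)},\tfrac Md\}$ for all $t\le K+1$; Assumption~\ref{as:Qunbiasedcomp-BL1}(ii) then yields, for $i\in S^K$, $\|z_i^{K+1}-x^*\|^2\le d\max_{t\le K+1}\|x^t-x^*\|^2\le \min\{\tfrac{\mu^2}{4d(H^2+4c_1)},M\}$, while for $i\notin S^K$ we have $z_i^{K+1}=z_i^K$ and the bound is inherited. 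For part~(ii), under Assumption~\ref{as:Qcontractioncomp-BL1} and Assumption~\ref{as:Ccontractioncomp-BL1} the same hypotheses give $\|x^{K+1}-x^*\|^2\le \tfrac{A_{\rm M}}{B_{\rm M}}{\cal W}^K\le \tfrac{A_{\rm M}}{B_{\rm M}}\min\{\tfrac{A_{\rm M}\mu^2}{4B_{\rm M}(H^2+4c_1)},M\}$, and Lemma~\ref{lm:zQcomp}(ii) applied with $z=z_i^K$, $x=x^{K+1}$, $y=x^*$ shows, for $i\in S^K$, that $\|z_i^{K+1}-x^*\|^2\le (1-A_{\rm M})\|z_i^K-x^*\|^2+B_{\rm M}\|x^{K+1}-x^*\|^2$ is a convex combination of two quantities each at most $\min\{\tfrac{A_{\rm M}\mu^2}{4B_{\rm M}(H^2+4c_1)},M\}$; for $i\notin S^K$ it is trivial.

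For part~(iii) I will estimate directly: ${\cal H}^K\le \tfrac1n\sum_i d^2\max_{jl}|(\mL_i^K)_{jl}-(\mL_i^*)_{jl}|^2$, and since Assumption~\ref{as:Cunbiasedcomp-BL1}(ii) (in the \algname{BL3} form) makes each $(\mL_i^K)_{jl}$ a convex combination of $\{{\tilde h}^i(\nabla^2 f_i(z_i^t))_{jl}\}_{t\le K}$ with $\mL_i^*={\tilde h}^i(\nabla^2 f_i(x^*))$, the $M_5$-Lipschitz bound of Assumption~\ref{as:BL3} gives ${\cal H}^K\le d^2M_5^2\max_{i,\,t\le K}\|z_i^t-x^*\|^2\le M$. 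For part~(iv), a single application of Lemma~\ref{lm:Hk-BL3}(ii) under Assumption~\ref{as:Ccontractioncomp-BL1} gives, for $i\in S^K$, $\|\mL_i^{K+1}-\mL_i^*\|_{\rm F}^2\le (1-A)\|\mL_i^K-\mL_i^*\|_{\rm F}^2+BM_4^2\|z_i^K-x^*\|^2\le (1-A)M+AM=M$, and $\mL_i^{K+1}=\mL_i^K$ for $i\notin S^K$.

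The hard part is not the bookkeeping above but the ingredient I am taking as given, the \algname{BL3} one-step bound on $\|x^{k+1}-x^*\|^2$. Because the server's Hessian estimator here has the form $\mH_i^k=\beta^k\mA_i^k-\mC_i^k$ rather than a plain expansion $\sum_{jl}(\mL_i^k)_{jl}\mB_i^{jl}$, controlling $\mH_i^k-\nabla^2 f_i(x^*)$ requires bounding the spread of the scalars $\beta_i^k$, which brings in $\gamma_i^k\ge c>0$, the uniform bound $\max_{jl}|(\mL_i^k)_{jl}|\le M_3$, and $\max\{c,M_3\}$, together with $\|\mB_i^{jl}\|_{\rm F}\le R$ and ${\tilde N}\le N$; assembling these is exactly what produces the constants $c_1$ and $c_2$ of (\ref{eq:c1c2}). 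Once that estimate is available, items (i)--(iv) follow by the routine invariance argument sketched above, just as in Lemma~\ref{lm:Hkbound-BL2}.
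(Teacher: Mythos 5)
Your proposal is correct and follows essentially the same route as the paper: you take the one-step bounds on $\|x^{k+1}-x^*\|^2$ (the \algname{BL3} analogues (\ref{eq:xk+1-BL3}) and (\ref{eq:xk+1-BL3-2}) established inside the proof of Theorem~\ref{th:BL3}) as the key input, and then run exactly the invariance arguments of Lemma~\ref{lm:Hkbound-BL2} — the $\tfrac1d{\cal W}^k$ collapse plus Assumption~\ref{as:Qunbiasedcomp-BL1}(ii) for (i), Lemma~\ref{lm:zQcomp}(ii) for (ii), the $d^2M_5^2$ entrywise estimate for (iii), and Lemma~\ref{lm:Hk-BL3}(ii) for (iv), with the $i\notin S^K$ cases handled trivially. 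This matches the paper's proof in both structure and constants.
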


\begin{proof}
	
	(i) First, from the update rule of $w_i^k$, we know ${\cal Z}^k \leq \min\{  \frac{\mu^2}{4d(H^2+4c_1)}, M  \}$ and ${\cal W}^k \leq \min\{  \frac{\mu^2}{4d(H^2+4c_1)}, M  \}$ for $k\leq K$. Then for Option 1, from (\ref{eq:xk+1-BL3}), we have 
	\begin{align*}
		\|x^{k+1} - x^*\|^2 & \leq \frac{H^2}{2\mu^2} ({\cal W}^k)^2 + \frac{2c_1}{\mu^2} {\cal Z}^{k-1}{\cal W}^k + \frac{2c_2}{\mu^2} {\cal H}^k {\cal W}^k \\ 
		& \leq \frac{1}{d} {\cal W}^k \\ 
		& \leq \min \left\{  \frac{\mu^2}{4d^2(H^2+4c_1)}, \frac{M}{d}  \right\}, 
	\end{align*}
	for $0\leq k\leq K$. For Option 2, we can get the same bound for $\|x^{k+1} - x^*\|^2$ as above from (\ref{eq:xk+1-BL3-2}). \\ 
	Since $\|x^0-x^*\|^2 \leq \min \{  \frac{\mu^2}{4d^2(H^2+4c_1)}, \frac{M}{d}  \}$, we know $\|x^k-x^*\|^2 \leq \min \{  \frac{\mu^2}{4d^2(H^2+4c_1)}, \frac{M}{d} \}$ for all $0\leq k\leq K+1$. Then for $i\in S^k$, from Assumption \ref{as:Qunbiasedcomp-BL1} (ii), we can get 
	\begin{align*}
		\|z_i^{K+1} - x^*\|^2 & \leq d \max_{j} |(z^{K+1}_i)_j - x^*_j|^2 \\ 
		& \leq d \max_{0\leq t \leq K+1} \|x^t - x^*\|^2 \\ 
		& \leq \min \left\{  \frac{\mu^2}{4d(H^2+4c_1)},  M  \right\}. 
	\end{align*}
	For $i \notin S^k$, we have 
	$$
	\|z_i^{K+1} - x^*\|^2 = \|z_i^{K} - x^*\|^2 \leq \min \left\{  \frac{\mu^2}{4d(H^2+4c_1)},  M  \right\}. 
	$$
	
	(ii) First, from the update rule of $w^k$, we know ${\cal Z}^k \leq \min\{  \frac{A_{\rm M}\mu^2}{4B_{\rm M}(H^2 + 4c_1)}, M  \}$ and ${\cal W}^k \leq \min\{  \frac{A_{\rm M}\mu^2}{4B_{\rm M}(H^2 + 4c_1)}, M  \}$ for $k\leq K$. Then for Option 1, from (\ref{eq:xk+1-BL3}), we have 
	\begin{align*}
		\|x^{K+1} - x^*\|^2 & \leq \frac{H^2}{2\mu^2} ({\cal W}^K)^2 + \frac{2c_1}{\mu^2} {\cal Z}^{K-1}{\cal W}^K + \frac{2c_2}{\mu^2} {\cal H}^K {\cal W}^K \\ 
		& \leq \frac{A_{\rm M}}{B_{\rm M}} {\cal W}^K \\ 
		& \leq \frac{A_{\rm M}}{B_{\rm M}} \min \left\{  \frac{A_{\rm M}\mu^2}{4B_{\rm M}(H^2 + 4c_1)}, M  \right\} . 
	\end{align*}
	For Option 2, we can get the same bound for $\|x^{K+1} - x^*\|^2$ as above from (\ref{eq:xk+1-BL3-2}).
	Then for $i\in S^k$, from Lemma \ref{lm:zQcomp} (ii), we arrive at 
	\begin{align*}
		\|z_i^{K+1} - z^*\|^2 & \leq (1-A_{\rm M}) \|z_i^K-x^*\|^2 + B_{\rm M}\|x^{K+1} - x^*\|^2 \\ 
		& \leq (1-A_{\rm M}) \min \left\{  \frac{A_{\rm M}\mu^2}{4B_{\rm M}(H^2 + 4c_1)}, M  \right\} + A_{\rm M} \min \left\{  \frac{A_{\rm M}\mu^2}{4B_{\rm M}(H^2 + 4c_1)}, M  \right\} \\ 
		& =  \min \left\{  \frac{A_{\rm M}\mu^2}{4B_{\rm M}(H^2 + 4c_1)}, M  \right\}. 
	\end{align*}
	For $i \notin S^k$, we have 
	$$
	\|z_i^{K+1} - x^*\|^2 = \|z_i^{K} - x^*\|^2 \leq \min \left\{  \frac{A_{\rm M}\mu^2}{4B_{\rm M}(H^2 + 4c_1)}, M  \right\}. 
	$$
	
	(iii) From Assumption \ref{as:Cunbiasedcomp-BL1}(ii), we have 
	\begin{align*}
		{\cal H}^K & = \frac{1}{n} \sum_{i=1}^n \|\mL_i^K - \mL_i^*\|^2_{\rm F} \\ 
		& \leq  \frac{1}{n} \sum_{i=1}^n d^2 \max_{jl} \{ |(\mL_i^K)_{jl} - (\mL_i^*)_{jl}|^2 \} \\ 
		& \leq d^2 M_5^2 \max_{i\in [n], 0\leq t \leq K} \|z_i^t-x^*\|^2 \\ 
		& \leq M. 
	\end{align*}
	
	(iv) For $i\in S^k$, from Assumption \ref{as:Ccontractioncomp-BL1} and Lemma \ref{lm:Hk-BL3} (ii), we have 
	\begin{align*}
		\|\mL_i^{K+1} - \mL_i^*\|^2_{\rm F} & \leq (1-A) \|\mL_i^K - \mL_i^*\|^2_{\rm F} + BM_4^2 \|z_i^K - x^*\|^2 \\ 
		& \leq  (1-A) M + AM\\ 
		& = M. 
	\end{align*}
	For $i \notin S^k$, we also have 
	$$
	\|\mL_i^{K+1} - \mL_i^*\|^2_{\rm F}  =  \|\mL_i^K - \mL_i^*\|^2_{\rm F} \leq M. 
	$$
\end{proof}

\subsection{Proof of Theorem \ref{th:BL3}}

Define $\mH_i^k \eqdef \beta^k \mA_i^k - \mC_i^k$ for $i\in[n]$ and $k\geq 0$. First, it is easy to verify that $\mA^k = \frac{1}{n} \sum_{i=1}^n \mA_i^k$, $\mC^k = \frac{1}{n} \sum_{i=1}^n \mC_i^k$, $\mH^k = \frac{1}{n} \sum_{i=1}^n \mH_i^k$, $g_1^k = \frac{1}{n} \sum_{i=1}^n g_{i,1}^k$, and $g_2^k = \frac{1}{n} \sum_{i=1}^n g_{i,2}^k$ for $k\geq 0$. Then we have 
\begin{align*}
	g^k & = \beta^k g_1^k - g_2^k \\ 
	& = \frac{1}{n} \sum_{i=1}^n \left(  \beta^k g_{i,1}^k - g_{i,2}^k  \right) \\ 
	& = \frac{1}{n} \sum_{i=1}^n \left(  \beta^k \mA_i^k w_i^k - \mC_i^k w_i^k - \nabla f_i(w_i^k) \right) \\ 
	& = \frac{1}{n} \sum_{i=1}^n \left(  \mH_i^k w_i^k - \nabla f_i(w_i^k)  \right). 
\end{align*}

Thus, from 
$$
x^{k+1} = \left(  \mH^k  \right)^{-1} g^k =  \left(  \mH^k  \right)^{-1} \left[  \frac{1}{n} \sum_{i=1}^n \left(  \mH^k_i w_i^k - \nabla f_i(w_i^k)   \right)  \right] , 
$$
and 
$$
x^* = \left(  \mH^k  \right)^{-1} \left[  \mH^k x^* - \nabla f(x^*)  \right] =  \left(  \mH^k  \right)^{-1} \left[  \frac{1}{n} \sum_{i=1}^n \left(  \mH^k_i x^* - \nabla f_i(x^*)   \right)  \right], 
$$
we can obtain 
$$
x^{k+1} - x^* =  \left(  \mH^k  \right)^{-1} \left[  \frac{1}{n} \sum_{i=1}^n \left(  \mH^k_i (w_i^k - x^*) - (\nabla f_i(w_i^k) - \nabla f_i(x^*) )  \right)  \right]. 
$$
Then from the triangle inequality and the fact that $\mH^k \succeq \mu \mI$, we have 
\begin{align*}
	\|x^{k+1} - x^*\| &\leq \frac{1}{\mu n} \sum_{i=1}^n \left\| \nabla f_i(w_i^k) - \nabla f_i(x^*) - \mH^k_i (w_i^k - x^*)    \right\| \\ 
	& \leq \frac{1}{\mu n} \sum_{i=1}^n \left\| \nabla f_i(w_i^k) - \nabla f_i(x^*) - \nabla^2 f_i(x^*) (w_i^k - x^*)  \right\| + \frac{1}{\mu n} \sum_{i=1}^n \left\| (\mH_i^k - \nabla^2 f_i(x^*) (w_i^k - x^*)) \right\| \\ 
	& \leq \frac{H}{2\mu n} \sum_{i=1}^n \|w_i^k - x^*\|^2 + \frac{1}{\mu n} \sum_{i=1}^n \|\mH_i^k - \nabla^2 f_i(x^*)\| \cdot \|w_i^k - x^*\| \\ 
	& = \frac{H}{2\mu} {\cal W}^k + \frac{1}{\mu n} \sum_{i=1}^n \|\mH_i^k - \nabla^2 f_i(x^*)\| \cdot \|w_i^k - x^*\|. 
\end{align*}

We further use Young's inequality to bound $\|x^{k+1} - x^*\|^2$ as 
\begin{align*}
	\|x^{k+1} - x^*\|^2 & \leq \frac{H^2}{2\mu^2} ({\cal W}^k)^2 + \frac{2}{\mu^2 n^2} \left(   \sum_{i=1}^n \|\mH_i^k - \nabla^2 f_i(x^*)\| \cdot \|w_i^k - x^*\|  \right)^2 \\ 
	& \leq \frac{H^2}{2\mu^2} ({\cal W}^k)^2 + \frac{2}{\mu^2} \left(  \frac{1}{n} \sum_{i=1}^n \|\mH_i^k - \nabla^2 f_i(x^*)\|^2  \right) {\cal W}^k, 
\end{align*}
where we use Cauchy-Schwarz inequality in the last inequality. Next we estimate $\|\mH_i^k - \nabla^2 f_i(x^*)\|^2$. Denote $\mL_i^* = {\tilde h}^i(\nabla^2 f_i(x^*))$ and assume $\max_{jl} \{\|\mB_i^{jl}\|_{\rm F}\} \leq R$ for $i\in [n]$. Then 

\begin{align*}
	\|\mH_i^k - \nabla^2 f_i(x^*)\|^2 & = \| \beta^k \mA_i^k - \mC_i^k - \nabla^2 f_i(x^*)\|^2 \\ 
	& = \left\| \sum_{jl} \left[  \beta^k ((\mL_i^k)_{jl} + 2\gamma_i^k) - 2\gamma_i^k - (\mL_i^*)_{jl}  \right] \mB^{jl} \right\|^2 \\ 
	& \leq N R^2 \sum_{jl} \left|  \beta^k((\mL_i^k)_{jl} + 2\gamma_i^k) - ((\mL_i^*)_{jl} + 2\gamma_i^k)  \right|^2. 
\end{align*}

Assuming $\max_{jl} \{|(\mL_i^k)_{jl}|\} \leq M_3$ for $i\in [n]$, we have 
\begin{align*}
	\left|  \beta^k((\mL_i^k)_{jl} + 2\gamma_i^k) - ((\mL_i^*)_{jl} + 2\gamma_i^k)  \right|^2 & = \left| ( \beta^k - 1)((\mL_i^k)_{jl} + 2\gamma_i^k) + (\mL_i^k - \mL_i^*)_{jl}  \right|^2 \\ 
	& \leq 2 \left|  (\beta^k-1) ((\mL_i^k)_{jl} + 2\gamma_i^k)  \right|^2 + 2\left|  (\mL_i^k - \mL_i^*)_{jl}  \right|^2 \\ 
	& \leq 2(M_3 + 2\max\{c, M_3\})^2 \left|  \beta^k - 1  \right|^2 + 2\left|  (\mL_i^k - \mL_i^*)_{jl}  \right|^2. 
\end{align*}

For Option 1 in Algorithm~\ref{alg:BL3}, we have $\beta_i^{{k}} = \max_{jl} \frac{{\tilde h}^i(\nabla^2 f_i(z_i^{k-1}))_{jl} + 2 \gamma_i^{k}}{(\mL_i^{k})_{jl} + 2 \gamma_i^{k} }$, where we define $z_i^{-1} = z_i^0$. For any $j,l\in [d]$, we have 
\begin{align*}
	\left|   \frac{{\tilde h}^i (\nabla^2 f_i(z_i^{k-1}))_{jl} + 2 \gamma_i^{k}}{(\mL_i^{k})_{jl} + 2 \gamma_i^{k} } - 1  \right|^2 &= \left|  \frac{({\tilde h}^i(\nabla^2 f_i(z_i^{k-1})) - \mL_i^k)_{jl}}{(\mL_i^k + 2\gamma_i^k \mI)_{jl}}  \right|^2 \\
	& \leq \frac{1}{c^2} \left| ({\tilde h}^i(\nabla^2 f_i(z_i^{k-1})) - \mL_i^k)_{jl}   \right|^2 \\ 
	& \leq \frac{2}{c^2} \left| ({\tilde h}^i(\nabla^2 f_i(z_i^{k-1})) - \mL_i^*)_{jl}  \right|^2 + \frac{2}{c^2}\left| (\mL_i^k - \mL_i^*)_{jl}   \right|^2 \\ 
	& \leq \frac{2M_5^2}{c^2} \|z_i^{k-1} - x^*\|^2 + \frac{2}{c^2} \left| (\mL_i^k - \mL_i^*)_{jl}   \right|^2, 
\end{align*}
where we use $(\mL_i^k + 2\gamma_i^k \mI)_{jl} \geq (\mL_i^k)_{jl} + |(\mL_i^k)_{jl}| + c \geq c$ in the first inequality, in the second inequality, we use the Young's inequality, and the last inequality comes from $\max_{jl} \{|({\tilde h}^i(\nabla^2 f_i(x)) - \mL_i^*)_{jl} | \} \leq M_5 \|x-x^*\|$. Then from the definition of $\beta^k$, we arrive at 
\begin{align*}
	\left|  \beta^k - 1  \right|^2 &\leq \max_{j l}\left\{  \frac{2M_5^2}{c^2} \|z_i^{k-1} - x^*\|^2 + \frac{2}{c^2} \left| (\mL_i^k - \mL_i^*)_{jl}   \right|^2  \right\} \\ 
	& \leq \frac{2M_5^2}{c^2} \|z_i^{k-1} - x^*\|^2 + \frac{2}{c^2} \| \mL_i^k - \mL_i^* \|^2. 
\end{align*}

For Option 2 in Algorithm~\ref{alg:BL3}, we can have the following bound in the same way. 
$$
\left|  \beta^k - 1  \right|^2 \leq \frac{2M_5^2}{c^2} \|z_i^{k} - x^*\|^2 + \frac{2}{c^2} \| \mL_i^k - \mL_i^* \|^2. 
$$

For Option 1, from the above inequalities, we can get 
\begin{align*}
	\|\mH_i^k - \nabla^2 f_i(x^*)\|^2 &\leq \frac{4N^2 R^2 M_5^2 (M_3 + 2\max\{c, M_3\})^2}{c^2} \|z_i^{k-1} - x^*\|^2 \\ 
	& \quad + 2NR^2 \left(  1 + \frac{2N(M_3 + 2\max\{c, M_3\})^2}{c^2}  \right) \|\mL_i^k - \mL_i^*\|_{\rm F}^2, 
\end{align*}
which implies that 
\begin{align*}
	\frac{1}{n} \sum_{i=1}^n \|\mH_i^k - \nabla^2 f_i(x^*)\|^2 &\leq \frac{4N^2 R^2 M_5^2 (M_3 + 2\max\{c, M_3\})^2}{c^2} {\cal Z}^{k-1} \\ 
	& \quad + 2NR^2 \left(  1 + \frac{2N(M_3 + 2\max\{c, M_3\})^2}{c^2}  \right) {\cal H}^k, 
\end{align*}
where ${\cal H}^k \eqdef \frac{1}{n}\sum_{i=1}^n \|\mL_i^k - \mL_i^*\|_{\rm F}^2$. 
Similarly, for Option 2, we can get 
\begin{align*}
	\frac{1}{n} \sum_{i=1}^n \|\mH_i^k - \nabla^2 f_i(x^*)\|^2 &\leq \frac{4N^2 R^2 M_5^2 (M_3 + 2\max\{c, M_3\})^2}{c^2} {\cal Z}^{k} \\ 
	& \quad + {2NR^2} \left(  1 + \frac{2N(M_3 + 2\max\{c, M_3\})^2}{c^2}  \right) {\cal H}^k. 
\end{align*}

Let 
\begin{equation}\label{eq:c1c2}
	c_1 \eqdef \frac{4N^2 R^2 M_5^2 (M_3 + 2\max\{c, M_3\})^2}{c^2}, \quad c_2 \eqdef {2NR^2} \left(  1 + \frac{2N(M_3 + 2\max\{c, M_3\})^2}{c^2}  \right)
\end{equation}

Then for Option 1, we have 
\begin{align}
	\|x^{k+1} - x^*\|^2 & \leq \frac{H^2}{2\mu^2} ({\cal W}^k)^2 + \frac{2}{\mu^2} {\cal W}^k \left(  c_1 {\cal Z}^{k-1} + c_2 {\cal H}^k  \right)  \nonumber \\ 
	& = \frac{H^2}{2\mu^2} ({\cal W}^k)^2 + \frac{2c_1}{\mu^2} {\cal Z}^{k-1}{\cal W}^k + \frac{2c_2}{\mu^2} {\cal H}^k {\cal W}^k, \label{eq:xk+1-BL3}
\end{align}
and for Option 2, we have 
\begin{align}
	\|x^{k+1} - x^*\|^2 & \leq \frac{H^2}{2\mu^2} ({\cal W}^k)^2 + \frac{2c_1}{\mu^2} {\cal Z}^{k}{\cal W}^k + \frac{2c_2}{\mu^2} {\cal H}^k {\cal W}^k. \label{eq:xk+1-BL3-2}
\end{align}

From the update rule of $w_i^k$ and $z_i^k$, the results in (\ref{eq:Zk+1-BL2}) and (\ref{eq:Wk+1-BL2}) also hold for Algorithm~\ref{alg:BL3}. Then for Option 1, we have 
\begin{align*}
	\mathbb{E}_k[\Phi_5^{k+1}] & \overset{(\ref{eq:Wk+1-BL2})}{\leq}  \left(  1 - \frac{\tau p}{n}  \right) {\cal W}^k + \frac{\tau p}{n} \left(  1 - \frac{\tau A_{\rm M}}{n}  \right) {\cal Z}^k + \frac{\tau^2B_{\rm M}p}{n^2} \|x^{k+1}-x^*\|^2 +  \frac{2p}{A_{\rm M}}\left(  1 - \frac{\tau A_{\rm M}}{n}  \right) \mathbb{E}_k[{\cal Z}^{k+1}] \\ 
	& \overset{(\ref{eq:Zk+1-BL2})}{\leq} \left(  1 - \frac{\tau p}{n}  \right) {\cal W}^k + \frac{2p}{A_{\rm M}}\left(  1 - \frac{\tau A_{\rm M}}{n}  \right) \left(  1 - \frac{\tau A_{\rm M}}{2n}  \right) {\cal Z}^k + \frac{2\tau p B_{\rm M}}{n A_{\rm M}} \left(  1 - \frac{\tau A_{\rm M}}{2n}  \right) \|x^{k+1}-x^*\|^2 \\ 
	& \overset{\ref{eq:xk+1-BL3}}{\leq} \left(  1 - \frac{\tau p}{n} + \frac{2\tau pB_{\rm M}}{nA_{\rm M}} \left(  \frac{H^2}{2\mu^2}{\cal W}^k  + \frac{2c_2}{\mu^2} {\cal H}^k  +  \frac{2c_1}{\mu^2} {\cal Z}^{k-1}   \right)  \right) {\cal W}^k \\ 
	& \quad + \frac{2p}{A_{\rm M}}\left(  1 - \frac{\tau A_{\rm M}}{n}  \right) \left(  1 - \frac{\tau A_{\rm M}}{2n}  \right) {\cal Z}^k. 
\end{align*}

If $\|z_i^k-x^*\|^2 \leq \frac{A_{\rm M}\mu^2}{4(H^2 + 4c_1)B_{\rm M}}$ and ${\cal H}^k \leq \frac{A_{\rm M}\mu^2}{16c_2 B_{\rm M}}$ for all $k\geq 0$, then we have 
$$
\frac{H^2}{2\mu^2}{\cal W}^k  + \frac{2c_2}{\mu^2} {\cal H}^k  +  \frac{2c_1}{\mu^2} {\cal Z}^{k-1}  \leq \frac{A_{\rm M}}{4B_{\rm M}}, 
$$
which implies that 
\begin{align*}
	\mathbb{E}_k[\Phi_5^{k+1}] & \leq \left(  1 - \frac{\tau p}{2n}  \right) {\cal W}^k + \frac{2p}{A_{\rm M}}\left(  1 - \frac{\tau A_{\rm M}}{n}  \right) \left(  1 - \frac{\tau A_{\rm M}}{2n}  \right) {\cal Z}^k \\ 
	& \leq \left(  1 - \frac{\tau \min\{  p, A_{\rm M}  \}}{2n}  \right) \Phi_5^k. 
\end{align*}

By applying the tower property, we have 
$$
\mathbb{E} [\Phi_5^{k+1}] \leq \left(  1 - \frac{\tau \min\{  p, A_{\rm M}  \}}{2n}  \right) \mathbb{E}[\Phi_5^k]. 
$$

Unrolling the recursion, we can obtain the result. For Option 2, we can have the same result.

\subsection{Proof of Theorem \ref{th:superlinear-BL3}}

Since $\xi^k \equiv 1$, $\eta=1$, $S^k \equiv [n]$, and ${\cal Q}_i^k(x) \equiv x$ for any $x\in \R^d$, it is easy to see that $w_i^k = z_i^k \equiv x^k$ for all $i\in[n]$ and $k\geq 0$. In this case, we can view ${\cal Q}_i^k$ as an unbiased compressor with $\omega_{\rm M}=0$ or a contraction compressor with $\delta_{\rm M}=1$. Since (\ref{eq:Zk+1-BL2}) also holds for Algorithm~\ref{alg:BL3}, for Option 1, we have 
\begin{eqnarray*}
	\mathbb{E}_k \|x^{k+1}-x^*\|^2 & \leq & \left(  1 - A_{\rm M}  \right) \|x^k-x^*\|^2 + B_{\rm M} \|x^{k+1}-x^*\|^2 \\ 
	& \overset{(\ref{eq:xk+1-BL3})}{\leq} & \left(  1 - A_{\rm M}  \right) \|x^k-x^*\|^2 + \frac{1}{4}A_{\rm M} \|x^k-x^*\|^2 \\ 
	& =& \left(  1 - \frac{3A_{\rm M}}{4}  \right) \|x^k-x^*\|^2. 
\end{eqnarray*}
For Option 2, we can get the same bound for $\mathbb{E}_k\|x^{k+1} - x^*\|^2$ as above from (\ref{eq:xk+1-BL3-2}).

From Lemma \ref{lm:Hk-BL3}, we can obtain 
$$
\mathbb{E}_k[{\cal H}^{k+1}] \leq  (1-A) {\cal H}^k + BM_4^2 \|x^k-x^*\|^2. 
$$

Thus, 
\begin{align*}
	\mathbb{E}_k[\Phi_6^{k+1}] & = \mathbb{E}_k[{\cal H}^{k+1}] + \frac{4BM_4^2}{A_{\rm M}}\mathbb{E}_k\|x^{k+1} - x^*\|^2 \\ 
	& \leq (1-A) {\cal H}^k + BM_4^2 \|x^k-x^*\|^2 +  \frac{4BM_4^2}{A_{\rm M}}\left(  1 - \frac{3A_{\rm M}}{4}  \right) \|x^k-x^*\|^2 \\ 
	& \leq \left(  1 - \frac{\min\{  2A, A_{\rm M}  \} }{2} \right) \Phi_6^k. 
\end{align*}

By applying the tower property, we have $\mathbb{E}[\Phi_6^{k+1}] \leq \theta_2 \mathbb{E}[\Phi_6^k]$. Unrolling the recursion, we have $\mathbb{E}[\Phi_6^k] \leq \theta_3^k \Phi_6^0$. 

Then we further have $\mathbb{E}[{\cal H}^k] \leq \theta_3^k \Phi_6^0$ and $\mathbb{E}\|x^k-x^*\|^2 \leq \frac{A_{\rm M}}{4BM_4^2} \theta_3^k \Phi_6^0$. For Option 1, from $w_i^k=z_i^k\equiv x^k$ and (\ref{eq:xk+1-BL3}), we can get 
$$
\|x^{k+1} - x^*\|^2 \leq  \left(  \frac{H^2}{2\mu^2} \|x^k-x^*\|^2 + \frac{2c_1}{\mu^2} \|x^{k-1}-x^*\|^2 + \frac{2c_2}{\mu^2} {\cal H}^k  \right) \|x^k-x^*\|^2. 
$$
Assume $x^k\neq x^*$ for all $k\geq 0$. Then we have 
$$
\frac{\|x^{k+1}-x^*\|^2}{\|x^k-x^*\|^2} \leq \frac{H^2}{2\mu^2} \|x^k-x^*\|^2 + \frac{2c_1}{\mu^2} \|x^{k-1}-x^*\|^2 + \frac{2c_2}{\mu^2} {\cal H}^k,  
$$
and by taking expectation, we arrive at 
\begin{align*}
	\mathbb{E} \left[   \frac{\|x^{k+1}-x^*\|^2}{\|x^k-x^*\|^2} \right] & \leq \frac{H^2}{2\mu^2} \mathbb{E}\|x^k-x^*\|^2 + \frac{2c_1}{\mu^2} \mathbb{E}\|x^{k-1}-x^*\|^2 + \frac{2c_2}{\mu^2} \mathbb{E}[{\cal H}^k] \\ 
	& \leq \theta_3^k \left(  \frac{A_{\rm M}(H^2 \theta_3+4c_1)}{8BM_4^2\mu^2 \theta_3} + \frac{2c_2}{\mu^2}  \right) \Phi_6^0. 
\end{align*}

\noindent For Option 2, from $w_i^k=z_i^k\equiv x^k$ and (\ref{eq:xk+1-BL3-2}), we can get 
$$
\|x^{k+1} - x^*\|^2 \leq  \left(  \frac{H^2+4c_1}{2\mu^2} \|x^k-x^*\|^2 + \frac{2c_2}{\mu^2} {\cal H}^k  \right) \|x^k-x^*\|^2. 
$$
Assume $x^k\neq x^*$ for all $k\geq 0$. Then we have 
$$
\frac{\|x^{k+1}-x^*\|^2}{\|x^k-x^*\|^2} \leq   \frac{H^2+4c_1}{2\mu^2} \|x^k-x^*\|^2 + \frac{2c_2}{\mu^2} {\cal H}^k, 
$$
and by taking expectation, we arrive at 
\begin{align*}
	\mathbb{E} \left[   \frac{\|x^{k+1}-x^*\|^2}{\|x^k-x^*\|^2} \right] & \leq  \frac{H^2+4c_1}{2\mu^2} \mathbb{E}\|x^k-x^*\|^2 + \frac{2c_2}{\mu^2} \mathbb{E}[{\cal H}^k] \\ 
	& \leq \theta_3^k \left(  \frac{A_{\rm M}(H^2 + 4c_1)}{8BM_4^2\mu^2} + \frac{2c_2}{\mu^2}  \right) \Phi_6^0. 
\end{align*}

\subsection{Proof of Theorem \ref{th:nbor-BL3}}

(i) Notice that under Assumption \ref{as:Qunbiasedcomp-BL1}, we have $A_{\rm M} = B_{\rm M} = \eta$. We prove this by mathematical induction. First, since $z_i^0=x^0$, we know $\|z_i^0-x^*\|^2 \leq \min\{ \frac{\mu^2}{4d(H^2+4c_1)},  \frac{\mu^2}{16d^3 c_2M_5^2}  \}$ for $i\in [n]$. Then from Lemma \ref{lm:Hkbound-BL3} (iii), we have ${\cal H}^0 \leq \frac{\mu^2}{16dc_2 }$. Next, assume 
$$
\|z_i^k-x^*\|^2 \leq \min\left\{ \frac{\mu^2}{4d(H^2+4c_1)},  \frac{\mu^2}{16d^3 c_2M_5^2}  \right\} \ {\rm for} \ i\in[n] \quad  {\rm and} \quad  {\cal H}^k \leq \frac{\mu^2}{16dc_2 }, 
$$
for $k\leq K$. 
By choosing $M = \frac{\mu^2}{16d^3 c_2M_5^2}$ in Lemma \ref{lm:Hkbound-BL3} (i), we have $$\|z_i^{K+1}-x^*\|^2 \leq \min\left\{ \frac{\mu^2}{4d(H^2+4c_1)},  \frac{\mu^2}{16d^3 c_2M_5^2}  \right\}, $$ for $i\in[n]$. By further using Lemma \ref{lm:Hkbound-BL3} (iii), we can get ${\cal H}^{K+1} \leq \frac{\mu^2}{16dc_2 }$. \\ 

(ii) We prove the result by induction. Assume $\|z_i^k - x^*\|^2 \leq \min\left\{  \frac{A_{\rm M}\mu^2}{4B_{\rm M}(H^2+4c_1)},  \frac{AA_{\rm M}\mu^2}{16c_2 B_{\rm M}BM_4^2}  \right\}$ and $\|\mL_i^k - \mL_i^*\|^2_{\rm F} \leq \frac{A_{\rm M}\mu^2}{16c_2B_{\rm M}}$ for all $i\in[n]$ and $k\leq K$. Then by Lemma \ref{lm:Hkbound-BL3} (iv), we have $\|\mL_i^{K+1} - \mL_i^*\|^2_{\rm F} \leq \frac{A_{\rm M}\mu^2}{16c_2B_{\rm M}}$. Moreover, by Lemma \ref{lm:Hkbound-BL3} (ii), we have $\|z_i^{K+1} - x^*\|^2 \leq \min\left\{  \frac{A_{\rm M}\mu^2}{4B_{\rm M}(H^2+4c_1)},  \frac{AA_{\rm M}\mu^2}{16c_2 B_{\rm M}BM_4^2}  \right\}$ for $i\in[n]$.

\end{document}